\documentclass[10pt,twocolumn,letterpaper]{article}

\usepackage[pagenumbers]{cvpr}

\usepackage{tabularx}
\usepackage{etoolbox,siunitx}
\robustify\bfseries

\usepackage{tablefootnote}

\usepackage{indentfirst}

\usepackage{multirow}
\usepackage{comment}
%
% --- inline annotations
%
\usepackage[dvipsnames]{xcolor}

% --- disable by uncommenting  
% \renewcommand{\TODO}[1]{}
% \renewcommand{\todo}[1]{#1}

\usepackage{amsthm}
\usepackage[accsupp]{axessibility}

\newtheorem{proposition}{Proposition}
\usepackage{multirow}

\usepackage{pgfplots}
\pgfplotsset{compat=1.18}

\newcommand*{\inparagraph}[1]{\noindent\textbf{#1}\hspace{0.5em}}

\definecolor{cvprblue}{rgb}{0.21,0.49,0.74}
\usepackage[pagebackref,breaklinks,colorlinks,citecolor=cvprblue]{hyperref}

\usepackage{colortbl}
\definecolor{lightred}{RGB}{255,240,240}
\definecolor{lightblue}{RGB}{225,239,240}
\definecolor{flat}{RGB}{128, 64, 128}
\definecolor{construction}{RGB}{70, 70, 70}
\definecolor{object}{RGB}{153, 153, 153}
\definecolor{nature}{RGB}{107, 142, 35}
\definecolor{sky}{RGB}{70, 130, 180}
\definecolor{human}{RGB}{220, 20, 60}
\definecolor{vehicle}{RGB}{0, 0, 142}
\definecolor{ignore}{RGB}{0, 0, 0}
\definecolor{true}{RGB}{0,114,178}
\definecolor{false}{RGB}{204, 121, 167}
\definecolor{left}{RGB}{19,44,95}

\newcolumntype{R}{>{\columncolor{lightred}}S[table-format=2.2]}
\newcolumntype{B}{>{\columncolor{lightblue}}S[table-format=2.2]}

\def\oureuc{Flat-Euc (ours)\xspace}
\def\ourhyp{Flat-Hyp (ours)\xspace}

\def\eucname{Flat-Euc\xspace}
\def\hypname{Flat-Hyp\xspace}

\usepackage{fancyhdr}
\usepackage{setspace}

\fancyhf{}
\lfoot{{\footnotesize
\begin{spacing}{.5}
\parbox{\linewidth}{\vspace{2.5em}
To appear in Proceedings of the \emph{IEEE/CVF Conference on Computer Vision and Pattern Recognition (CVPR)}, Seattle, WA, USA, 2024. \\ \hrule \vspace {\baselineskip}
\copyright~2024 IEEE. Personal use of this material is permitted. Permission from IEEE must be obtained for all other uses, in any current or future media, including reprinting/republishing this material for advertising or promotional purposes, creating new collective works, for resale or redistribution to servers or lists, or reuse of any copyrighted component of this work in other works.
}\end{spacing}}}

\title{Flattening the Parent Bias:\\[1mm] Hierarchical Semantic Segmentation in the Poincaré Ball}

\author{Simon Weber\textsuperscript{1,2} \hspace{2em}
Barış Zöngür\textsuperscript{1} \hspace{2em}
Nikita Araslanov\textsuperscript{1,2}  \hspace{2em}
Daniel Cremers\textsuperscript{1,2} \\[1mm]
\textsuperscript{1}Technical University of Munich \hspace{1cm} \textsuperscript{2}Munich Center for Machine Learning
}

\newcommand\blfootnote[1]{
  \begingroup
  \renewcommand\thefootnote{}\footnote{#1}
  \addtocounter{footnote}{-1}
  \endgroup
}

\begin{document}

\setlength{\belowdisplayskip}{6pt} \setlength{\belowdisplayshortskip}{6pt}
\setlength{\abovedisplayskip}{6pt} \setlength{\abovedisplayshortskip}{6pt}

\twocolumn[{
\renewcommand\twocolumn[1][]{#1}
\maketitle
\thispagestyle{fancy}
\centering
\begin{center}
\vspace{-1em}
\begin{minipage}{0.3\linewidth}
    \centering
    \def\svgwidth{1.0\linewidth}
    %% Creator: Inkscape 1.1.1 (c3084ef, 2021-09-22), www.inkscape.org
%% PDF/EPS/PS + LaTeX output extension by Johan Engelen, 2010
%% Accompanies image file 'teaser_euc.pdf' (pdf, eps, ps)
%%
%% To include the image in your LaTeX document, write
%%   \input{<filename>.pdf_tex}
%%  instead of
%%   \includegraphics{<filename>.pdf}
%% To scale the image, write
%%   \def\svgwidth{<desired width>}
%%   \input{<filename>.pdf_tex}
%%  instead of
%%   \includegraphics[width=<desired width>]{<filename>.pdf}
%%
%% Images with a different path to the parent latex file can
%% be accessed with the `import' package (which may need to be
%% installed) using
%%   \usepackage{import}
%% in the preamble, and then including the image with
%%   \import{<path to file>}{<filename>.pdf_tex}
%% Alternatively, one can specify
%%   \graphicspath{{<path to file>/}}
%% 
%% For more information, please see info/svg-inkscape on CTAN:
%%   http://tug.ctan.org/tex-archive/info/svg-inkscape
%%
\begingroup%
  \makeatletter%
  \providecommand\color[2][]{%
    \errmessage{(Inkscape) Color is used for the text in Inkscape, but the package 'color.sty' is not loaded}%
    \renewcommand\color[2][]{}%
  }%
  \providecommand\transparent[1]{%
    \errmessage{(Inkscape) Transparency is used (non-zero) for the text in Inkscape, but the package 'transparent.sty' is not loaded}%
    \renewcommand\transparent[1]{}%
  }%
  \providecommand\rotatebox[2]{#2}%
  \newcommand*\fsize{\dimexpr\f@size pt\relax}%
  \newcommand*\lineheight[1]{\fontsize{\fsize}{#1\fsize}\selectfont}%
  \ifx\svgwidth\undefined%
    \setlength{\unitlength}{127.73991797bp}%
    \ifx\svgscale\undefined%
      \relax%
    \else%
      \setlength{\unitlength}{\unitlength * \real{\svgscale}}%
    \fi%
  \else%
    \setlength{\unitlength}{\svgwidth}%
  \fi%
  \global\let\svgwidth\undefined%
  \global\let\svgscale\undefined%
  \makeatother%
  \begin{picture}(1,0.86529855)%
    \footnotesize
    \lineheight{1}%
    \setlength\tabcolsep{0pt}%
    \put(0,0){\includegraphics[width=\unitlength,page=1]{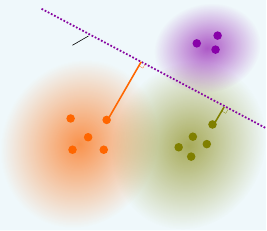}}%
    \put(0.05,0.68){\color[rgb]{0,0,0}\makebox(0,0)[lt]{hyperplane}}%
    \put(0.4,0.6){\color[rgb]{0,0,0}\makebox(0,0)[lt]{$d_\mathbb{E}^A$}}%
    \put(0.73,0.48){\color[rgb]{0,0,0}\makebox(0,0)[lt]{$d_\mathbb{E}^B$}}%
    %\put(0.2,0.2){A}%
    %\put(0.8,0.2){B}%
    %\put(0.63,0.7){C}%
  \end{picture}%
\endgroup%
\\
    {\small (a) Embeddings in Euclidean space}
\end{minipage}
\hspace{5em}
\begin{minipage}{0.27\linewidth}
    \centering
    \def\svgwidth{1.0\linewidth}
    %% Creator: Inkscape 1.1.1 (c3084ef, 2021-09-22), www.inkscape.org
%% PDF/EPS/PS + LaTeX output extension by Johan Engelen, 2010
%% Accompanies image file 'teaser_hyp.pdf' (pdf, eps, ps)
%%
%% To include the image in your LaTeX document, write
%%   \input{<filename>.pdf_tex}
%%  instead of
%%   \includegraphics{<filename>.pdf}
%% To scale the image, write
%%   \def\svgwidth{<desired width>}
%%   \input{<filename>.pdf_tex}
%%  instead of
%%   \includegraphics[width=<desired width>]{<filename>.pdf}
%%
%% Images with a different path to the parent latex file can
%% be accessed with the `import' package (which may need to be
%% installed) using
%%   \usepackage{import}
%% in the preamble, and then including the image with
%%   \import{<path to file>}{<filename>.pdf_tex}
%% Alternatively, one can specify
%%   \graphicspath{{<path to file>/}}
%% 
%% For more information, please see info/svg-inkscape on CTAN:
%%   http://tug.ctan.org/tex-archive/info/svg-inkscape
%%
\begingroup%
  \makeatletter%
  \providecommand\color[2][]{%
    \errmessage{(Inkscape) Color is used for the text in Inkscape, but the package 'color.sty' is not loaded}%
    \renewcommand\color[2][]{}%
  }%
  \providecommand\transparent[1]{%
    \errmessage{(Inkscape) Transparency is used (non-zero) for the text in Inkscape, but the package 'transparent.sty' is not loaded}%
    \renewcommand\transparent[1]{}%
  }%
  \providecommand\rotatebox[2]{#2}%
  \newcommand*\fsize{\dimexpr\f@size pt\relax}%
  \newcommand*\lineheight[1]{\fontsize{\fsize}{#1\fsize}\selectfont}%
  \ifx\svgwidth\undefined%
    \setlength{\unitlength}{123.27456449bp}%
    \ifx\svgscale\undefined%
      \relax%
    \else%
      \setlength{\unitlength}{\unitlength * \real{\svgscale}}%
    \fi%
  \else%
    \setlength{\unitlength}{\svgwidth}%
  \fi%
  \global\let\svgwidth\undefined%
  \global\let\svgscale\undefined%
  \makeatother%
  \begin{picture}(1,1)%
    \footnotesize
    \lineheight{1}%
    \setlength\tabcolsep{0pt}%
    \put(0,0){\includegraphics[width=\unitlength,page=1]{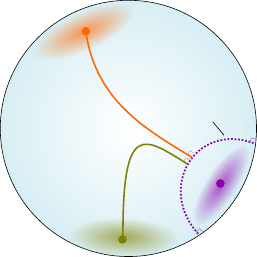}}%
    \put(0.7,0.6){\color[rgb]{0,0,0}\makebox(0,0)[lt]{gyroplane}}%
    \put(0.5,0.65){\color[rgb]{0,0,0}\makebox(0,0)[lt]{$d_\mathbb{H}^A$}}%
    \put(0.54,0.4){\color[rgb]{0,0,0}\makebox(0,0)[lt]{$d_\mathbb{H}^B$}}%
  \end{picture}%
\endgroup%
\\
    {\small (b) Embeddings in Poincaré ball}
\end{minipage}
\captionof{figure}{\textbf{Core idea.} Class embeddings in the Euclidean space \textit{(a)} exhibit non-uniform properties of the separation margin: the average distance of a pixel embedding of one class to the decision boundaries of the other classes varies substantially (\eg $d^A_\mathbb{E} > d^B_\mathbb{E} $).
This creates an implicit \emph{parent bias} in hierarchical segmentation, which prefers grouping one set of classes over the other, in terms of the parent-level segmentation accuracy.
In contrast, in hyperbolic space characterized by the Poincaré ball \textit{(b)}, the separation margins between the class embeddings are more uniform, \eg the hyperbolic distance of embeddings $A$ and $B$ of two different classes to the decision boundary (a gyroplane) of another class is approximately equal, $d^A_\mathbb{H} \approx d^B_\mathbb{H}$. This may explain the strong generalization of the parent-level predictions, observed in practice, in terms of the segmentation accuracy and calibration quality.}
\label{fig:teaser}
\vspace{0.5em}
\end{center}%
}]

\begin{abstract}
Hierarchy is a natural representation of semantic taxonomies, including the ones routinely used in image segmentation.
Indeed, recent work on semantic segmentation reports improved accuracy from supervised training leveraging hierarchical label structures.
Encouraged by these results, we revisit the fundamental assumptions behind that work.
We postulate and then empirically verify that the reasons for the observed improvement in segmentation accuracy may be entirely unrelated to the use of the semantic hierarchy.
To demonstrate this, we design a range of cross-domain experiments with a representative hierarchical approach.
We find that on the new testing domains, a flat (non-hierarchical) segmentation network, in which the parents are inferred from the children, has superior segmentation accuracy to the hierarchical approach across the board.
Complementing these findings and inspired by the intrinsic properties of hyperbolic spaces, we study a more principled approach to hierarchical segmentation using the Poincaré ball model.
The hyperbolic representation largely outperforms the previous (Euclidean) hierarchical approach as well and is on par with our flat Euclidean baseline in terms of segmentation accuracy.
However, it additionally exhibits surprisingly strong calibration quality of the parent nodes in the semantic hierarchy, especially on the more challenging domains.
Our combined analysis suggests that the established practice of hierarchical segmentation may be limited to in-domain settings, whereas flat classifiers generalize substantially better, especially if they are modeled in the hyperbolic space.\blfootnote{Project code: \href{https://github.com/tum-vision/hierahyp}{https://github.com/tum-vision/hierahyp}}
\end{abstract}
\section{Introduction}
\label{sec:intro}

Semantic knowledge is inherently structured, and organizing it in a hierarchy is both natural and expressive.
Unsurprisingly, hierarchical representations play an important role in computer vision \cite{li2020deep,liang2018dynamic,wang2019learning,meletis2018training,xiao2018unified,wang2020hierarchical}.
For instance, we may want to assign multiple labels to each pixel in the image, rather than a single one, to encode ancestral relations (\eg a ``car'' is also a ``vehicle'' and a ``means of transport'').
Adhering to a tree-based label hierarchy, this pixelwise classification task defines the so-called \emph{hierarchical} semantic segmentation and is the subject of this work.

In the literature, recent work addresses this problem as a supervised multi-label classification task \cite{Li:2022:DHS,Li:2023:LPV}.
In this formulation, the terminal leaf nodes and the internal nodes are modeled with individual one-\textit{vs}-all classifiers.
Remarkably, the empirical results of this approach appear to even exceed the standard supervised formulation (which only considers the leaf categories) in the evaluation of segmentation accuracy \emph{over the leaf categories themselves}.
Such an effect cannot be explained from the perspective of a learning algorithm, for which the semantic grouping of leaf nodes into parent classes is meaningless.

As our first step, we examine this phenomenon and reveal limited generalization of a state-of-the-art method for hierarchical semantic segmentation \cite{Li:2022:DHS}.
Rather surprisingly, we find that a \emph{flat} classifier, which only learns to classify the child (leaf) categories, largely outperforms the more sophisticated prior art on \emph{both the child and parent classes}.
We formulate the sufficiency of flat classifiers under the existing formulation of the hierarchical semantic segmentation, which establishes the link between model calibration and accuracy on the hierarchical prediction task.

Moving forward, we identify an inherent bias of flat classifiers toward particular groupings of child categories into parent meta-classes, as illustrated in \cref{fig:teaser}.
Specifically, the Euclidean distance between a decision boundary of one class and the class embeddings of the other categories is non-uniform.
However, classification errors of deep classifiers tend to occur near decision boundaries.\footnote{This is evidenced by the imperfect, yet fairly high calibration quality of segmentation networks, as we will also show in the experiments.}
This implies that defining a parent class comprising the two classes with the lowest separation margin will tend to produce a lower error rate in the parent-level classification, if we were to combine two classes with the largest margin of separation.
To mitigate this \emph{parent bias}, we would like the decision boundary of any class to be equidistant to the embeddings of other classes.
While additional regularization may be necessary to achieve this in the Euclidean space, we find that hyperbolic spaces provide such capacity naturally.
We embed pixel features in the Poincaré ball instead of the Euclidean space, which allows us to alleviate the parent bias and achieve a notable improvement in segmentation accuracy and calibration on the parent-level classification task.

We summarize our contributions as follows.
\textit{(i)} We reveal limited generalization of prior work on the hierarchical semantic segmentation task.
\textit{(ii)} Through a systematic analysis, we establish the sufficiency of flat classifiers for this task, which in Euclidean embedding space, however, may suffer from suboptimal accuracy on the parent classes.
\textit{(iii)} We show that the intrinsic properties of hyperbolic space, the Poincaré ball model, allow for mitigating this bias.
\textit{(iv)} We experimentally confirm the strong generalization of the Poincaré ball model, in terms of segmentation accuracy and calibration, especially on parent categories.

\section{Related Work}
\label{sec:related}

Research on semantic segmentation spans numerous problem domains, including deep network architectures \cite{Ronneberger:2015:UNC,Shelhamer:2017:FCN,Chen:2018:EDA,Cheng:2021:PPC}, training objectives \cite{Sudre:2017:GDO,Berman:2018:TLS,Cheng:2021:BIU} and strategies \cite{Luc:2016:SSA,Mittal:2021:SSS,Ouali:2020:SSS}.
Here, we are specifically interested in hierarchical semantic segmentation and, more generally, the hierarchical classification problem.
Therefore, our review of related literature below will focus only on these aspects, and we refer interested readers to surveys for a comprehensive overview of semantic segmentation research \cite{Mettes:2023:HDL}.

\inparagraph{Hierarchical classification with tree-like taxonomies.}
A multi-label classification problem is considered hierarchical if the label assignment must respect a pre-defined hierarchy \cite{Gordon:1987:RHC,Silla:2011:ASH}.
Hierarchical classifiers may be categorized into flat, local and global approaches.
\emph{Flat} classifiers only model the leaf nodes, thus completely ignoring the class hierarchy.
Following the tree structure in the bottom-up fashion, one can infer the parent label from the predictions of its children.
By contrast, \emph{global} (or ``big-bang'') methods explicitly represent each node in the tree, for example with a one-\textit{vs}-all classifier per node \cite{Kiritchenko:2006:LEP}.
Local approaches solve a number of smaller classification problems using only the local information available at each node or tree level \cite{Koller:1997:HCD,Eisner:2005:IPF}.
The success of these strategies appears to be domain-specific.
However, it is notable that flat classifiers are generally seen as competitive baselines \cite{Wang:2010:FHL,Babbar:2013:OFV,Valmadre:2022:HCM} -- the conclusion reached in our study too.
Learning individual classifiers for internal (non-leaf) nodes in the hierarchy may reduce semantically critical prediction errors \cite{Bertinetto:2020:MBM,Frome:2013:DVA}.

As a remark, hierarchical prediction has been the subject of research on problems in natural language processing and bionformatics \cite{Silla:2011:ASH},
where it is not uncommon to have large taxonomies -- in the order of tens or hundreds of thousands of labels \cite{Vens:2008:DTH,Partalas:2015:LSH}.
By contrast, a typical size of the label space in computer vision is substantially smaller \cite{Dimitrovski:2011:HAM}, especially for dense tasks, such as semantic segmentation considered here (\eg up to 30 in Cityscapes \cite{Cordts:2016:TCD}).

\inparagraph{Hierarchical semantic segmentation.}
Considering a hierarchy of image segments is a classic concept in computer vision.
Hierarchical image parsing helps to improve robustness to (self-)occlusions and to variation in object scale of early object recognition systems \cite{Schnitzspan:2009:DSL,Zhu:2010:LHS,Arbelaez:2011:CDH,Uijlings:2013:SSO}.
Similar to conditional random fields (CRFs) \cite{Cordts:2017:TSM}, deep networks can also benefit from hierarchical representations for advanced contextual reasoning \cite{Girshick:2014:RFH,Sharma:2015:DHP,Xu:2022:GVT}.

In contrast to the earlier work, where the hierarchy plays a facilitating role, training deep semantic segmentation networks producing a hierarchical label structure is relatively recent \cite{Li:2022:DHS,Li:2023:LPV}.
HSSN~\cite{Li:2022:DHS}, which we extensively use in our analysis, formalizes the hierarchical prediction task with auxiliary ``parent'' logits.
Note that this implies a training objective with more decision boundaries to learn than in the standard (child-only) case, since each parent logit requires a one-\textit{vs}-all classifier.
In a follow-up work, LogicSeg~\cite{Li:2023:LPV} formulates Boolean rules describing the hierarchical constraints and maps them to a differentiable loss using fuzzy logic.
While somewhat elegant, this approach does not improve over HSSN empirically in a significant way.

\inparagraph{Hyperbolic computer vision.} 
Deep learning on hyperbolic manifolds is still in its nascent stage \cite{Mettes:2023:HDL}.
In contrast to the Euclidean space, hyperbolic spaces possess properties lending themselves well to embedding hierarchical representations with minimal distortion \cite{Bridson:2013:MSN,Nickel:2017:PEL,Sala:2018:RTH}.
Previous research concentrated on generalizing the Euclidean models operating on the hyperbolic manifold, in terms of network models \cite{ganea2018,Ermolov:2022:HVT,Spengler:2023:PRN} and training specifics \cite{Guo:2022:CHC}.
Exploiting the properties of the hyperbolic embedding space has been of primary interest in (self-supervised) metric learning \cite{Suris:2021:LPF,Hsu:2021:CIH}.
Against the backdrop of this work, semantic segmentation has been studied rather marginally.
In a seminal work in this domain, \citet{Atigh:2022:HIS} learn pixel embeddings on the Poincaré ball and reach competitive segmentation accuracy \wrt the more established Euclidean formulation.
Concurrently, \citet{Franco:2023:HAL} report the correlation of the embedding norm with uncertainty in the context of active segmentation learning.

Overall, despite the recent progress, the benefits of the hyperbolic representation for semantic segmentation remain unclear.
Our study of hierarchical semantic segmentation exemplifies some compelling advantages of the Poincaré ball model, both theoretically and experimentally.

\section{The problem and motivation}
\label{sec:motivation}

Let us revisit the formulation of the hierarchical semantic segmentation problem from previous work \cite{Li:2022:DHS,Li:2023:LPV}, which we follow in our study.
Our label space is defined as the union of semantic categories at multiple levels of the semantic hierarchy, $\mathcal{S} := \cup_n \mathcal{S}_n$, where $\mathcal{S}_0$ defines the leaf classes.
Learning a semantic segmentation model with the finest label space, $\mathcal{S}_0$, reduces the problem to the conventional supervised scenario \cite{Shelhamer:2017:FCN,Chen:2018:EDA}, since it defines the granularity limit set by the available annotation in a given benchmark.
In addition to $\mathcal{S}_0$, we construct $\mathcal{S}_1$ by defining ``meta-classes'', which \emph{semantically} agglomerate one or more categories from $\mathcal{S}_0$ into a common parent class.
In Cityscapes \cite{Cordts:2016:TCD}, for example, one defines a parent class ``Human'' comprising child classes ``Person'' and ``Rider''.
While one could create deep hierarchical structures, the limited annotation in semantic segmentation only allows for hierarchies up to $n = 2$ -- more rarely $n = 3$, in practice \cite{Li:2022:DHS}.
With the label hierarchy thus defined, our goal now is to maximize the segmentation accuracy (\eg mIoU or mean pixel accuracy), evaluated separately for each level of the tree.

\begin{figure}
\centering
  \begin{subfigure}{\linewidth}
   \centering
   \def\svgwidth{1.0\linewidth}
  \input{figures/tree/tree.tex}
  \caption{Generating a non-semantic label hierarchy -- an example.}
  \label{fig:hierarchy}
  \end{subfigure}
    
  \vspace{1em}

\begin{subfigure}{\linewidth}
 \centering
     \footnotesize
    \begin{tikzpicture}[every node/.style={font=\footnotesize}]
        \pgfplotstableread[row sep=\\,col sep=&]{
            name & acc  \\
            mIoU & 80.28 \\
            mAcc & 86.27 \\
            aAcc & 96.01 \\
        }\dataA
        
        \pgfplotstableread[row sep=\\,col sep=&]{
            name & acc  \\
            mIoU & 80.20 \\
            mAcc & 86.18 \\
            aAcc & 95.92 \\
        }\dataB
    
        \pgfplotstableread[row sep=\\,col sep=&]{
            name & acc  \\
            mIoU & 79.86 \\
            mAcc & 85.69 \\
            aAcc & 95.90 \\
        }\dataC
    
        \definecolor{city}{RGB}{230,159,0}
        \definecolor{coco}{RGB}{86, 180, 233}
        \definecolor{pots}{RGB}{0, 158, 115}
        
        \begin{axis}[
            width=\linewidth,
            height=10em,
            ybar,
            bar width=0.53cm,
            x=2cm,
            enlarge x limits=0.35,
            ymin=0, ymax=100,
            axis y line*=left,
            axis x line*=bottom,
            symbolic x coords={mIoU, mAcc, aAcc},
            xtick=data,
            ytick pos=left,
            xtick style={draw=none},
            x tick label style={text width=2cm,align=center, scale=1, transform shape},
            y tick label style={scale=0.85},
            legend style={at={(0.48,-0.5)},draw=black,anchor=north,legend columns=-1, scale=0.85},
            nodes near coords,
            nodes near coords style={font=\rm},
            every node near coord/.append,
            every node near coord/.append style={scale=0.85,  /pgf/number format/assume math mode=true},
            ytick={
            	0, 20, 40, 60, 80, 100
            },
            yticklabels={
            	0, 20, 40, 60, 80, 100
            },
            legend image code/.code={
            \draw [#1] (0cm,-0.1cm) rectangle (0.2cm,0.1cm); },
            ]
            \addplot[city,fill=city] table[x=name,y=acc]{\dataA};
            \addplot[coco,fill=coco] table[x=name,y=acc]{\dataB};
            \addplot[pots,fill=pots] table[x=name,y=acc]{\dataC};
            \legend{Tree-A\:\:,Tree-B\:\:,Tree-C}
        \end{axis}
    \end{tikzpicture}
    \caption{Segmentation accuracy does not change between semantic (Tree-A) and non-semantic hierarchies (Tree-B, Tree-C).}
    \label{fig:hierarchy_eucl}
  \end{subfigure}
  \caption{\textbf{Training with non-semantic hierarchies.} We train semantic segmentation models with non-semantic trees. \textit{(a)} For example, we define class ``Person'' as a parent of a ``vehicle'', which is clearly semantically meaningless. \textit{(b)} Non-semantic hierarchies (``Tree-B'' and ``Tree-C'') do not affect the segmentation accuracy of semantic hierarchy (``Tree-A'') in a significant way.}
  \label{fig:hierarchy_combined}
\end{figure}

Empirical observations from previous work \cite{Li:2022:DHS,Li:2023:LPV} suggest that defining the meta-classes, \eg $\mathcal{S}_1$, improve the accuracy of the children categories $\mathcal{S}_0$.
\emph{What explains this phenomenon?}
After all, the only additional supervision signal in our new hierarchical formulation is the semantic proximity of some classes in $\mathcal{S}_0$.
However, this does not immediately render the learning problem easier.
In fact, since we add an additional classification problem over categories in $\mathcal{S}_1$, optimization may become even more difficult.

The hypothesis that the semantic proximity between child categories provides a complementary supervision signal asks for empirical validation.
We design meta-classes in a semantically meaningless fashion and train a DeepLabv3+ with ResNet-101 backbone \cite{Chen:2018:EDA} following HSSN training objective \cite{Li:2022:DHS} on Cityscapes \textit{train} \cite{Cordts:2016:TCD}.
For example, we define ``Car'' as a child of ``Human'', and ``Terrain'' as a sibling of ``Sky''.
Observing the results in \cref{fig:hierarchy_eucl} on Cityscapes \textit{val}, we conclude that such semantically incoherent definitions of new meta-classes do not have much effect on the segmentation accuracy of the leaf categories (\ie the standard 19 semantic classes in Cityscapes \cite{Cordts:2016:TCD}, which remained unchanged).
This experiment suggests that the empirical benefits reported by \citet{Li:2022:DHS,Li:2023:LPV} may not be related to the \emph{semantic} definition of the label hierarchy.

\subsection{Flat classifiers are strong baselines}

As discussed in \cref{sec:related}, \emph{flat} classifiers offer a reasonable sanity check regardless of the problem domain.
Indeed, we found that the HSSN model exhibits a strong bias towards the specific traffic scenes of the training domain (Cityscapes), while performing poorly on the novel domains.
By contrast, a flat classifier consistently outperforms HSSN on the test datasets (\cf \cref{table:ood_deeplab}.).
These results may not strike us as surprising and are in line with our intuition developed in the previous section.
A less expected result, however, is that we also observed superior accuracy of flat classifiers on the parent categories.
Recall that we represent the semantic hierarchical relations as a fixed \emph{stationary} tree, defined by three properties: \textit{(i)} Every level of the label hierarchy forms a categorical distribution. 
\textit{(ii)} A prediction at a terminal node determines the complete hierarchical label (due to the uniqueness of the path to the root).
\textit{(iii)} The label hierarchy remains unchanged in training and testing.
As we are dealing with a closed-world taxonomy, it is straightforward that the conditional probability of a parent class can be expressed with only the conditional probabilities of its children, which we formalize as follows:

\begin{proposition}\label{prop:infer_parent}
Let $y$ be the parent class and $m_y$ denote the children of that class in image $\mathcal{I}$.
Given a stationary tree defined above, the optimal class posterior $p(\hat{y} = y \mid \mathcal{I})$ for a parent node $y$ with child class posterior $p(\hat{m}_{y} = m_y \mid \mathcal{I})$ is given by
\begin{align}\label{eq:parent_posterior}
\begin{split}
p(\hat{y} = y \mid I) & = \sum_m p(\hat{y} = y \mid m) \; p(\hat{m} = m \mid \mathcal{I}) \\ & = \sum_{m_y} p(\hat{m}_y = m_y \mid \mathcal{I}).
\end{split}
\end{align}
\end{proposition}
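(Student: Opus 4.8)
\emph{Proof plan.} The plan is to obtain \eqref{eq:parent_posterior} in two moves: an application of the law of total probability over the child variable, followed by a collapse of the resulting sum using the deterministic structure of the stationary tree. Throughout, it suffices to treat one pair of consecutive levels $\mathcal{S}_{n+1},\mathcal{S}_n$, since the argument is verbatim for any such pair and the general claim follows by composing them along a root-to-leaf path.

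First I would introduce the child random variable $\hat{m}$ and marginalize over it:
\begin{align*}
p(\hat{y} = y \mid \mathcal{I}) = \sum_{m} p(\hat{y} = y \mid \hat{m} = m, \mathcal{I})\; p(\hat{m} = m \mid \mathcal{I}),
\end{align*}
where $m$ ranges over all categories of the level below. This step uses only property~(i): each level forms a categorical distribution, so the events $\{\hat{m} = m\}$ partition the outcome space and the conditional probabilities are well defined. The key step is to simplify the first factor. By property~(ii) the path from any terminal node to the root is unique, hence the parent label is a \emph{deterministic} function $\pi(\cdot)$ of the child label, and by property~(iii) this map does not change between training and testing. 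Consequently, once $\hat{m}$ is known, $\mathcal{I}$ carries no further information about $\hat{y}$, i.e. $\hat{y} \bigCI \mathcal{I} \mid \hat{m}$, so that
\begin{align*}
p(\hat{y} = y \mid \hat{m} = m, \mathcal{I}) = p(\hat{y} = y \mid \hat{m} = m) = \mathbf{1}\big[\pi(m) = y\big].
\end{align*}
Substituting this indicator into the marginalization eliminates every term whose child is not a child of $y$, leaving exactly $\sum_{m_y} p(\hat{m}_y = m_y \mid \mathcal{I})$, which is the stated identity. (Equivalently, one can argue directly that $\{\hat{y} = y\}$ equals the disjoint union $\bigcup_{m_y}\{\hat{m} = m_y\}$ and invoke finite additivity; the two derivations coincide.)

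The only genuine subtlety — and the step I would state most carefully — is the conditional independence $\hat{y} \bigCI \mathcal{I} \mid \hat{m}$, which is really what ``optimal'' must mean here: the parent posterior is the one \emph{consistent} with the fixed tree. A model that produces parent logits by independent one-\textit{vs}-all heads (as in HSSN~\cite{Li:2022:DHS}) need not satisfy \eqref{eq:parent_posterior} exactly; the identity characterizes the Bayes-optimal predictor that respects the hierarchy. I would therefore make this hypothesis explicit at the outset, since it is precisely the inductive bias that the remainder of the paper scrutinizes; everything else in the proof is a one-line consequence of the law of total probability and finite additivity.
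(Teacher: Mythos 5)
Your proposal is correct and matches the paper's own (largely implicit) justification: the paper offers no separate proof, and its two-line display in \cref{eq:parent_posterior} is exactly your argument — the law of total probability over the child variable followed by the collapse of $p(\hat{y}=y\mid m)$ to an indicator of the parent map. Your explicit flagging of the conditional independence of $\hat{y}$ and $\mathcal{I}$ given $\hat{m}$ is a welcome clarification of a step the paper leaves tacit, but it does not constitute a different route.
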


Note that prior work on hierarchical semantic segmentation \cite{Li:2022:DHS,Li:2023:LPV} independently models the parent posterior $p(\hat{y} = y \mid \mathcal{I})$.
The above proposition states the sufficiency of predicting the child nodes independently from the parent nodes, and then inferring the parent posterior with \cref{eq:parent_posterior}.

\subsection{The parent bias in Euclidean space}

Class embeddings produced in the Euclidean space tend to produce a multi-modal distribution, where each class forms an independent mode (a cluster).
The modes exhibit a particular rank-based arrangement and we can loosely establish that, for example, class ``A'' is closer to class ``B'', in terms of the Euclidean distance (\cf \cref{sec:norms} for empirical support).
Since typical classification errors occur at the decision boundaries, the spatial proximity of two class embeddings presents an inherent bias of that embedding space.
For instance, if classes ``A'' and ``B'' are close in the embedding space and end up in different parent categories, this will lead to suboptimal accuracy on the parent level.
Without any prior on the parent taxonomy, which is task-specific and would lead to the parent bias, we can encourage the modes of our class embeddings to be approximately equidistant.
In the Euclidean space, this would require additional regularization.
However, it can emerge naturally in hyperbolic space -- in the Poincaré ball model.  

\section{From Euclidean to hyperbolic geometry}

\inparagraph{Segmentation in Euclidean space.}
Let us formalize the training process of a deep network for semantic segmentation in Euclidean space.
Given an image $\mathcal{I} \in \mathbb{R}^{H \times W \times 3}$, we would like to predict label $l \in \mathcal{S}$ for each pixel $i \in \{1, ..., HW \}$, where $\mathcal{S}$ is the set of $\lvert \mathcal{S} \rvert$ class labels.
An encoder $f_\theta$ with parameters $\theta$ maps $\mathcal{I}$ to a set of pixel feature embeddings $\mathbf{X} = ( \mathbf{x}_i )_{i=1}^{HW} = f(\mathcal{I}) \in \mathbb{R}^{HW \times d}$.
In the last layer, for each pixel $i$, we obtain a segmentation map, modeled as the class posterior,
\begin{equation}
p(\hat{l} = l \mid \mathbf{x}_i) \propto \exp(a_{l}^{\top} \mathbf{x}_i + b_{l}) \, .
\end{equation}
Here, $\{ (a_l, b_l) \}_{l=1}^{\lvert \mathcal{S} \rvert}$ are classifier parameters and define hyperplanes for each class $l \in \mathcal{S}$ in the Euclidean space.
During training, we jointly optimize for $\Theta := \big\{ \theta, \{ (a_l, b_l) \}_{l=1}^{\lvert \mathcal{S} \rvert} \big\}$ with backpropagation, minimizing the (expected) cross-entropy loss for each pixel $i$,
\begin{equation}
\underset{\Theta}{\text{min}} \quad \mathbb{E}_\mathcal{I} \big[ - \log{ p(\hat{l} = l_i^\ast \mid \mathbf{x}_i) } \big],
\end{equation}
where $l_i^\ast$ is the ground-truth label for pixel $i$.

By analogy with the Euclidean setup, the per-pixel classification in the hyperbolic space involves gyroplanes (\cf \cref{fig:teaser}), defined by offsets and normals \cite{ganea2018}.
Let us revisit some basic notions of hyperbolic geometry.

\subsection{The Poincaré ball model}
\label{sec:poincare_segmentation}

\inparagraph{Poincaré Ball and Exponential Map.}
Hyperbolic geometry can be expressed with different conformal models \cite{cannon97hyperbolic}. We operate on the Poincaré ball $(\mathbb{D}_{c}^{n}, g^{\mathbb{D}_{c}^{n}})$ with $-c$ denoting the negative curvature, and $g^{\mathbb{D}_{c}^{n}}$ being the Riemannian metric associated with the manifold $\mathbb{D}_{c}^{n} = \{ x \in \mathbb{R}^{n} \mid c \lVert x \rVert < 1\}$. $g^{\mathbb{D}_{c}^{n}}$ can be linked to the Euclidian metric tensor $g^{\mathbb{E}} = \mathbf{I}_n$ via:
\begin{equation}
g^{\mathbb{D}_{c}^{n}} = (\lambda_{x}^{c})^{2}g^{\mathbb{E}}= \bigg(\frac{2}{1 - c \lVert x \rVert^{2}}\bigg)^{2} g^{\mathbb{E}} \, .
\end{equation}

The exponential map between the Euclidean space $\mathbb{R}^{n}$ and the Poincaré ball $\mathbb{D}_{c}^{n}$ with anchor $v$ is defined as:
\begin{equation}
\text{Exp}_{v}^{c}(x) = v \oplus_{c} \bigg(\tanh(\sqrt{c}\frac{\lambda_{v}^{c}\lVert x \rVert}{2})\frac{x}{\sqrt{c}\lVert x \rVert}\bigg) \, ,
\end{equation}
where $\oplus_{c}$ is the Möbius hyperbolic addition defined as:
\begin{equation}
v \oplus_{c} w = \frac{(1 + 2c \langle v, w \rangle + c \lVert w \rVert ^{2})v + (1 - c\lVert v \rVert^{2})w}{1 + 2c \langle v, w \rangle + c^{2}\lVert v \rVert ^{2} \lVert w \rVert ^{2}} \, ,
\end{equation}
for all $v, w \in \mathbb{D}^{n}_{c}$.
For simplicity, $v$ is set to the origin $0$ and then the considered exponential map is:
\begin{equation}
\text{Exp}_{0}^{c}(x) = \tanh(\sqrt{c}\lVert x \rVert)\frac{x}{\sqrt{c}\lVert x \rVert} \, .
\end{equation}

\inparagraph{Hyperbolic Distance.}
The hyperbolic distance between $x$ and $z$ on the Poincaré ball is given by:
\begin{equation}\label{eq:hyperbolic_distance}
d_{\mathbb{H}}(x,z) = \text{arcosh}\bigg(1 + 2 \frac{\lVert x - z \rVert ^{2}}{(1 - \lVert x \rVert^{2})(1 - \lVert z \rVert^{2})}\bigg) \, .
\end{equation}

\inparagraph{Hyperbolic Multinomial Logistic Regression (MLR).}
Given a hyperbolic vector $h$ and $K$ classes, \citet{ganea2018} provide a geometric interpretation of the hyperbolic MLR by defining gyroplanes as:
\begin{equation}
H_{k}^{c} = \{ h \in \mathbb{D}_{c}^{n} \mid \langle -r_{k} \oplus_{c} h, a_{k} \rangle = 0\}\,,
\end{equation}
where $k \in \{1,...,K\}$ and $r_{k}$ and $a_{k}$ are respectively the gyroplane offset and the normal associated with class $k$.
The hyperbolic distance between $h$ and the gyroplane of class $k$ is given by:
\begin{equation}\label{eq:hyperdist}
d_{\mathbb{H}}(h,H_{y}^{c}) = \frac{1}{\sqrt{c}} \text{arcsinh}\bigg(\frac{2\sqrt{c} \lvert \langle -r_{y}\oplus_{c}h, w_{y} \rangle \rvert}{(1-c\lVert -r_{y}\oplus_{c} h\rVert^{2})\lVert w_{y} \rVert}\bigg) \,.
\end{equation}
Based on this distance, we define the hyperbolic logit:
\begin{equation}\label{eq:hyperbolic_logit}
\zeta_{y}(h) := \frac{\lambda_{r_{y}}^{c}\lVert w_{y} \rVert }{\sqrt{c}} \text{arcsinh}\bigg(\frac{2\sqrt{c} \langle -r_{y}\oplus_{c}h, w_{y} \rangle}{(1-c\lVert -r_{y}\oplus_{c} h\rVert^{2})\lVert w_{y} \rVert}\bigg) \,,
\end{equation}
and the likelihood,
\begin{equation}\label{eq:hyp_likelihood}
p(\hat{y} := y \mid h) \propto \exp(\zeta_{y}(h)) \, .
\end{equation} 

\subsection{Calibrated segmentation in the Poincaré ball}
Following \citet{Atigh:2022:HIS}, we perform the per-pixel classification in the hyperbolic space. We project $\mathbf{X}$ onto the Poincaré ball with the mapping $\text{Exp}_{0}^{c}(\cdot)$ to get the hyperbolic embedding $\mathbf{H} \in \mathbb{D}^{H \times W \times n}$. Then, we optimize the obtained likelihood (\cf \cref{eq:hyp_likelihood}) with the standard cross-entropy loss.
Once the classification is performed in the Poincaré ball, we link the hyperbolic logits (\cref{eq:hyperbolic_logit}) to the confidence of the prediction, by analogy with Euclidean networks \cite{guo2016}. 
To our knowledge, while this extension of Euclidean calibration to hyperbolic networks is straightforward, we are the first to present its experimental analysis.
Nevertheless, the analogy with Euclidean space has its limitations.
Focusing on the hyperbolic distance, we demonstrate its concave property \wrt Euclidean distance.
This property allows us to establish a distinguishing feature of the hyperbolic space in modeling inter-class relationships.

\subsection{From concavity to flattening bias}\label{subsec:flattening}

\begin{figure}
    \centering
    \def\svgwidth{1.0\linewidth}
    \input{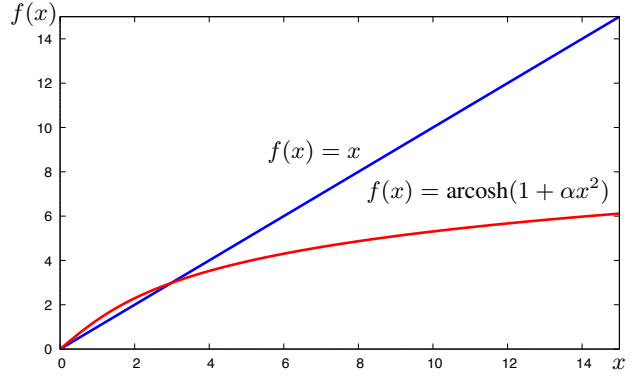} 
  \caption{\textbf{Hyperbolic distance exhibits strict concavity \wrt Euclidean distance.} Observe (assuming $\alpha=1$ for simplicity) that the hyperbolic distance has sublinear (logarithmic) growth. In practical terms, this implies that the difference in the distance between two hyperbolic representations as $x$ increases will diminish, while it remains constant in Euclidean space.}
  \label{fig:arcosh}
\end{figure}

\inparagraph{On the hyperbolic distance.}
Hyperbolic geometry naturally embeds hierarchical structure \cite{Sala:2018:RTH}.
However, we argue here that the hyperbolic space lends itself well also for \emph{flat} classification.
We observe that during training, the hyperbolic embeddings of the same class tend to be pushed to the periphery of the Poincaré ball and onto the same side of the associated gyroplane \cite{Atigh:2022:HIS}.
Studying inter-class hyperbolic distance (\cf \cref{eq:hyperdist}), we further find that embeddings of one class are approximately equidistant to the embeddings of any other class.
This implies that, in contrast to the Euclidean space, there is no parent bias in the Poincaré ball, \ie there is no preference for a specific grouping of child categories into parents.
The following proposition provides the formal argument explaining our observation:

\begin{proposition}\label{prop:concavity}
    The hyperbolic distance between two embeddings $h_{1},h_{2}$ is strictly concave in the Euclidean distance between $h_{1}$ and $h_{2}$.
\end{proposition}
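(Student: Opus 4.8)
The plan is to write $d_{\mathbb{H}}(h_1,h_2)$ explicitly as a one-variable function of the Euclidean distance $t := \lVert h_1 - h_2 \rVert$, regarding the endpoint norms $\lVert h_1 \rVert, \lVert h_2 \rVert \in [0,1)$ as fixed parameters (this is what the statement must mean, since $d_{\mathbb{H}}$ depends on $h_1,h_2$ through more than $\lVert h_1-h_2\rVert$), and then to verify strict concavity of that function on $[0,\infty)$, hence on the sub-interval of admissible values of $t$. Starting from \cref{eq:hyperbolic_distance}, set $\alpha := \tfrac{2}{(1-\lVert h_1\rVert^2)(1-\lVert h_2\rVert^2)}$, which is finite and strictly positive precisely because $h_1,h_2$ lie in the open ball. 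Then $d_{\mathbb{H}}(h_1,h_2) = g(t)$ with $g(t) = \text{arcosh}(1+\alpha t^2)$.

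The cleanest route uses the identity $\cosh(2\theta) = 1 + 2\sinh^2\theta$, i.e. $\text{arcosh}(1+2s^2) = 2\,\text{arcsinh}(s)$ for $s\ge 0$. Writing $\alpha t^2 = 2s^2$ with $s = \sqrt{\alpha/2}\,t$ gives
\[
g(t) = 2\,\text{arcsinh}\!\left(\sqrt{\alpha/2}\;t\right) = 2\,\text{arcsinh}\!\left(\frac{\lVert h_1-h_2\rVert}{\sqrt{(1-\lVert h_1\rVert^2)(1-\lVert h_2\rVert^2)}}\right).
\]
(For general curvature $-c$ the only changes are an overall factor $1/\sqrt c$ and $\alpha \mapsto c\alpha$, neither affecting the argument.) Next I would record that $\text{arcsinh}$ is strictly concave on $[0,\infty)$: its derivative $u\mapsto(1+u^2)^{-1/2}$ is continuous and strictly decreasing there (equivalently $\text{arcsinh}''(u) = -u(1+u^2)^{-3/2} < 0$ for $u>0$). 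Since $t \mapsto \sqrt{\alpha/2}\,t$ is an injective linear map and strict concavity is preserved under precomposition with an injective affine map and multiplication by the positive constant $2$, it follows that $g$ is strictly concave on $[0,\infty)$ — which is the claim, and which also yields the sublinear growth $g(t)\sim 2\ln t$ depicted in \cref{fig:arcosh}.

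As an alternative I could skip the identity and differentiate $g(t)=\text{arcosh}(1+\alpha t^2)$ directly: a short computation gives $g'(t) = 2\sqrt{\alpha}\,(2+\alpha t^2)^{-1/2}$ and $g''(t) = -2\alpha^{3/2}t\,(2+\alpha t^2)^{-3/2} < 0$ for $t>0$, giving the same conclusion. There is no real obstacle here; the one point I would be careful to state up front is the parametrization — "concave in the Euclidean distance" is to be read with the two endpoint norms held fixed — after which the result is a one-line concavity check.
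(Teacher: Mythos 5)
Your proposal is correct and takes essentially the same route as the paper: both fix the endpoint norms, view $d_{\mathbb{H}}=\operatorname{arcosh}\bigl(1+2\,d_{\mathbb{E}}^{2}/D\bigr)$ with $D=(1-\lVert h_1\rVert^2)(1-\lVert h_2\rVert^2)$ as a one-variable function of $d_{\mathbb{E}}$, and conclude from the fact that its derivative $2\big/\bigl(\sqrt{D}\sqrt{1+d_{\mathbb{E}}^{2}/D}\bigr)$ is strictly decreasing. Your reformulation via $\operatorname{arcosh}(1+2s^2)=2\operatorname{arcsinh}(s)$ is a pleasant cosmetic simplification of the same computation, and your explicit remark that the statement must be read with the endpoint norms held fixed is a point the paper leaves implicit.
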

\begin{proof}
We can write the hyperbolic distance $d_{\mathbb{H}}$ between two embeddings in the Poincaré ball as a function of the Euclidean distance $d_{\mathbb{E}}(h_{1},h_{2})$.
The derivative of $d_{\mathbb{H}}$ with respect to $d_{\mathbb{E}}(h_{1},h_{2})$ is then
\begin{equation}
\frac{2}{\sqrt{(1-\lVert h_{1} \rVert^{2})(1 - \lVert h_{2} \rVert^{2})} \sqrt{1 + \frac{d_{\mathbb{E}}^{2}}{(1-\lVert h_{1} \rVert^{2})(1-\lVert h_{2} \rVert^{2})}}} \, ,
\end{equation}
which is strictly decreasing in $d_{\mathbb{E}}$.
\end{proof}

\cref{fig:arcosh} illustrates this proposition. Given the concave property of the hyperbolic distance, we postulate that the Poincaré ball formulation facilitates distance uniformity between classes.
Since the hyperbolic embeddings are pushed to the border of the Poincaré ball during training, the norms $\| h_{1}\|$ and $\| h_{2}\|$ in \cref{eq:hyperbolic_distance} are close to $1$.
Therefore, we operate at the high-end spectrum of the domain in \cref{fig:arcosh} (see \cref{sec:norms} for further details).
Consequently, changes in the class embeddings have a diminished effect on the distance in the Poincaré ball, whereas in the Euclidean space, this relationship is linear.
Demonstrating a practical consequence, the next section experimentally confirms that the hyperbolic space leads to strong parent-level generalization in semantic segmentation.

\begin{table}
\footnotesize
\sisetup{detect-weight=true,detect-inline-weight=math}
\setlength\tabcolsep{3pt}
\begin{tabularx}{\linewidth}{
cX
R@{\hspace{1em}}B@{\hspace{1.8em}}
R@{\hspace{1em}}B
}
\toprule
\parbox[t]{2mm}{\multirow{2}{*}{\rotatebox[origin=c]{90}{Dataset}}}
 & \multirow{2}{*}{Method}
 & \multicolumn{4}{c}{cwECE} \\
 
\cmidrule(lr){3-6}
 & & \multicolumn{2}{c}{Level 1} & \multicolumn{2}{c}{Level 0} \\
\midrule
\parbox[t]{2mm}{\multirow{4}{*}{\rotatebox[origin=c]{90}{Cityscapes}}} &
HSSN$^\ast$ \cite{Li:2022:DHS} & 0.90 & {--} &  5.97 &  {--}  \\
& HSSN \cite{Li:2022:DHS} & 0.79 & 0.79 & 6.85 & 5.09  \\
& \oureuc  & \bfseries 0.52 & \bfseries 0.60 & 4.40 & \bfseries 3.97  \\
& \ourhyp  & 0.66 & 0.73 & \bfseries 4.35 & 4.09  \\
\midrule

\parbox[t]{2mm}{\multirow{4}{*}{\rotatebox[origin=c]{90}{Mapillary}}} &
HSSN$^\ast$ \cite{Li:2022:DHS} & 4.26 & {--} & 17.39 &  {--}  \\
& HSSN \cite{Li:2022:DHS} & 4.47 & 4.35 & 18.44 & 15.59 \\
& \oureuc  & \bfseries 2.84 & 3.62 & \bfseries 11.03 & \bfseries 11.26  \\
& \ourhyp  & 2.88 & \bfseries 3.38 & 11.44& 11.42   \\

\midrule
\parbox[t]{2mm}{\multirow{4}{*}{\rotatebox[origin=c]{90}{IDD}}} &
HSSN$^\ast$ \cite{Li:2022:DHS} & 7.85 & {--} &  17.74 &  {--}  \\
& HSSN \cite{Li:2022:DHS} & 7.86 & 7.56 & 19.39 & 17.24   \\
& \oureuc  & \bfseries 5.94 & 7.24 & \bfseries \bfseries 12.77 & 13.43 \\
& \ourhyp  & 6.05 & \bfseries 6.27 & 13.37 & \bfseries 10.87   \\
\midrule

\parbox[t]{2mm}{\multirow{4}{*}{\rotatebox[origin=c]{90}{ACDC}}} &
HSSN$^\ast$ \cite{Li:2022:DHS} & 8.12 & {--} & 23.54 &  {--}  \\
& HSSN \cite{Li:2022:DHS} & 6.78 & 8.40 & 21.18 & 20.92  \\
& \oureuc  & 6.55 & \bfseries 5.96 & \bfseries 16.63 & 16.49  \\
& \ourhyp  & \bfseries 6.34 & 6.03 & 17.58 & \bfseries 12.64   \\
\midrule

\parbox[t]{2mm}{\multirow{4}{*}{\rotatebox[origin=c]{90}{BDD}}} &
HSSN$^\ast$ \cite{Li:2022:DHS} & 8.68 & {--} &  26.37 &  {--}  \\
& HSSN \cite{Li:2022:DHS} & 8.82 & 6.77 & 29.13 & 23.71  \\
& \oureuc  & 7.40 & 7.05& 21.44 & 20.95  \\
& \ourhyp  & \bfseries 6.55 & \bfseries 6.09 & \bfseries 20.71 & \bfseries 18.58   \\
\midrule

\parbox[t]{2mm}{\multirow{4}{*}{\rotatebox[origin=c]{90}{Wilddash}}} &
HSSN$^\ast$ \cite{Li:2022:DHS} & 14.28 & {--} &  32.32 &  {--}  \\
& HSSN \cite{Li:2022:DHS} & 14.95 & 11.48 & 33.21 & 28.23  \\
& \oureuc  & 13.53 & 13.16 & 23.25 & 23.93  \\
& \ourhyp  & \bfseries 9.52 & \bfseries 10.04 & \bfseries 20.17& \bfseries 18.45  \\

\bottomrule

\end{tabularx}
\setlength{\fboxsep}{0.2pt}
\caption{\textbf{Calibration quality (cwECE).} We train \colorbox{lightred}{DeepLabv3+/ResNet-101} and \colorbox{lightblue}{OCRNet/HRNet-W48} on Cityscapes (train) and report the results for six datasets. HSSN$^\ast$ corresponds to the pretrained model provided by the authors, only available for DeepLabV3+.}
\label{table:ood_calib}
\end{table}
\section{Experiments}

\inparagraph{Datasets.}
Different from prior work, we perform our analysis by testing the models on 6 datasets: Cityscapes \cite{cordts2016cityscapes}, Mapillary \cite{neuhold2017mapillary}, IDD \cite{varma2019idd}, ACDC \cite{sakaridis2021acdc}, BDD \cite{yu2020bdd100k} and Wilddash \cite{zendel2018wilddash}.
Note that since we train our models on Cityscapes, the datasets with a larger visual domain shift (ACDC, BDD and Wilddash) are more challenging.

\inparagraph{Metrics.}
We compare the models in terms of segmentation accuracy and calibration quality, on both the child and parent nodes (19 and 7 classes, respectively).
To evaluate the accuracy, we use the mean Intersection-over-Union (mIoU), the mean accuracy over classes (mAcc) and the average pixel accuracy (aAcc).
We follow \citet{kull2019beyond} to derive a calibration metric by comparing the difference between the confidence and accuracy of class predictions.
We report the class-wise expected calibration error (cwECE).

\inparagraph{Models.}
We use DeepLabV3+ with ResNet-101 backbone \cite{chen2018encoder} and OCRNet with HRNet-W48 backbone \cite{yuan2020object}, with the backbones pre-trained on ImageNet \cite{Deng:2009:INA}.
We train each model on Cityscapes \textit{train} with fine annotations \cite{cordts2016cityscapes} for 80K iterations. 
Leveraging the child class posterior probabilities (Level 0), flat models compute the parent posterior (Level 1) using \cref{eq:parent_posterior}.

\begin{table*}
\footnotesize
\sisetup{detect-weight=true,detect-inline-weight=math}
\setlength\tabcolsep{3pt}
\begin{tabularx}{\linewidth}{
cX
R@{\hspace{1em}}B@{\hspace{1.8em}}
R@{\hspace{1em}}B@{\hspace{1.8em}}
R@{\hspace{1em}}B@{\hspace{1.8em}}
R@{\hspace{1em}}B@{\hspace{1.8em}}
R@{\hspace{1em}}B@{\hspace{1.8em}}
R@{\hspace{1em}}B
}
\toprule
\parbox[t]{2mm}{\multirow{2}{*}{\rotatebox[origin=c]{90}{Dataset}}}
 & \multirow{2}{*}{Method}
 & \multicolumn{6}{c}{Level 1} 
 & \multicolumn{6}{c}{Level 0} \\

\cmidrule(lr){3-8}\cmidrule(lr){9-14}
 & & \multicolumn{2}{c}{mIoU} & \multicolumn{2}{c}{mAcc} & \multicolumn{2}{c}{aAcc}  & \multicolumn{2}{c}{mIoU} & \multicolumn{2}{c}{mAcc} & \multicolumn{2}{c}{aAcc}  \\
\midrule
\parbox[t]{2mm}{\multirow{4}{*}{\rotatebox[origin=c]{90}{Cityscapes}}} &
HSSN$^\ast$ \ \cite{Li:2022:DHS} & 90.82 & {--}  & 94.92 & {--}  & 97.35 & {--}  & \bfseries 81.62 & {--}  &\bfseries 87.90 & {--}  & 96.16 &  {--} \\

& HSSN \cite{Li:2022:DHS} & 90.68 & 90.27 & 94.47 & 94.25 & 97.29 &  97.20 & 80.30 & \bfseries 80.21 & 86.11 & 86.62 & 95.93& 95.98 \\

& \oureuc  &  \bfseries 90.96 &  \bfseries90.57 & \bfseries95.24 & \bfseries 94.80 &   \bfseries 97.64 & \bfseries 97.52  &  80.89 & 79.82 & 87.53 & \bfseries 87.34 & \bfseries 96.56 & \bfseries 96.27  \\

& \ourhyp  & 90.91 & 90.47 & 95.08 & 94.67  & 97.60 & 97.48 & 80.36 & 79.28 & 86.83 & 86.80 & 96.41 & 96.26 \\

\midrule
\parbox[t]{2mm}{\multirow{4}{*}{\rotatebox[origin=c]{90}{Mapillary}}} &
HSSN$^\ast$ \ \cite{Li:2022:DHS} & \bfseries 83.26 & {--}  & 89.37 & {--}  & \bfseries 94.70 & {--}  & 62.32 & {--}  & 72.53 & {--}  & 90.37 &  {--} \\

& HSSN \cite{Li:2022:DHS} & 81.89 & 79.09 & 88.35 & 86.19 &93.77 & 91.91 & 59.34 & 59.09 &70.37 & 70.62 & 89.53& 87.54  \\

& \oureuc  &  83.11 &  80.27 & \bfseries 90.73 & 89.14 &   93.98 & 93.7  &  \bfseries 63.94 & \bfseries60.47 & \bfseries 76.44 & \bfseries 74.62 & \bfseries 90.47 & 90.12  \\

& \ourhyp  & 81.87 & \bfseries 81.89 & 90.15 & \bfseries 89.23 & 93.41 & \bfseries 94.9 &  60.34 & 58.66 & 74.88 & 73.55 & 89.60 &\bfseries 90.96 \\

\midrule
\parbox[t]{2mm}{\multirow{4}{*}{\rotatebox[origin=c]{90}{IDD}}} &
HSSN$^\ast$ \ \cite{Li:2022:DHS} & 79.03 & {--}  & 84.37 & {--}  & 94.46 & {--}  &  58.33 & {--}  & 70.05 & {--}  & 91.57 &  {--} \\

& HSSN \cite{Li:2022:DHS} & 78.52 & 77.69 & 84.00 & 83.01 & 94.45 & 94.29 & 55.21 & 58.14 &67.37 & 67.21 & 90.57& 91.79  \\

& \oureuc  &  \bf81.27 &  79.30 & \bfseries 87.08 & 85.03 &   95.29 & 94.90  &  \bfseries 61.64 & 58.50 & \bfseries 73.70 & 71.65 & \bfseries 92.22 & 91.89  \\

& \ourhyp  & 80.98 & \bfseries 79.83 & 86.70 & \bfseries 85.25 & \bfseries 95.54 & \bfseries 95.27 &  58.76 & \bfseries 59.01 & 71.55 & \bfseries 72.88 & 91.97 &\bfseries 92.14  \\
\midrule

\parbox[t]{2mm}{\multirow{4}{*}{\rotatebox[origin=c]{90}{ACDC}}} &
HSSN$^\ast$ \ \cite{Li:2022:DHS} & 65.56 & {--}  & 78.60 & {--}  & 86.62 & {--}  &  42.97 & {--}  & 57.33 & {--}  & 81.62 &  {--} \\

& HSSN \cite{Li:2022:DHS} & 73.45 & 69.43 & 82.00 & 78.78 & 90.64 & 87.41 & 52.71 & 49.20 & 62.62 & 61.40 & 84.95& 82.10  \\

& \oureuc  &  \bfseries 75.18 &  \bfseries 71.29 & \bfseries 83.86 & \bfseries 83.47 &   \bfseries 91.80 & \bfseries 90.38  &  \bfseries 54.34 & \bfseries 51.90 & \bfseries 66.75 &  64.56 & \bfseries 86.79 & \bfseries 85.99 \\

& \ourhyp  & 72.90 & 70.88 & 83.66 & 82.58 & 91.78 & 90.20 &  47.72 & 49.31  & 62.46 & \bfseries 67.19 & 85.37 &85.85  \\
\midrule

\parbox[t]{2mm}{\multirow{4}{*}{\rotatebox[origin=c]{90}{BDD}}} &
HSSN$^\ast$ \ \cite{Li:2022:DHS} & 73.53 & {--}  & 82.25 & {--}  & 89.60 & {--}  &  48.32 & {--}  & 60.54 & {--}  & 86.76 &  {--} \\

& HSSN \cite{Li:2022:DHS} & 74.28 & 73.66 & 81.88 & 81.10 & 90.11 &90.08 & 47.75 & 48.22 & 57.96 & 60.18 & 86.85& 87.26  \\

& \oureuc  &  76.27 &  74.47 & 84.64& 83.52 &   91.22 & 90.62  & \bfseries 51.54 & \bfseries 50.04 & \bfseries 64.58 & 63.12 & 88.52 & 88.05  \\

& \ourhyp  & \bfseries 76.49 & \bfseries 76.62 & \bfseries 85.12 &\bfseries 84.34 & \bfseries 91.91 & \bfseries 92.12 & 49.64 & 49.60 & 63.21 & \bfseries 64.37 & \bfseries 89.00 &\bfseries 89.31  \\
\midrule

\parbox[t]{2mm}{\multirow{4}{*}{\rotatebox[origin=c]{90}{Wilddash}}} &
HSSN$^\ast$ \ \cite{Li:2022:DHS} & 57.20 & {--}  & 71.42 & {--}  & 76.85 & {--}  &  36.55 & {--}  & 50.61 & {--}  & 71.62 &  {--} \\

& HSSN \cite{Li:2022:DHS} & 58.60 & 59.80 & 71.65 & 71.22 & 76.49 &79.74 & 37.07 & 39.01 & 50.03 & 50.86 & 71.12& 74.42  \\

& \oureuc  &  58.98 & 57.11 & 74.15 & 72.64 &   76.75 & 75.74  &  39.38 & \bfseries 39.97 & 55.97 & 54.31 & 71.84 & 70.84 \\

& \ourhyp  & \bfseries 62.53 & \bfseries 62.52 & \bfseries 77.02 & \bfseries 75.77 & \bfseries 81.91 & \bfseries 81.88 &  \bfseries 40.57 & 39.48 & \bfseries 57.17 & \bfseries 57.57 & \bfseries 76.29 &\bfseries 76.35  \\

\bottomrule
\end{tabularx}
\setlength{\fboxsep}{0.2pt}
\caption{\textbf{Segmentation accuracy (mIoU, mAcc).} We train \colorbox{lightred}{DeepLabv3+/ResNet-101} and \colorbox{lightblue}{OCRNet/HRNet-W48} on Cityscapes \textit{train} and test them on six datasets. HSSN$^\ast$ corresponds to the pretrained model given by the authors, only available for DeepLabV3+.}
\label{table:ood_deeplab}
\end{table*}

\inparagraph{Implementation details.}
For a fair comparison, we follow HSSN \cite{Li:2022:DHS} to set the training hyper-parameters. 
For the hyperbolic networks, we use the optimization method in \cite{ganea2018}.
We optimize the offsets with Riemannian SGD \cite{bonnabel2013stochastic}, and set the learning rate to $0.0001$.
The normals are optimized with SGD in the Euclidean space, with a learning rate $0.001$.
The projection onto the Poincaré ball uses the Geoopt library \cite{geoopt2020kochurov}, setting curvature to $c = 1$.

\subsection{Quantitative results} 

 \cref{table:ood_calib} and \cref{table:ood_deeplab} report the calibration quality and the segmentation accuracy, respectively, for the hierarchical training (HSSN \cite{Li:2022:DHS}), and the flat Euclidean (\eucname) and hyperbolic models (\hypname). To ensure a fair comparison, we train all models with a consistent codebase and training schedule.
 For reference, we also report the results for HSSN$^\ast$ with DeepLabV3+ using the weights provided by the authors \cite{HSSN:2008:Repo}.
 A pre-trained model for OCRNet is not available. 
 Level 1 and Level 0 refer to the parent (classes $S_{1}$) and child (classes $S_{0}$) nodes in the hierarchical tree, respectively.
 In the context of our study, we are particularly interested in Level 1.

\inparagraph{Calibration quality.}
Referencing \cref{table:ood_calib}, we inspect the calibration quality of DeepLabV3+/ResNet-101 (shaded in red).
For child nodes (Level 0), the flat classifiers exhibit calibration quality on par with or better than HSSN.
For parent nodes (Level 1), \hypname is better calibrated than \eucname on challenging datasets ACDC/BDD/Wilddash (-0.21/-0.85/-4.01). Notably, the gap grows toward the most challenging testbeds, BDD and Wilddash.
The Poincaré ball model outperforms HSSN on five datasets, and, notably, for the most challenging datasets -- BDD and Wilddash (-2.27/-5.43 \wrt HSSN). 

Similarly for OCRNet/HRNet-W48 (shaded in blue, \cf \cref{table:ood_calib}), for child nodes (Level 0), the Poincaré ball is better calibrated by a large margin for the datasets with a large domain shift, IDD/ACDC/BDD/Wilddash (-2.56/-3.85/-2.37/-5.48 \wrt \eucname; -6.37/-8.28/-5.13/-9.78 \wrt HSSN). 
For parent nodes (Level 1), \hypname is better calibrated than its competitors for datasets Mapillary/IDD/BDD/Wilddash (-0.24/-0.97/-1.04/-3.12 \wrt \eucname; -0.97/-1.29/-0.68/-1.44 \wrt HSSN).

\inparagraph{Segmentation accuracy.} 
Let us examine the segmentation accuracy of DeepLabV3+/ResNet-101 in \cref{table:ood_deeplab}.
For child nodes (Level 0), flat classifiers \eucname and \hypname substantially outperform HSSN on Mapillary/IDD/ACDC/BDD/Wilddash, in terms of mAcc, aAcc and mIoU. 
For parent nodes (Level 1), the Poincaré ball model outperforms the Euclidean model and HSSN, in terms of aAcc for Mapillary/IDD/BDD/Wilddash.
For the most challenging datasets, BDD/Wilddash, \hypname clearly reaches the best results in terms of mIoU (+0.22/+3.55 \wrt \eucname; +2.21/+3.93 \wrt HSSN) and mAcc (+0.48/+2.87 \wrt \eucname, +3.24/+5.37 \wrt HSSN).
Even on the less challenging datasets (Cityscapes/Mapillary/IDD/ACDC), where \eucname outperforms the hyperbolic model on the parent-level predictions, the difference in mIoU between the child and parent nodes in the Euclidean and the hyperbolic model reduces (from -0.53/-3.60/-2.88/-6.62 to -0.07/-1.24/-0.29/-2.28) in favor of the hyperbolic model. This observation supports our analytical analysis of the parent bias in \cref{subsec:flattening}. 

Similarly for OCRNet/HRNet-W48, we observe that \hypname shows the best mAcc for IDD/ACDC/BDD/Wilddash on Level 0, by a significant margin.
For parent nodes (Level 1), \hypname outperforms hierarchical training and \eucname in terms of aAcc on Mapillary/IDD/BDD/Wilddash.
For the most challenging datasets, \hypname also clearly outperforms its competitors in terms of mIoU (+2.15/+5.41 \wrt the \eucname; +2.96/+2.72 \wrt HSSN, on BDD/Wilddash) and mAcc (+0.82/+3.13 \wrt \eucname; +3.24/+4.55 \wrt HSSN, on BDD/Wilddash). Similarly to DeepLabV3+, on less challenging datasets, Cityscapes/Mapillary/IDD/ACDC, the difference in mIoU scores from child to parent levels is larger for \hypname.
Unlike DeepLabV3+, the hyperbolic model outperforms the Euclidean model (+1.62) on the Mapillary dataset.

\subsection{Qualitative results}

In \cref{fig:qaulrest}, we visualize an example of semantic segmentation on Level 1, comparing HSSN to our Euclidean and hyperbolic networks.
We observe that HSSN mislabels parts of the ``Building'' as a ``Vehicle''.
Notably, the confidence of this incorrect prediction is high.
By contrast, both Euclidean and the hyperbolic networks largely predict the ``Building'' correctly, although with higher uncertainty than HSSN.
Rather curiously, the hyperbolic network exhibits a spatially smoother confidence map, which suggests a higher leverage of spatial correlations in the Poincaré ball model.

\begin{figure*}
\centering
  \begin{subfigure}{0.33\linewidth}
    \centering
    \includegraphics[width=\linewidth]{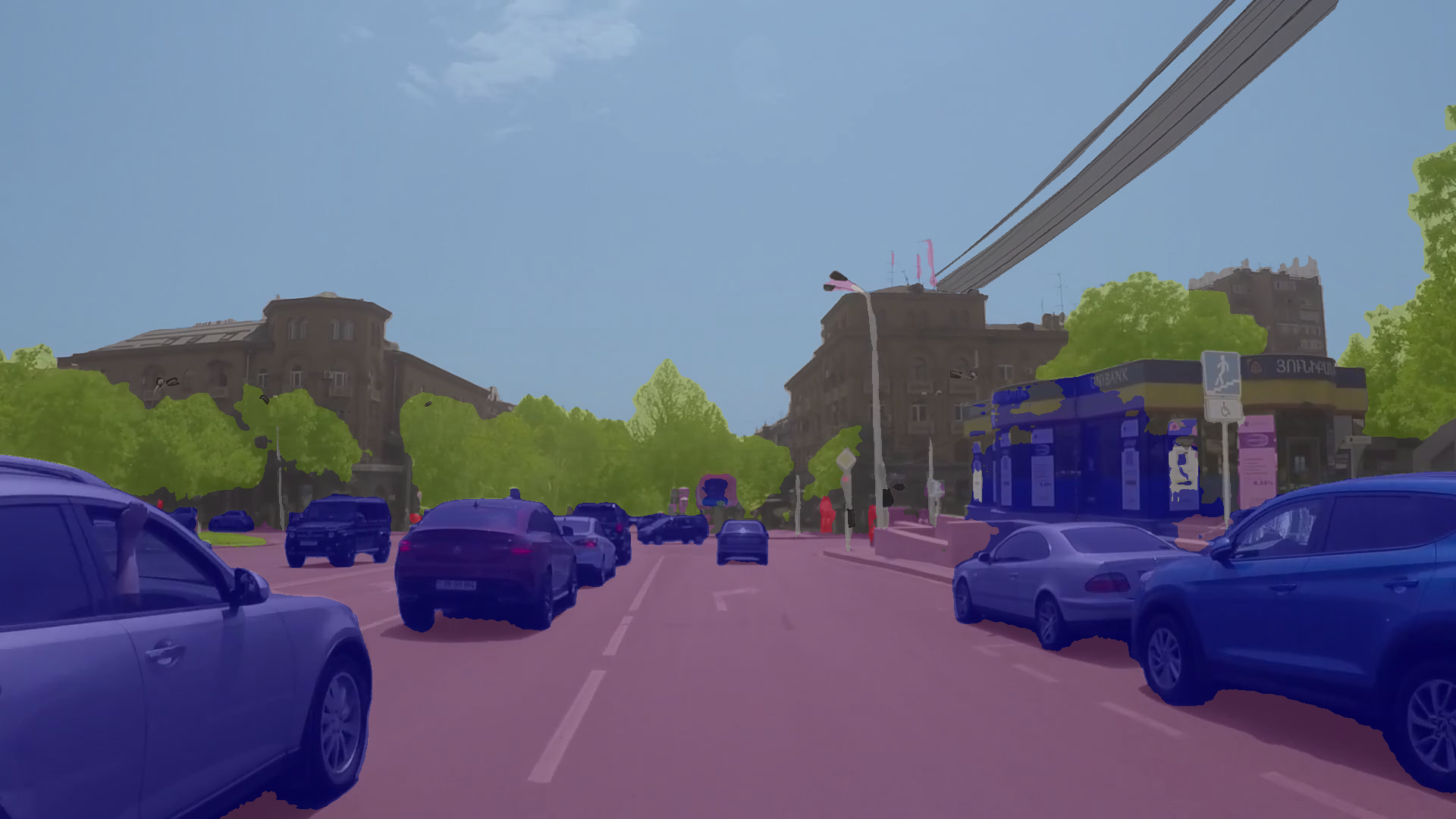}\\[1mm]
    \includegraphics[width=\linewidth]{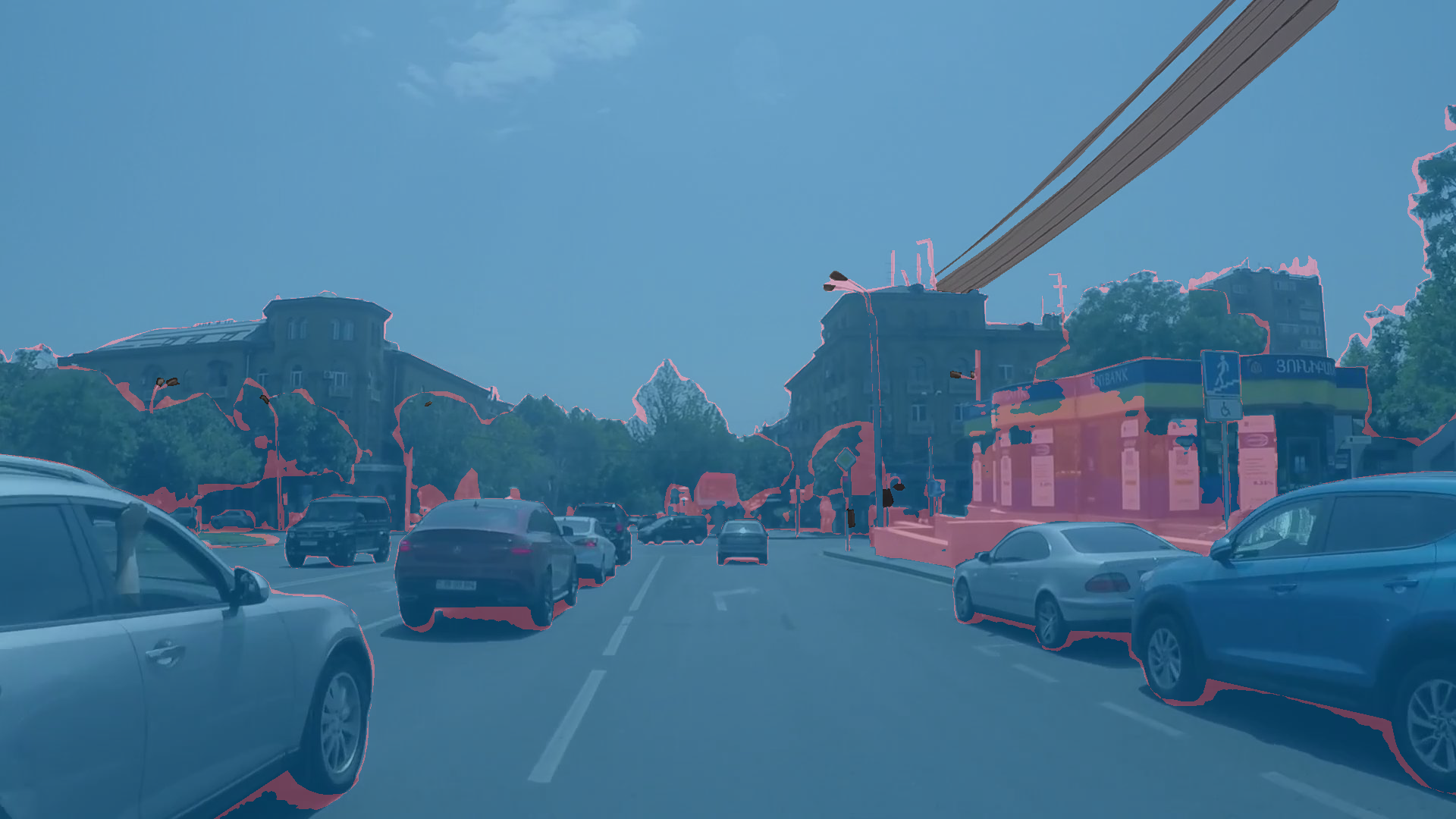}\\[1mm]
    \includegraphics[width=\linewidth]{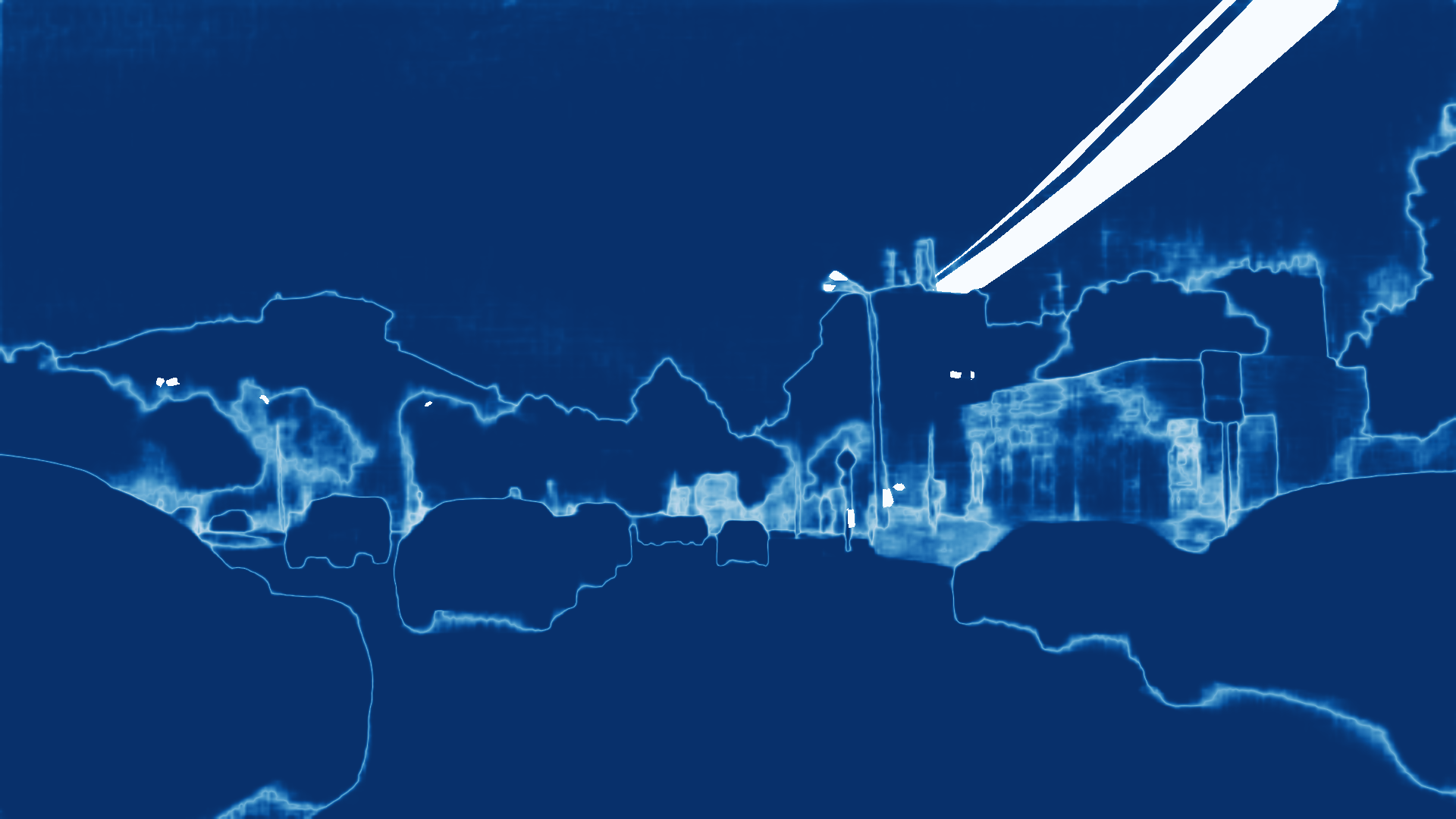}
  \label{fig:wd1}
  \end{subfigure}
  \begin{subfigure}{0.33\linewidth}
    \centering
    \includegraphics[width=\linewidth]{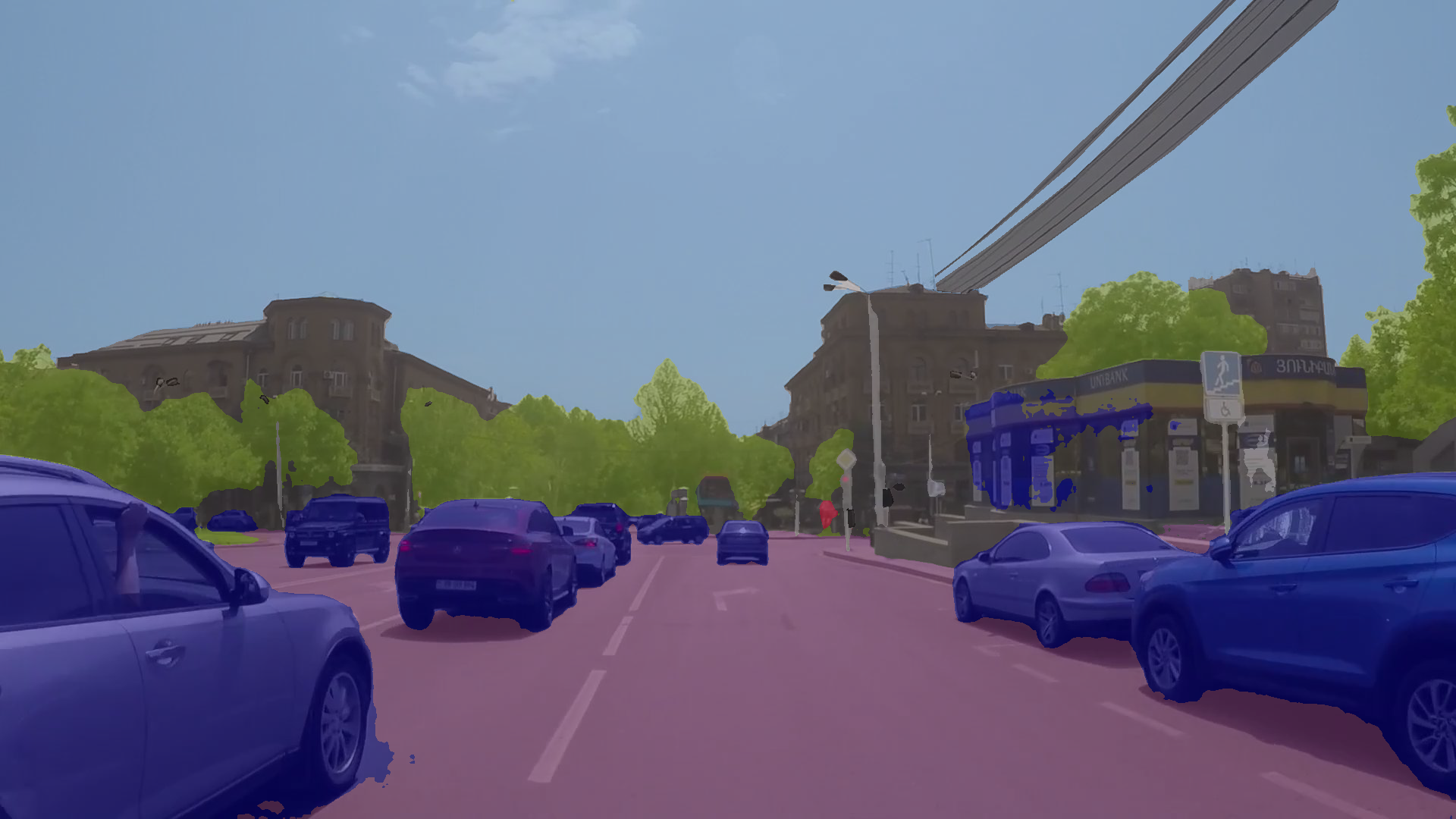}\\[1mm]
    \includegraphics[width=\linewidth]{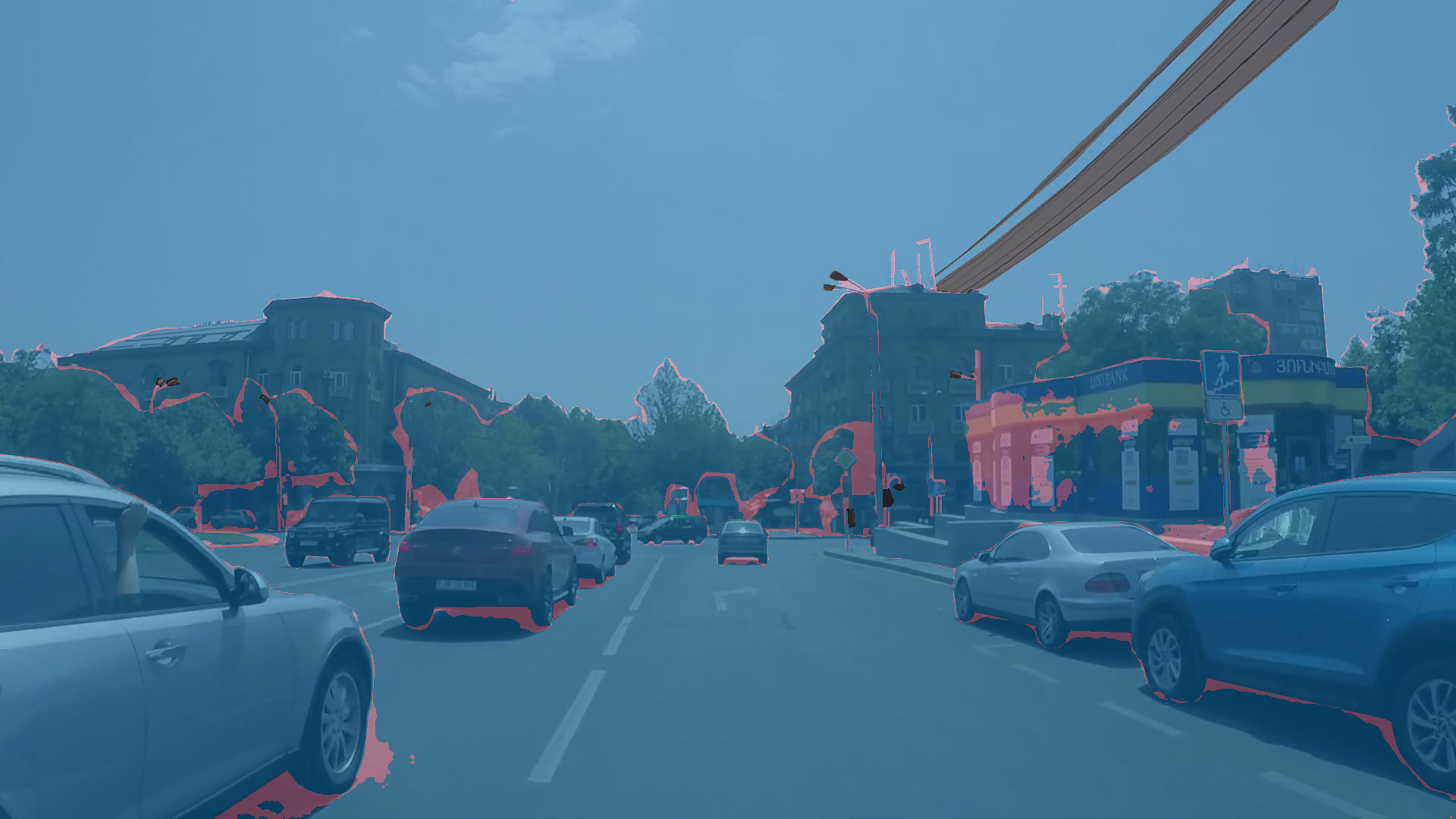}\\[1mm]
    \includegraphics[width=\linewidth]{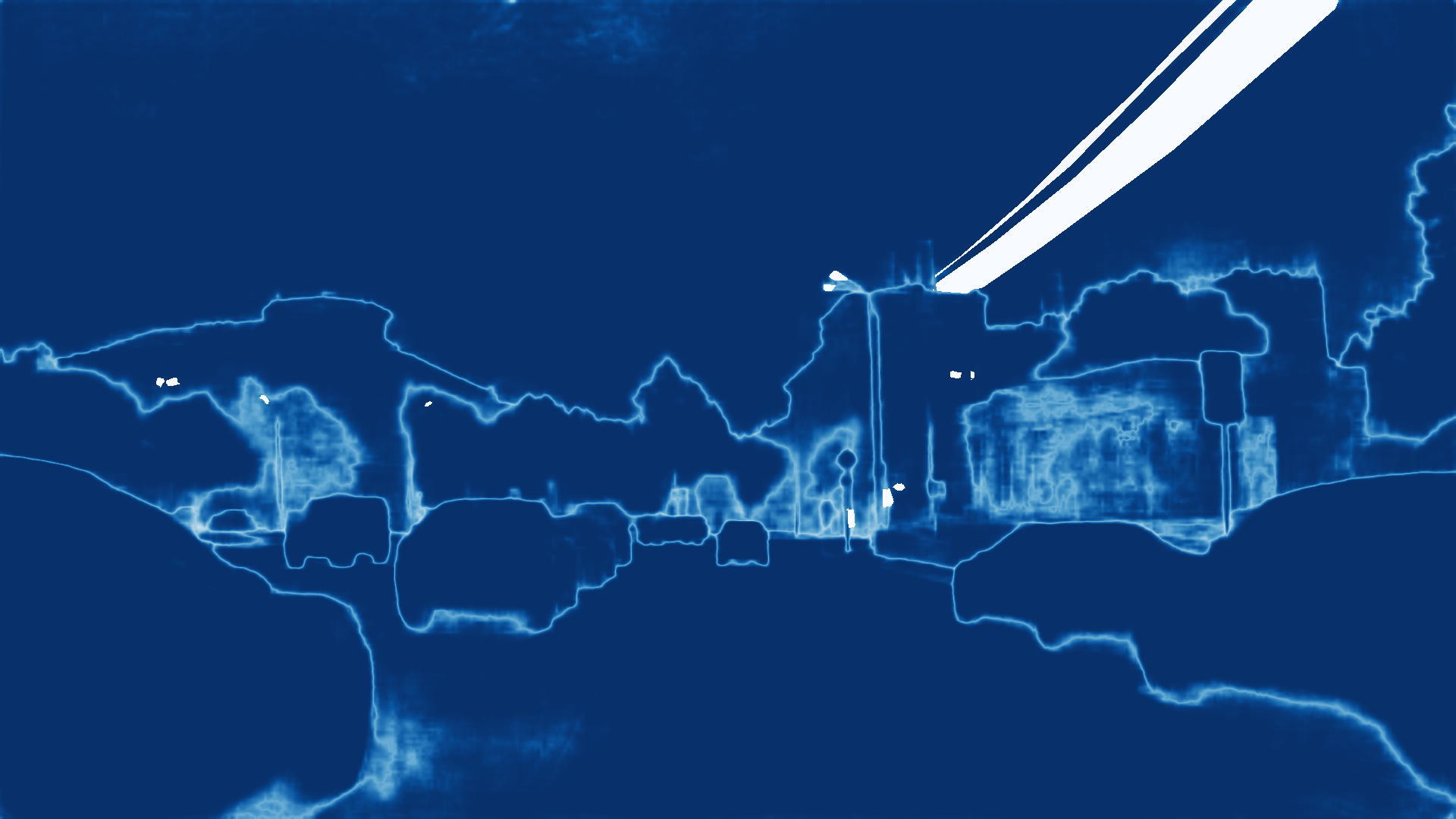}
    \label{fig:wd2}
  \end{subfigure}
  \begin{subfigure}{0.33\linewidth}
    \centering
    \includegraphics[width=\linewidth]{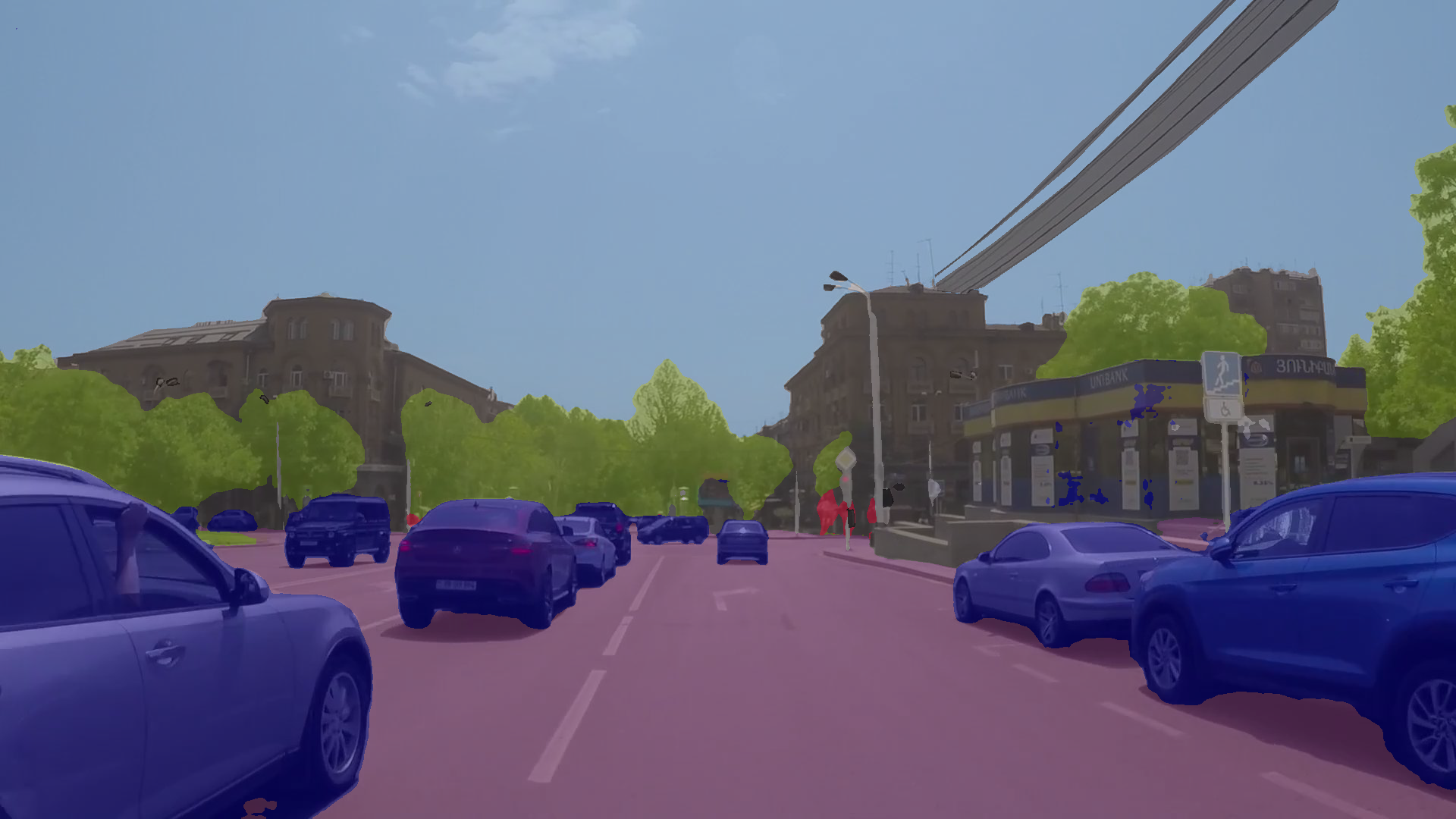}\\[1mm]
    \includegraphics[width=\linewidth]{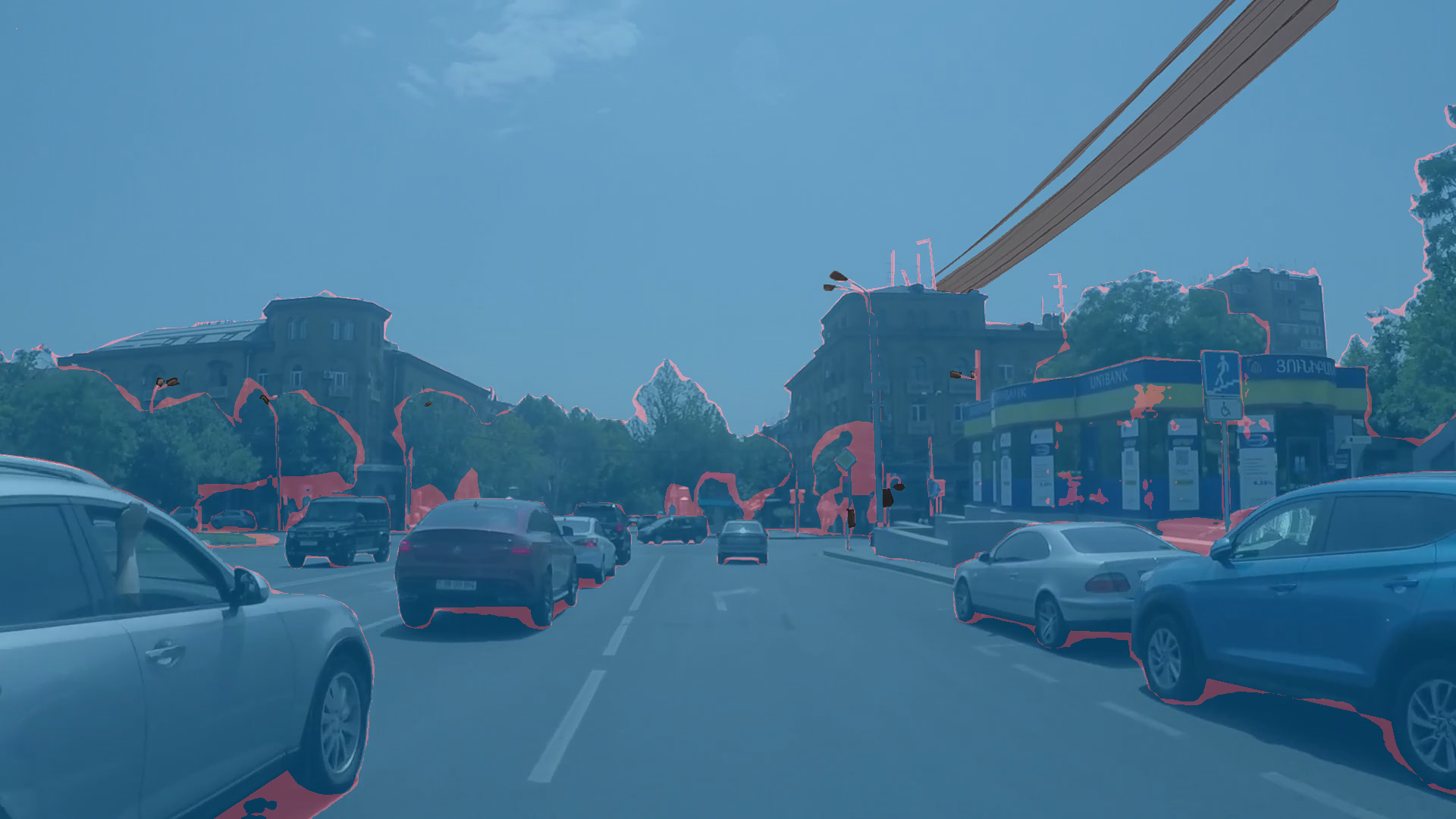}\\[1mm]
    \includegraphics[width=\linewidth]{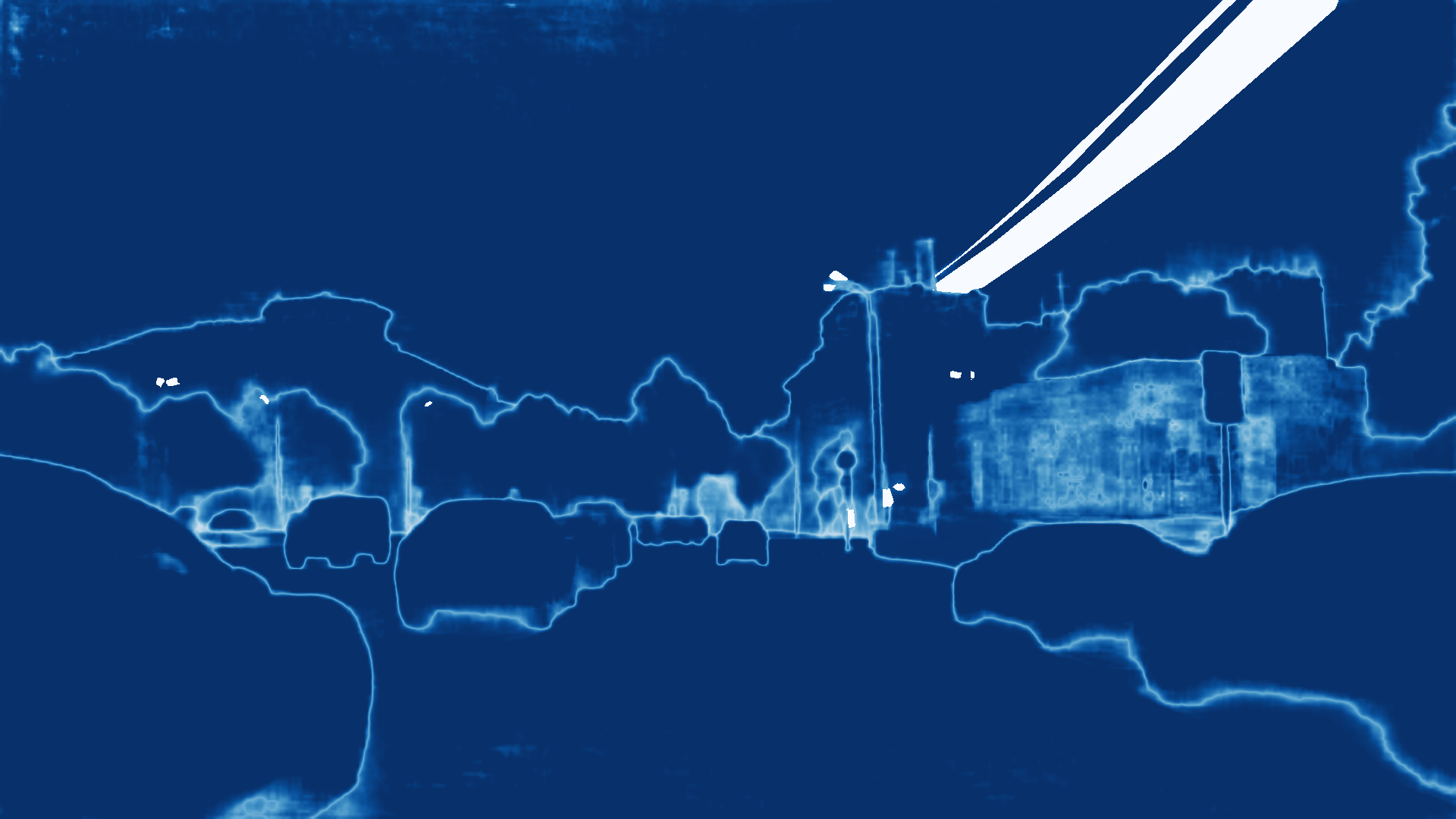}
    \label{fig:wd3}
  \end{subfigure}
  \caption{\textbf{Qualitative results on Wilddash.} We show parent-level segmentation results for HSSN (left), \eucname (middle), and \hypname (right) models by using label predictions (top), pixel-level accuracy maps (middle row), and confidence maps (bottom).}
  \label{fig:qaulrest}
\end{figure*}

\subsection{Discussion}

Overall, the empirical results of segmentation accuracy and calibration strongly support our analysis in \cref{sec:motivation} and \cref{sec:poincare_segmentation}. 
On the one hand, the hierarchical training in HSSN leads to poor performance on novel domains, presumably due to the large bias from the training on Cityscapes. 
By contrast, our flat classifiers, be they Euclidean or hyperbolic, are much more resilient to the domain shift, both in terms of calibration quality and segmentation accuracy, which is in line with our intuitive analysis in \cref{sec:motivation}.
The hyperbolic model exhibits a larger gap between the accuracy of the child and the parent nodes, compared to the Euclidean model. In cases, where both have comparable scores for child nodes, the hyperbolic model significantly outperforms the Euclidean model for parent nodes, and similarly for calibration.
This suggests the advantage of the Poincaré ball in flattening the Euclidean parent bias, as we conjectured in \cref{subsec:flattening}.
Importantly, these conclusions are consistent for both DeepLabV3+ and OCRNet.

\inparagraph{Limitations.}
We have only evaluated on datasets with a strong focus on traffic scenes.
This limitation comes from the lack of datasets with other taxonomic structures and of comparable visual complexity.
Our analysis also focuses on the Poincaré ball, although this is not the only possible realization of the hyperbolic space.
An extension of our study to other conformal models, such as the Lorentz model, offers exciting avenues for future research.
\section{Conclusion}

Our empirical investigations show that semantically meaningful hierarchical relations might not be the primary driver of the reported improvements in segmentation accuracy.
\textit{A contrario}, our cross-domain experiments reveal that flat segmentation networks, in which the parent categories are inferred from children, outperform hierarchical approaches consistently, for parent nodes most notably. 
However, we show that flat classifiers may suffer from the parent bias in the Euclidean space.
By contrast, the Poincaré ball model exhibits more uniform properties between class representations.
We demonstrate that a flat hyperbolic classifier coupled with a straightforward bottom-up inference generalizes surprisingly well across unseen test domains.
It tends to outperform, for parent categories, the Euclidean representation in terms of the segmentation accuracy and calibration on the more challenging datasets. 
To our knowledge, our work is also the first empirical analysis of dense calibration with hyperbolic networks.
We hope that our study will encourage future efforts toward more accurate and calibrated semantic segmentation models, extending beyond the currently mainstream Euclidean representation.

{
\small
\inparagraph{Acknowledgement.} 
This work was supported by the ERC Advanced Grant SIMULACRON.
}

{
    \small
    \bibliographystyle{ieeenat_fullname}
    \bibliography{main}

}

\clearpage
\pagenumbering{roman}
\appendix

\maketitlesupplementary

\noindent This supplemental material is organized as follows:\\
\noindent \textbf{\cref{sec:norms}} empirically complements our theoretical analysis of the hyperbolic distance in \cref{prop:concavity}, and confirms the inter-class uniformity in the Poincaré ball.\\
\noindent \textbf{\cref{sec:hyp_calib}} provides implementation details on calibrating hyperbolic networks.\\
\noindent \textbf{\cref{sec:cityscapes}} details the hierarchical two-level label structure used throughout our experiments.\\
\textbf{\cref{sec:qualitative}} provides additional qualitative examples from other datasets (Cityscapes, Mappilary, IDD, ACDC, BDD and Wilddash).

\section{Norms and concavity}\label{sec:norms}

\begin{table}[t]
    \footnotesize
    \begin{tabularx}{\linewidth}{X@{\hspace{5em}}S[table-format=1.8]@{\hspace{1em}}}
        \toprule
        {Class} & {Average norm} \\
        \midrule
        Road & 0.99594157 \\
        Sidewalk & 0.99593576 \\
        Building & 0.99596734 \\
        Wall & 0.99528885 \\
        Fence & 0.99568549 \\
        Pole & 0.99580016 \\
        Traffic light & 0.99598823 \\
        Traffic sign & 0.99578103 \\
        Vegetation & 0.99596470 \\
        Terrain & 0.99588626 \\
        Sky & 0.99596704 \\
        Person & 0.99590956 \\
        Rider & 0.99598813 \\
        Car & 0.99595947 \\
        Truck & 0.99581633 \\
        Bus & 0.99596589 \\
        Train & 0.99536107 \\
        Motorcycle & 0.99583602 \\
        Bicycle & 0.99596311 \\
        \bottomrule
    \end{tabularx}
    \caption{The average norm of hyperbolic embeddings in the Poincaré ball for each class. Note that the norms are close to $1$. Geometrically, this means that the embeddings reside at the periphery of the ball. Linking this observation to the concavity of the hyperbolic distance (\cf \cref{subsec:flattening}) explains the uniformity in the inter-classes distances, as a change in the embedding of a class, as long as its norm is preserved, does not have a significant effect on its relative distance to the embeddings of other classes.}
    \label{tab:embeddings_norm}
\end{table}

\begin{figure}
\centering
\begin{subfigure}{\linewidth}
  \footnotesize
  \begin{tikzpicture}[every node/.style={font=\footnotesize}]
    \pgfplotstableread[row sep=\\,col sep=&]{
        name & acc  \\
        sidewalk & 1.12 \\
        building & 0.60 \\
        wall & 0.95 \\
        fence &  0.95   \\
        pole   &  0.77   \\
        traffic light    & 0.43    \\
        traffic sign    &  1.03   \\
        vegetation    &  0.60   \\
        terrain    &  1.03   \\
        sky    &  0.78   \\
        person    & 0.69    \\
        rider    & 0.69    \\
        car    &  0.77   \\
        truck    &  0.61   \\
        bus    & 0.52    \\
        train    &  0.61  \\
        motorcycle    &  0.69   \\
        bicycle &  0.69   \\
    }\dataA

    \pgfplotstableread[row sep=\\,col sep=&]{
        name & acc  \\
        sidewalk & 10.28 \\
        building & 8.31 \\
        wall & 9.65 \\
        fence &  8.14   \\
        pole   &  10.21   \\
        traffic light    & 10.79    \\
        traffic sign    &  9.14   \\
        vegetation    &  8.67  \\
        terrain    &  9.18 \\
        sky    &  7.38 \\
        person    & 10.58  \\
        rider    & 10.03   \\
        car    &  8.71  \\
        truck    &  7.88 \\
        bus    & 7.76  \\
        train    &  9.01 \\
        motorcycle    &  10.23   \\
        bicycle &  10.17  \\
    }\dataB

    \definecolor{city}{RGB}{230,159,0}
    \definecolor{coco}{RGB}{86, 180, 233}
    \definecolor{pots}{RGB}{0, 158, 115}
        
    \begin{axis}[
        width=\linewidth,
        height=10em,
        ybar,
        bar width=0.15cm,
        x=0.40cm,
        enlarge x limits=0.05,
        ymin=0, ymax=11,
        axis y line*=left,
        axis x line*=bottom,
        symbolic x coords={sidewalk, building, wall, fence, pole, traffic light, traffic sign, vegetation, terrain, sky, person, rider, car, truck, bus, train, motorcycle, bicycle},
        xtick=data,
        ytick pos=left,
        xtick style={draw=none},
        x tick label style={
            text width=2cm,
            align=right,
            scale=1,
            transform shape,
            rotate=90,
            anchor=near xticklabel,
        },
        y tick label style={scale=0.85},
        legend style={at={(0.48,1.5)},draw=black,anchor=north,legend columns=-1, scale=0.85},
        nodes near coords={},
        nodes near coords style={font=\rm},
        every node near coord/.append,
        every node near coord/.append style={scale=0.85,  /pgf/number format/assume math mode=true},
        ytick={0, 5,10},
        yticklabels={0, 5,10},
        legend image code/.code={
            \draw [#1] (0cm,-0.1cm) rectangle (0.2cm,0.1cm);
        },
    ]
    \addplot[city,fill=city] table[x=name,y=acc]{\dataA};
    \addplot[coco,fill=coco] table[x=name,y=acc]{\dataB};
    \legend{Poincaré ball\:\:, Euclidean}
    \end{axis}
  \end{tikzpicture}
  \vspace{-1em}
  \caption{Measuring the coefficient of variation \wrt ``Road'' embeddings.}
  \label{fig:embedding_road}
  \end{subfigure}

  \vspace{1em}

  \begin{subfigure}{\linewidth}

    \begin{tikzpicture}[every node/.style={font=\footnotesize}]
    \pgfplotstableread[row sep=\\,col sep=&]{
        name & acc  \\
        road & 0.78 \\
        sidewalk & 0.61 \\
        building & 0.96 \\
        wall & 0.60 \\
        fence &  0.78   \\
        pole   &  1.04  \\
        traffic light    & 0.95   \\
        traffic sign    &  0.95  \\
        vegetation    &  0.87  \\
        terrain    &  0.86  \\
        person    & 0.60   \\
        rider    & 0.51  \\
        car    &  0.61 \\
        truck    &  0.43 \\
        bus    & 0.43\\
        train    &  1.47   \\
        motorcycle    &  0.95   \\
        bicycle &  0.34 \\
    }\dataA

    \pgfplotstableread[row sep=\\,col sep=&]{
        name & acc  \\
        road & 7.46\\
        sidewalk & 6.26 \\
        building & 8.84 \\
        wall & 8.08 \\
        fence &  6.90   \\
        pole   &  10.15   \\
        traffic light    & 10.13   \\
        traffic sign    &  9.18  \\
        vegetation    &  9.76 \\
        terrain    &  7.07   \\
        person    & 8.87   \\
        rider    & 8.21   \\
        car    &  7.13  \\
        truck    &  8.45  \\
        bus    & 5.91 \\
        train    & 7.70  \\
        motorcycle    & 9.04   \\
        bicycle &  8.78   \\
    }\dataB

    \definecolor{city}{RGB}{230,159,0}
    \definecolor{coco}{RGB}{86, 180, 233}
    \definecolor{pots}{RGB}{0, 158, 115}
        
    \begin{axis}[
        width=\linewidth,
        height=10em,
        ybar,
        bar width=0.15cm,
        x=0.40cm,
        enlarge x limits=0.05,
        ymin=0, ymax=11,
        axis y line*=left,
        axis x line*=bottom,
        symbolic x coords={road, sidewalk, building, wall, fence, pole, traffic light, traffic sign, vegetation, terrain, person, rider, car, truck, bus, train, motorcycle, bicycle},
        xtick=data,
        ytick pos=left,
        xtick style={draw=none},
        x tick label style={
            text width=2cm,
            align=right,
            scale=1,
            transform shape,
            rotate=90,
            anchor=near xticklabel,
        },
        y tick label style={scale=0.85},
        legend style={at={(0.48,1.35)},draw=black,anchor=north,legend columns=-1, scale=0.85},
        nodes near coords={},
        nodes near coords style={font=\rm},
        every node near coord/.append,
        every node near coord/.append style={scale=0.85,  /pgf/number format/assume math mode=true},
        ytick={0, 5, 10},
        yticklabels={0, 5, 10},
        legend image code/.code={
            \draw [#1] (0cm,-0.1cm) rectangle (0.2cm,0.1cm);
        },
    ]
    \addplot[city,fill=city] table[x=name,y=acc]{\dataA};
    \addplot[coco,fill=coco] table[x=name,y=acc]{\dataB};
    \legend{Poincaré ball\:\:, Euclidean}
    \end{axis}
  \end{tikzpicture}
  \caption{Measuring the coefficient of variation \wrt ``Sky'' embeddings.}
  \label{fig:embedding_sky}
  \end{subfigure}
  \vspace{-1em}
  \caption{The coefficient of variation of the distance between (a) 'Road' embeddings (resp. (b) 'Sky' embeddings) and the embeddings of other classes. The hyperbolic distance between the embeddings of a given class and the embeddings of other classes is much more uniform than the respective Euclidean distance.}
\end{figure}

\begin{figure}
\centering
\begin{subfigure}{\linewidth}
  \footnotesize
  \begin{tikzpicture}[every node/.style={font=\footnotesize}]
    \pgfplotstableread[row sep=\\,col sep=&]{
        name & acc  \\
        sidewalk & 0.15 \\
        building & 0.10 \\
        wall & 0.05 \\
        fence &  0.04   \\
        pole   &  0.05   \\
        traffic light    & 0.02    \\
        traffic sign    &  0.03   \\
        vegetation    &  0.07   \\
        terrain    &  0.08   \\
        sky    &  0.16   \\
        person    & 0.09    \\
        rider    & 0.03    \\
        car    &  0.07   \\
        truck    &  0.03   \\
        bus    & 0.03    \\
        train    &  0.03  \\
        motorcycle    &  0.02   \\
        bicycle &  0.04   \\
    }\dataA

    \pgfplotstableread[row sep=\\,col sep=&]{
        name & acc  \\
        sidewalk & 0.81 \\
        building & 0.70 \\
        wall & 0.48 \\
        fence &  0.29   \\
        pole   &  0.83   \\
        traffic light    & 0.26    \\
        traffic sign    &  0.38   \\
        vegetation    &  0.71  \\
        terrain    &  0.70  \\
        sky    &  0.61  \\
        person    & 0.98  \\
        rider    & 0.33   \\
        car    &  0.94   \\
        truck    &  0.47 \\
        bus    & 0.52  \\
        train    &  0.57 \\
        motorcycle    &  0.30   \\
        bicycle &  0.39  \\
    }\dataB

    \definecolor{city}{RGB}{230,159,0}
    \definecolor{coco}{RGB}{86, 180, 233}
    \definecolor{pots}{RGB}{0, 158, 115}
        
    \begin{axis}[
        width=\linewidth,
        height=10em,
        ybar,
        bar width=0.15cm,
        x=0.40cm,
        enlarge x limits=0.05,
        ymin=0, ymax=1,
        axis y line*=left,
        axis x line*=bottom,
        symbolic x coords={sidewalk, building, wall, fence, pole, traffic light, traffic sign, vegetation, terrain, sky, person, rider, car, truck, bus, train, motorcycle, bicycle},
        xtick=data,
        ytick pos=left,
        xtick style={draw=none},
        x tick label style={
            text width=2cm,
            align=right,
            scale=1,
            transform shape,
            rotate=90,
            anchor=near xticklabel,
        },
        y tick label style={scale=0.85},
        legend style={at={(0.48,1.5)},draw=black,anchor=north,legend columns=-1, scale=0.85},
        nodes near coords={},
        nodes near coords style={font=\rm},
        every node near coord/.append,
        every node near coord/.append style={scale=0.85,  /pgf/number format/assume math mode=true},
        ytick={0, 0.5,1},
        yticklabels={0, 0.5,1},
        legend image code/.code={
            \draw [#1] (0cm,-0.1cm) rectangle (0.2cm,0.1cm);
        },
    ]
    \addplot[city,fill=city] table[x=name,y=acc]{\dataA};
    \addplot[coco,fill=coco] table[x=name,y=acc]{\dataB};
    \legend{Poincaré ball\:\:, Euclidean}
    \end{axis}
  \end{tikzpicture}
  \vspace{-1em}
  \caption{Measuring the coefficient of variation of hyper-/gyroplanes distance \wrt ``Road'' embeddings.}
  \label{fig:road_stdmu}
  \end{subfigure}

  \vspace{1em}

  \begin{subfigure}{\linewidth}

    \begin{tikzpicture}[every node/.style={font=\footnotesize}]
    \pgfplotstableread[row sep=\\,col sep=&]{
        name & acc  \\
        road & 0.04 \\
        sidewalk & 0.03 \\
        building & 0.27 \\
        wall & 0.02 \\
        fence &  0.03   \\
        pole   &  0.13  \\
        traffic light    & 0.04   \\
        traffic sign    &  0.05  \\
        vegetation    &  0.20  \\
        terrain    &  0.02  \\
        person    & 0.03   \\
        rider    & 0.01  \\
        car    &  0.02 \\
        truck    &  0.01 \\
        bus    & 0.02 \\
        train    &  0.02   \\
        motorcycle    &  0.01   \\
        bicycle &  0.01 \\
    }\dataA

    \pgfplotstableread[row sep=\\,col sep=&]{
        name & acc  \\
        road & 0.65\\
        sidewalk & 0.59 \\
        building & 0.55 \\
        wall & 0.25 \\
        fence &  0.25   \\
        pole   &  0.55   \\
        traffic light    & 0.67   \\
        traffic sign    &  0.35  \\
        vegetation    &  0.96 \\
        terrain    &  0.22   \\
        person    & 0.51   \\
        rider    & 0.31    \\
        car    &  0.52  \\
        truck    &  0.43   \\
        bus    & 0.20  \\
        train    &  0.36  \\
        motorcycle    &  0.24   \\
        bicycle &  0.13   \\
    }\dataB

    \definecolor{city}{RGB}{230,159,0}
    \definecolor{coco}{RGB}{86, 180, 233}
    \definecolor{pots}{RGB}{0, 158, 115}
        
    \begin{axis}[
        width=\linewidth,
        height=10em,
        ybar,
        bar width=0.15cm,
        x=0.40cm,
        enlarge x limits=0.05,
        ymin=0, ymax=1,
        axis y line*=left,
        axis x line*=bottom,
        symbolic x coords={road, sidewalk, building, wall, fence, pole, traffic light, traffic sign, vegetation, terrain, person, rider, car, truck, bus, train, motorcycle, bicycle},
        xtick=data,
        ytick pos=left,
        xtick style={draw=none},
        x tick label style={
            text width=2cm,
            align=right,
            scale=1,
            transform shape,
            rotate=90,
            anchor=near xticklabel,
        },
        y tick label style={scale=0.85},
        legend style={at={(0.48,1.35)},draw=black,anchor=north,legend columns=-1, scale=0.85},
        nodes near coords={},
        nodes near coords style={font=\rm},
        every node near coord/.append,
        every node near coord/.append style={scale=0.85,  /pgf/number format/assume math mode=true},
        ytick={0, 0.5, 1},
        yticklabels={0, 0.5, 1},
        legend image code/.code={
            \draw [#1] (0cm,-0.1cm) rectangle (0.2cm,0.1cm);
        },
    ]
    \addplot[city,fill=city] table[x=name,y=acc]{\dataA};
    \addplot[coco,fill=coco] table[x=name,y=acc]{\dataB};
    \legend{Poincaré ball\:\:, Euclidean}
    \end{axis}
  \end{tikzpicture}
  \vspace{-1em}
  \caption{Measuring the coefficient of variation of hyper-/gyroplanes distance \wrt ``Sky'' embeddings.}
  \label{fig:sky_stdmu}
  \end{subfigure}
  \caption{The coefficient of variation of the distance between (a) 'Road' embeddings (resp. (b) 'Sky' embeddings) and gyroplanes/hyperplanes of other classes. The hyperbolic distance between a given class and the gyroplanes of other classes is much more uniform than the respective Euclidean distance.}
\end{figure}

Recall from \cref{sec:poincare_segmentation} that during training, the hyperbolic likelihood,
\begin{equation}\label{eq:likelihood}
p(\hat{y} := y \mid h) \propto \exp(\zeta_{y}(h)) \, ,
\end{equation}
where
\begin{equation}
\zeta_{y}(h) := \frac{\lambda_{r_{y}}^{c}\lVert w_{y} \rVert }{\sqrt{c}} \text{arcsinh}\bigg(\frac{2\sqrt{c} \langle -r_{y}\oplus_{c}h, w_{y} \rangle}{(1-c\lVert -r_{y}\oplus_{c} h\rVert^{2})\lVert w_{y} \rVert}\bigg) \, ,
\end{equation}
is minimized via the cross-entropy loss.
Equivalently \cite{ganea2018}, \cref{eq:likelihood} can be written as:
\begin{align}
\begin{split}
\log{p(\hat{y} := y \mid h)} \propto \; &\text{sign}\big(\langle -r_{y}\oplus_{c}h, a_{y}\rangle\big) \times \\
\times &\sqrt{g_{r_{y}}^{c}(a_{y},a_{y})} \, d_{\mathbb{H}}(h,H_{y}^{c}) \, .
\end{split}
\end{align}

\noindent Thus, the closer a hyperbolic embedding is to the border (periphery) of the Poincaré ball, the higher its norm and the class posterior.
Indeed, when the embedding is located between the gyroplane and the periphery of the Poincaré ball, \ie the term $\langle -r_{y}\oplus_{c}h, a_{y}\rangle$ is strictly larger than zero, $p(\hat{y} := y \mid h)$ increases with the distance $d_{\mathbb{H}}(h,H_{y}^{c})$ between the embedding and the gyroplane.

We empirically confirm that hyperbolic embeddings end up at the boundary of the Poincaré ball after model training.
In \cref{tab:embeddings_norm}, we compute the average norm of the embeddings for each class and find that the embedding norm in the Poincaré ball indeed tends to be close to $1$, which indicates close proximity of the embeddings to the Poincaré ball's boundary.
Let us now recall the hyperbolic distance from \cref{eq:hyperbolic_distance}:
\begin{equation}\label{eq:dist_hyperbolic}
d_{\mathbb{H}}(h_{1},h_{2}) = \text{arcosh}\bigg(1 + 2 \frac{\lVert h_{1} - h_{2} \rVert ^{2}}{(1 - \lVert h_{1} \rVert^{2})(1 - \lVert h_{2} \rVert^{2})}\bigg) \, .
\end{equation}
After network training, the class embeddings are well-separated.
The denominator $(1 - \lVert h_{1} \rVert^{2})(1 - \lVert h_{2} \rVert^{2})$ becomes very small, hence the hyperbolic distance now operates at the high-end spectrum of the domain in \cref{fig:arcosh} (\ie when $x$ in \cref{fig:arcosh} is large).
It follows that the hyperbolic distance between embeddings of different classes tends to be uniform. 
To demonstrate this empirically, we provide an example analysis of the inter-classes distance for two classes, 'Road' and 'Sky'. We compare the Poincaré ball model and the Euclidean representation. For all embeddings in the Poincaré ball (resp. Euclidean space) corresponding to these two classes, we derive the distance to the embeddings and to the gyroplanes (resp. hyperplanes) associated with the other classes.  \cref{fig:embedding_road} (resp. \cref{fig:embedding_sky}) illustrates the coefficient of variation (standard deviation divided by the mean) of the distance between 'Road' (resp. 'Sky') embeddings and the embeddings of other classes, grouped by each class.
\cref{fig:road_stdmu} (resp. \Cref{fig:sky_stdmu}) compares the coefficient of variation of the distance between 'Road' (resp. 'Sky') embeddings and the gyroplanes (for a hyperbolic space) and hyperplanes (for a Euclidean space) associated to the other classes.
Observe that the coefficient of variation in the hyperbolic case is substantially lower across the board, which supports our analysis in the main text: The inter-class distance between the Poincaré embeddings is substantially more uniform than the embeddings in the Euclidean space.
The same conclusion holds \wrt other semantic categories.

\section{Calibration in the Poincaré ball}
\label{sec:hyp_calib}

A model is said to be \emph{calibrated} when the predictive probabilities correspond to the true probabilities \cite{guo2016}. Following the geometrical interpretation of \citet{ganea2018}, the predictive probabilities in hyperbolic networks correspond to the logits after the hyperbolic multinominal logistic regression.
Consequently, this allows us to extend popular calibration methods in Euclidean space to the Poincaré ball in a straightforward manner: Instead of considering the Euclidean predictions -- a softmax applied to Euclidean logits -- we apply a softmax to hyperbolic logits. 
Formally, in a hyperbolic space, the probability of class $y \in \{1,...,k\}$ for hyperbolic output $h$ is given by the softmax:
\begin{equation}\label{eq:accuracy}
p(\hat{y} := y \mid h) = \frac{\exp(\zeta_{y}(h))}{\sum_{i=1}^{k}\exp(\zeta_{i}(h))} \:,
\end{equation}
where $k$ is the number of classes. Training is performed via the cross-entropy loss, as in the Euclidean networks.

To evaluate the calibration quality, we follow \citet{kull2019beyond} to derive the class-wise Expected Calibration Error (cwECE).
We partition predictions into $M$ equally spaced bins for each class $\{B_{y,m}\}_{m=1,...,M}^{y=1,...,k}$ and take a weighted average of the difference between the accuracy and confidence for each bin:
\begin{equation}\label{eq:ECE}
\text{cwECE} = \frac{1}{k} \sum_{y=1}^{k} \sum_{m=1}^{M} \frac{\lvert B_{y,m} \rvert}{n}\lvert \text{acc}_{y}(B_{y,m}) - \text{conf}_{y}(B_{y,m}) \rvert  \,,
\end{equation}
where $n$ is the total number of samples of a given class across all bins. 
$\text{conf}_{y}(B_{y,m})$ and $\text{acc}_{y}(B_{y,m})$ are, respectively, the average prediction of class $y$ probability and the actual ratio of class $y$ in bin $B_{y,m}$.
cwECE represents the average gap between the predicted confidence and the expected accuracy, averaged over all classes.

\section{Hierarchical taxonomy}
\label{sec:cityscapes}

\begin{figure}
    \centering
    \includegraphics[width=\linewidth]{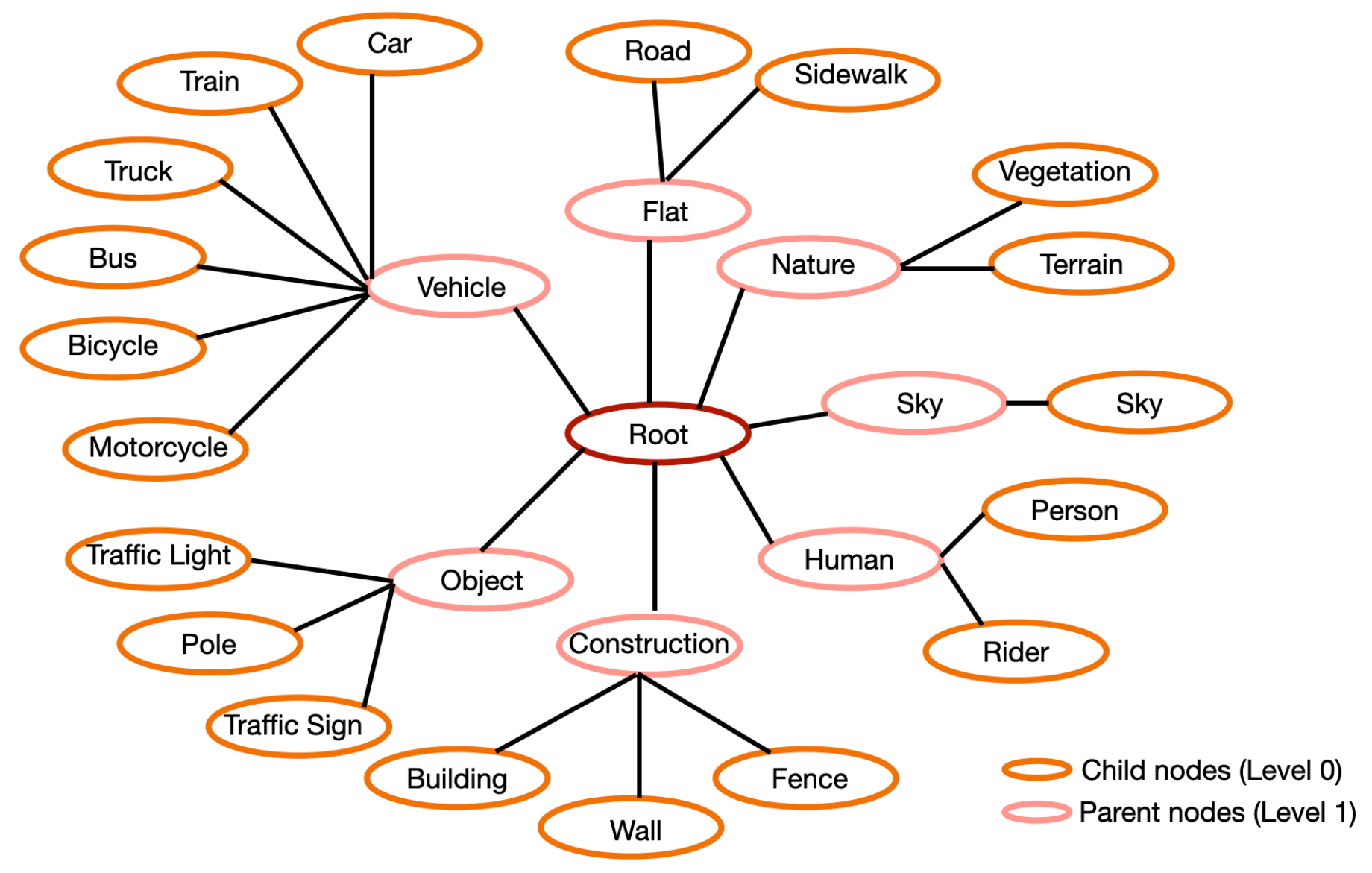}
  \caption{Hierarchical taxonomy derived from Cityscapes \cite{cordts2016cityscapes}, divided in $7$ parents (in pink) and $19$ children (in orange). Same hierarchy is used when testing on Mapillary \cite{neuhold2017mapillary}, IDD \cite{varma2019idd}, ACDC \cite{sakaridis2021acdc}, BDD \cite{yu2020bdd100k} and Wilddash \cite{zendel2018wilddash}.}
  \label{fig:cityscapes_architecture}
\end{figure}

For reference, in \cref{fig:cityscapes_architecture} we provide the hierarchical taxonomy used in our experiments. 
It follows the same structure from previous work \cite{Li:2022:DHS} and comprises 7 parent categories (Level 1) and 19 child categories (Level 0).

\section{Qualitative results}\label{sec:qualitative}
Here, we provide additional qualitative analysis. Following the same layout as in \cref{fig:qaulrest}, the additional examples here illustrate (top-down): label predictions, pixel-level accuracy, and confidence maps.
Results on Cityscapes (in-domain) in \cref{fig:qaulics1} show that each model has generally high accuracy and calibration quality.
Recall from our evaluation in the main text that parent-level predictions for flat models are on par with hierarchical models (HSSN) even when the hierarchical model has better accuracy on child nodes.
Moreover, all models are well calibrated in the in-domain settings both on the parent and child levels.

Under the domain shift, even of a moderate nature, we already observe the advantage of flat models to HSSN.
In \cref{fig:qaulimap2} for Mappilary, the HSSN model misclassifies ``Construction'' with ``Flat'' in both examples, while flat models have an overall higher accuracy in the respective area.

In a scenario with occlusion, flat models are less likely to misclassify compared to the HSSN model.
This becomes evident while inspecting the examples in \cref{fig:qaulidd2} for the IDD dataset.
In the first example, we observe that the HSSN model misclassifies the ``Construction'' class with ``Vehicle'', and in the second example, the ``Nature'' class with ``Construction''.
By contrast, a flat classifier exhibits higher accuracy in the occluded area.

When the evaluated models perform comparably in terms of accuracy, we observe that flat models are better calibrated. In the first example of \cref{fig:qauliacdc} for ACDC, we show that the HSSN model has low confidence in the ``Construction'' label (side of a building) even though the pixel accuracy is high in that area.
In the second example, HSSN misclassifies ``Construction'' with ``Road'', but has higher confidence than flat models in the area, even though the accuracy is lower. 

In more challenging scenarios, hyperbolic models exhibit increased accuracy compared to Euclidean models. In \cref{fig:qaulbdd2} for the BDD dataset, in both examples, the flat models show improved accuracy over the HSSN model and the hyperbolic model outperforms the Euclidean model.
In the first example, in the area where the HSSN model misclassifies ``Nature'' and ``Sky'', we see an increase in accuracy as we move from the leftmost example (HSSN) to the rightmost example (flat hyperbolic network). We can observe a similar increase in the accuracy where the HSSN model misclassifies ``Construction'' with ``Vehicle'' as we move from left to right. The HSSN model has high confidence in areas that it misclassified, indicating a higher degree is miscalibration than the flat models.

By observing the pixels around the synthetic occlusion on the first example in \cref{fig:qaulwd2} (top left corner) for Wilddash, we observe superior accuracy of the hyperbolic model compared to Euclidean and HSSN models. In terms of calibration, we observe that the Euclidean model has an overall higher confidence around the occluded area compared to the hyperbolic model, even though the accuracy is lower in this area.
It illustrates our empirical findings that the hyperbolic model has smoother confidence maps. This is also true for more challenging cases, such as the second example in \cref{fig:qaulwd2}.
HSSN and the Euclidean network have an overall lower accuracy than the hyperbolic model.
HSSN also has high confidence, despite misclassifying the ``Flat'' and ``Vehicle'' classes, while the Euclidean model has low confidence in the correctly predicted areas.
By contrast, the hyperbolic model has low certainty in the areas with lower pixel accuracy, suggesting good calibration of these models.

\iffalse
\begin{figure*}
    \fboxsep 2pt
    \begin{minipage}{0.7\linewidth}
    \centering
    \colorbox{flat}{\strut \color{white}{flat}}
    \colorbox{construction}{\strut \color{white}{construction}}
    \colorbox{object}{\strut  object}
    \colorbox{nature}{\strut \color{white}{nature}}
    \colorbox{sky}{\strut \color{white}{sky}}
    \colorbox{human}{\strut human}
    \colorbox{vehicle}{\strut \color{white}{vehicle}}\hspace{0.5em}
    \colorbox{ignore}{\strut \color{white}{ignore}}\hspace{0.5em}
    \colorbox{true}{\strut \color{white}{true}}
    \colorbox{false}{\strut false}\hspace{0.5em}
    \end{minipage}
    \begin{minipage}{0.3\linewidth}
    \begin{tikzpicture}
    \node [rectangle, left color=left!10!white, right color=left, anchor=north, minimum width=\linewidth, minimum height=0.5cm] (box) at (current page.north){0 \hspace{12em}  \color{white}{1}};
    \end{tikzpicture}
    \end{minipage}

  \caption{\textbf{Legend for the qualitative examples.} Parent-level labels; pixel-level accuracy (true/false/ignore); the confidence spectrum. }
  \label{fig:label}
\end{figure*}
\fi

\begin{figure*}[t]
\centering
  \begin{subfigure}{0.33\linewidth}
    \centering
    HSSN\\\vspace{0.3em}
    \includegraphics[width=\linewidth]{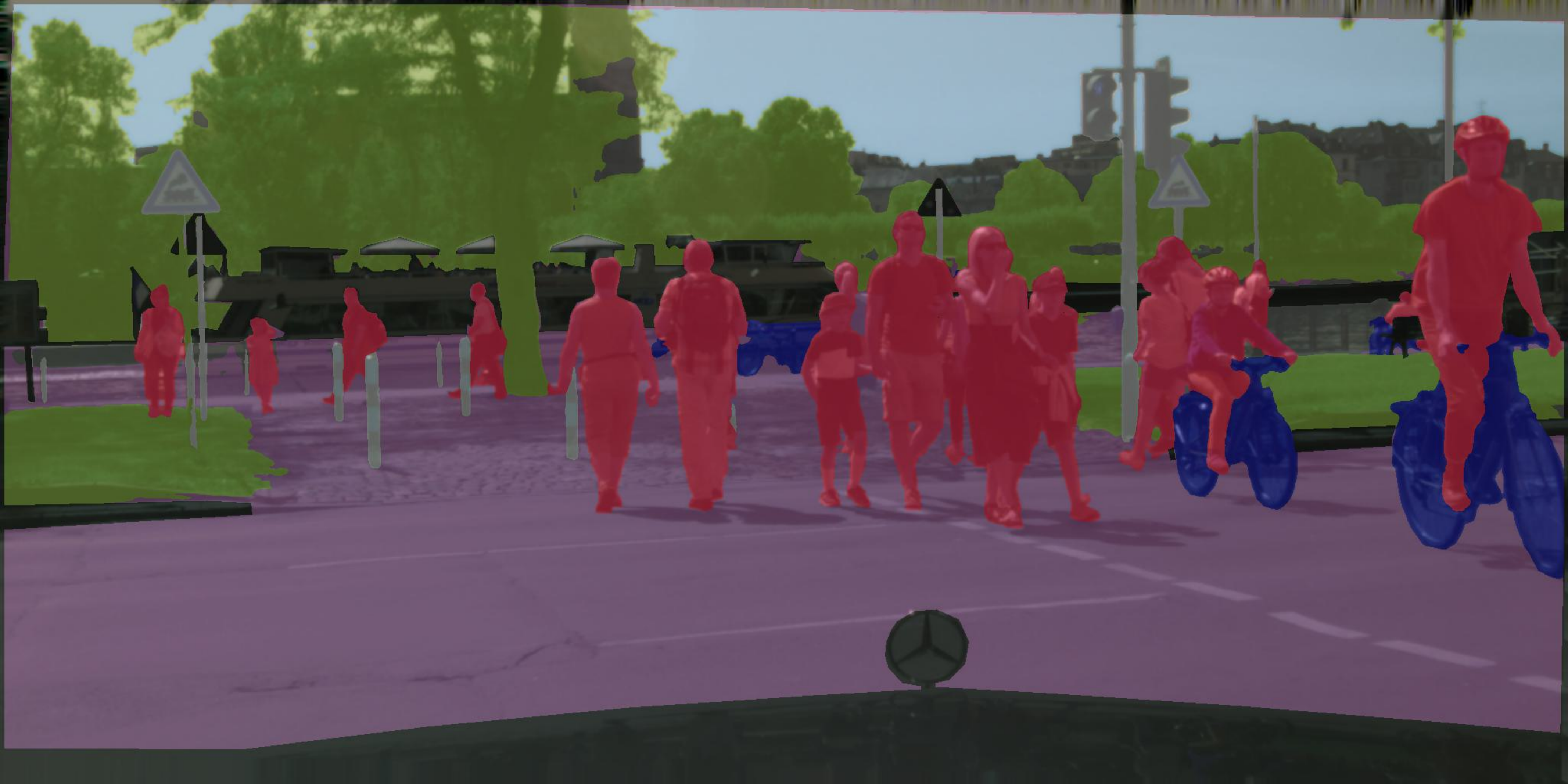}\\[1mm]
    \includegraphics[width=\linewidth]{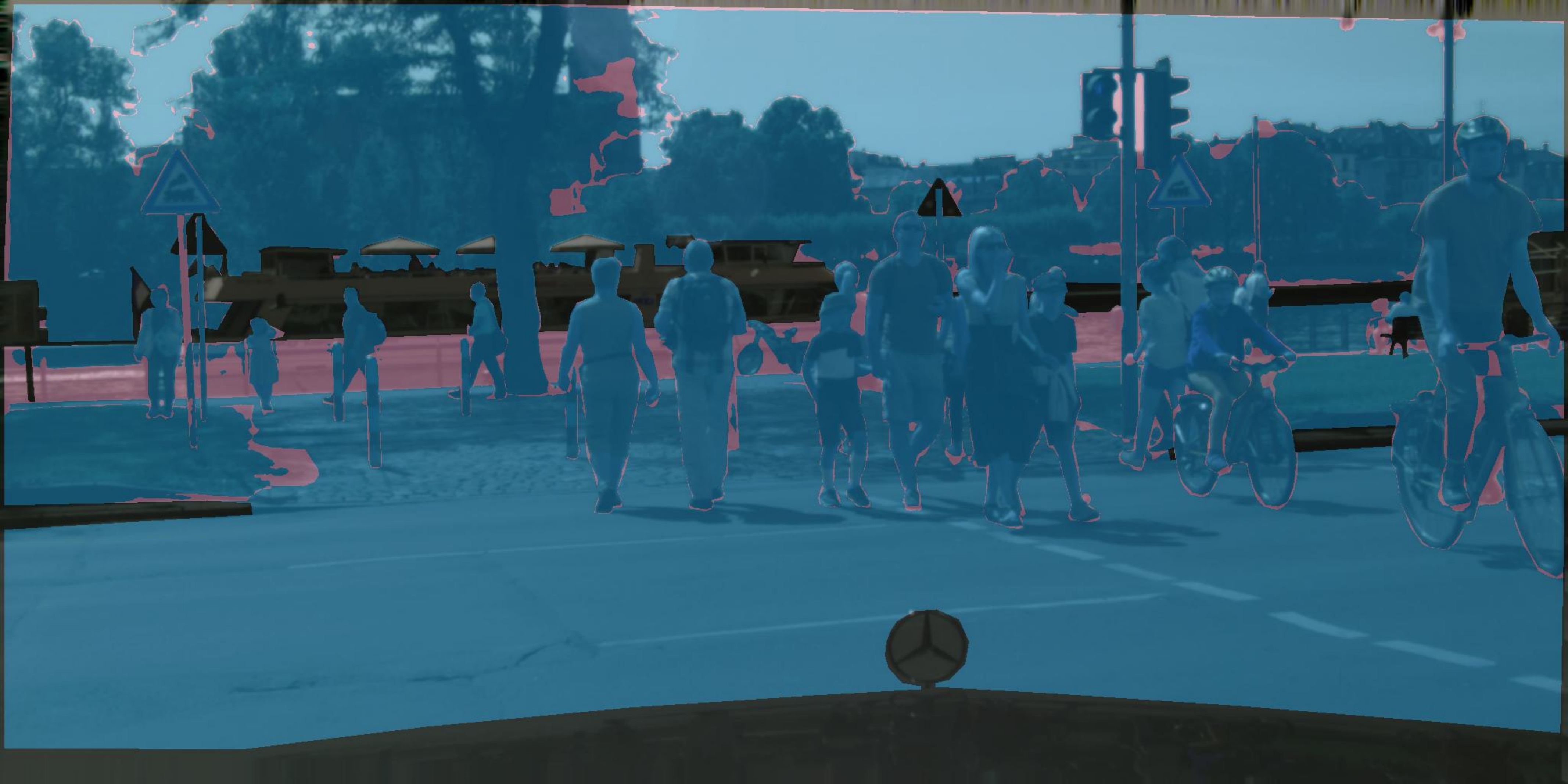}\\[1mm]
    \includegraphics[width=\linewidth]{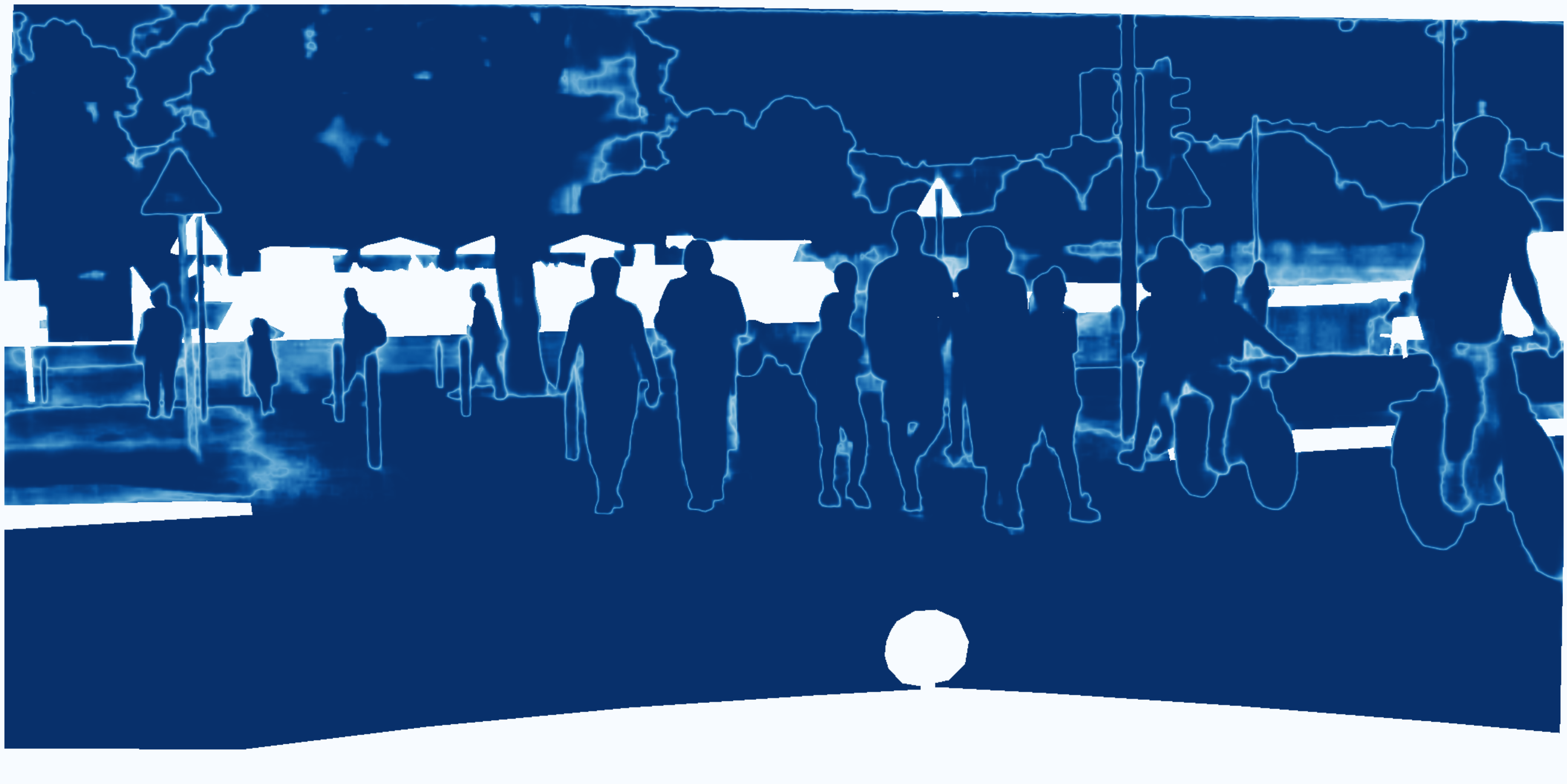}
  \label{fig:cs1_1}
  \end{subfigure}
  \begin{subfigure}{0.33\linewidth}
    \centering
    \eucname\\\vspace{0.3em}
     \includegraphics[width=\linewidth]{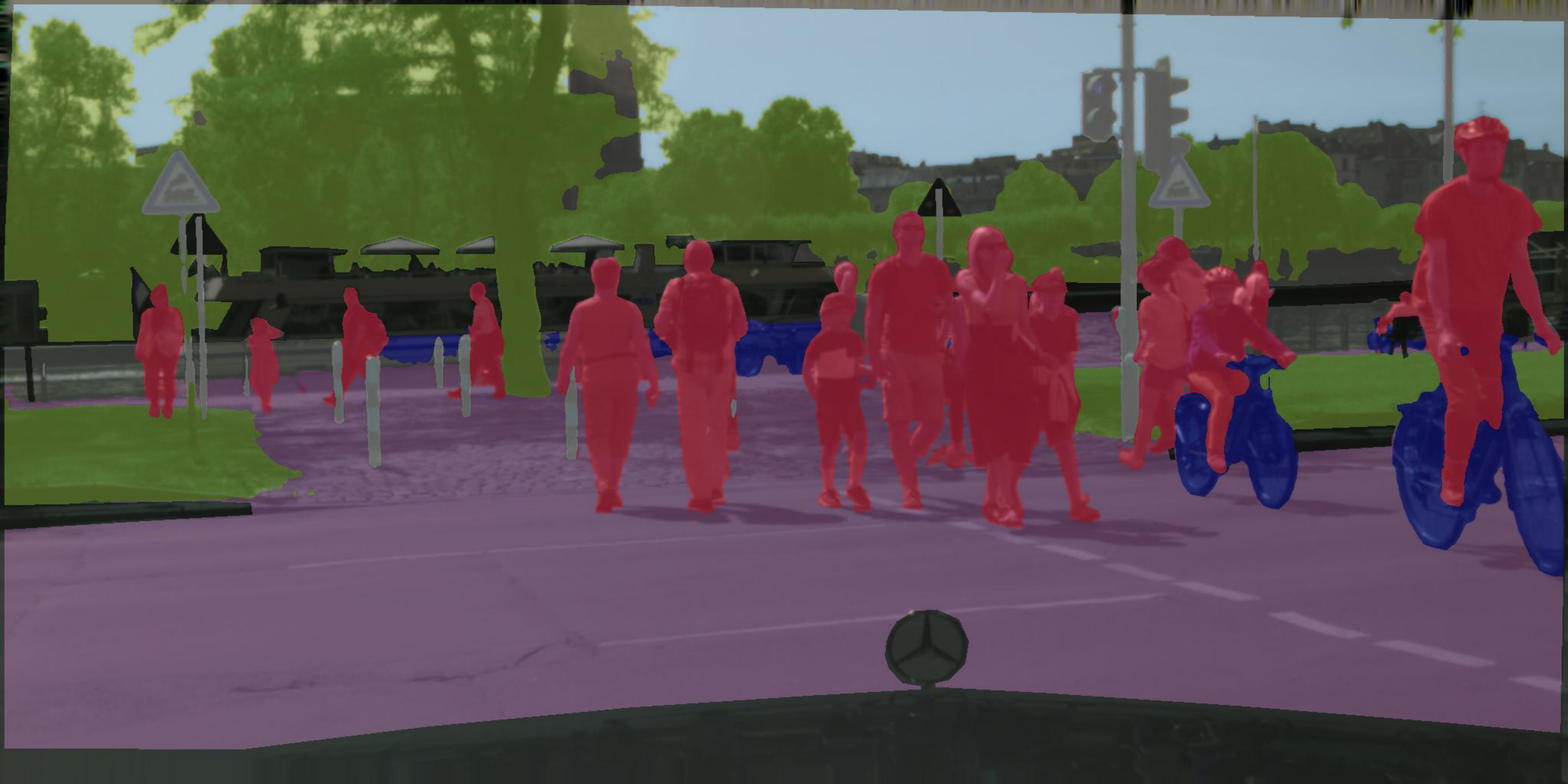}\\[1mm]
    \includegraphics[width=\linewidth]{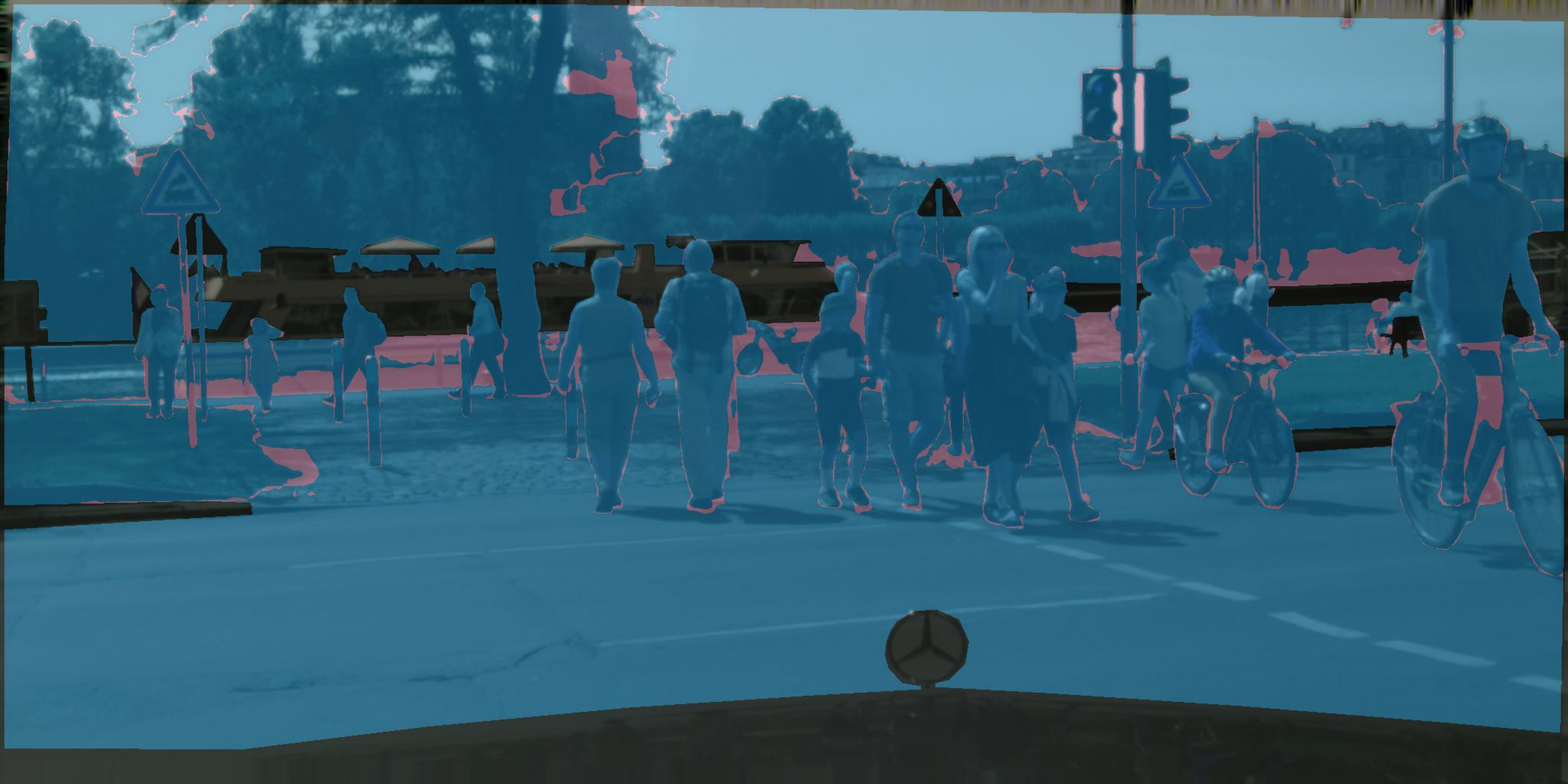}\\[1mm]
    \includegraphics[width=\linewidth]{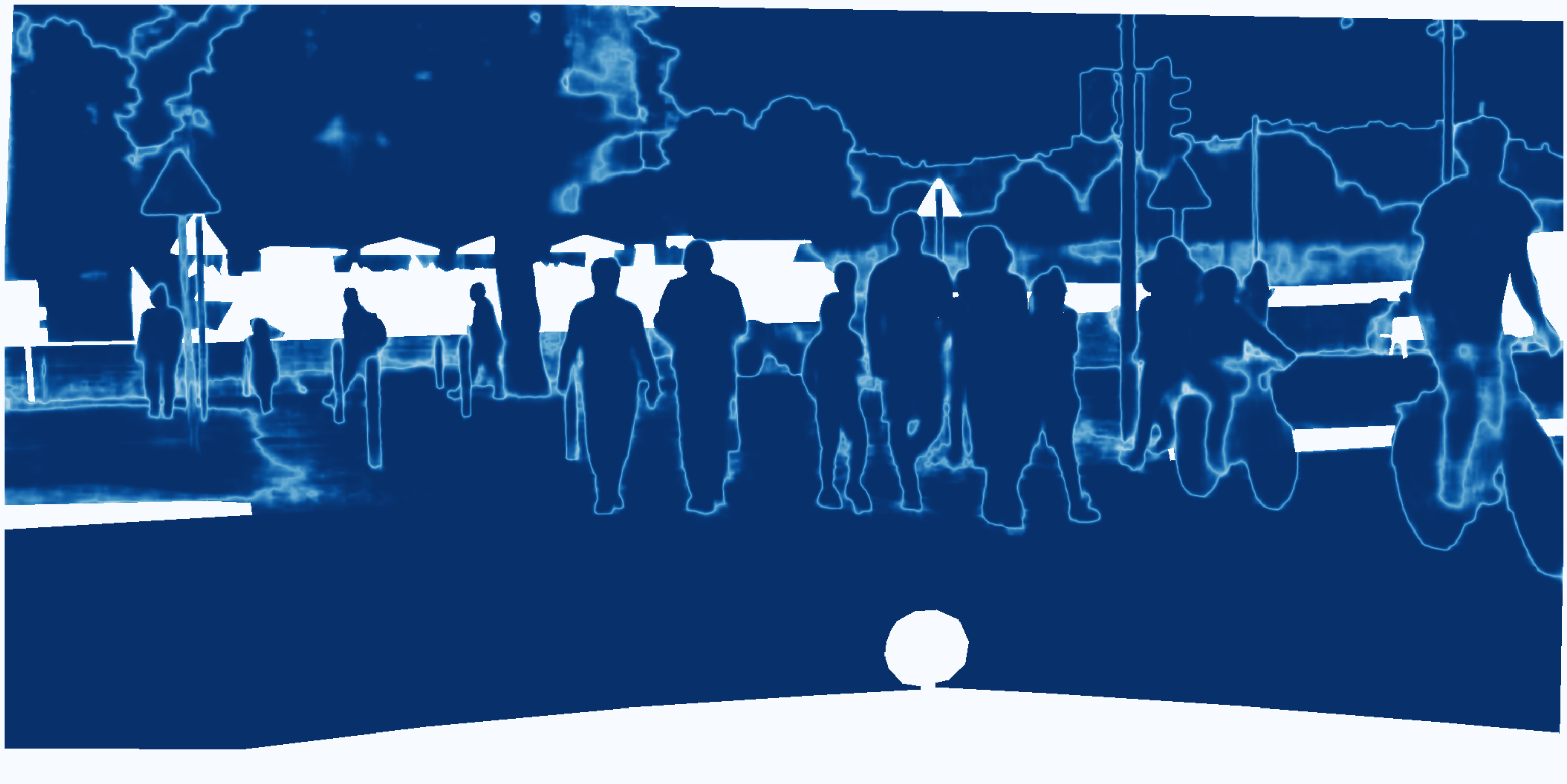}
    \label{fig:cs1_2}
  \end{subfigure}
  \begin{subfigure}{0.33\linewidth}
    \centering
    \hypname\\\vspace{0.2em}
     \includegraphics[width=\linewidth]{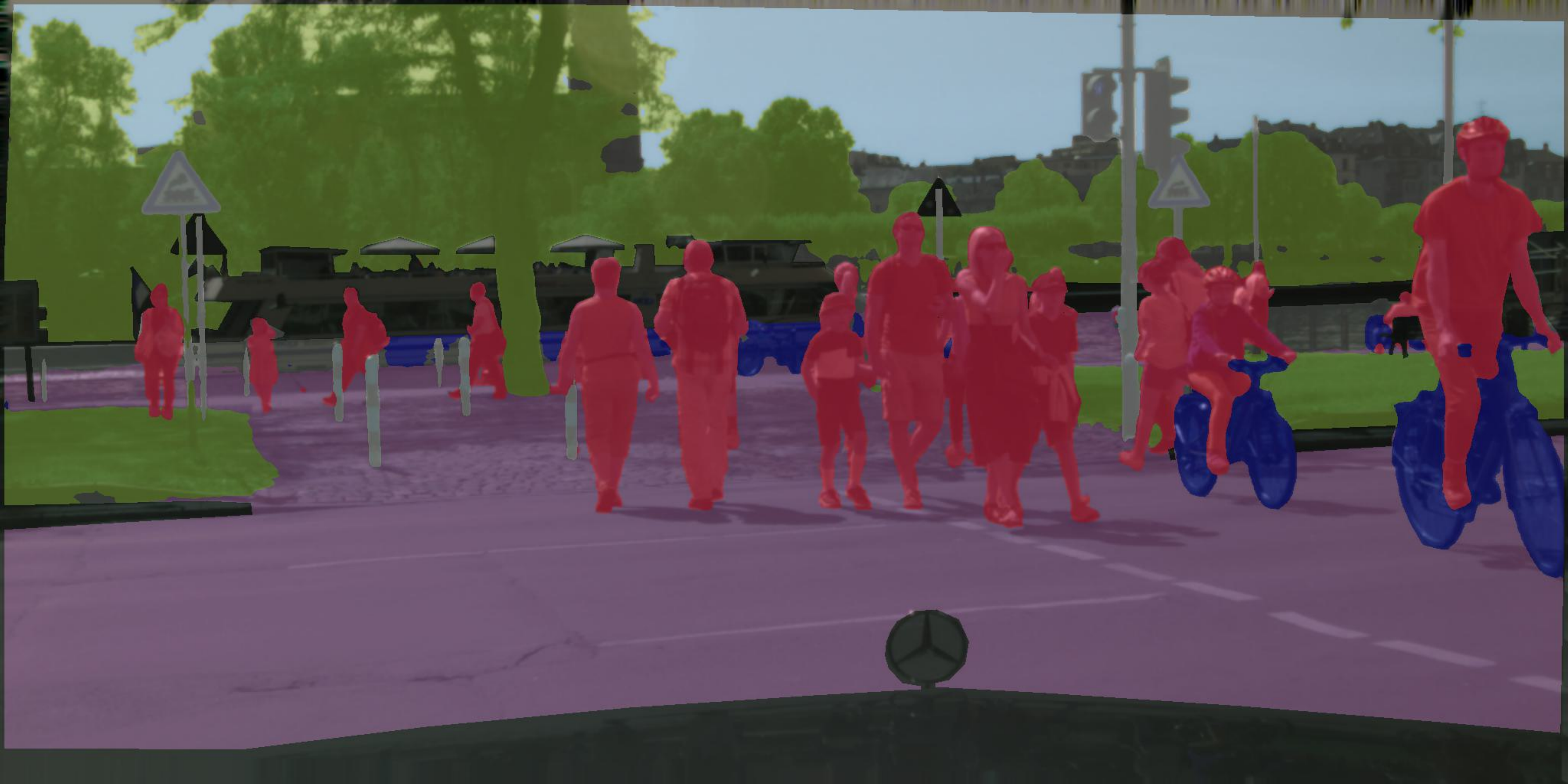}\\[1mm]
    \includegraphics[width=\linewidth]{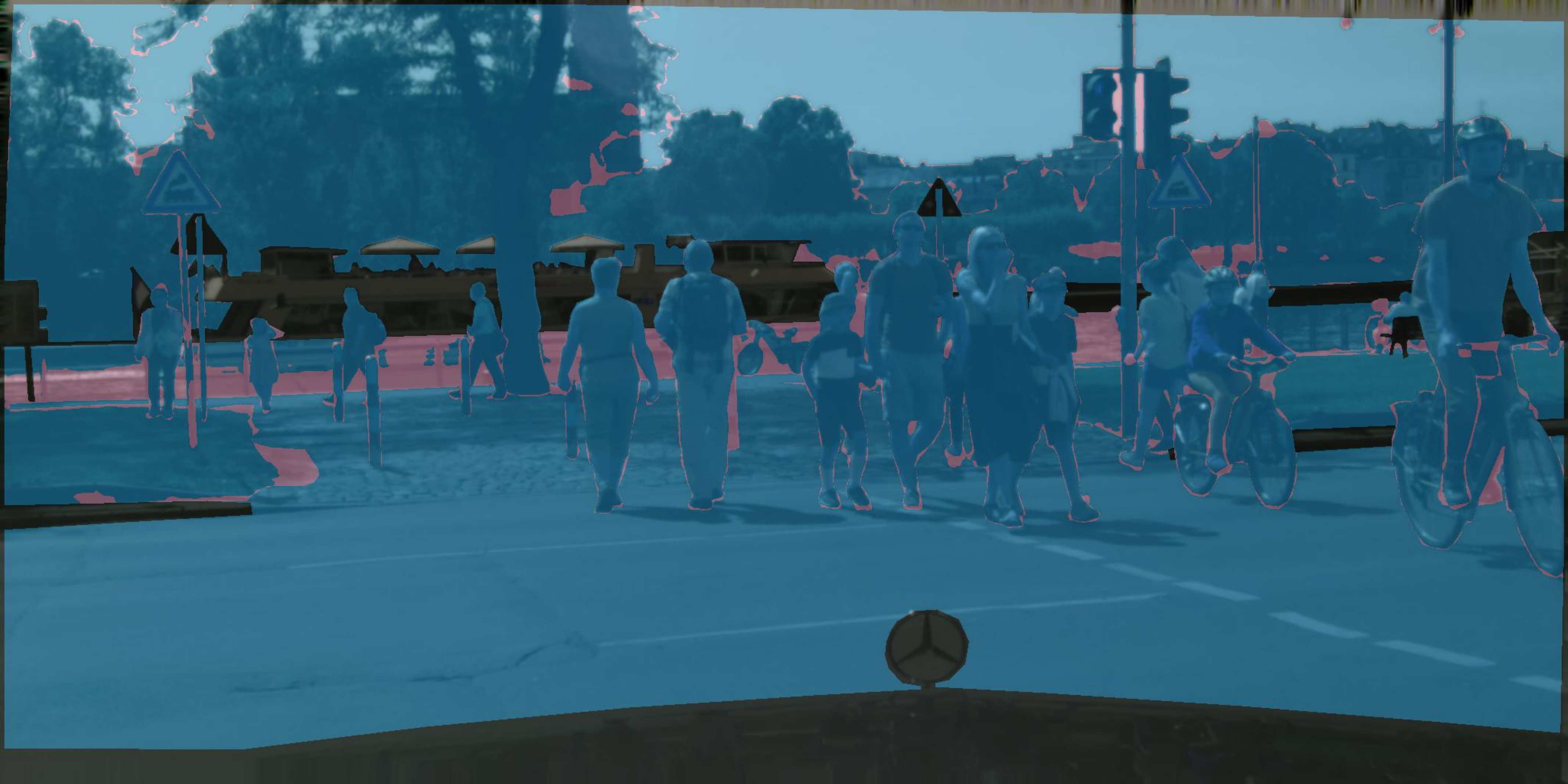}\\[1mm]
    \includegraphics[width=\linewidth]{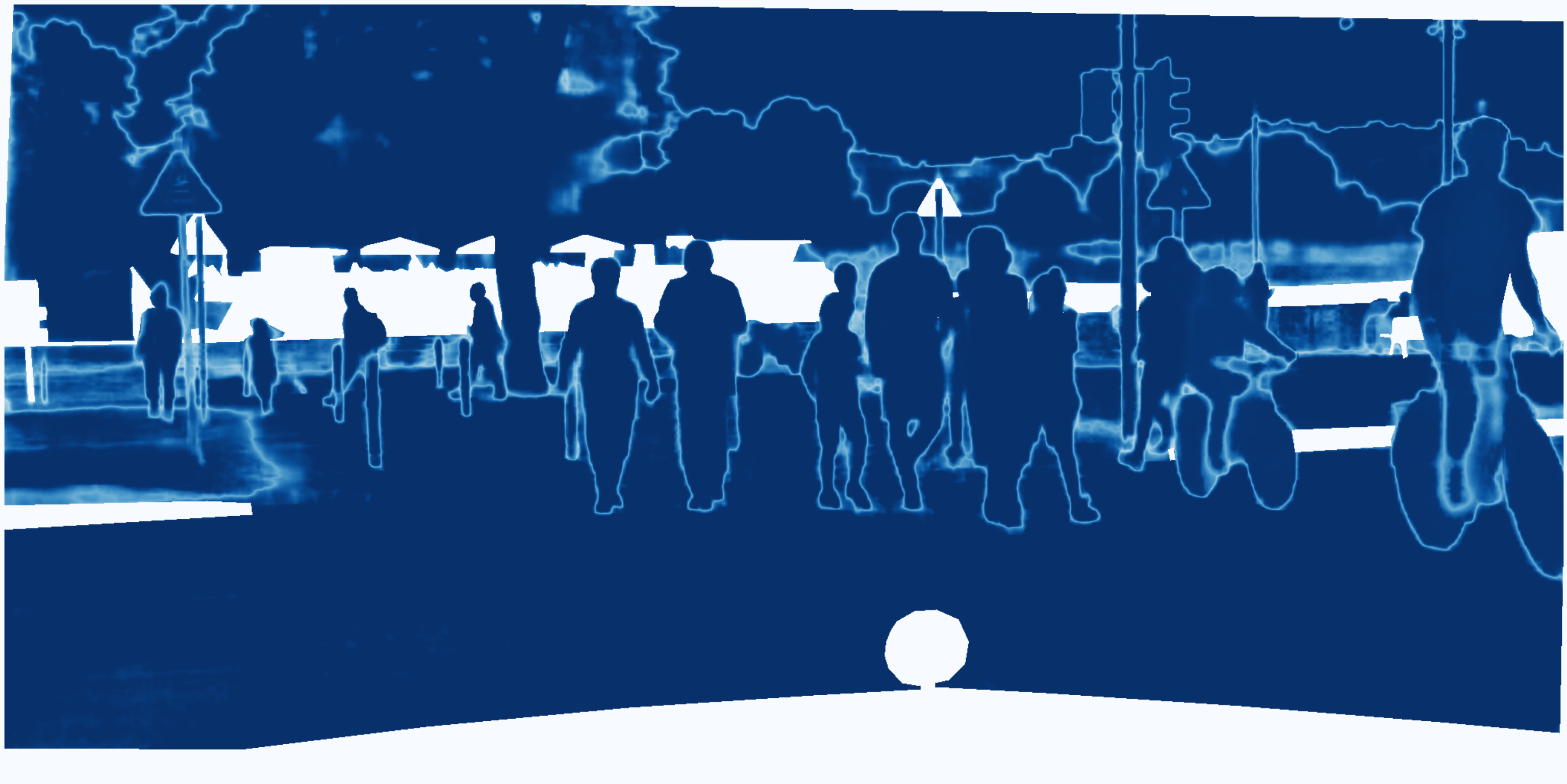}
    \label{fig:cs1_3}
  \end{subfigure}\\
  
    \begin{subfigure}{\linewidth}
        \centering
        \fboxsep 2pt
        \begin{minipage}{0.7\linewidth}
            \colorbox{flat}{\strut \color{white}{flat}}
            \colorbox{construction}{\strut \color{white}{construction}}
            \colorbox{object}{\strut  object}
            \colorbox{nature}{\strut \color{white}{nature}}
            \colorbox{sky}{\strut \color{white}{sky}}
            \colorbox{human}{\strut human}
            \colorbox{vehicle}{\strut \color{white}{vehicle}}\hspace{2em}
            \colorbox{ignore}{\strut \color{white}{ignore}}\hspace{2em}
            \colorbox{true}{\strut \color{white}{true}}
            \colorbox{false}{\strut false}\hspace{2em}%
        \end{minipage}%
        \begin{minipage}{0.3\linewidth}
        \begin{tikzpicture}
        \node [rectangle, left color=left!10!white, right color=left, anchor=north, minimum width=\linewidth, minimum height=0.5cm] (box) at (current page.north){0 \hspace{12em}  \color{white}{1}};
        \end{tikzpicture}
        \end{minipage}
    \end{subfigure}\\
    \vspace{1em}
  
  \begin{subfigure}{0.33\linewidth}
    \centering
    \includegraphics[width=\linewidth]{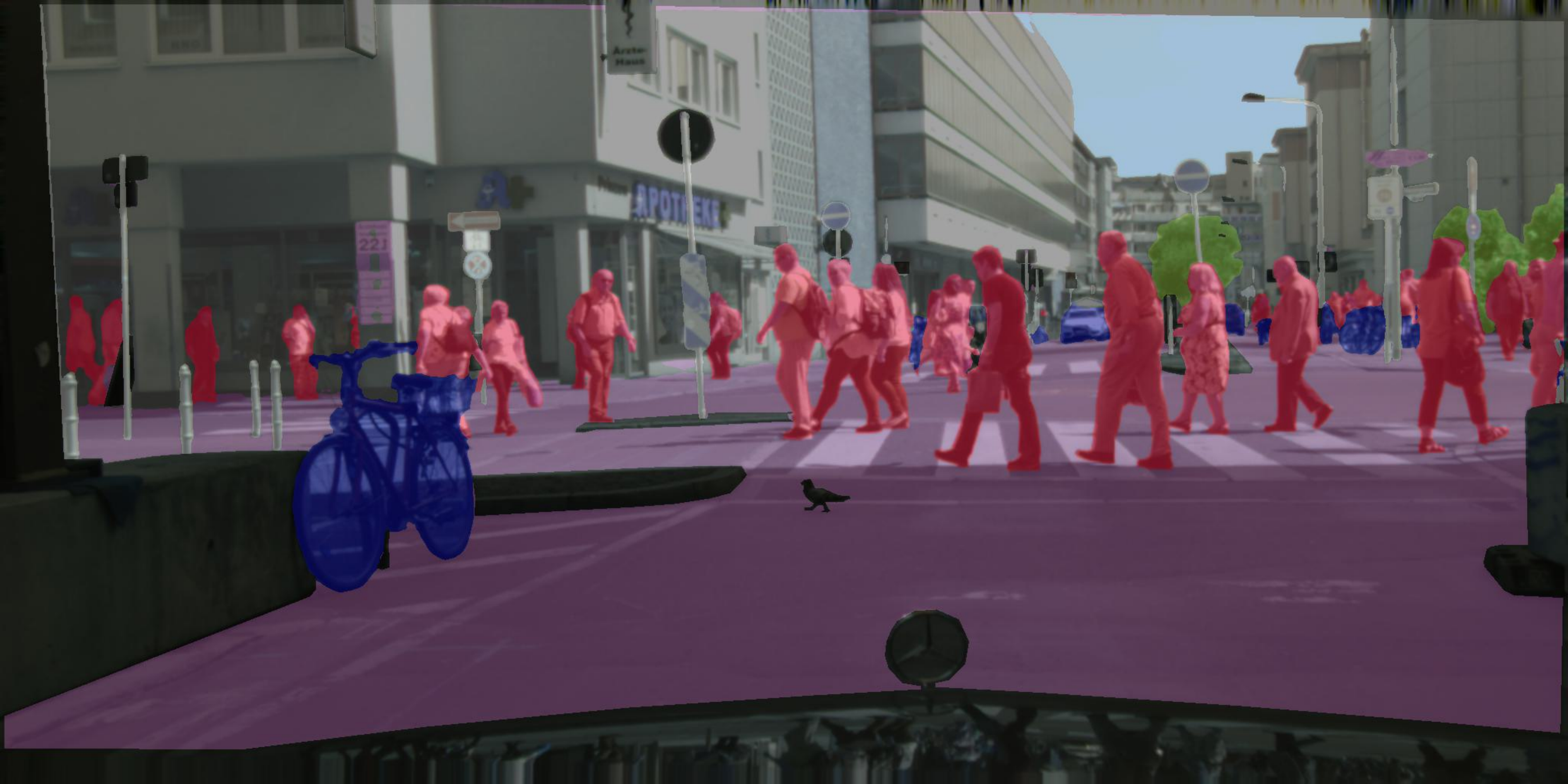}\\[1mm]
    \includegraphics[width=\linewidth]{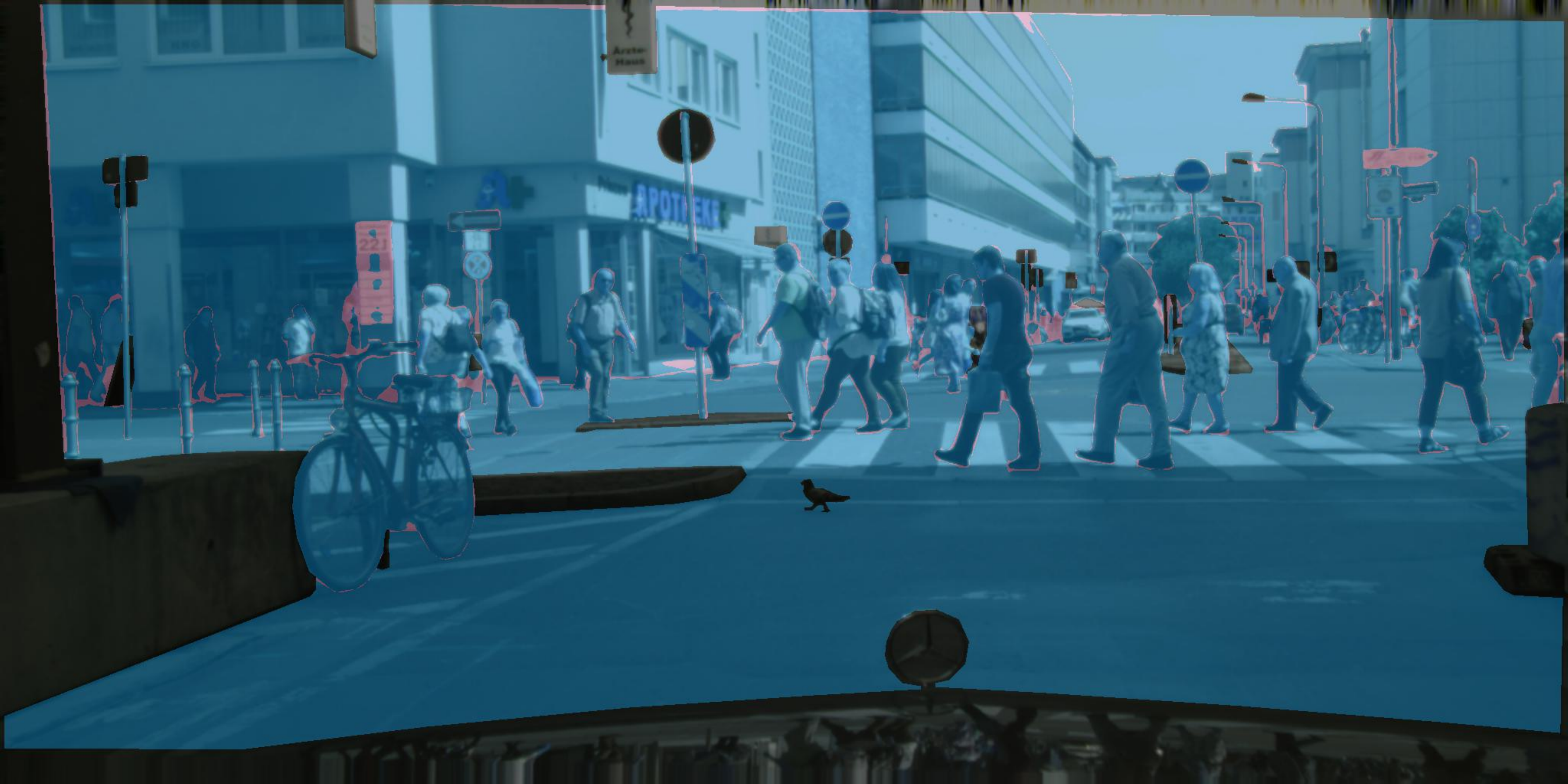}\\[1mm]
    \includegraphics[width=\linewidth]{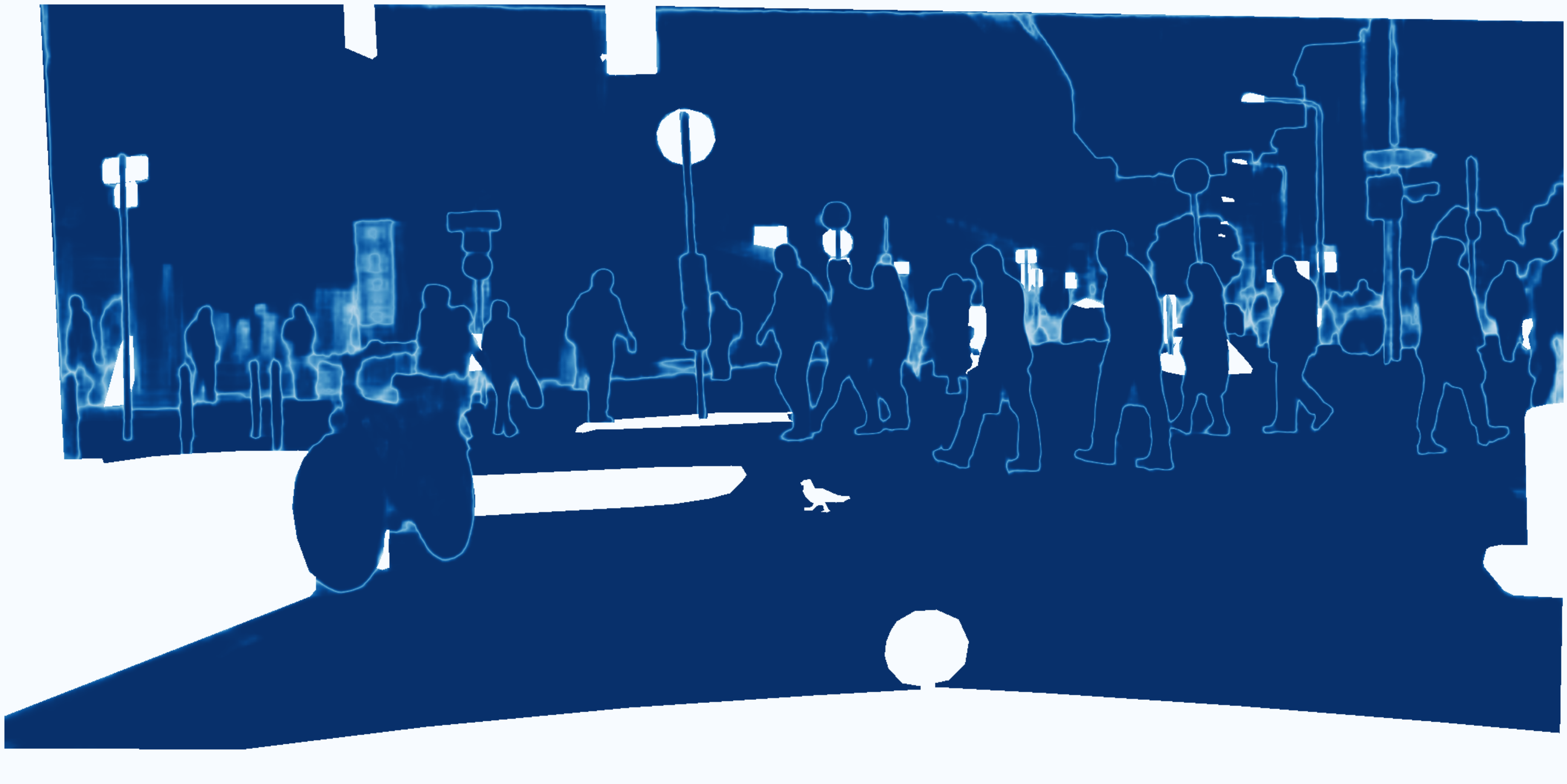}
  \label{fig:cs2_1}
  \end{subfigure}
  \begin{subfigure}{0.33\linewidth}
    \centering
     \includegraphics[width=\linewidth]{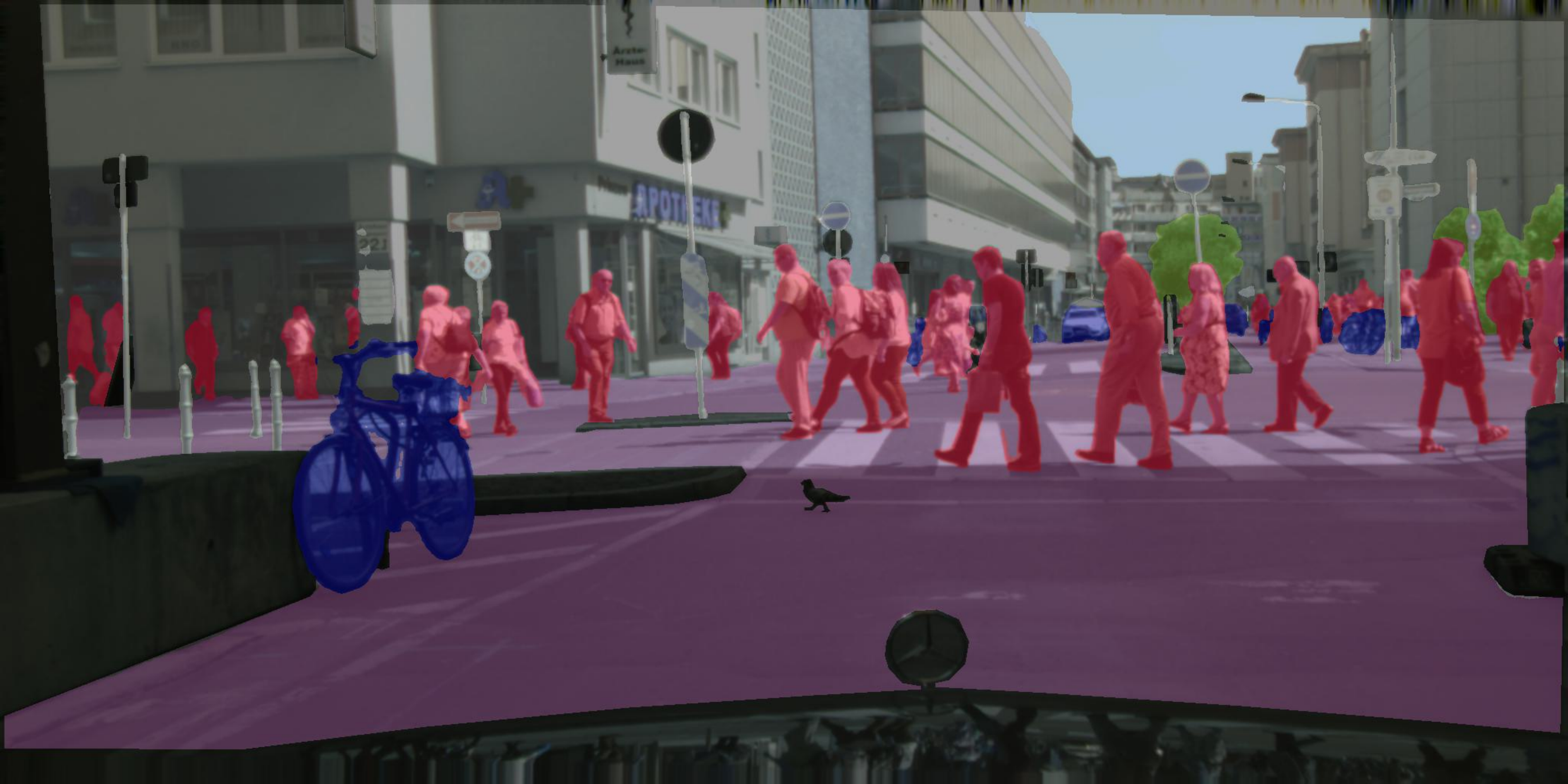}\\[1mm]
    \includegraphics[width=\linewidth]{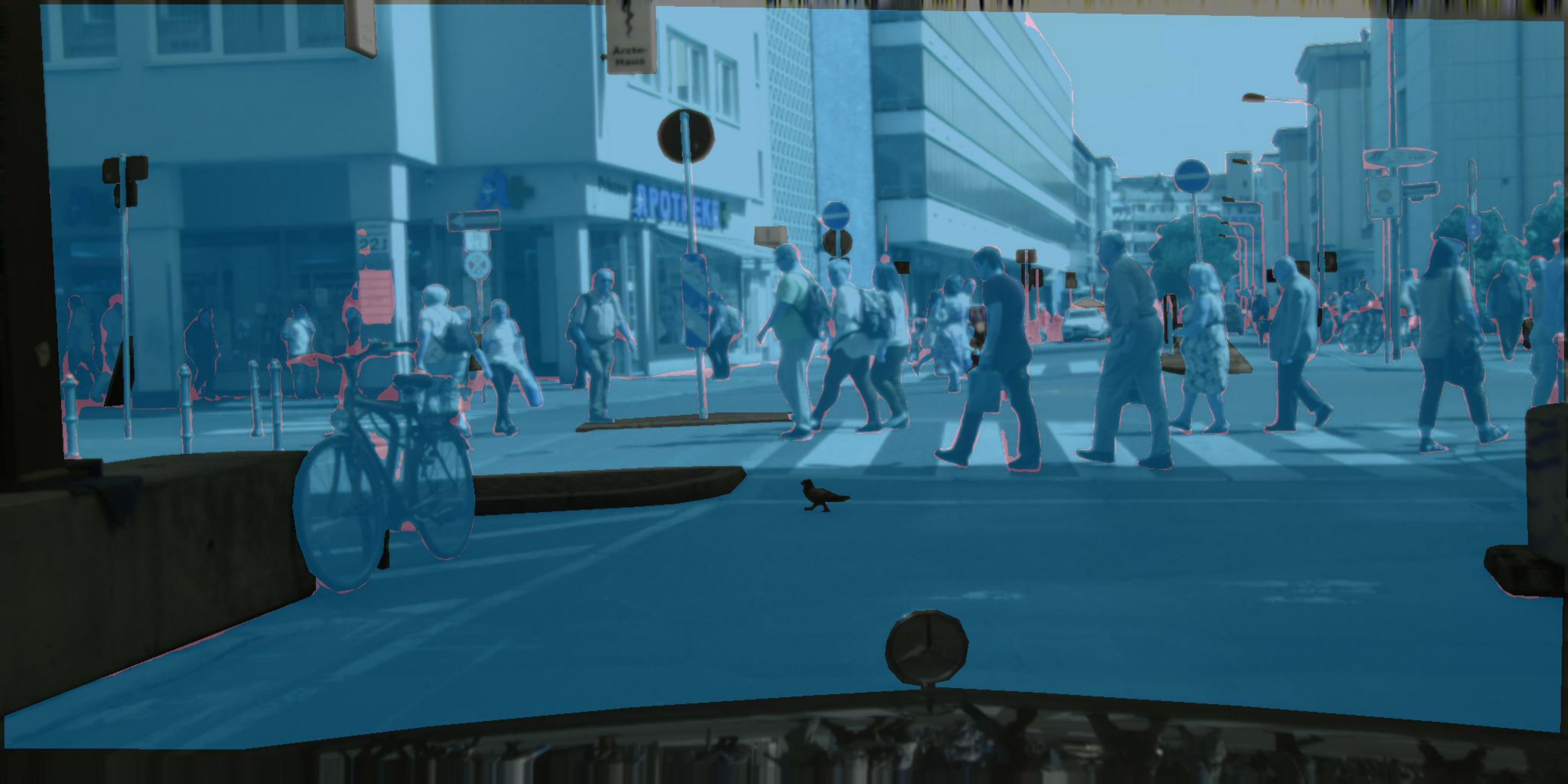}\\[1mm]
    \includegraphics[width=\linewidth]{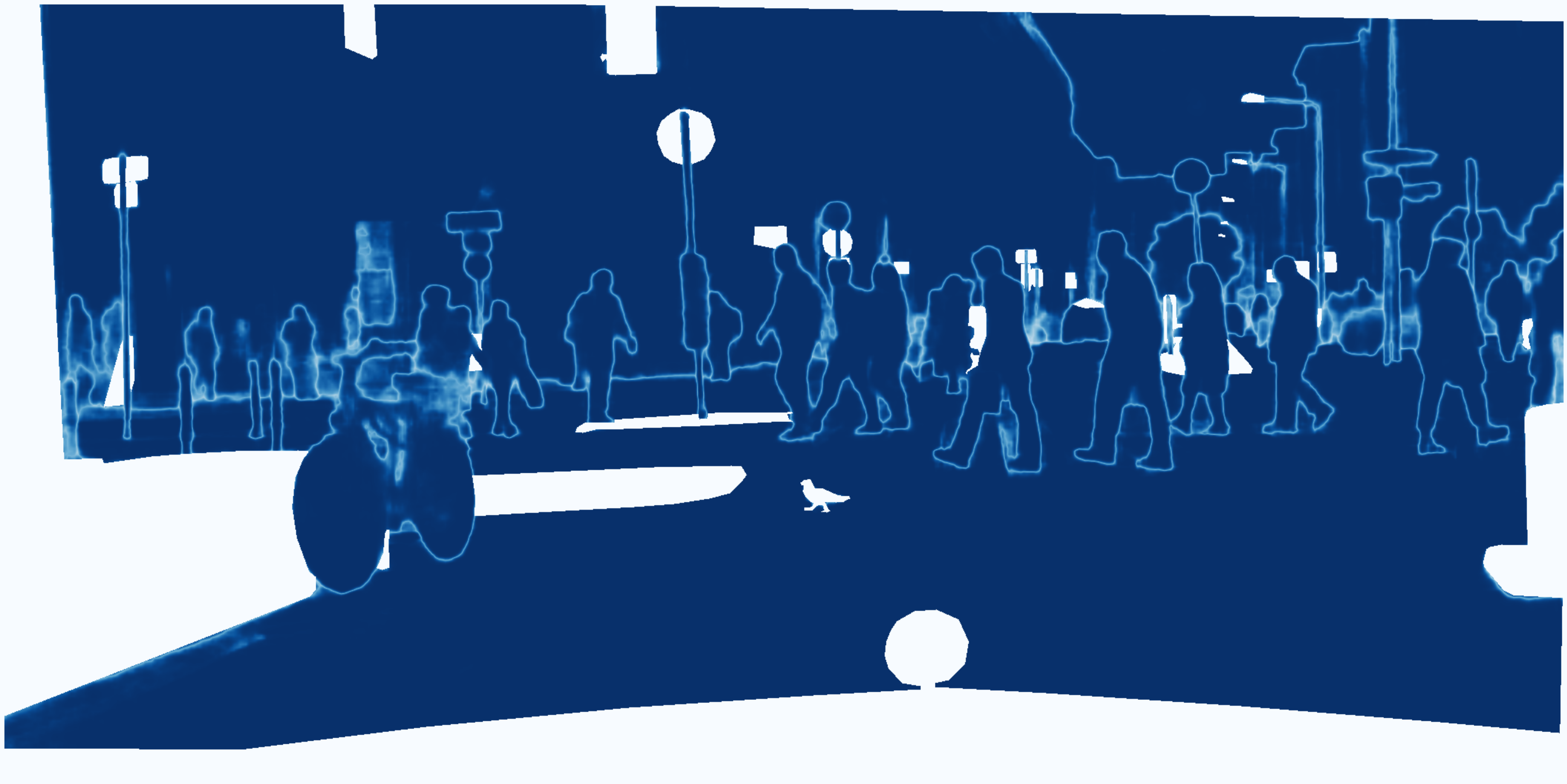}
    \label{fig:cs2_2}
  \end{subfigure}
  \begin{subfigure}{0.33\linewidth}
    \centering
     \includegraphics[width=\linewidth]{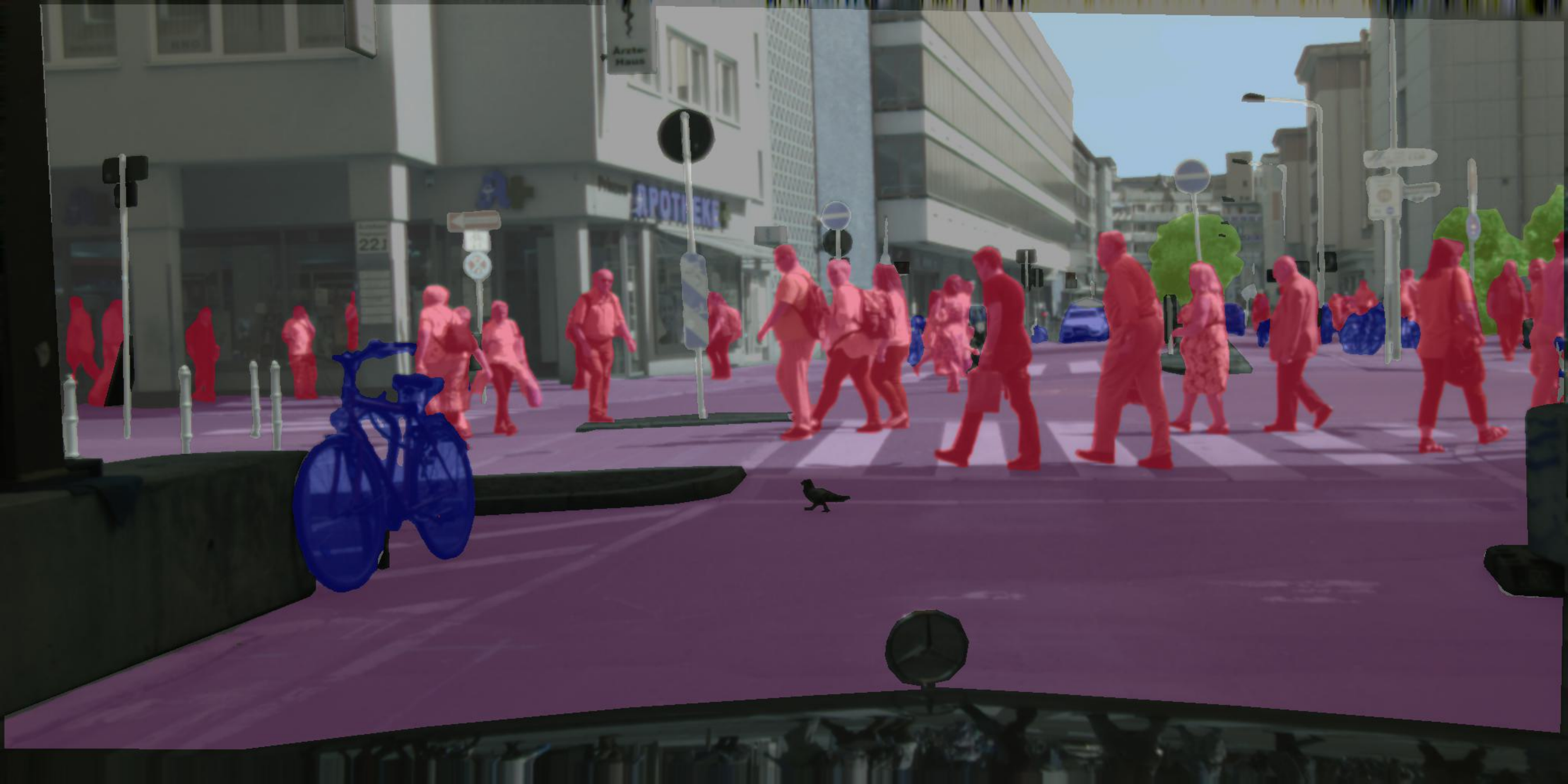}\\[1mm]
    \includegraphics[width=\linewidth]{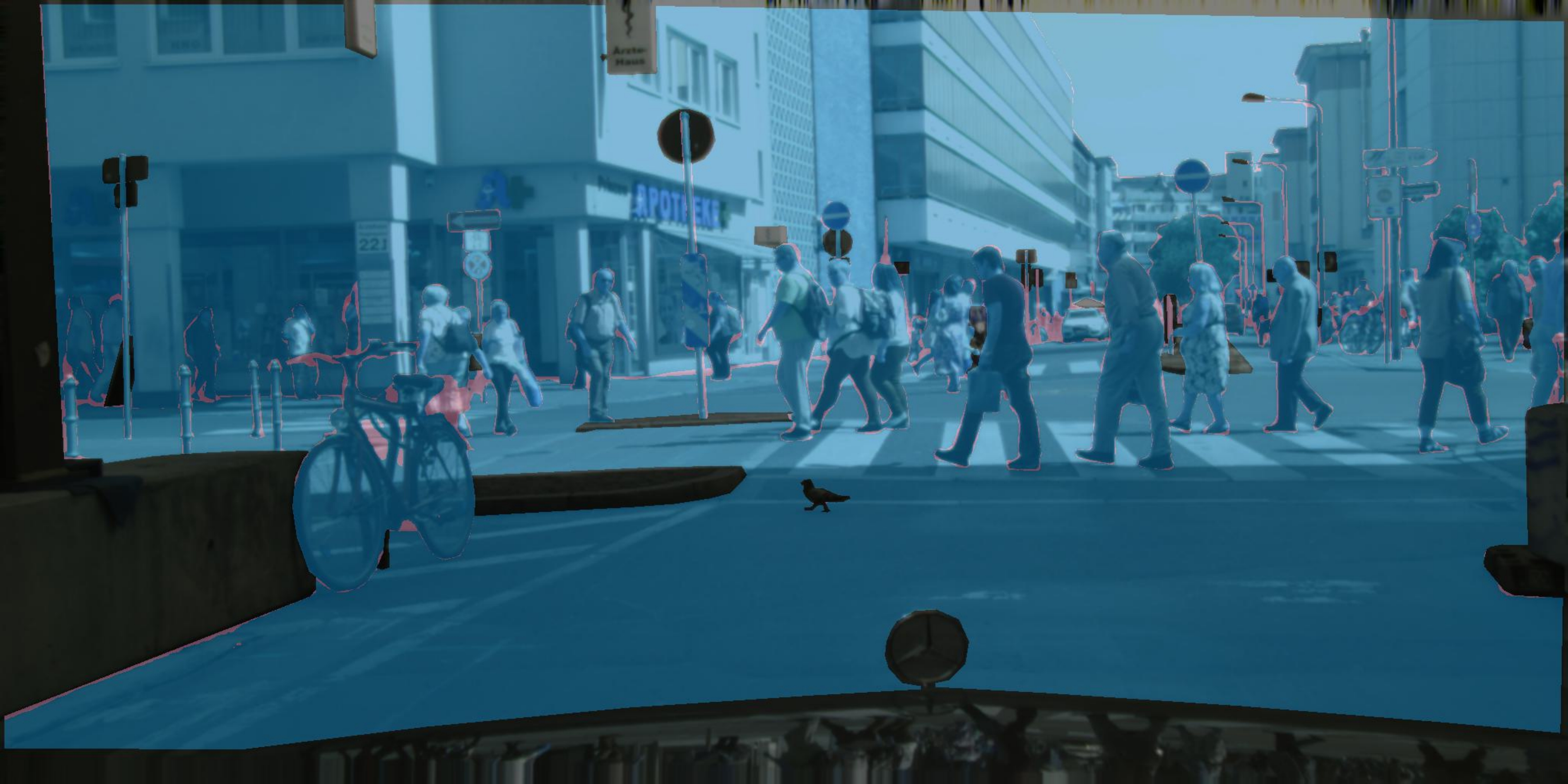}\\[1mm]
    \includegraphics[width=\linewidth]{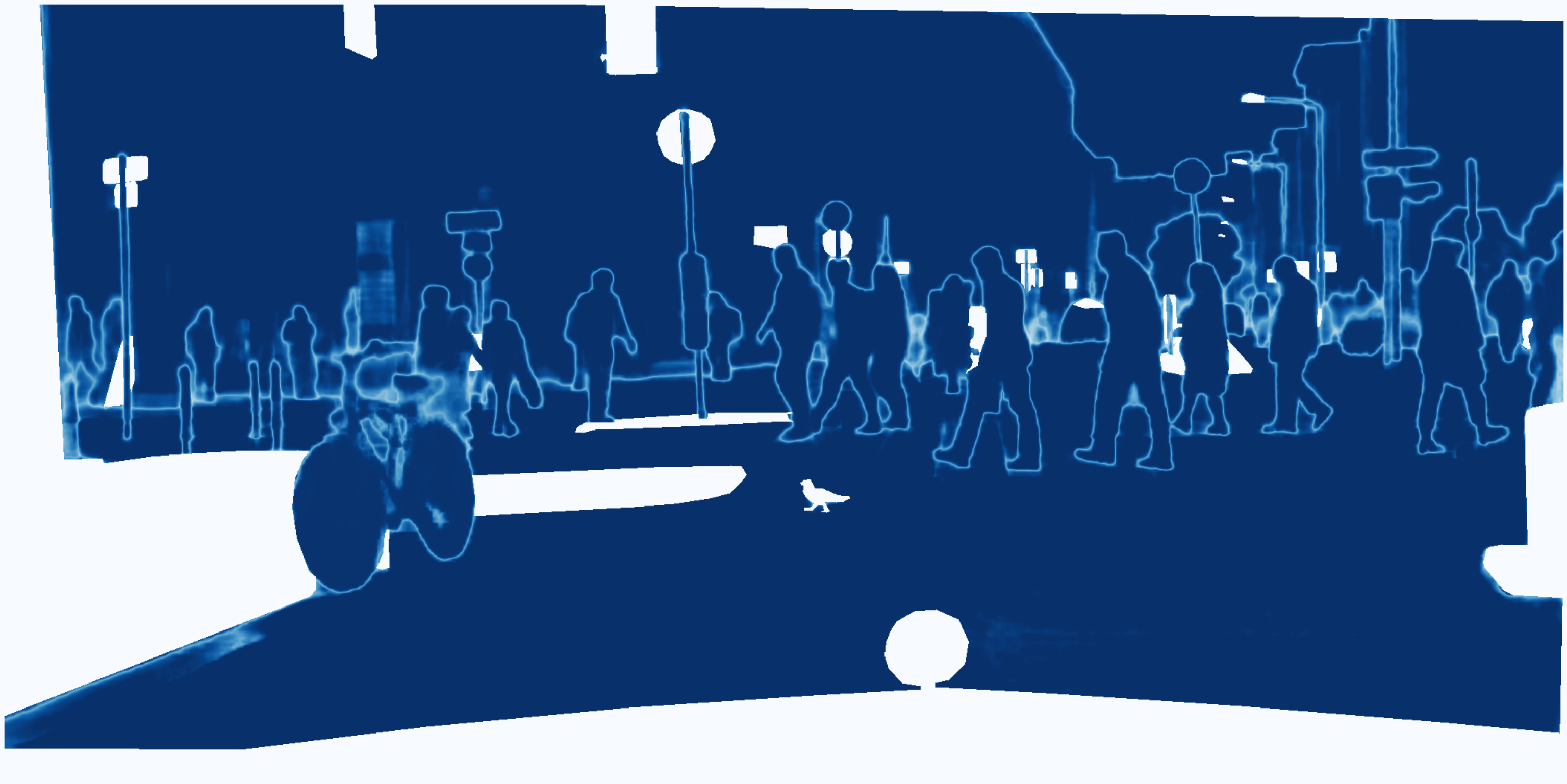}
    \label{fig:cs2_3}
  \end{subfigure}
  \caption{\textbf{Two qualitative examples from Cityscapes.} See \cref{sec:qualitative} for analysis.}
  \label{fig:qaulics1}
  \vspace{-0.7em}
\end{figure*}

\begin{figure*}[t]
\centering
  \begin{subfigure}{0.33\linewidth}
    \centering
    HSSN\\\vspace{0.3em}
    \includegraphics[width=\linewidth]{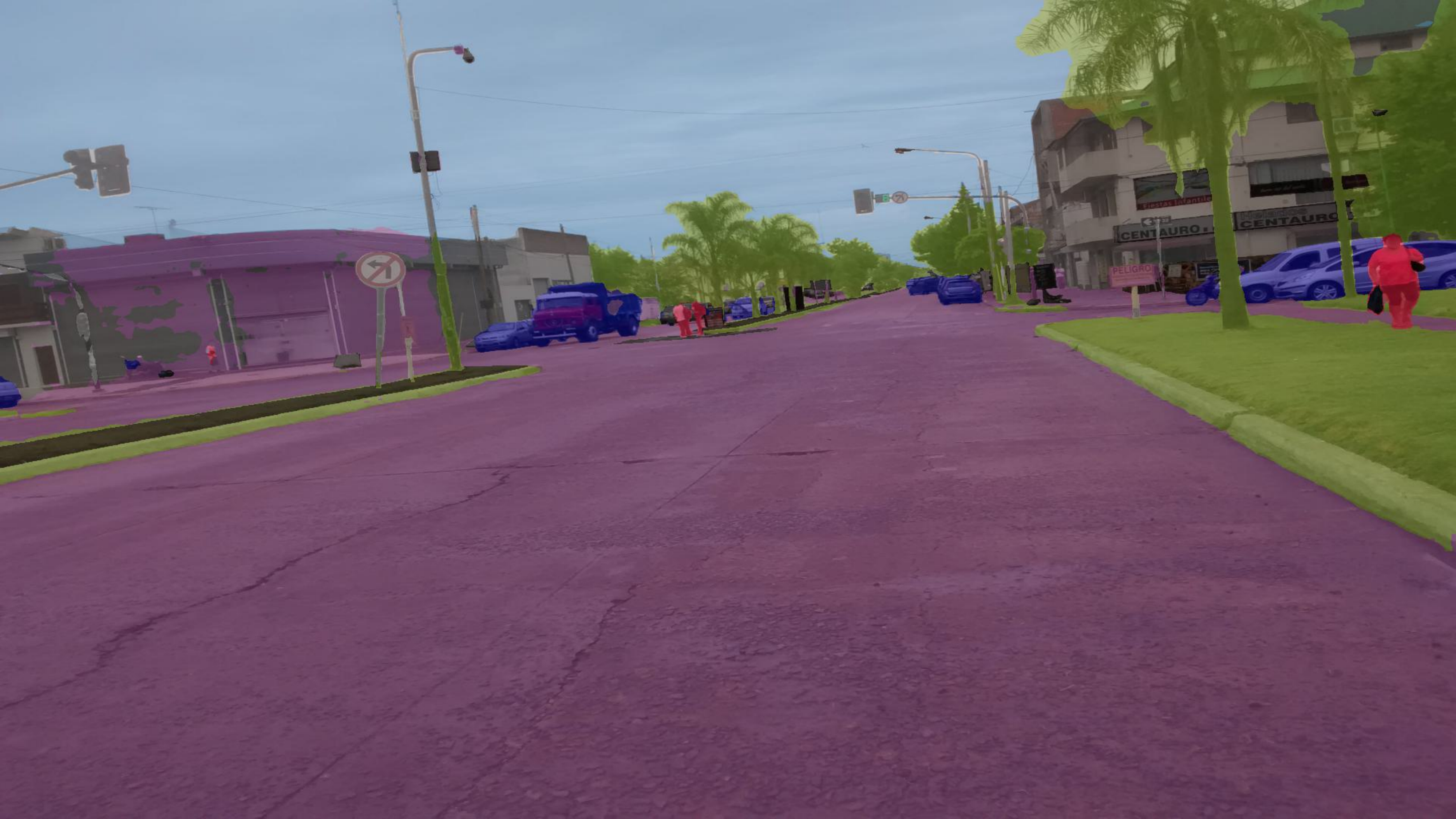}\\[1mm]
    \includegraphics[width=\linewidth]{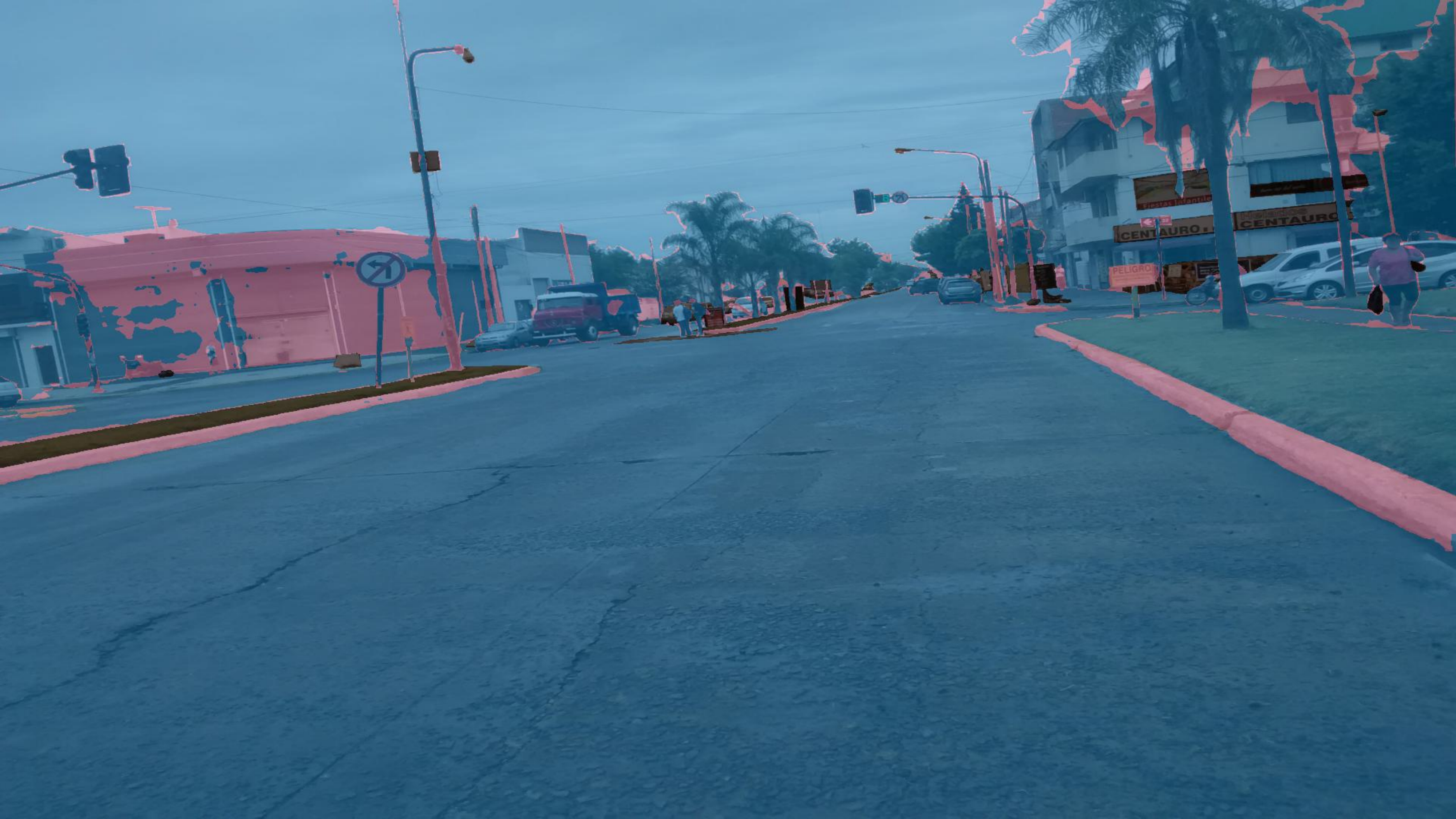}\\[1mm]
    \includegraphics[width=\linewidth]{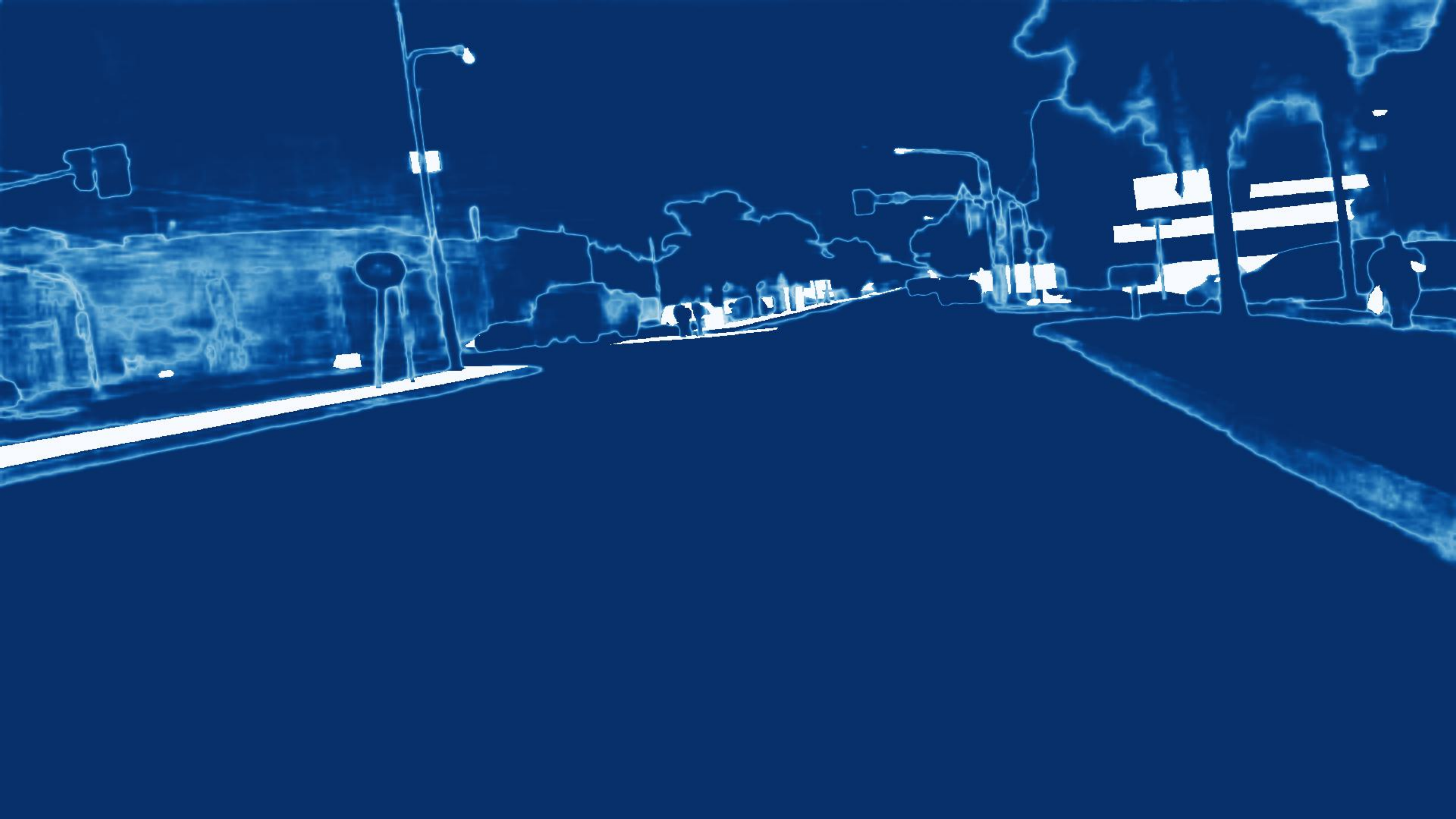}
  \label{fig:map1_1}
  \end{subfigure}
  \begin{subfigure}{0.33\linewidth}
    \centering
    \eucname\\\vspace{0.3em}
     \includegraphics[width=\linewidth]{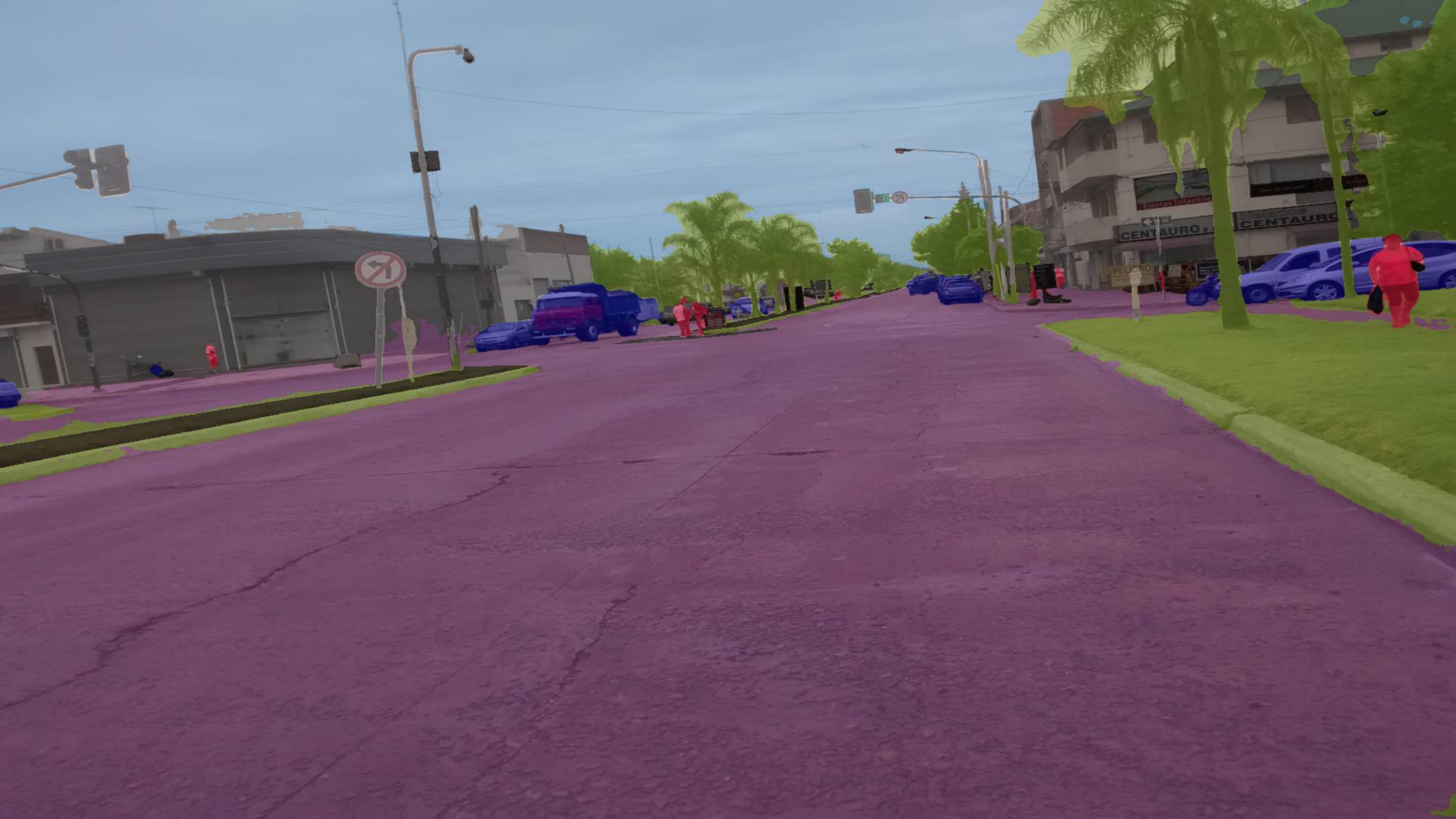}\\[1mm]
    \includegraphics[width=\linewidth]{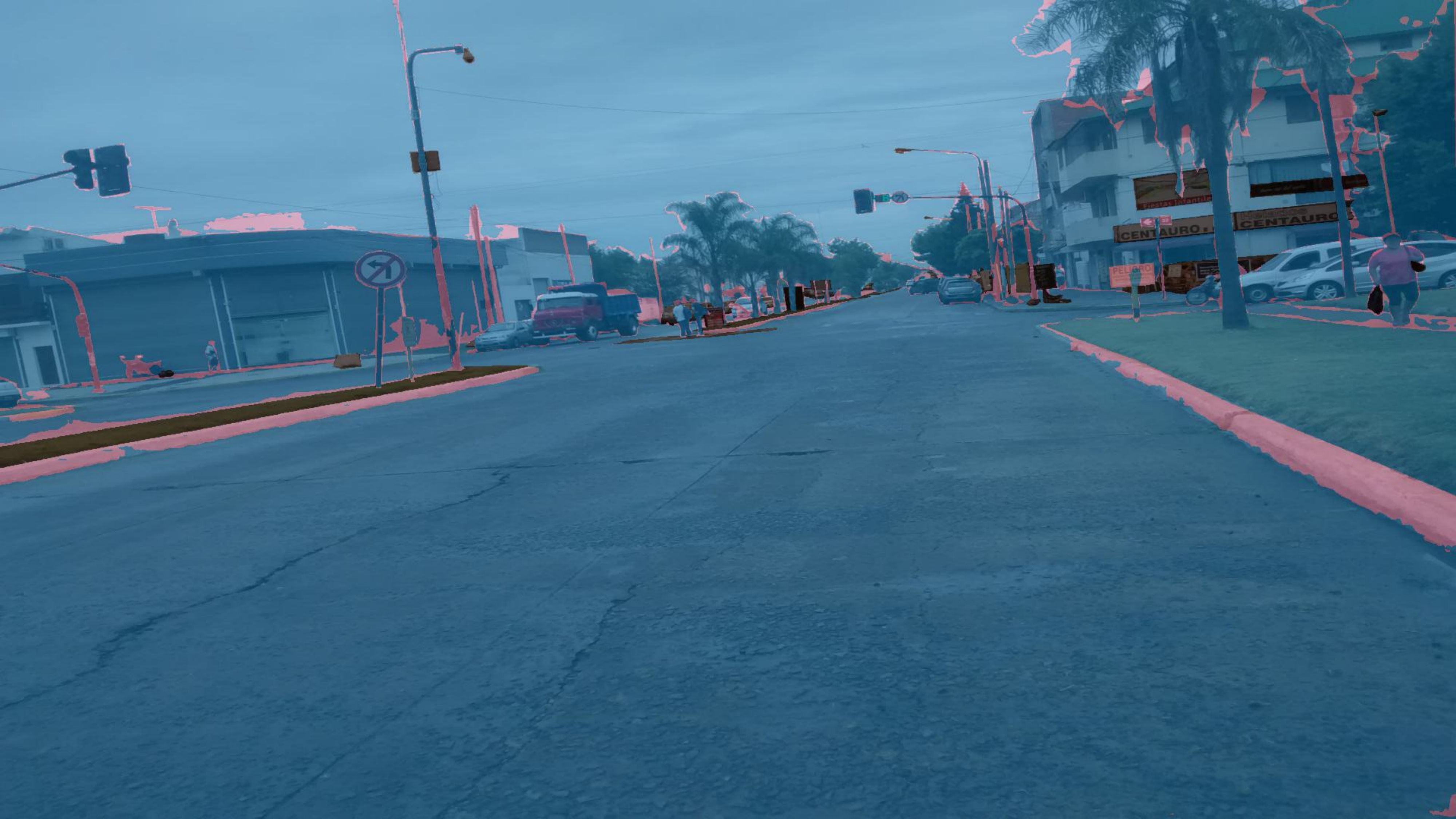}\\[1mm]
    \includegraphics[width=\linewidth]{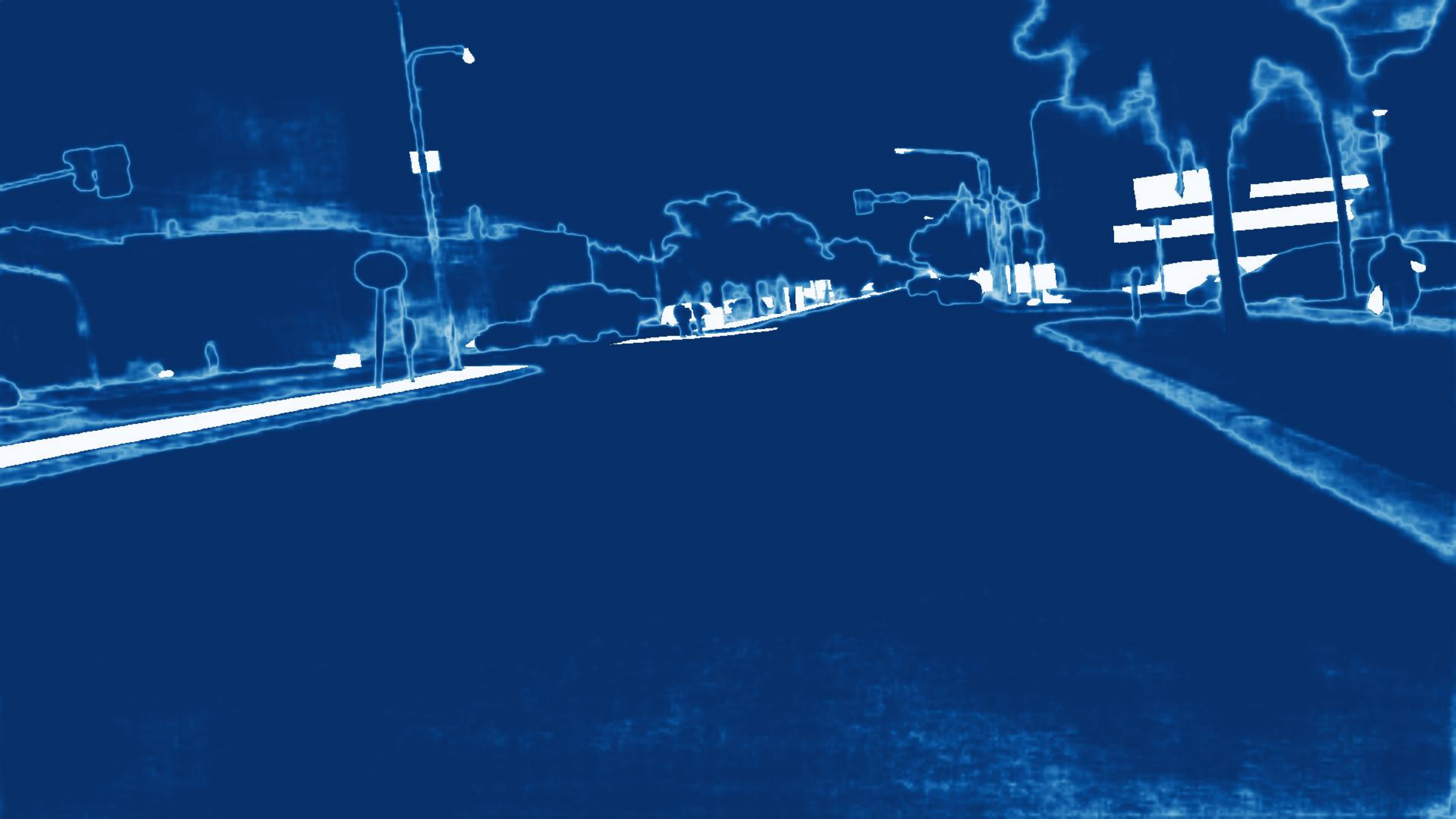}
    \label{fig:map1_2}
  \end{subfigure}
  \begin{subfigure}{0.33\linewidth}
    \centering
    \hypname\\\vspace{0.2em}
     \includegraphics[width=\linewidth]{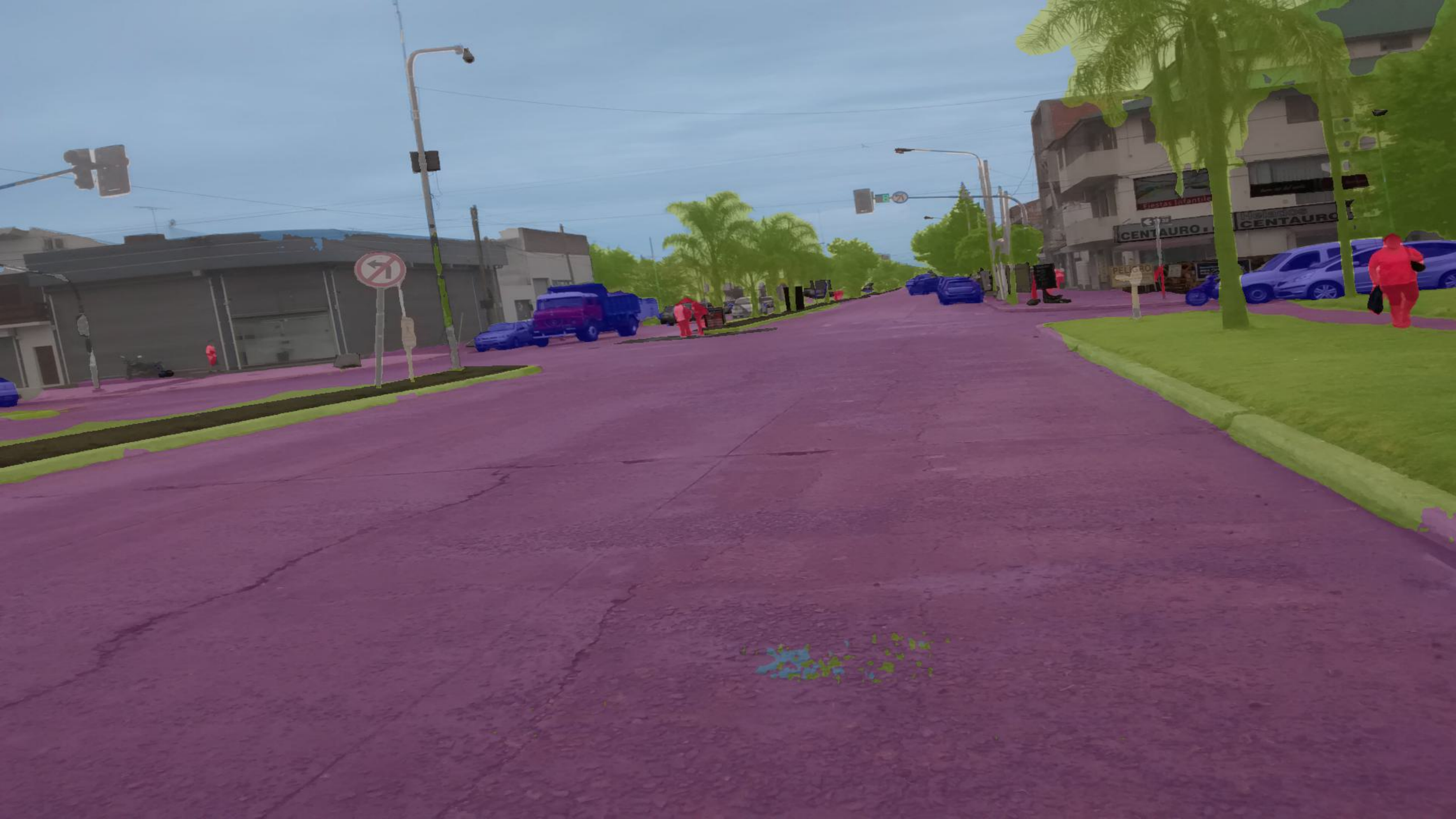}\\[1mm]
    \includegraphics[width=\linewidth]{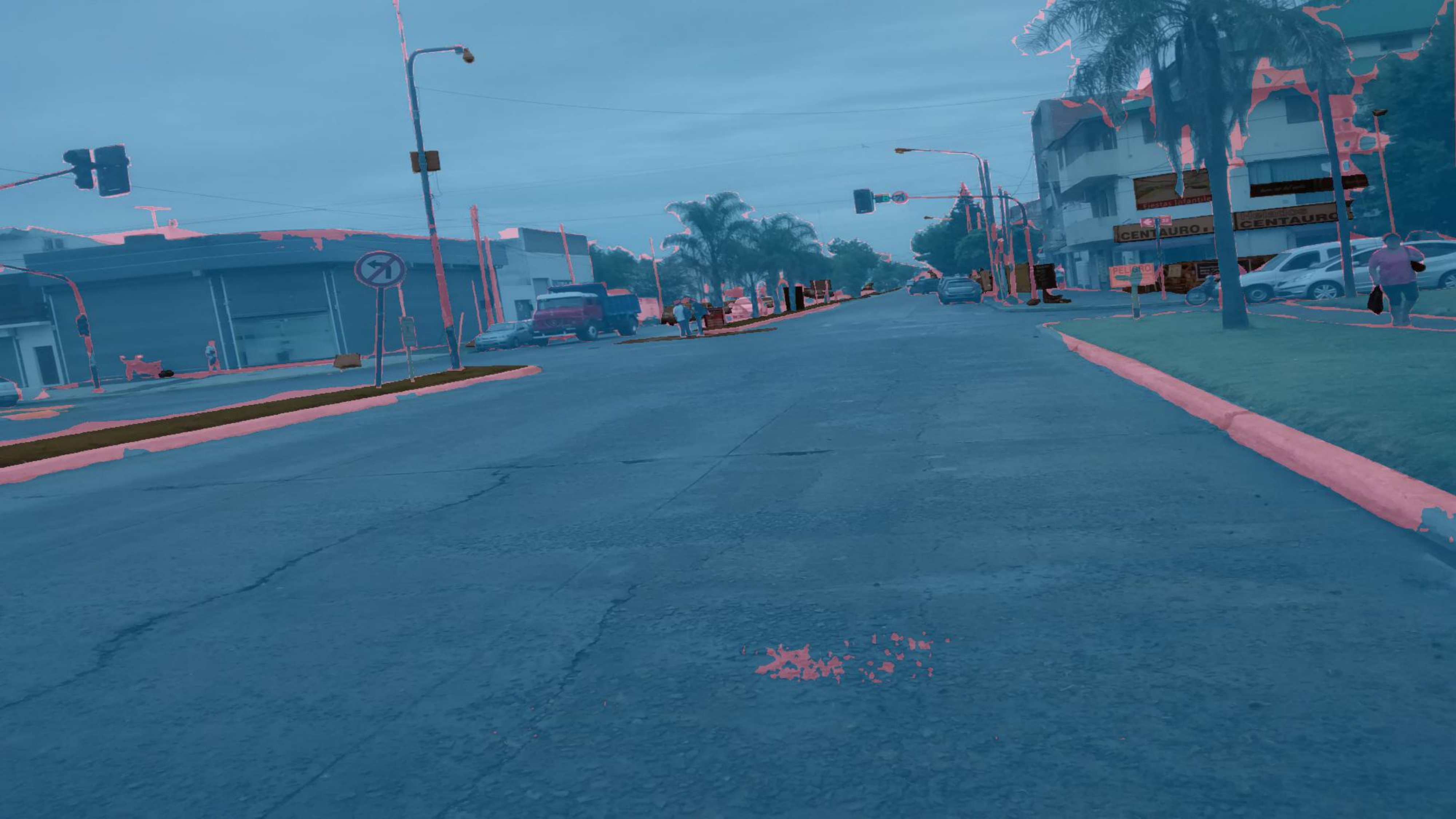}\\[1mm]
    \includegraphics[width=\linewidth]{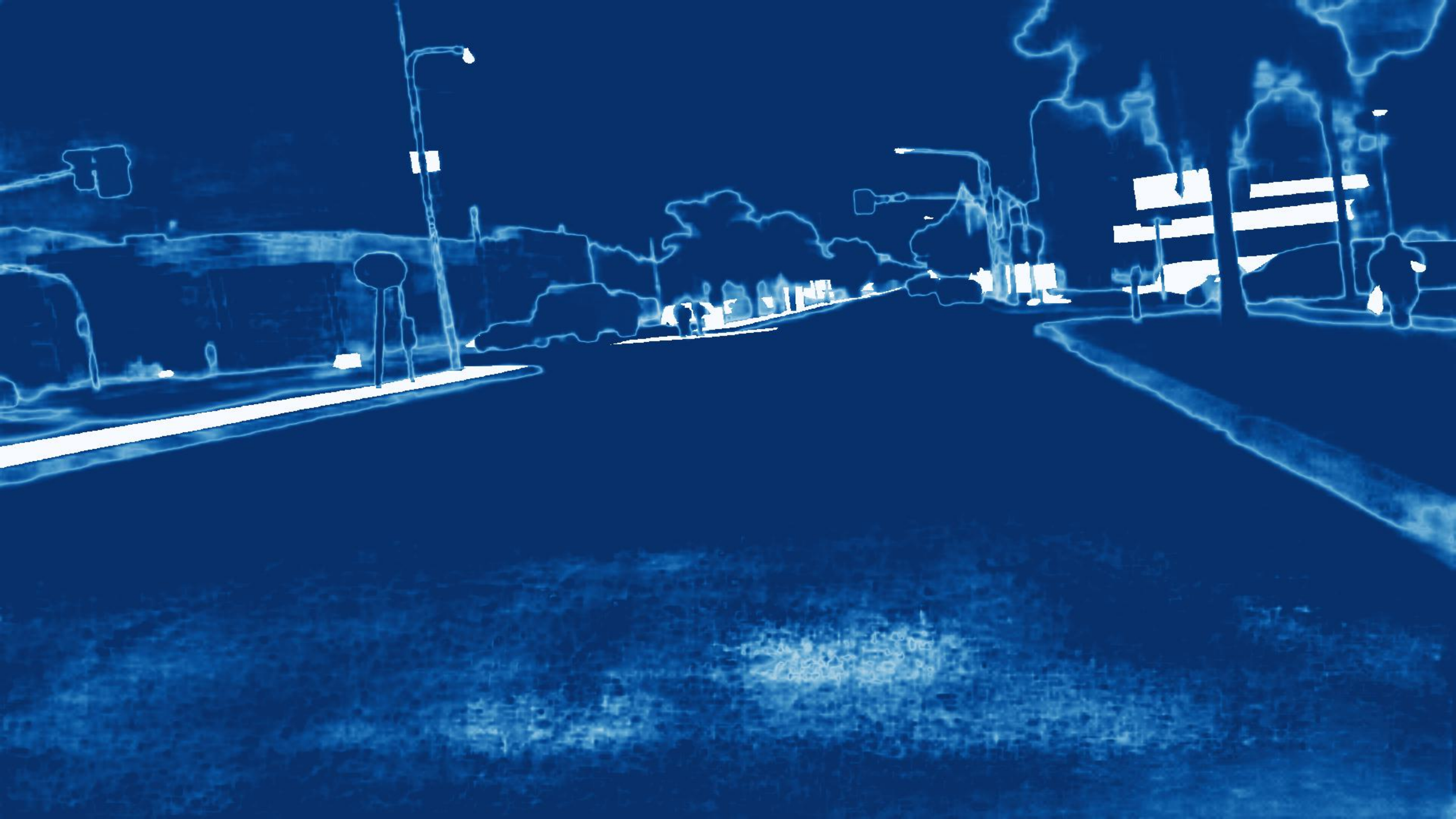}
    \label{fig:map1_3}
  \end{subfigure}
    \begin{subfigure}{\linewidth}
        \centering
        \fboxsep 2pt
        \begin{minipage}{0.7\linewidth}
            \colorbox{flat}{\strut \color{white}{flat}}
            \colorbox{construction}{\strut \color{white}{construction}}
            \colorbox{object}{\strut  object}
            \colorbox{nature}{\strut \color{white}{nature}}
            \colorbox{sky}{\strut \color{white}{sky}}
            \colorbox{human}{\strut human}
            \colorbox{vehicle}{\strut \color{white}{vehicle}}\hspace{2em}
            \colorbox{ignore}{\strut \color{white}{ignore}}\hspace{2em}
            \colorbox{true}{\strut \color{white}{true}}
            \colorbox{false}{\strut false}\hspace{2em}%
        \end{minipage}%
        \begin{minipage}{0.3\linewidth}
        \begin{tikzpicture}
        \node [rectangle, left color=left!10!white, right color=left, anchor=north, minimum width=\linewidth, minimum height=0.5cm] (box) at (current page.north){0 \hspace{12em}  \color{white}{1}};
        \end{tikzpicture}
        \end{minipage}
    \end{subfigure}\\
    \vspace{1em}
    
  \begin{subfigure}{0.33\linewidth}
    \centering
    \includegraphics[width=\linewidth]{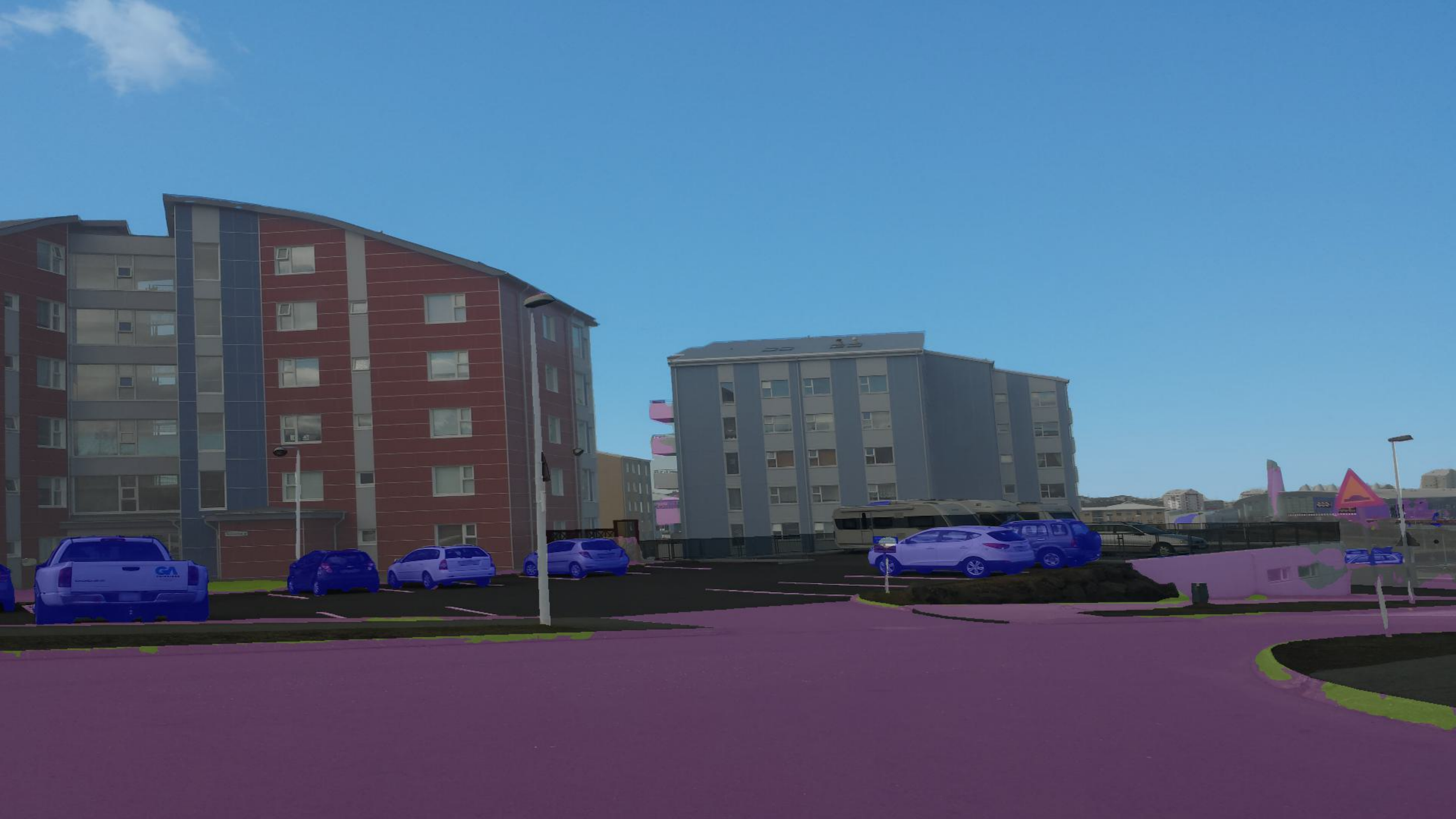}\\[1mm]
    \includegraphics[width=\linewidth]{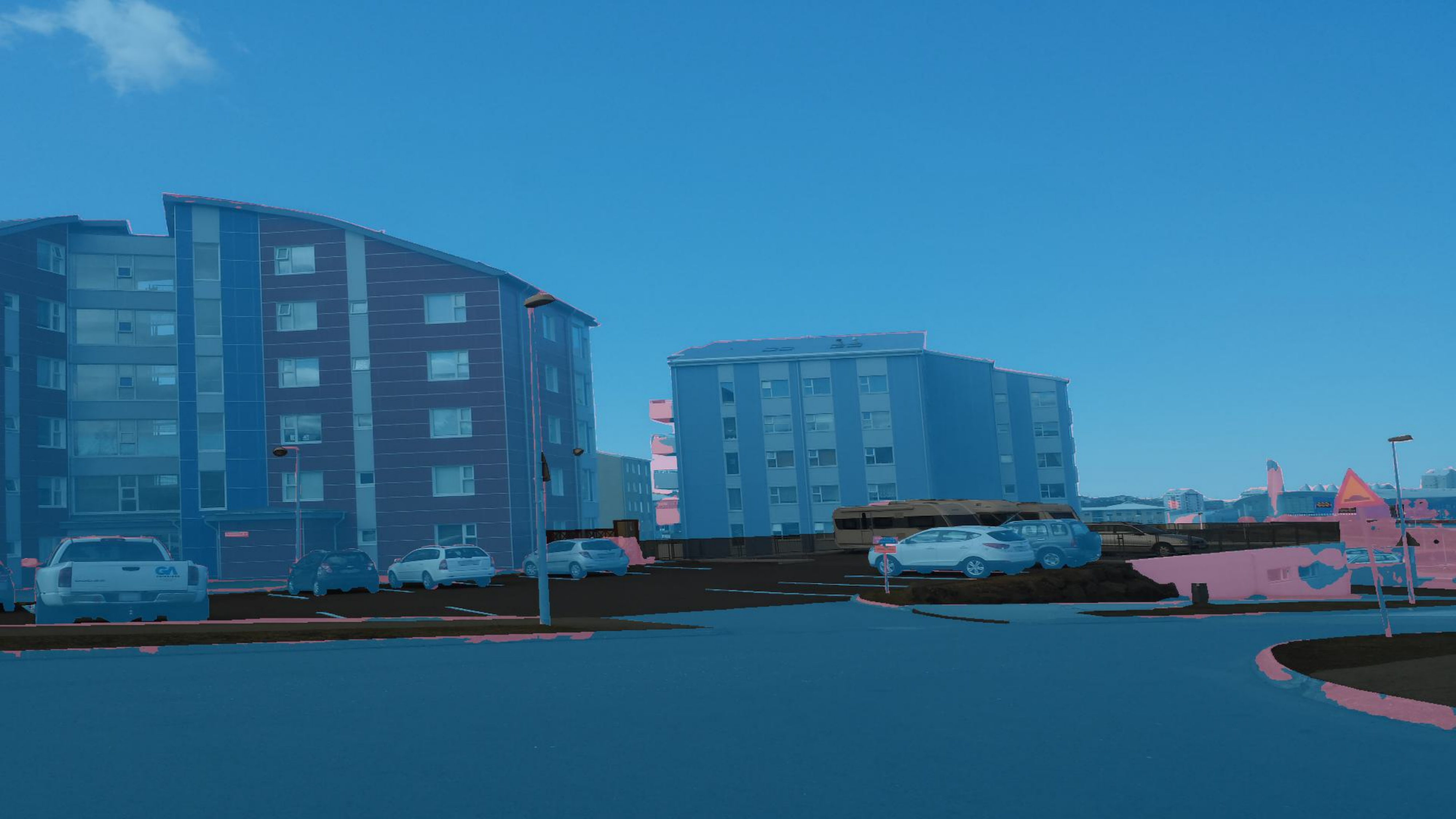}\\[1mm]
    \includegraphics[width=\linewidth]{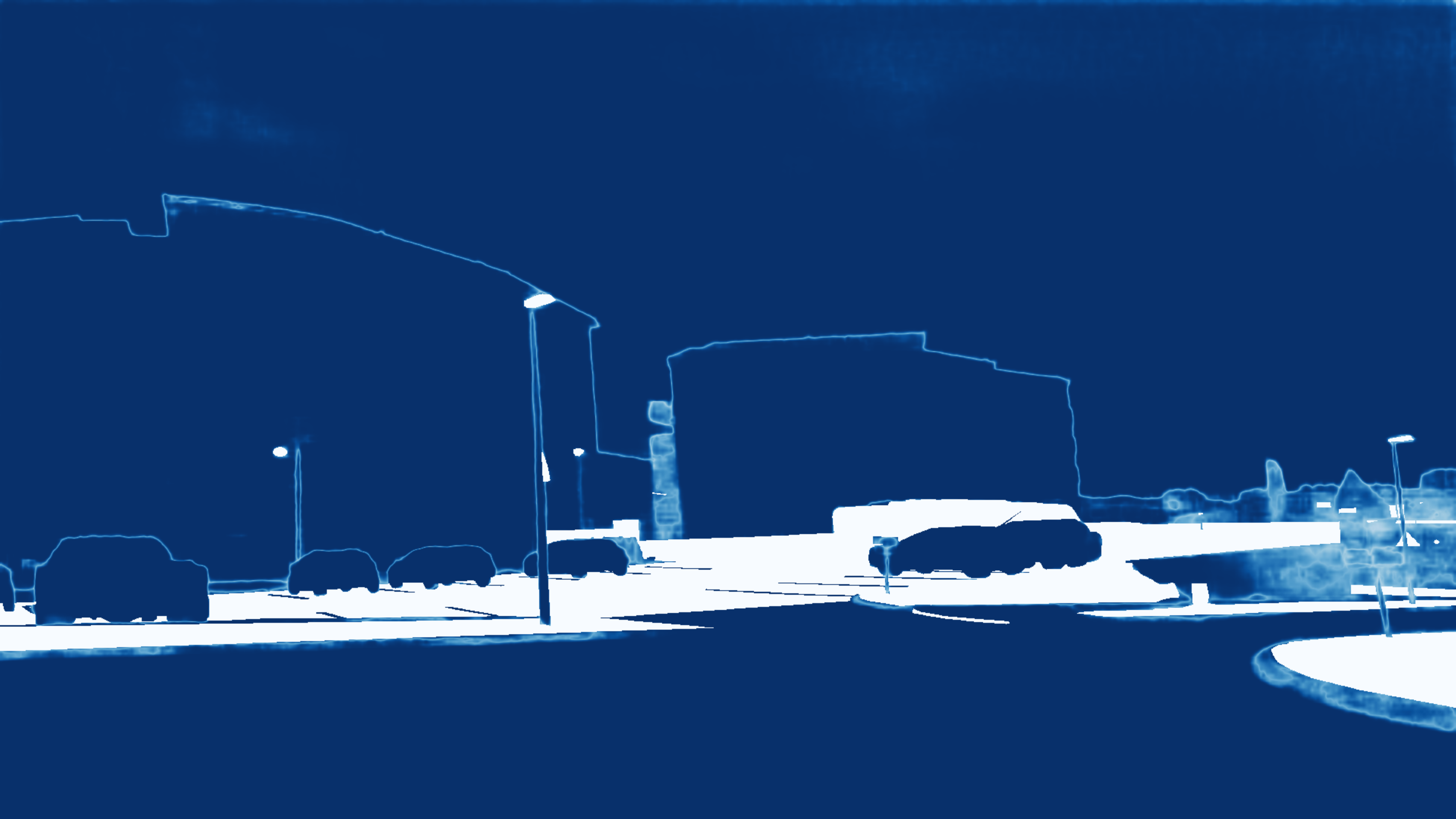}
  \label{fig:map2_1}
  \end{subfigure}
  \begin{subfigure}{0.33\linewidth}
    \centering
     \includegraphics[width=\linewidth]{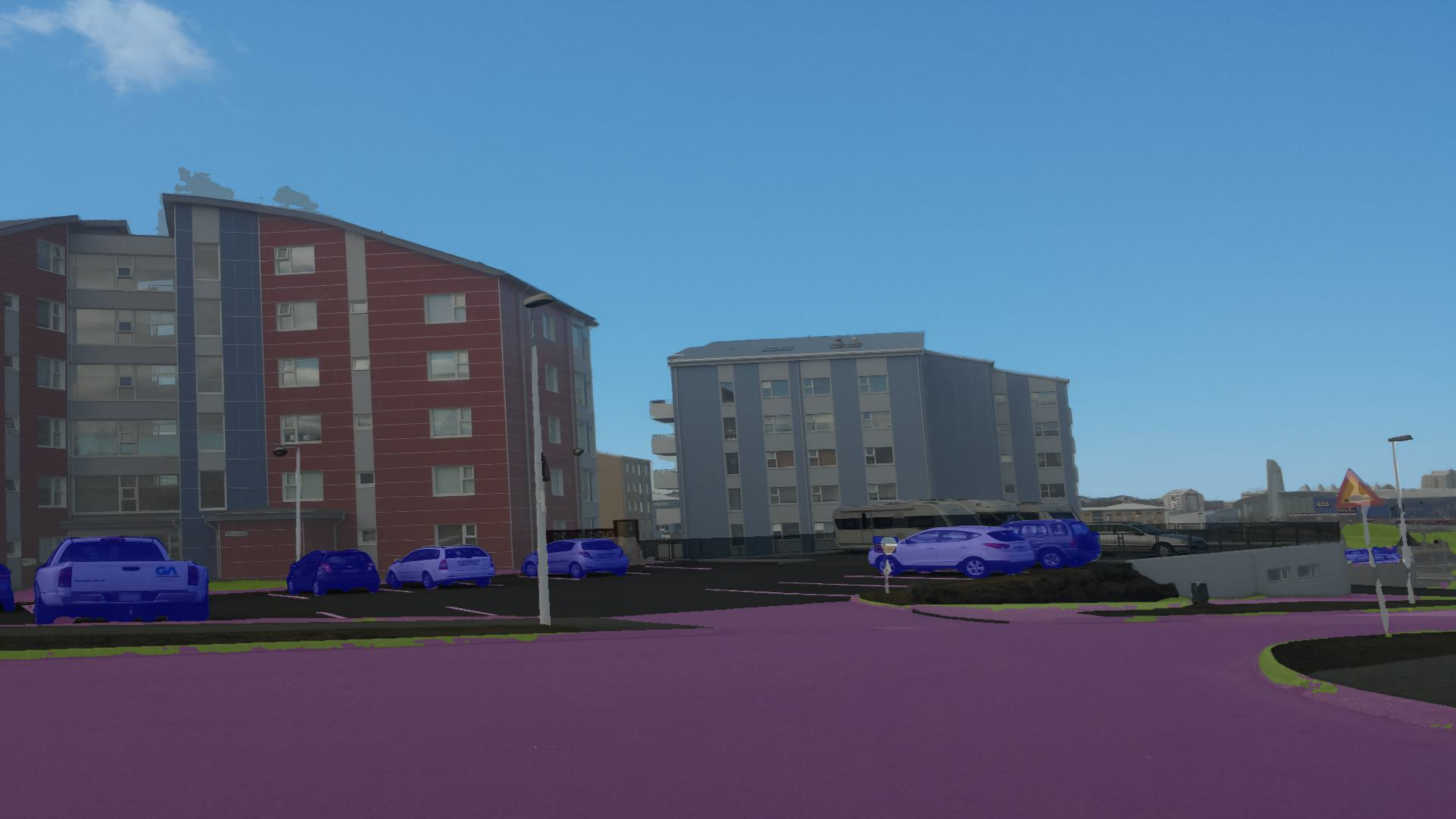}\\[1mm]
    \includegraphics[width=\linewidth]{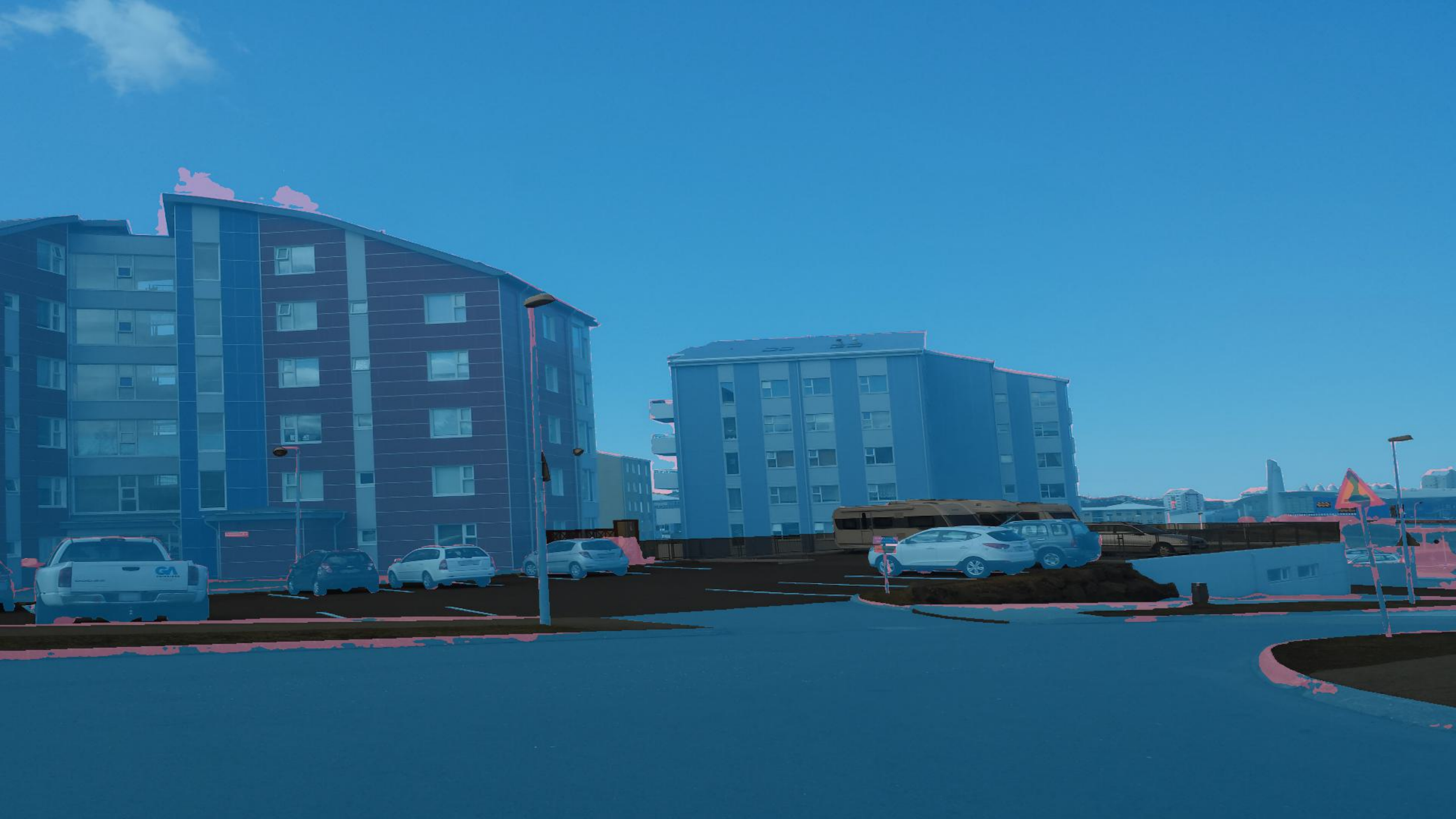}\\[1mm]
    \includegraphics[width=\linewidth]{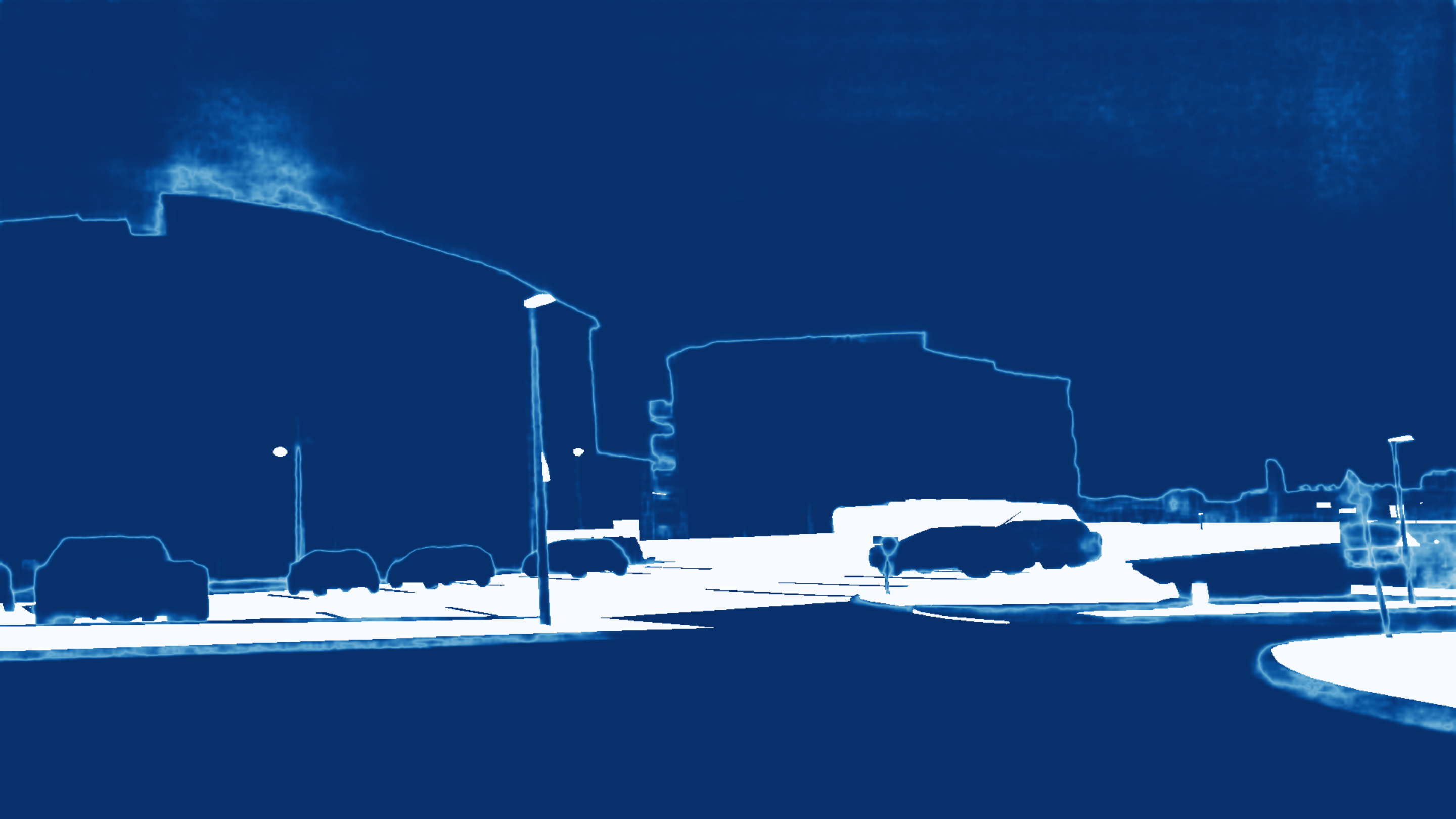}
    \label{fig:map2_2}
  \end{subfigure}
  \begin{subfigure}{0.33\linewidth}
    \centering
     \includegraphics[width=\linewidth]{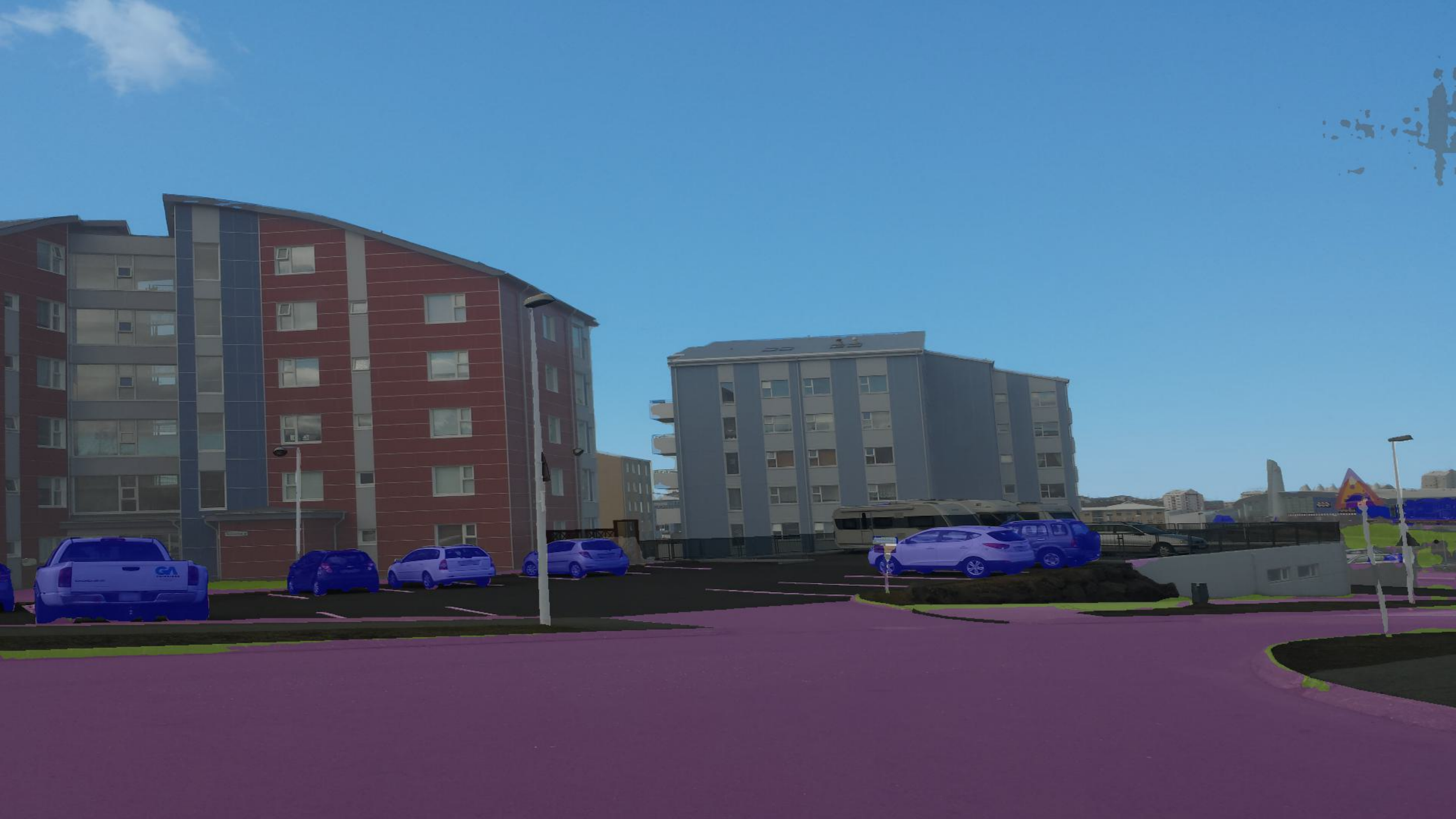}\\[1mm]
    \includegraphics[width=\linewidth]{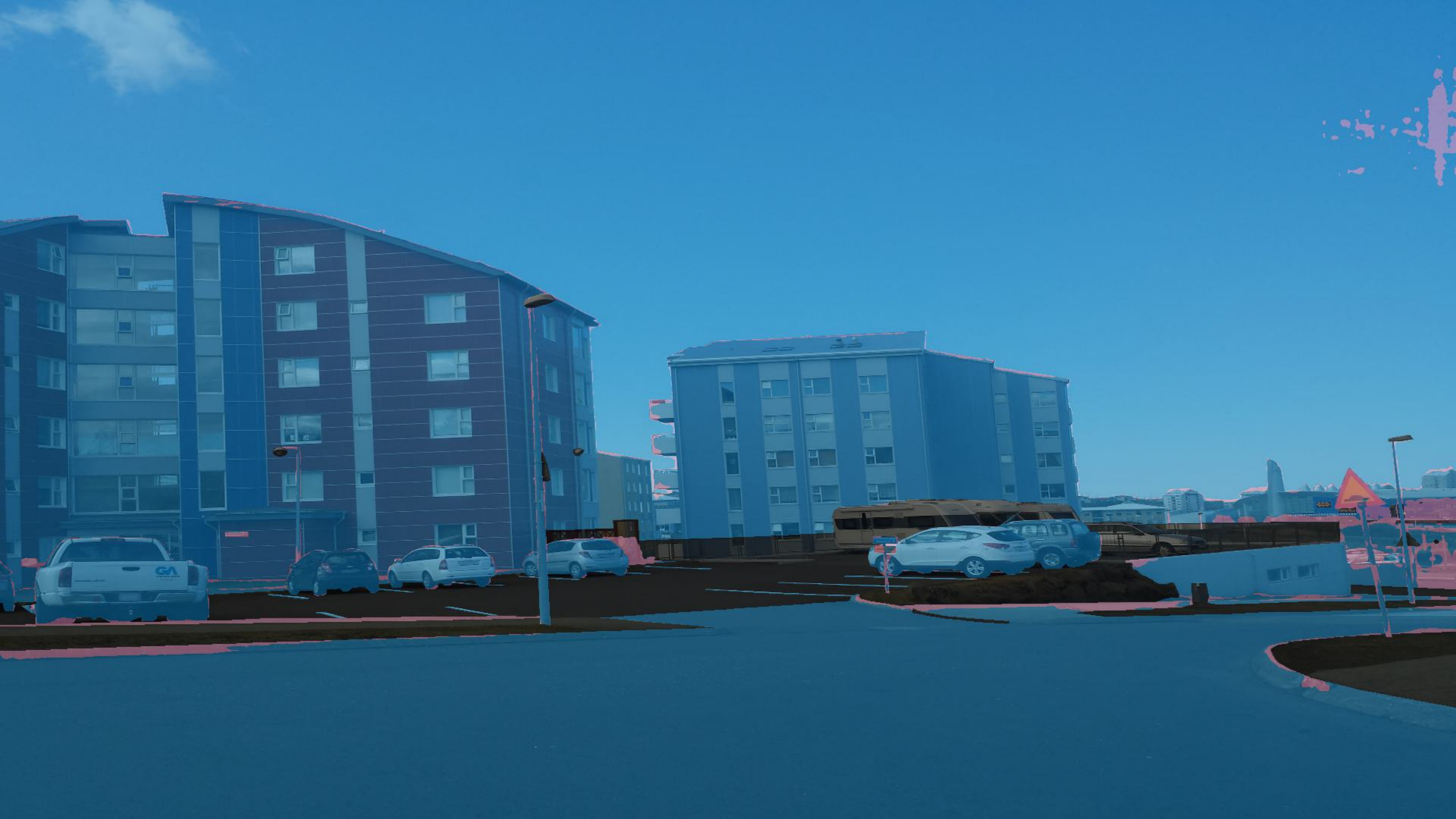}\\[1mm]
    \includegraphics[width=\linewidth]{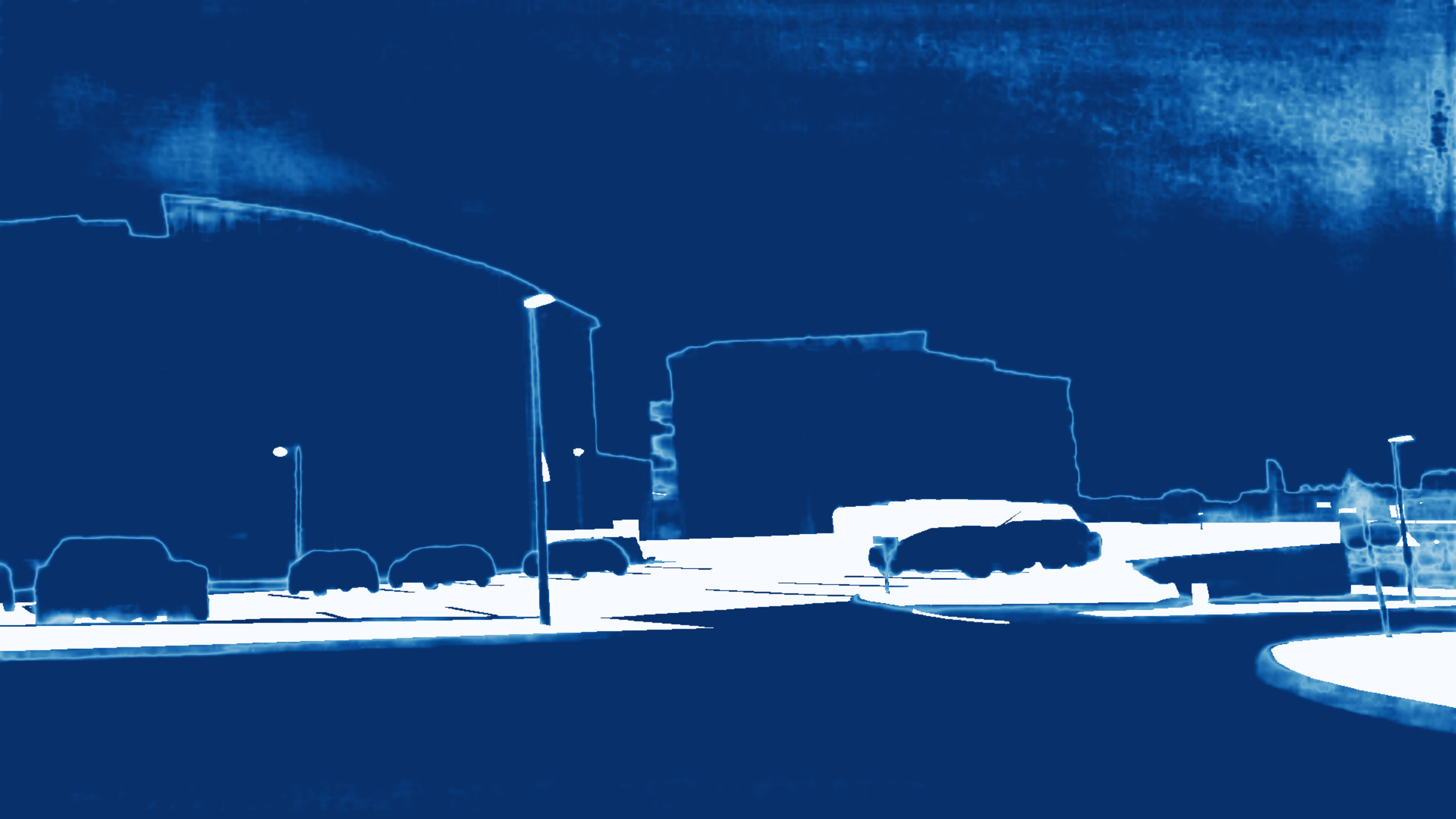}
    \label{fig:map2_3}
  \end{subfigure}
  \caption{\textbf{Two qualitative examples from Mappilary.} See \cref{sec:qualitative} for analysis.}
  \label{fig:qaulimap2}
  \vspace{-0.7em}
\end{figure*}

\begin{figure*}[t]
\centering
  \begin{subfigure}{0.33\linewidth}
    \centering
    HSSN\\\vspace{0.3em}
    \includegraphics[width=\linewidth]{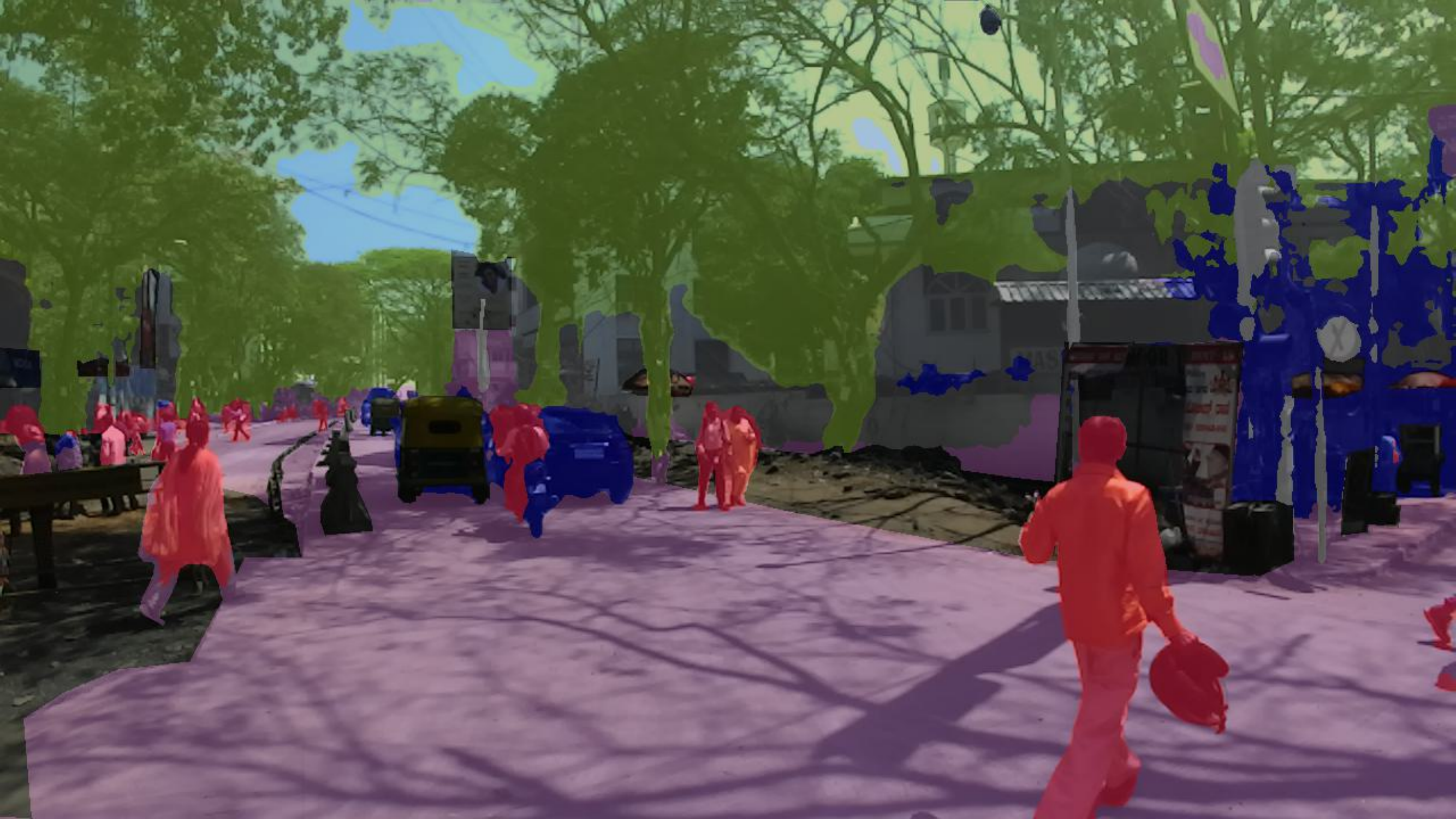}\\[1mm]
    \includegraphics[width=\linewidth]{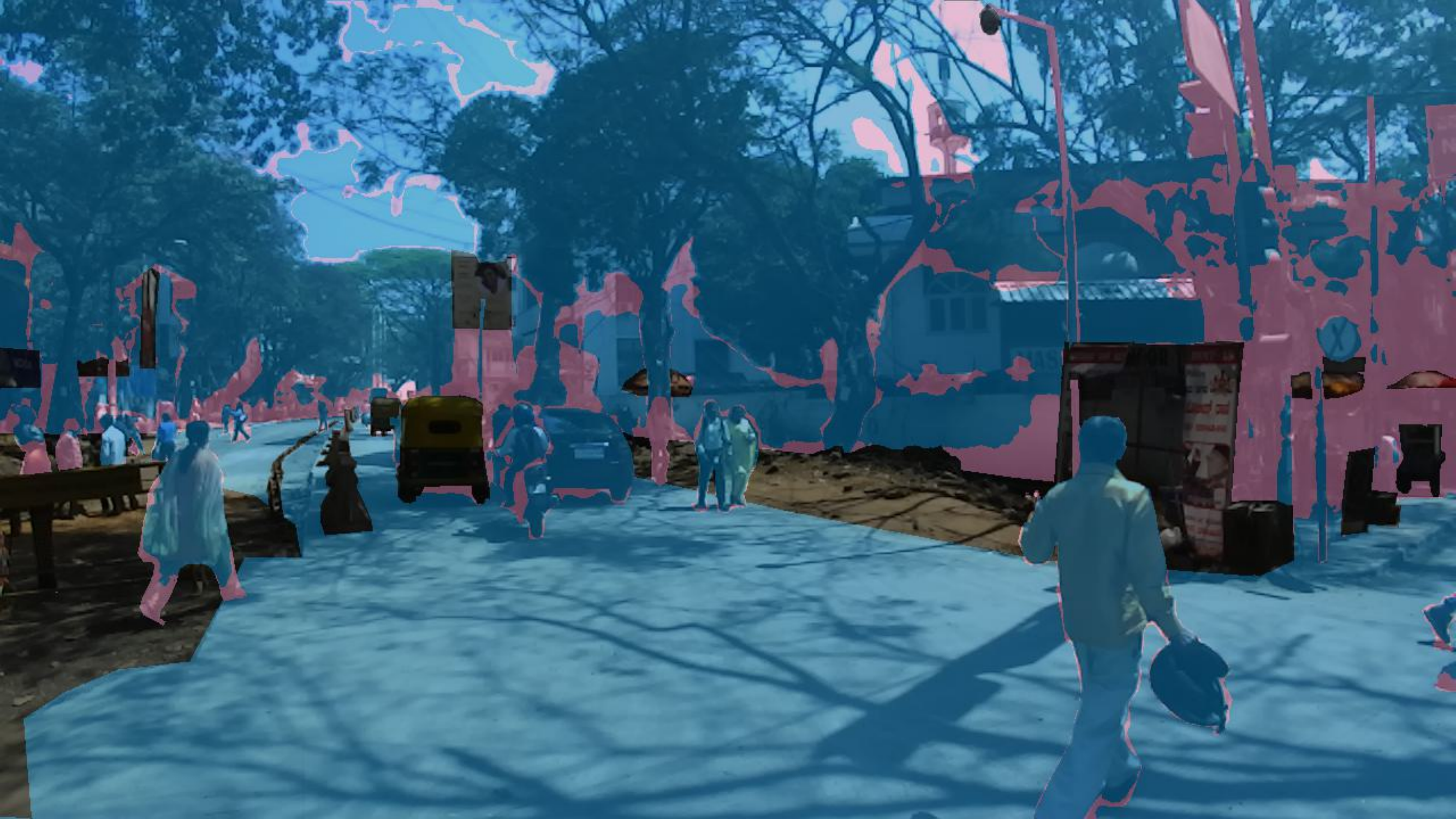}\\[1mm]
    \includegraphics[width=\linewidth]{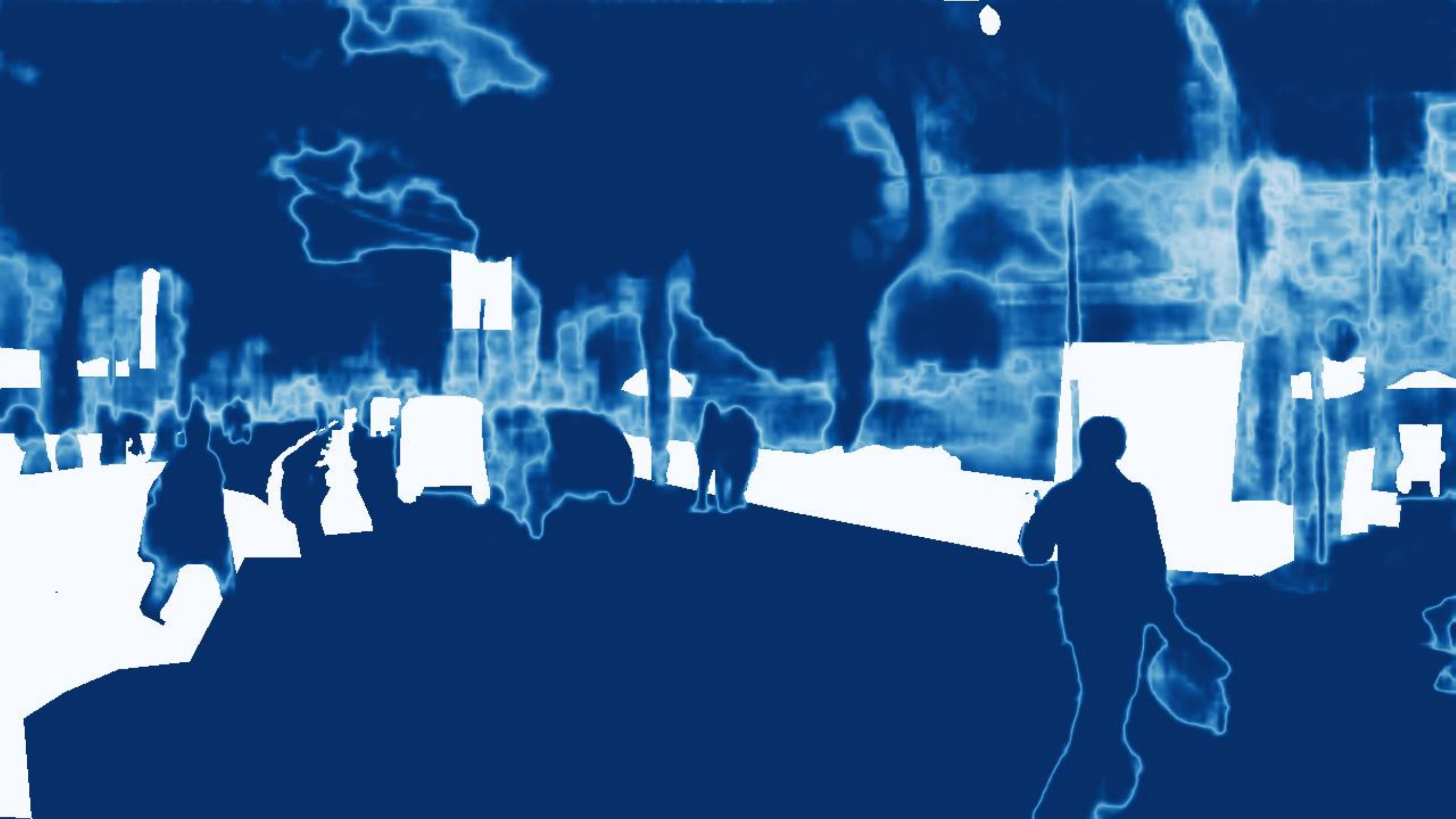}
  \label{fig:idd1_1}
  \end{subfigure}
  \begin{subfigure}{0.33\linewidth}
    \centering
    \eucname\\\vspace{0.3em}
    \includegraphics[width=\linewidth]{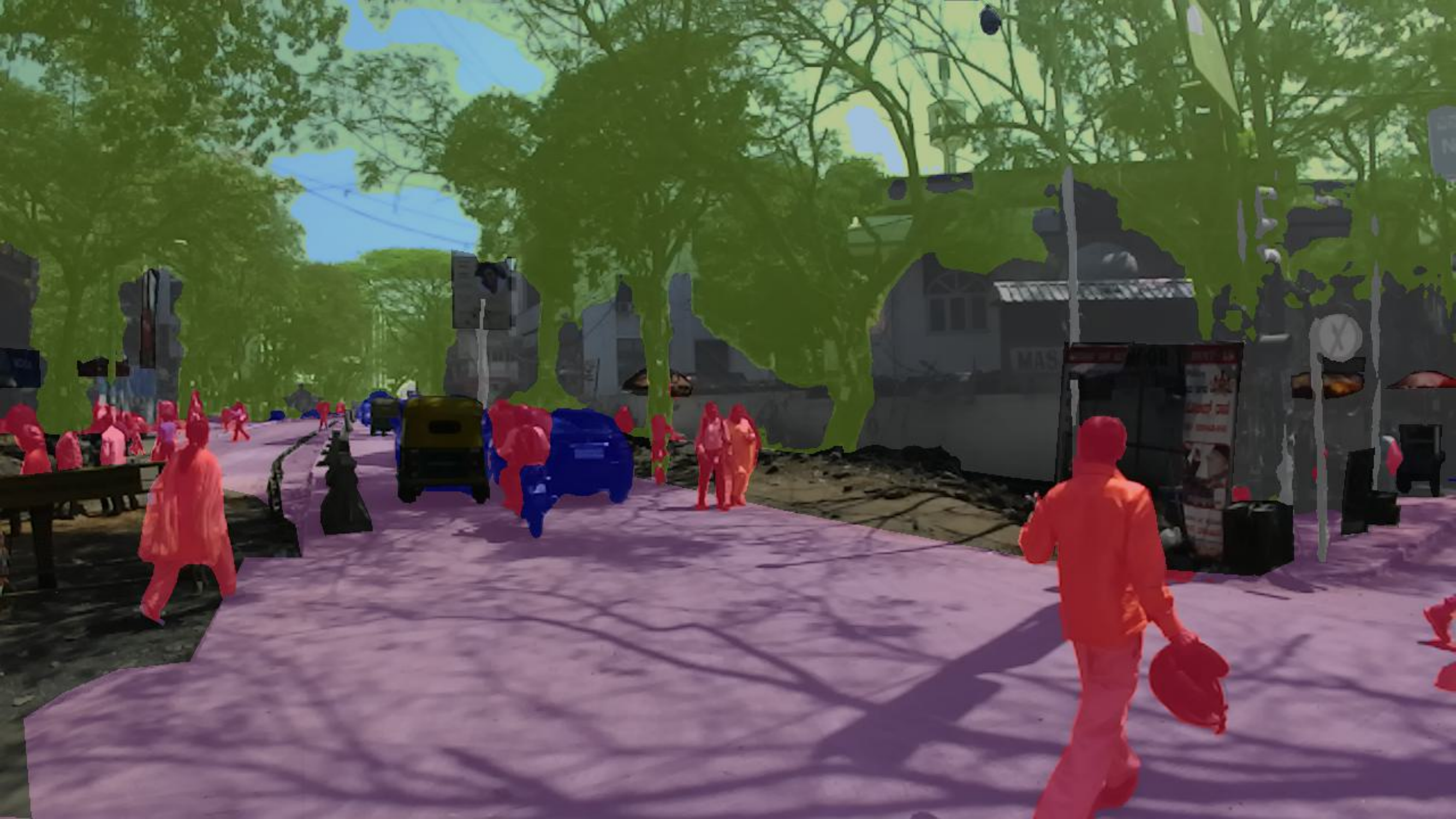}\\[1mm]
    \includegraphics[width=\linewidth]{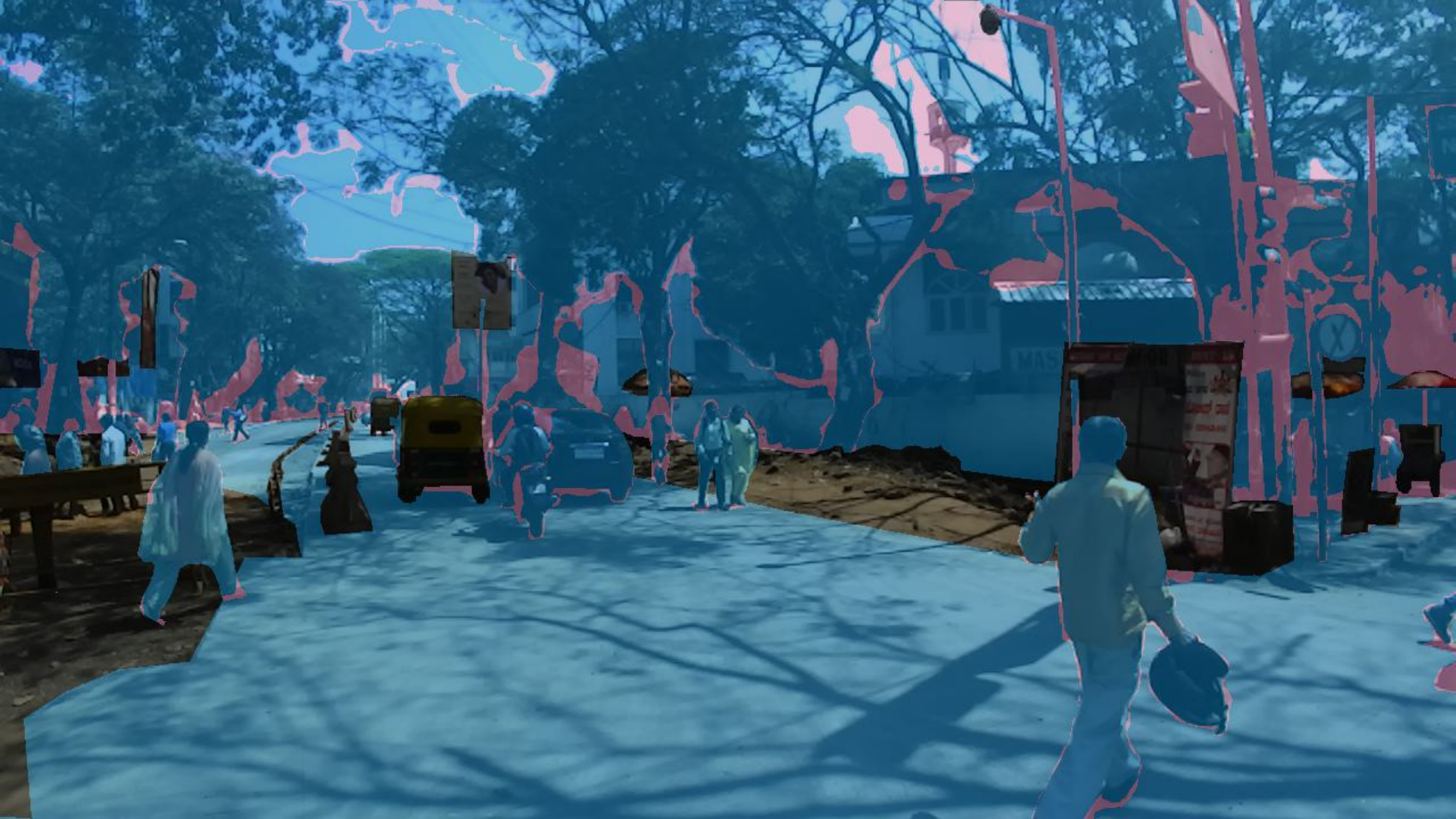}\\[1mm]
    \includegraphics[width=\linewidth]{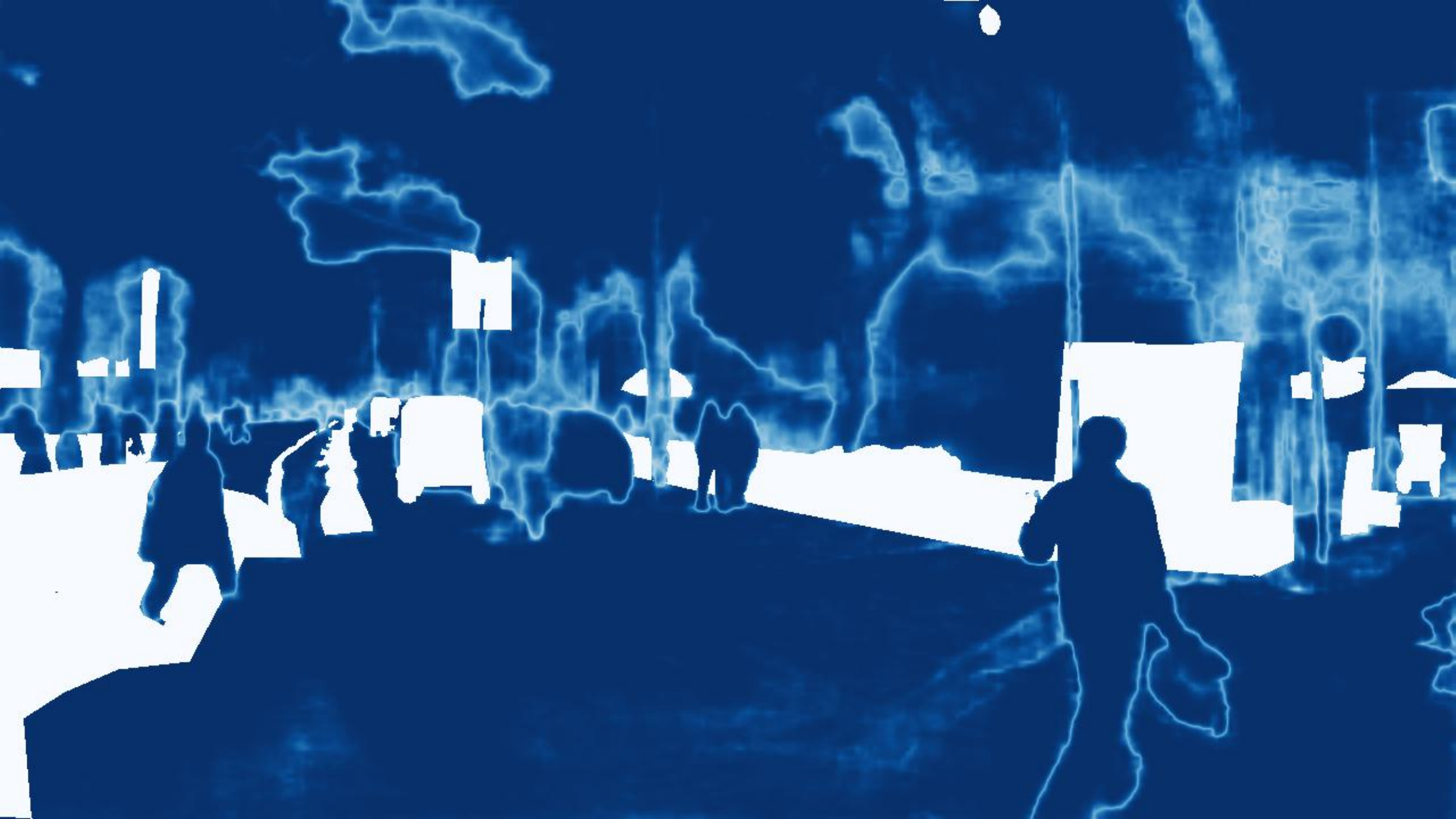}
    \label{fig:idd1_2}
  \end{subfigure}
  \begin{subfigure}{0.33\linewidth}
    \centering
    \hypname\\\vspace{0.2em}
    \includegraphics[width=\linewidth]{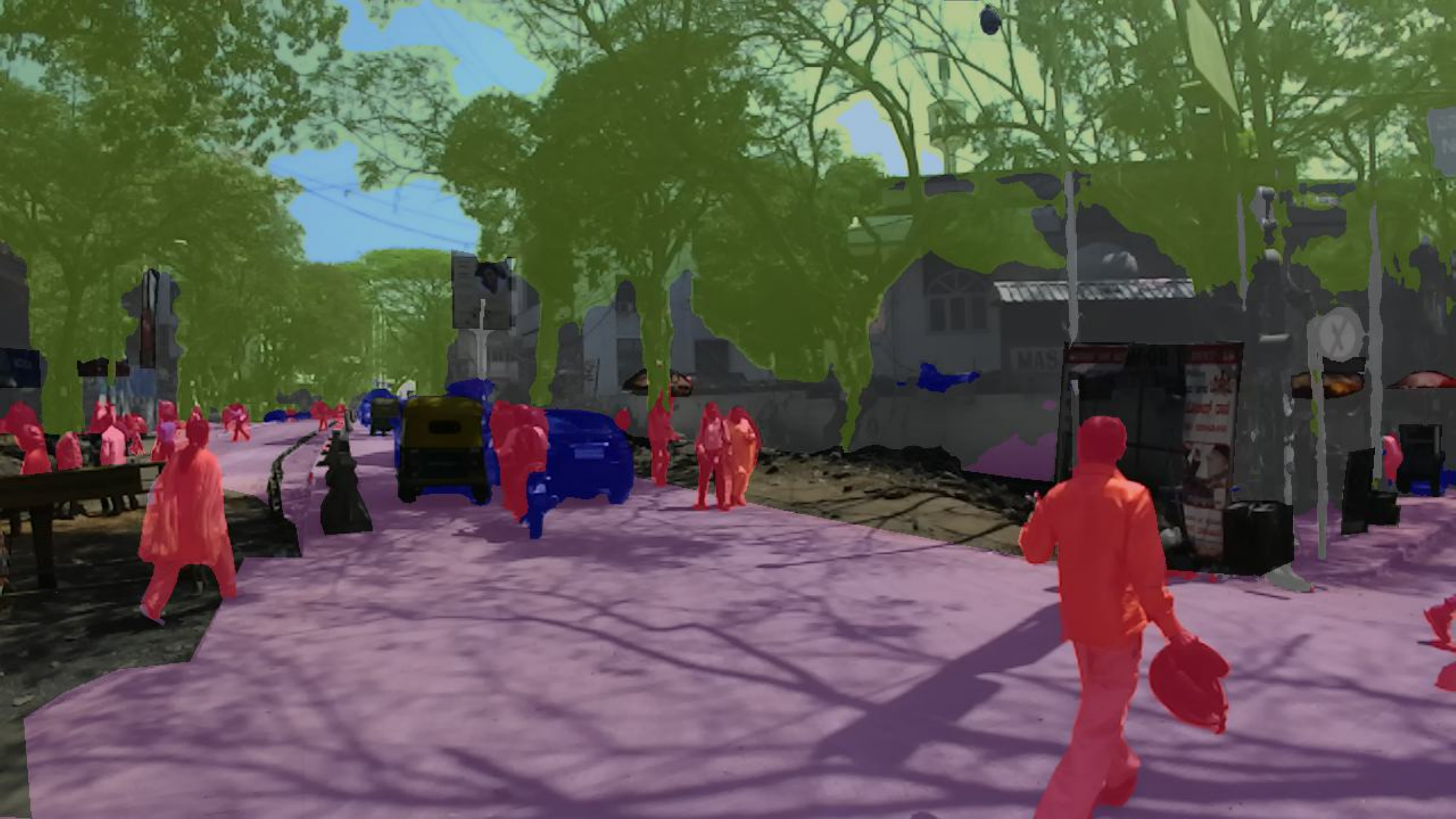}\\[1mm]
    \includegraphics[width=\linewidth]{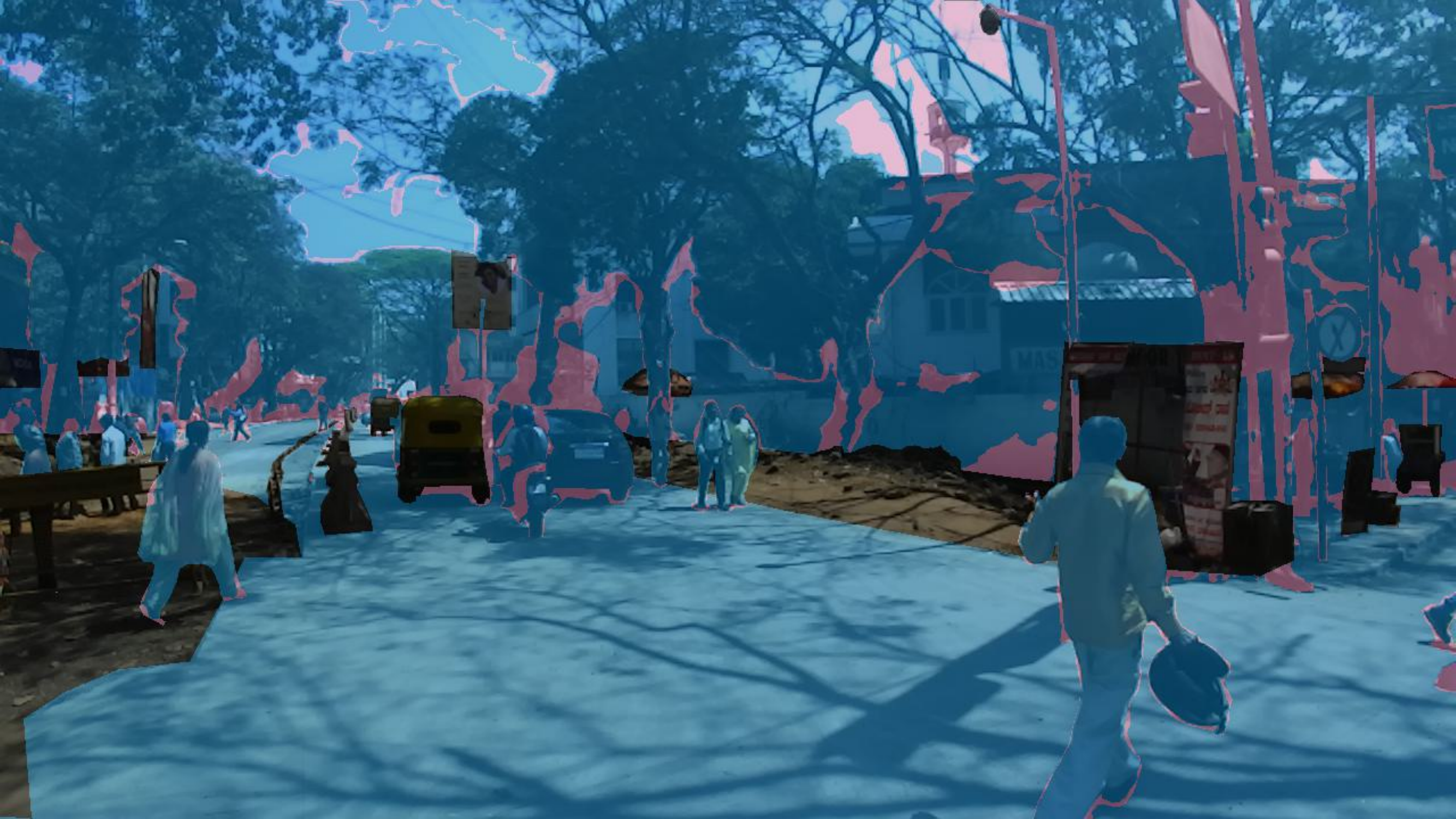}\\[1mm]
    \includegraphics[width=\linewidth]{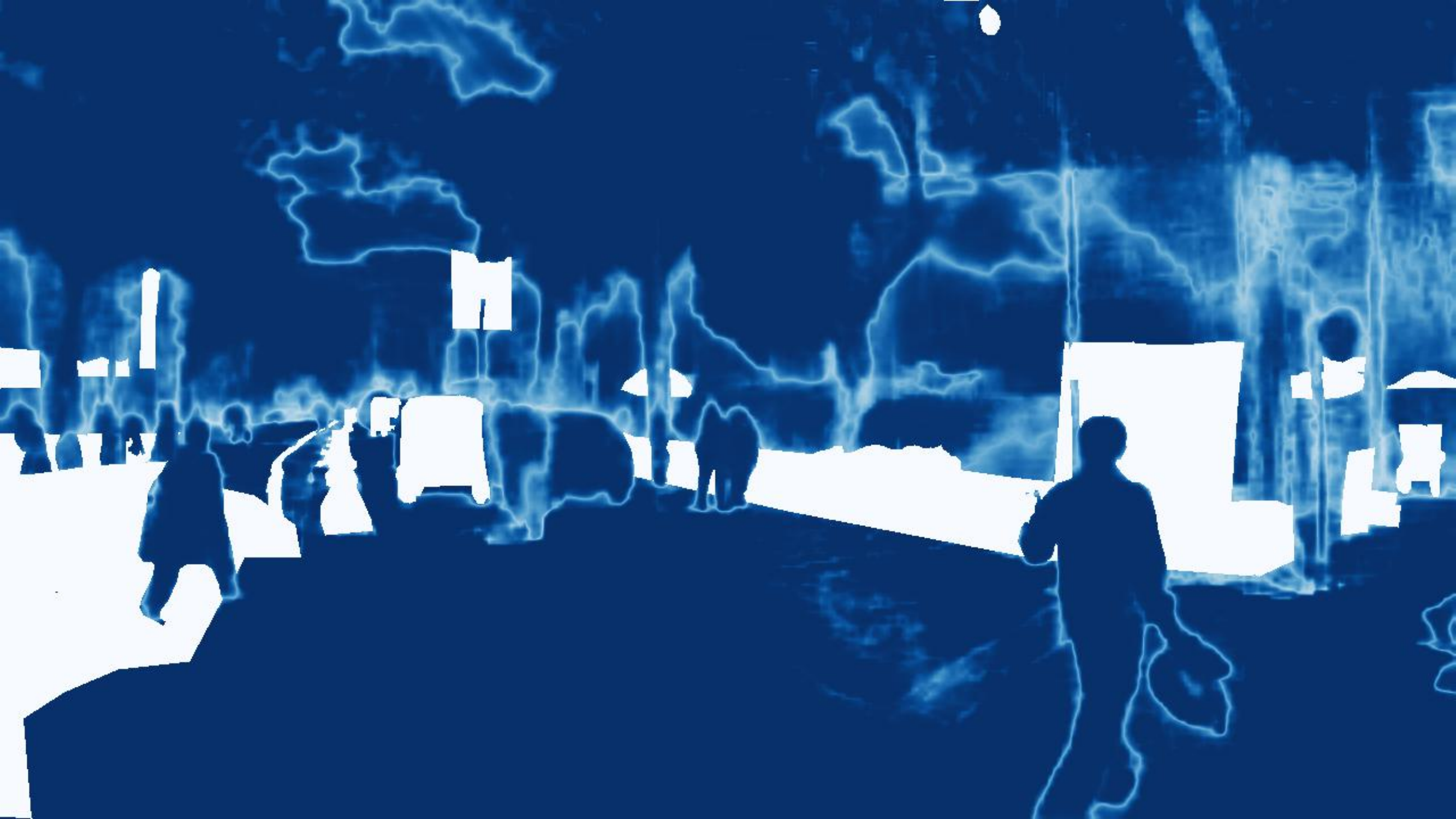}
    \label{fig:idd1_3}
  \end{subfigure}
    \begin{subfigure}{\linewidth}
        \centering
        \fboxsep 2pt
        \begin{minipage}{0.7\linewidth}
            \colorbox{flat}{\strut \color{white}{flat}}
            \colorbox{construction}{\strut \color{white}{construction}}
            \colorbox{object}{\strut  object}
            \colorbox{nature}{\strut \color{white}{nature}}
            \colorbox{sky}{\strut \color{white}{sky}}
            \colorbox{human}{\strut human}
            \colorbox{vehicle}{\strut \color{white}{vehicle}}\hspace{2em}
            \colorbox{ignore}{\strut \color{white}{ignore}}\hspace{2em}
            \colorbox{true}{\strut \color{white}{true}}
            \colorbox{false}{\strut false}\hspace{2em}%
        \end{minipage}%
        \begin{minipage}{0.3\linewidth}
        \begin{tikzpicture}
        \node [rectangle, left color=left!10!white, right color=left, anchor=north, minimum width=\linewidth, minimum height=0.5cm] (box) at (current page.north){0 \hspace{12em}  \color{white}{1}};
        \end{tikzpicture}
        \end{minipage}
    \end{subfigure}\\
    \vspace{1em}
  \begin{subfigure}{0.33\linewidth}
    \centering
    \includegraphics[width=\linewidth]{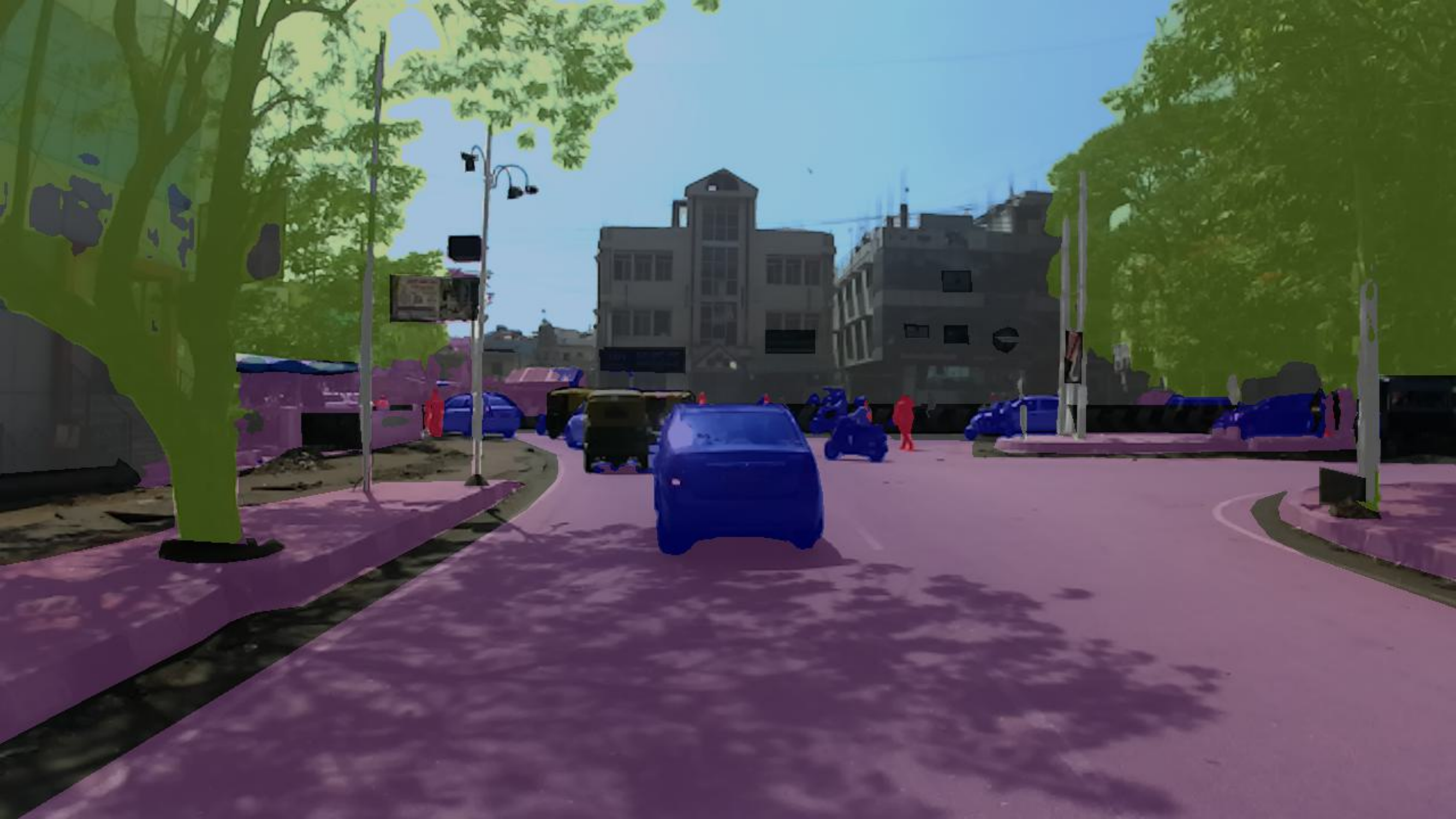}\\[1mm]
    \includegraphics[width=\linewidth]{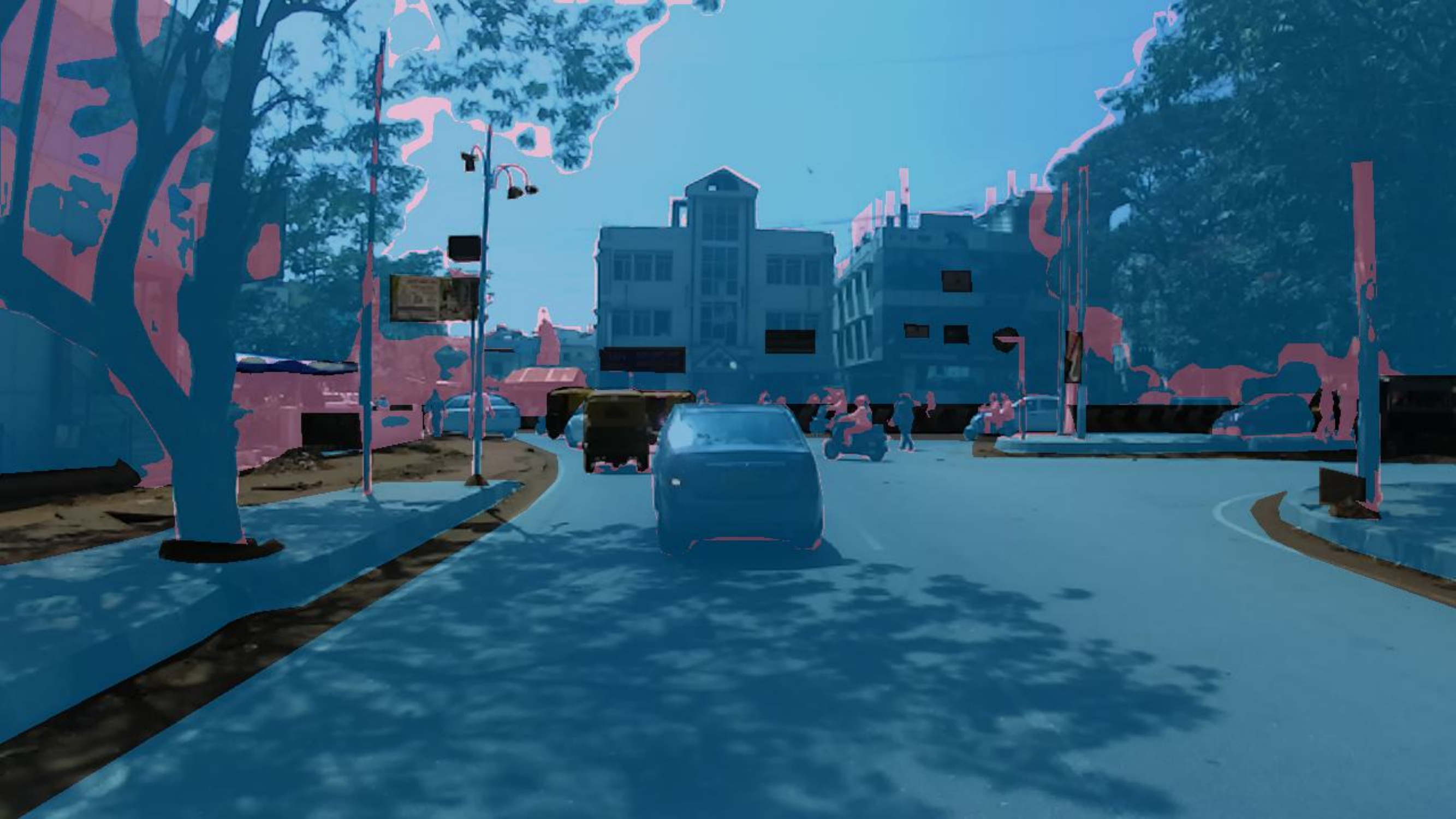}\\[1mm]
    \includegraphics[width=\linewidth]{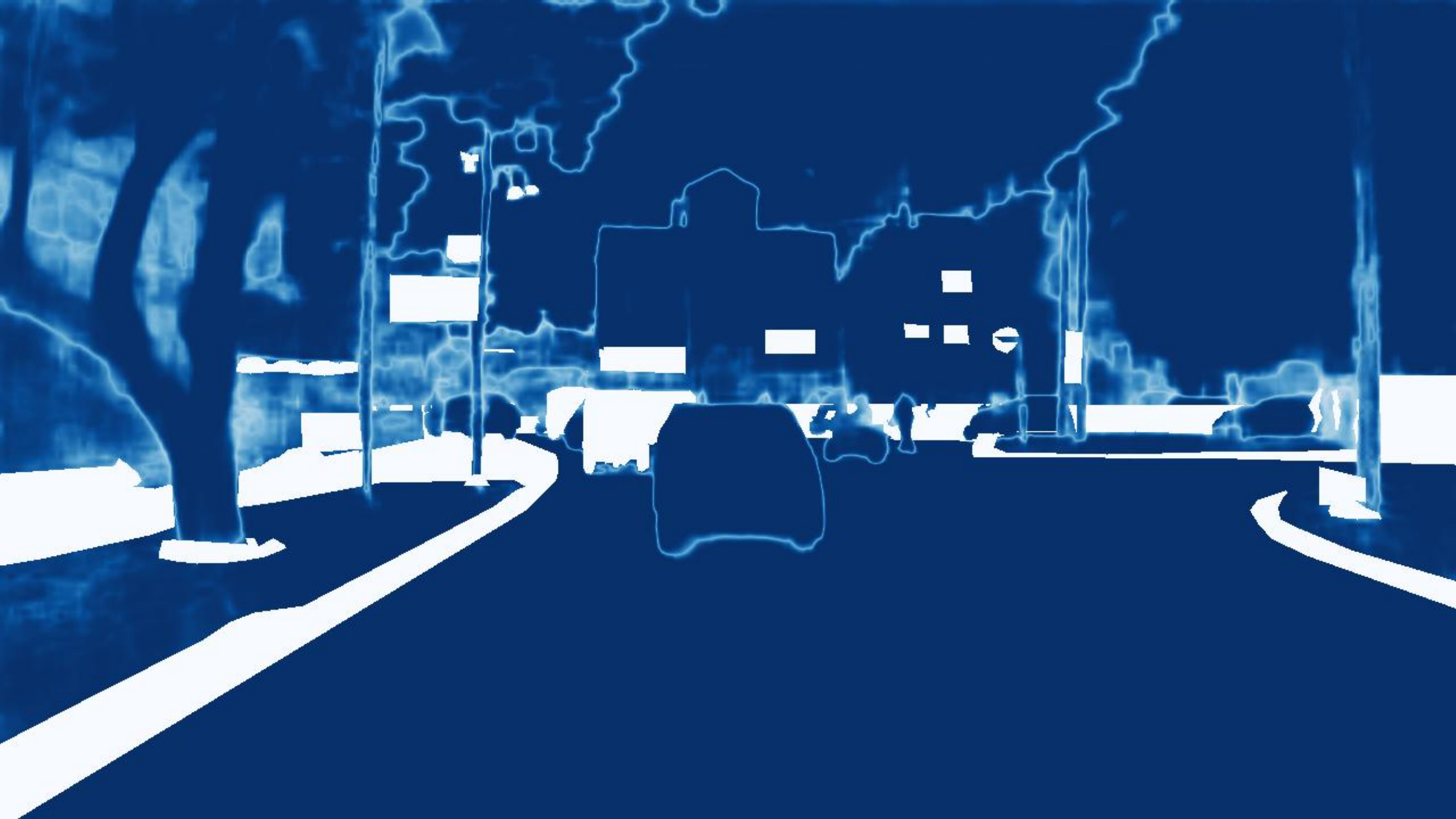}
  \label{fig:idd2_1}
  \end{subfigure}
  \begin{subfigure}{0.33\linewidth}
    \centering
    \includegraphics[width=\linewidth]{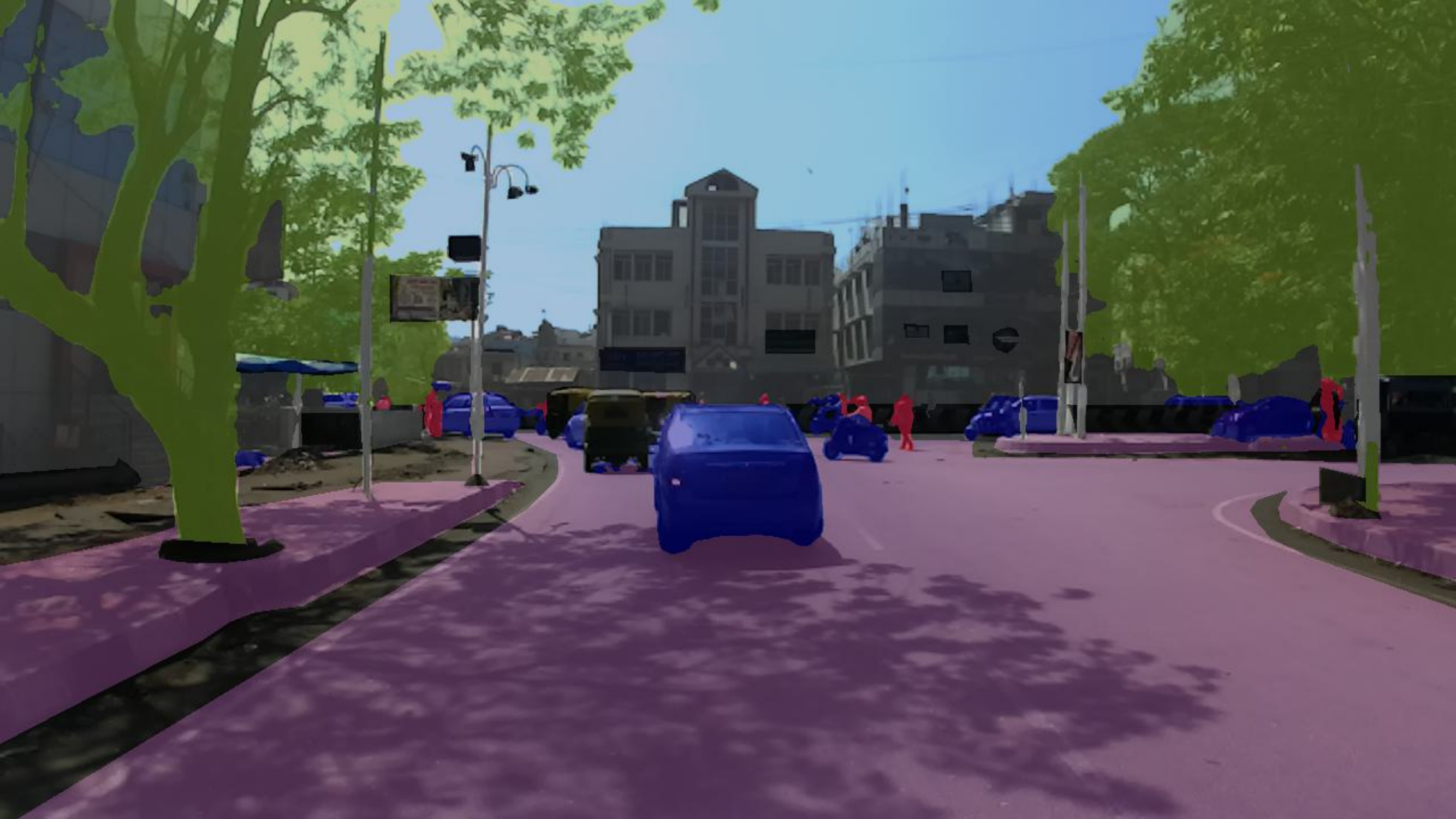}\\[1mm]
    \includegraphics[width=\linewidth]{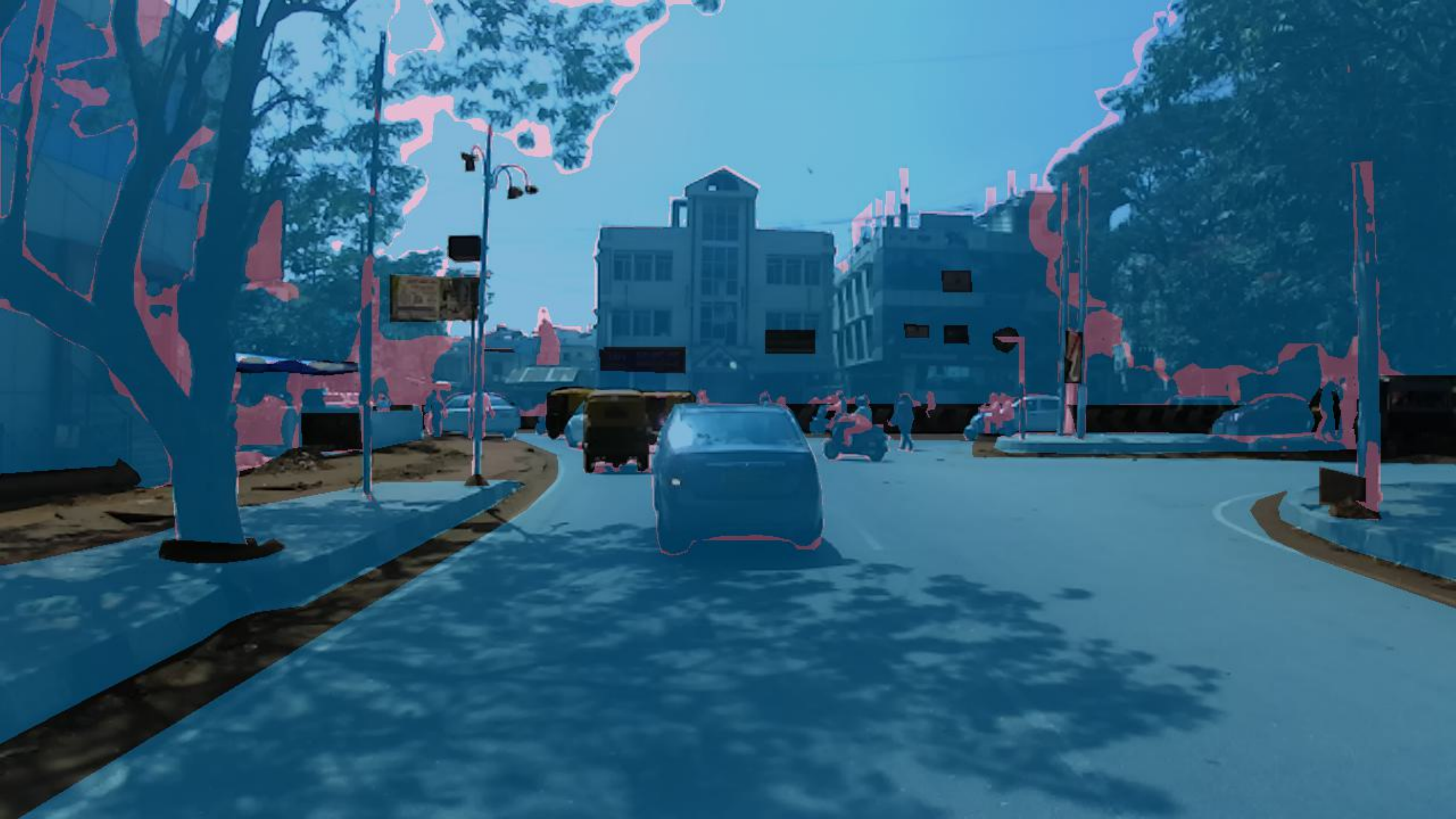}\\[1mm]
    \includegraphics[width=\linewidth]{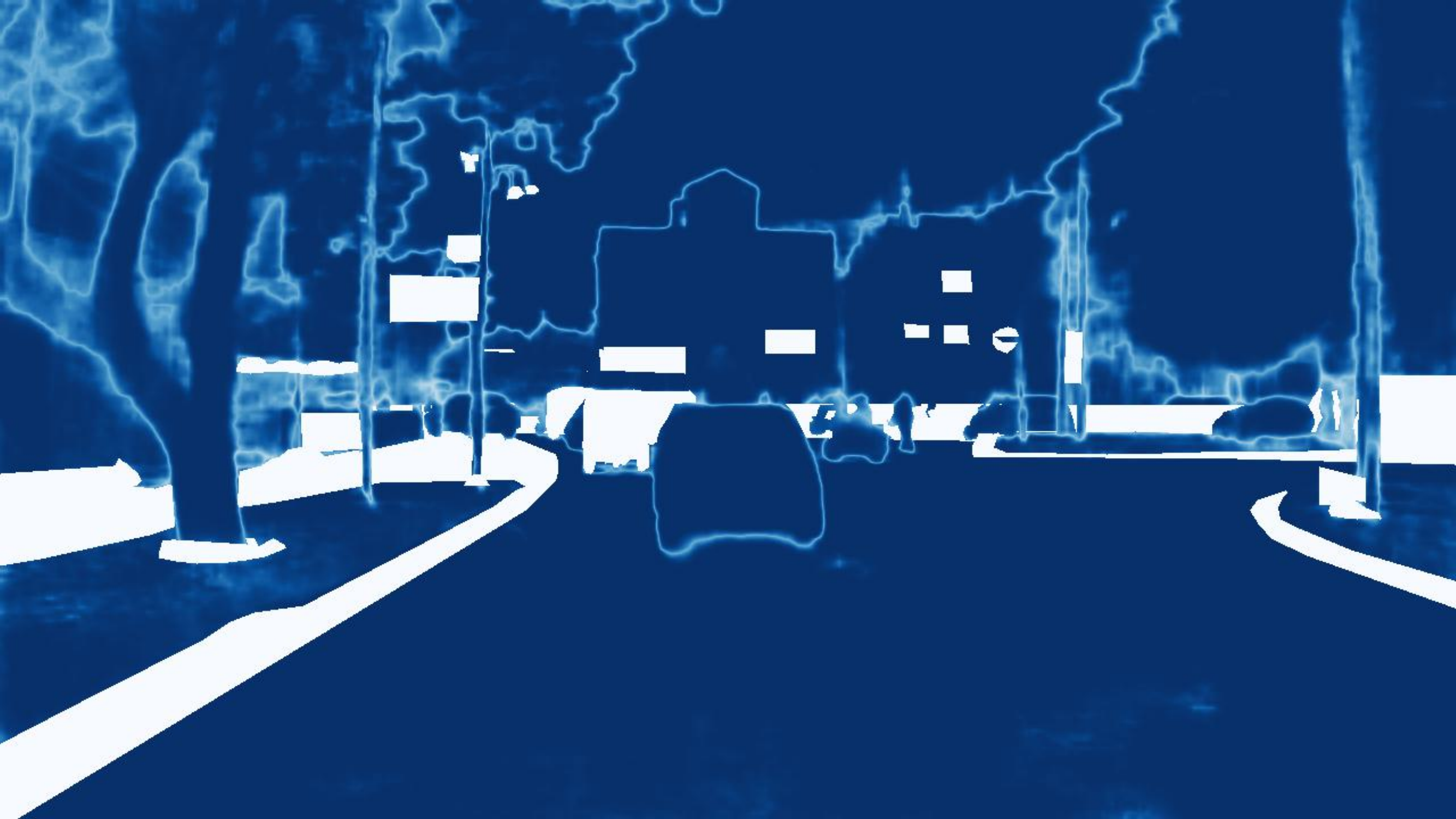}
    \label{fig:idd2_2}
  \end{subfigure}
  \begin{subfigure}{0.33\linewidth}
    \centering
    \includegraphics[width=\linewidth]{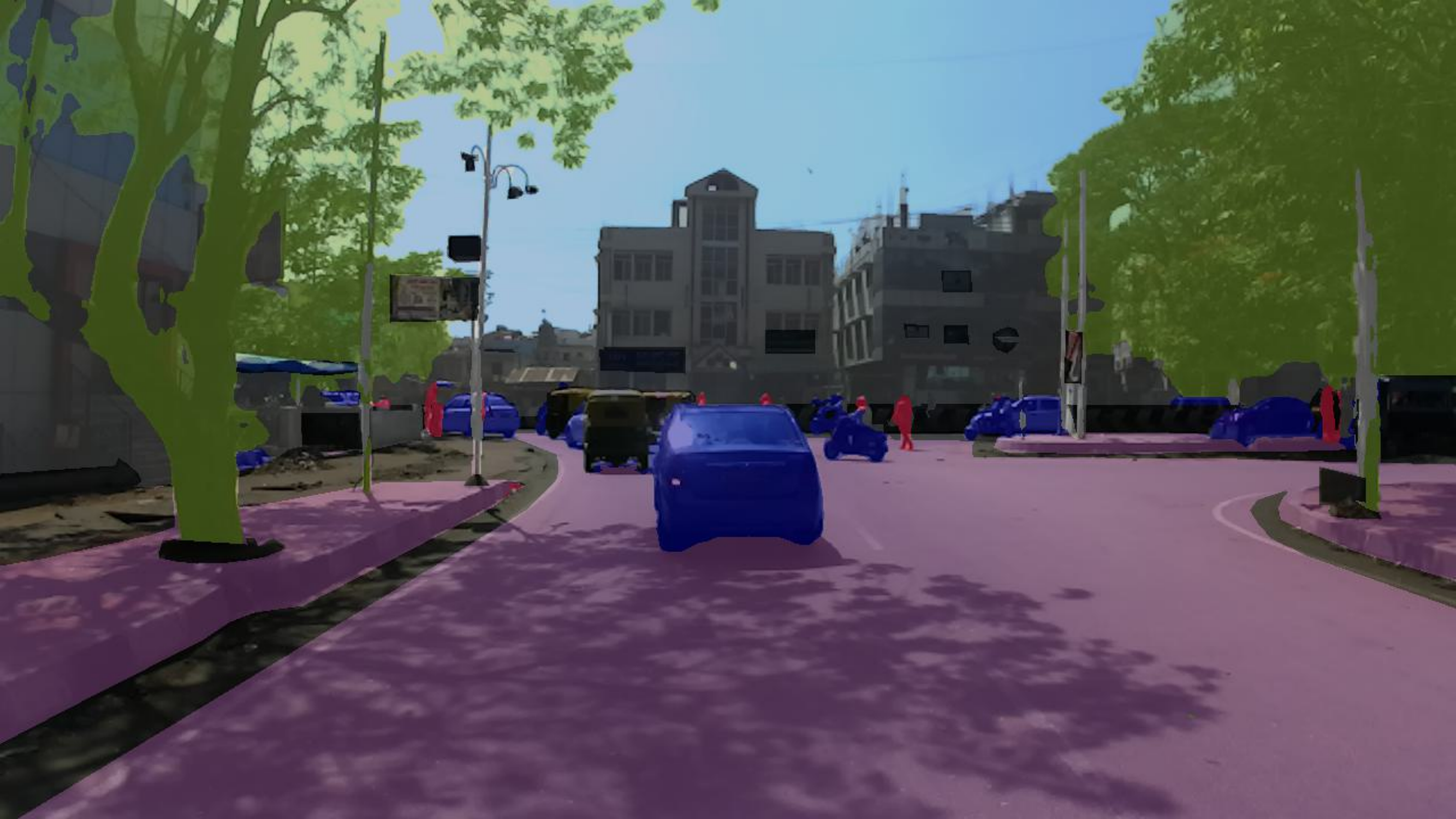}\\[1mm]
    \includegraphics[width=\linewidth]{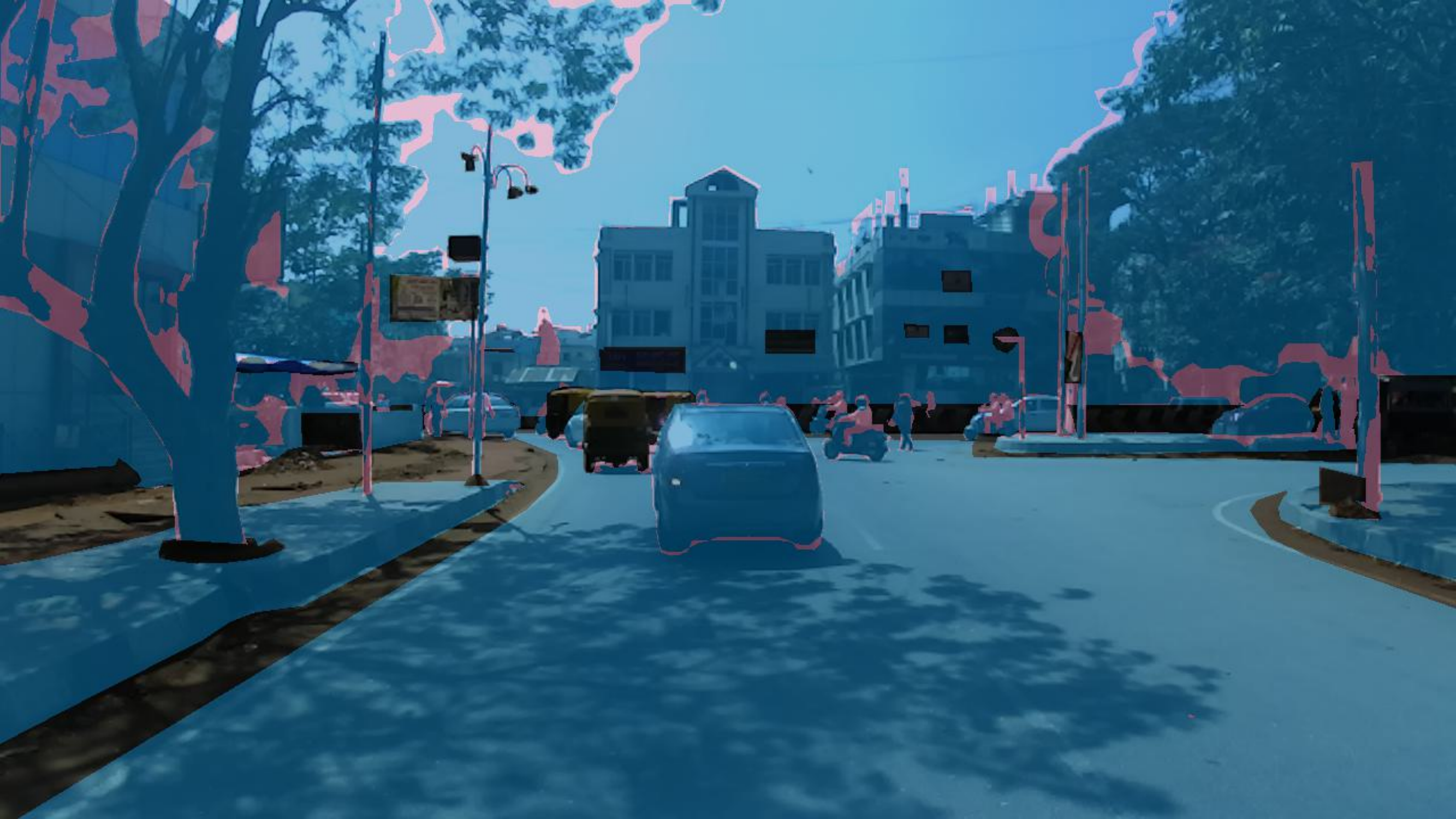}\\[1mm]
    \includegraphics[width=\linewidth]{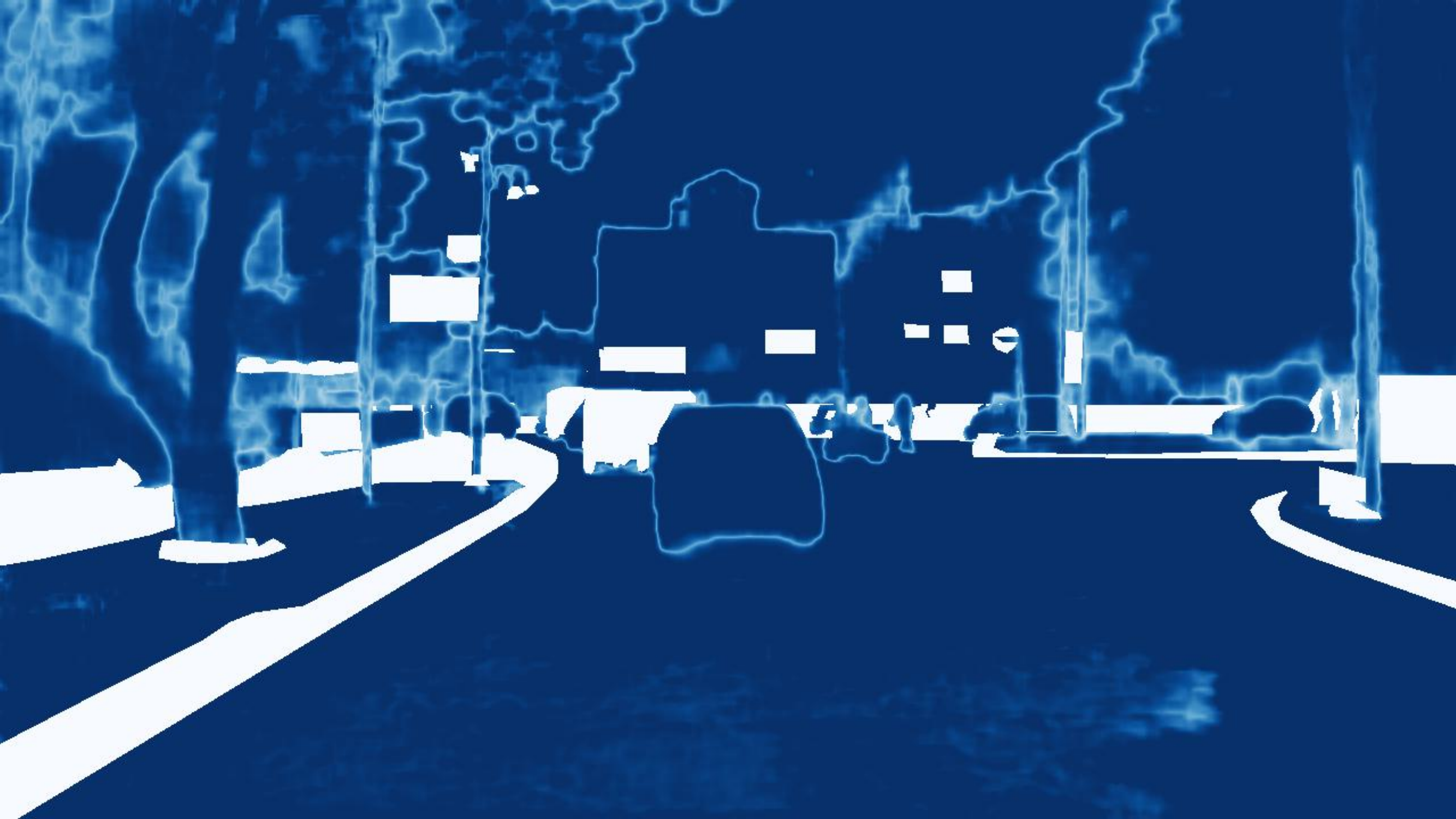}
    \label{fig:idd2_3}
  \end{subfigure}
  \caption{\textbf{Two qualitative examples from IDD.} See \cref{sec:qualitative} for analysis.}
  \label{fig:qaulidd2}
  \vspace{-0.7em}
\end{figure*}

\begin{figure*}[t]
\centering
  \begin{subfigure}{0.33\linewidth}
    \centering
    HSSN\\\vspace{0.3em}
    \includegraphics[width=\linewidth]{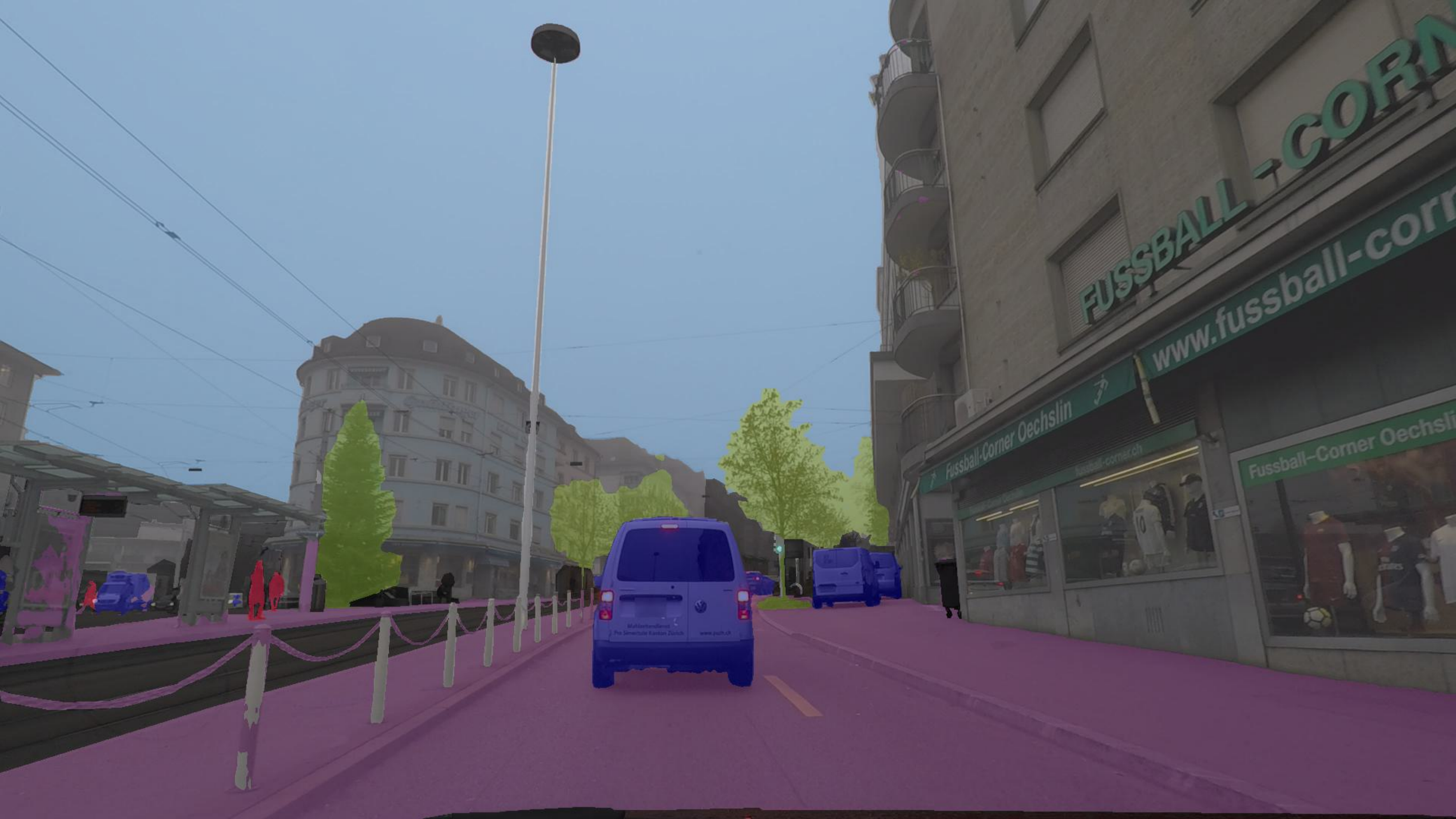}\\[1mm]
    \includegraphics[width=\linewidth]{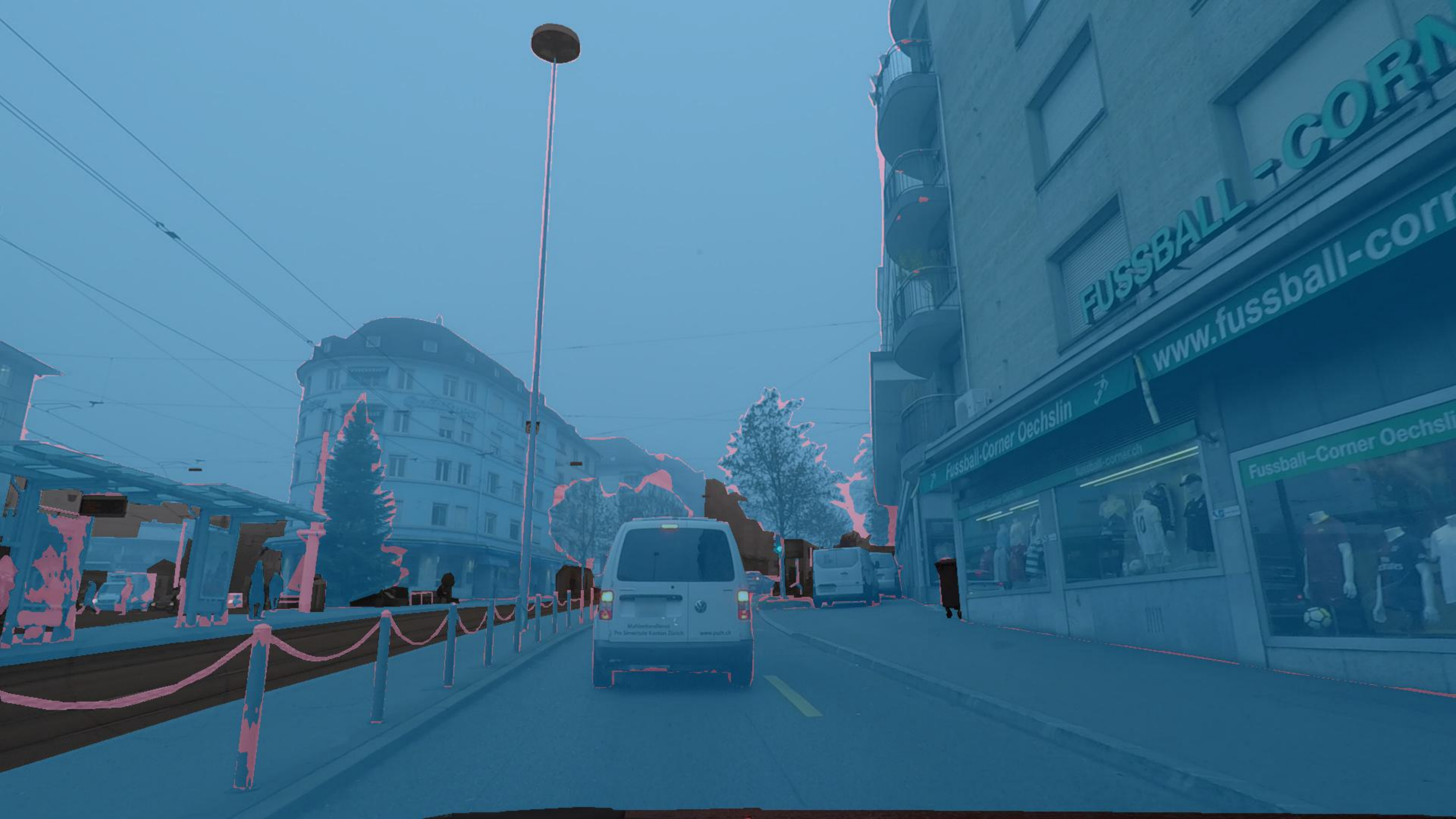}\\[1mm]
    \includegraphics[width=\linewidth]{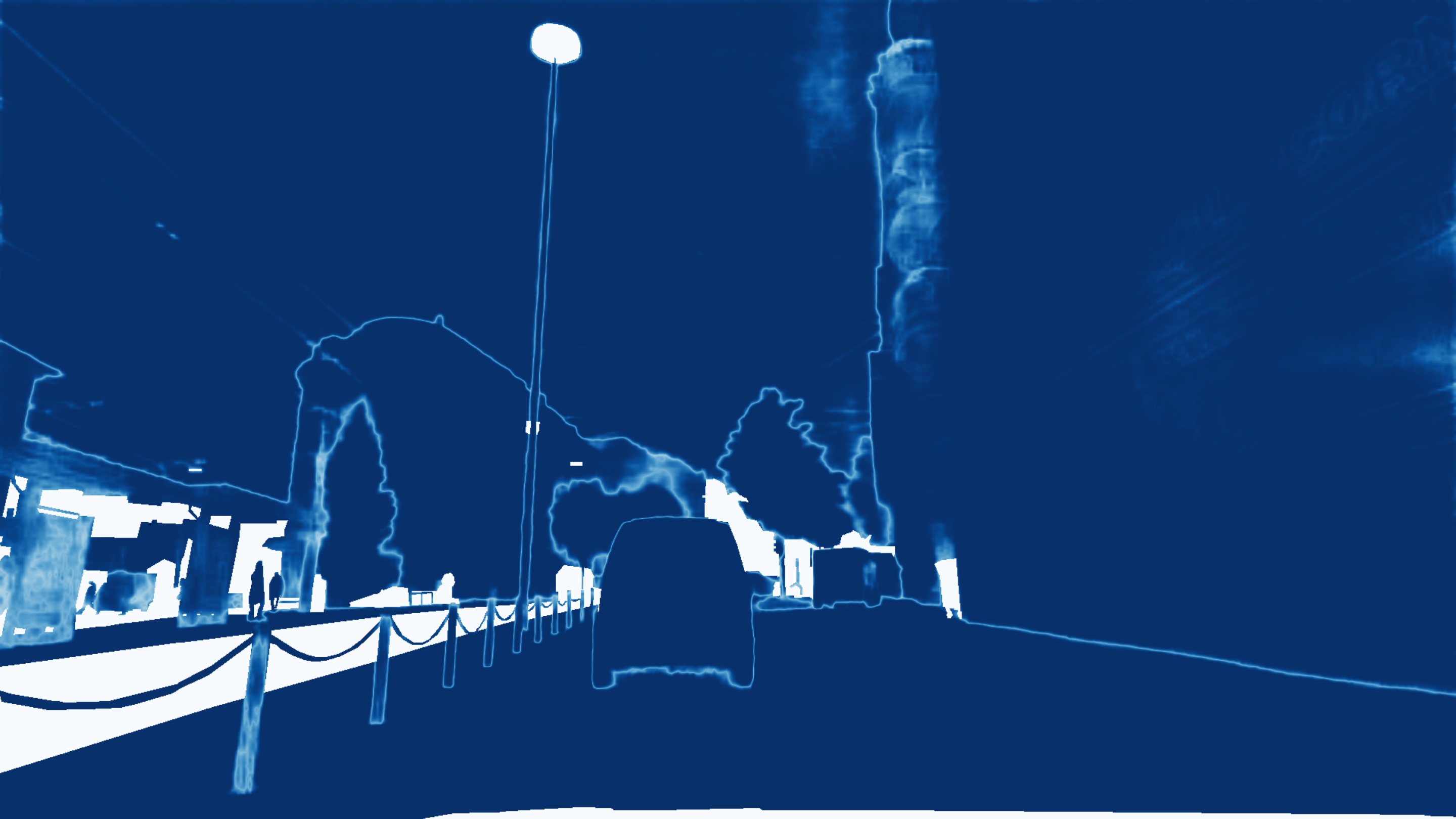}
  \label{fig:acdc1_1}
  \end{subfigure}
  \begin{subfigure}{0.33\linewidth}
    \centering
    \eucname\\\vspace{0.3em}
    \includegraphics[width=\linewidth]{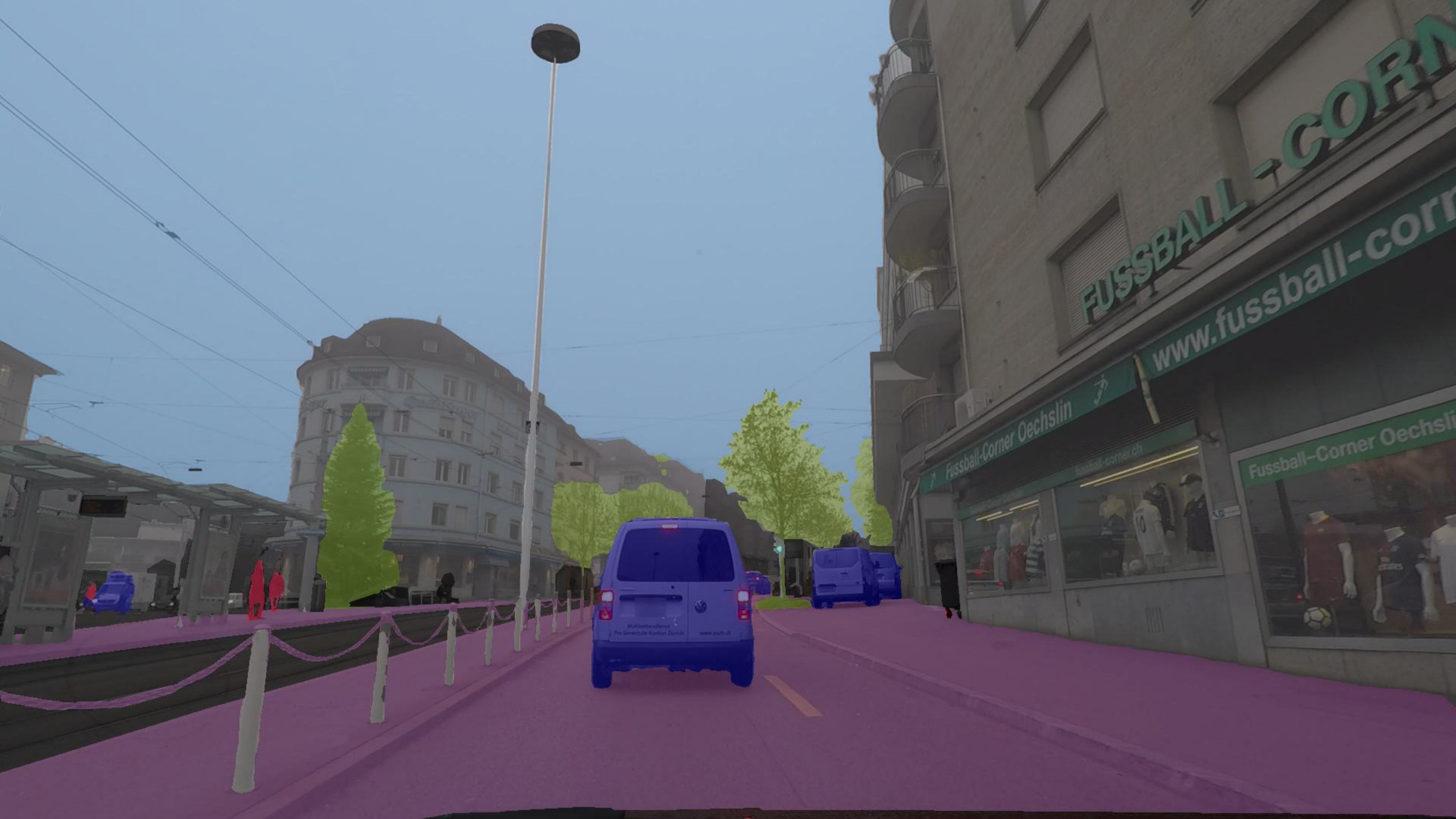}\\[1mm]
    \includegraphics[width=\linewidth]{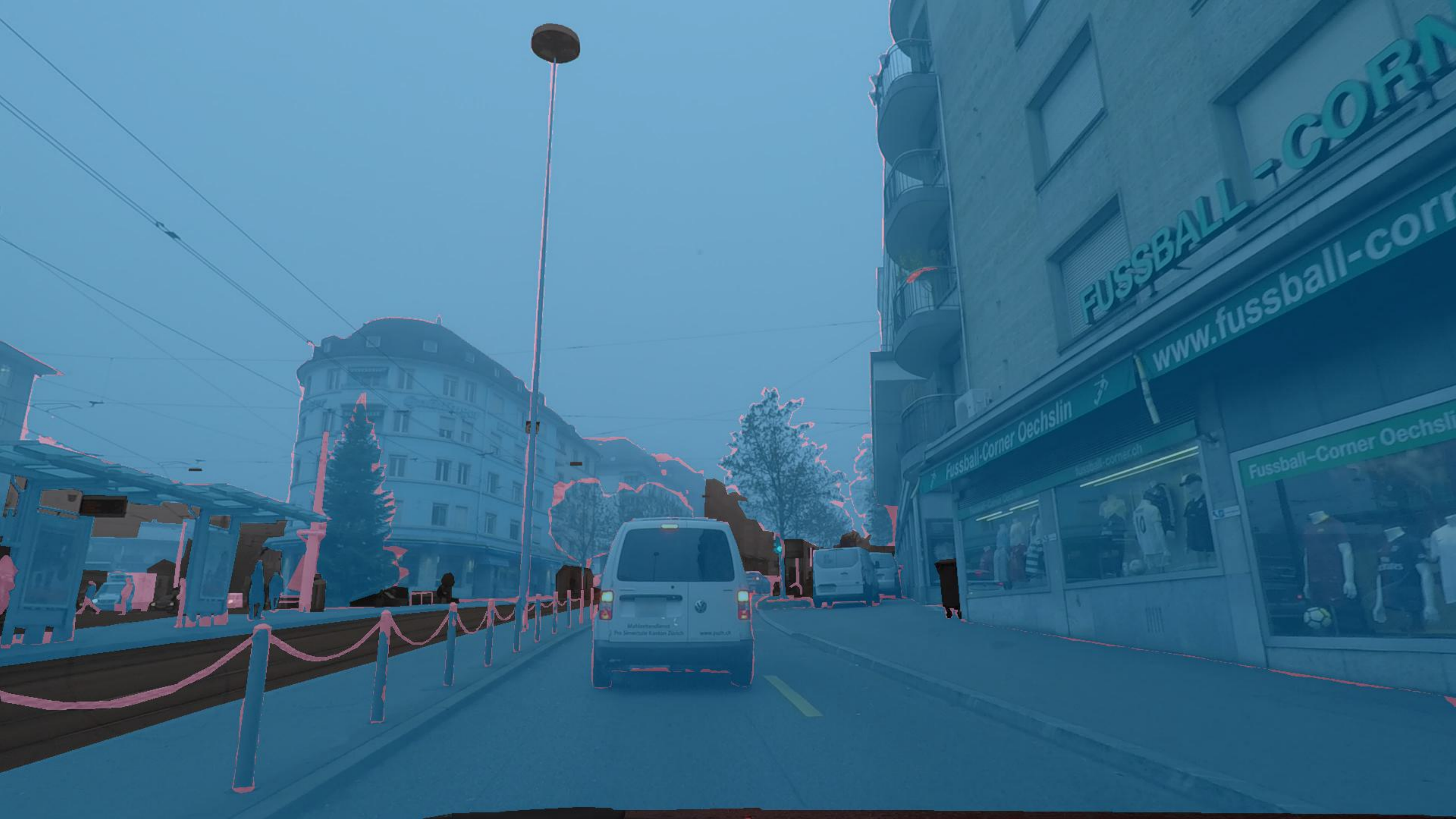}\\[1mm]
    \includegraphics[width=\linewidth]{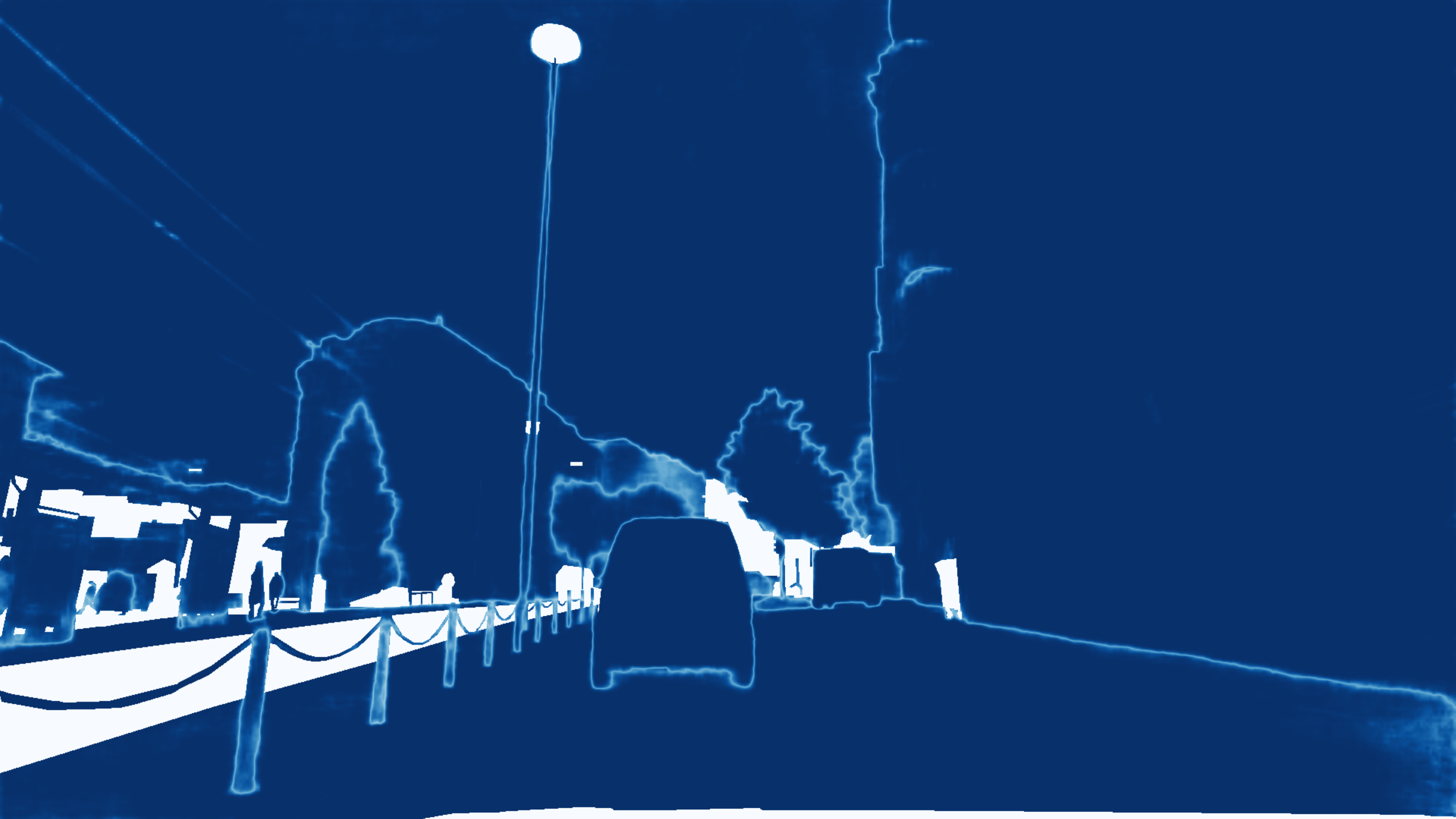}
    \label{fig:acdc1_2}
  \end{subfigure}
  \begin{subfigure}{0.33\linewidth}
    \centering
    \hypname\\\vspace{0.2em}
    \includegraphics[width=\linewidth]{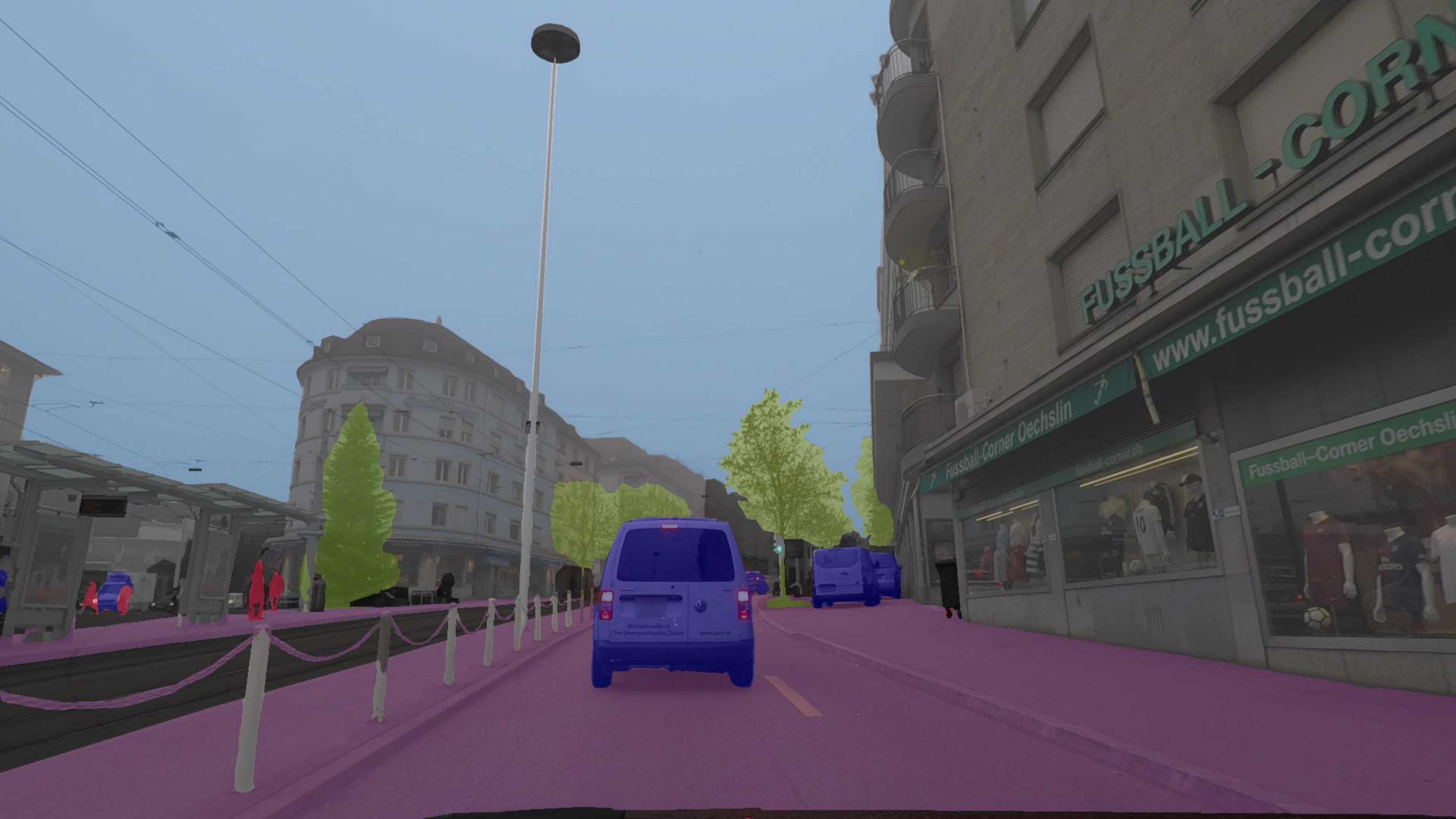}\\[1mm]
    \includegraphics[width=\linewidth]{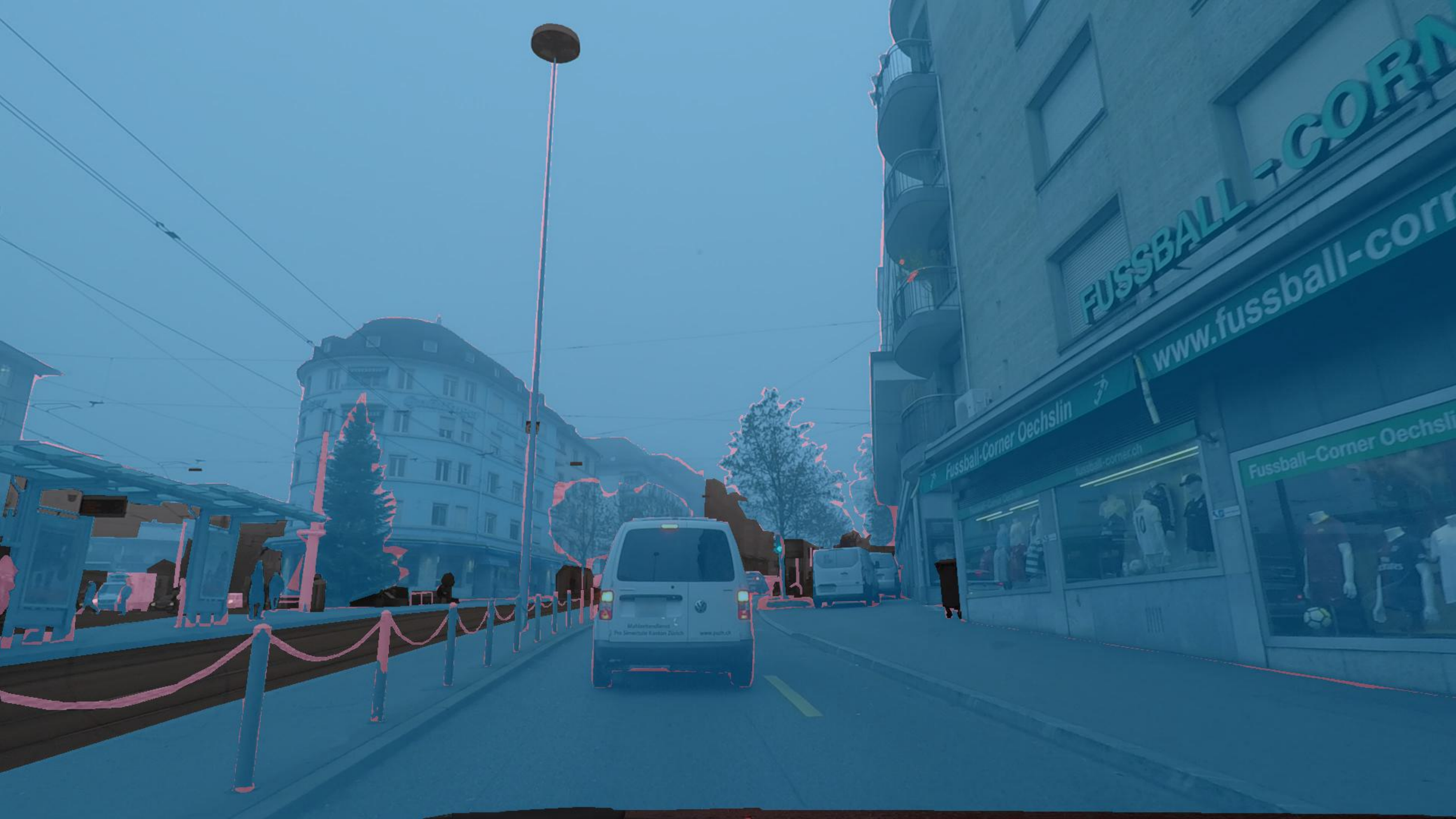}\\[1mm]
    \includegraphics[width=\linewidth]{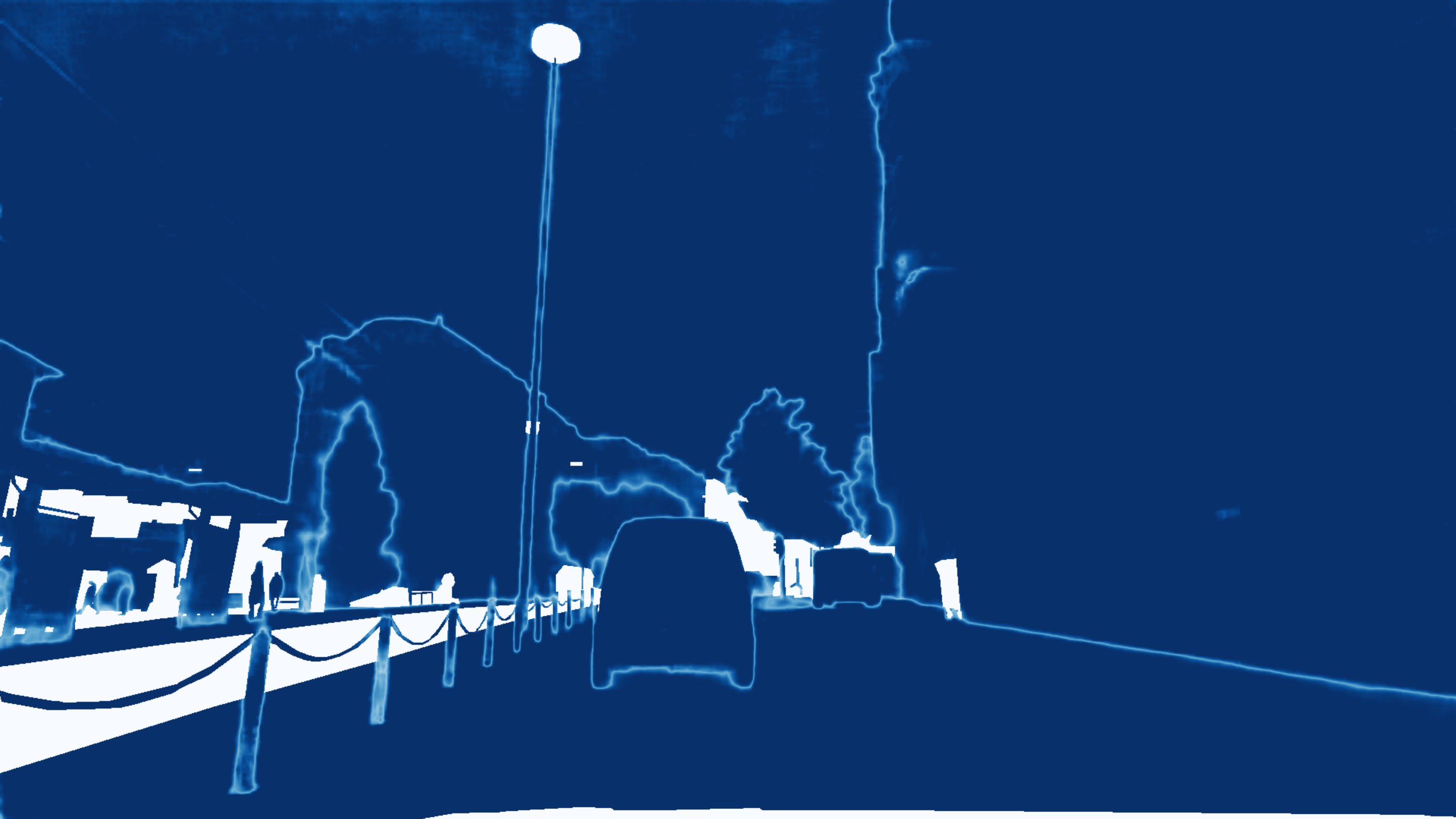}
    \label{fig:acdc1_3}
  \end{subfigure}
    \begin{subfigure}{\linewidth}
        \centering
        \fboxsep 2pt
        \begin{minipage}{0.7\linewidth}
            \colorbox{flat}{\strut \color{white}{flat}}
            \colorbox{construction}{\strut \color{white}{construction}}
            \colorbox{object}{\strut  object}
            \colorbox{nature}{\strut \color{white}{nature}}
            \colorbox{sky}{\strut \color{white}{sky}}
            \colorbox{human}{\strut human}
            \colorbox{vehicle}{\strut \color{white}{vehicle}}\hspace{2em}
            \colorbox{ignore}{\strut \color{white}{ignore}}\hspace{2em}
            \colorbox{true}{\strut \color{white}{true}}
            \colorbox{false}{\strut false}\hspace{2em}%
        \end{minipage}%
        \begin{minipage}{0.3\linewidth}
        \begin{tikzpicture}
        \node [rectangle, left color=left!10!white, right color=left, anchor=north, minimum width=\linewidth, minimum height=0.5cm] (box) at (current page.north){0 \hspace{12em}  \color{white}{1}};
        \end{tikzpicture}
        \end{minipage}
    \end{subfigure}\\
    \vspace{1em}
  \begin{subfigure}{0.33\linewidth}
    \centering
    
    \includegraphics[width=\linewidth]{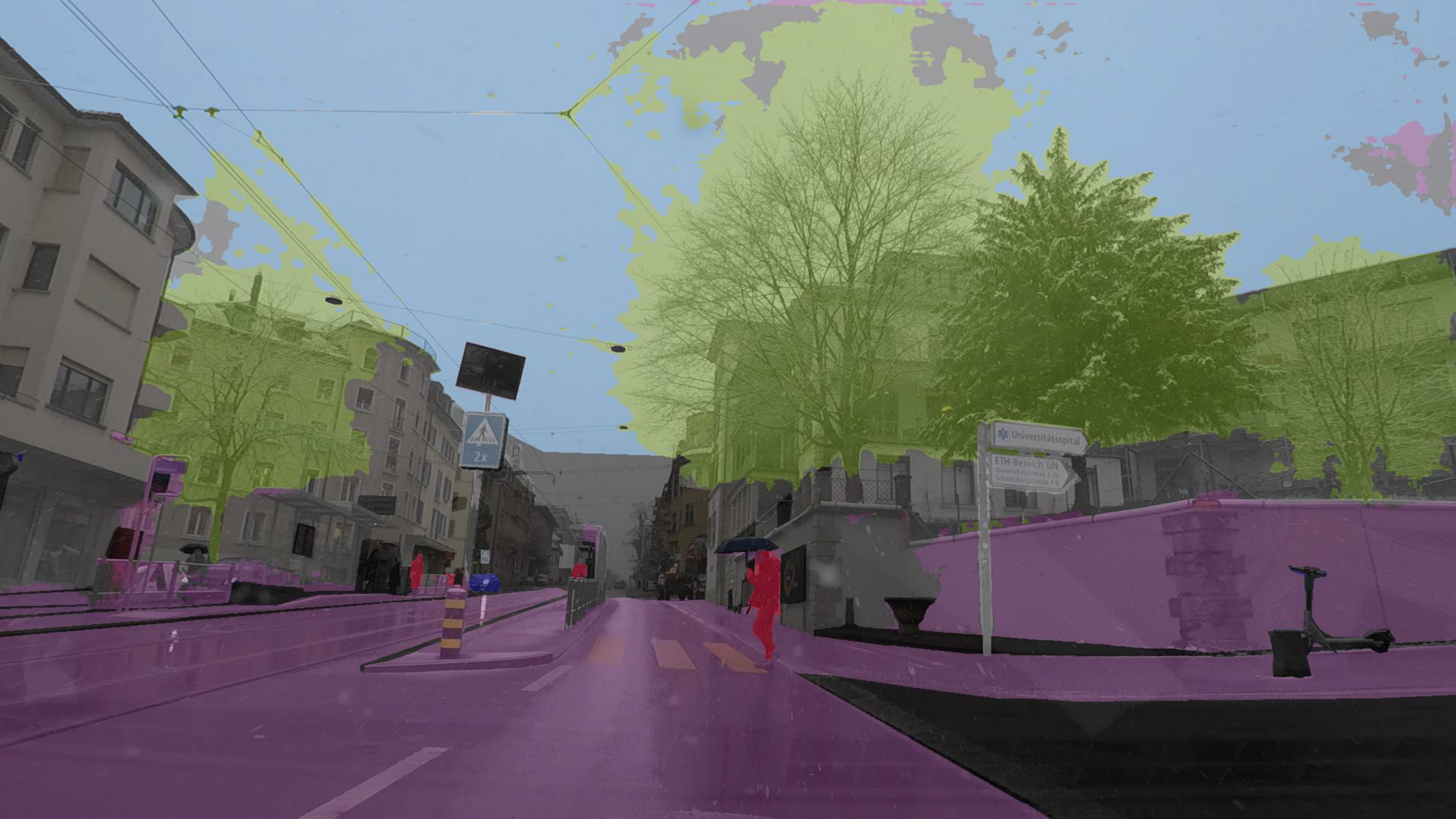}\\[1mm]
    \includegraphics[width=\linewidth]{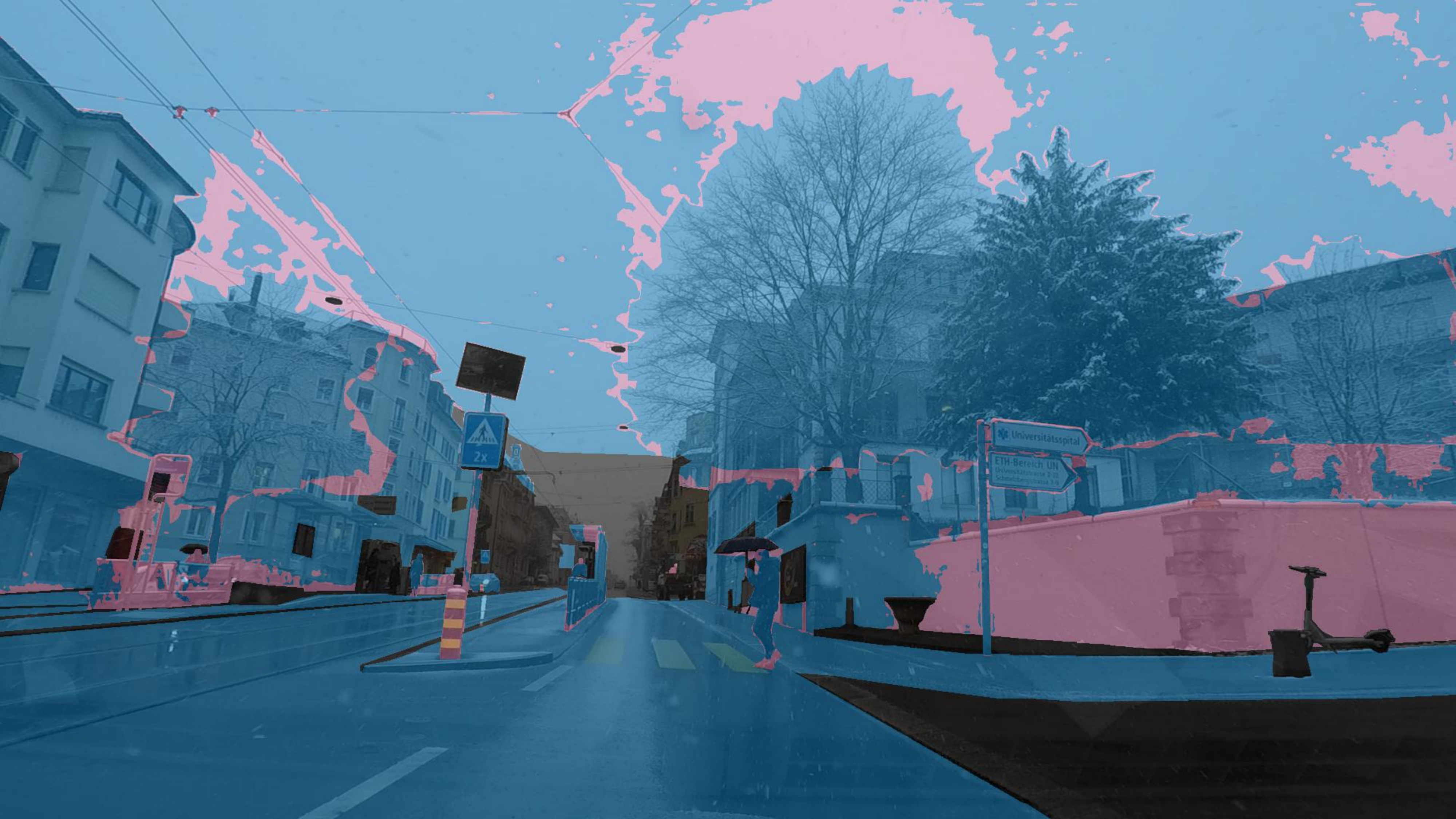}\\[1mm]
    \includegraphics[width=\linewidth]{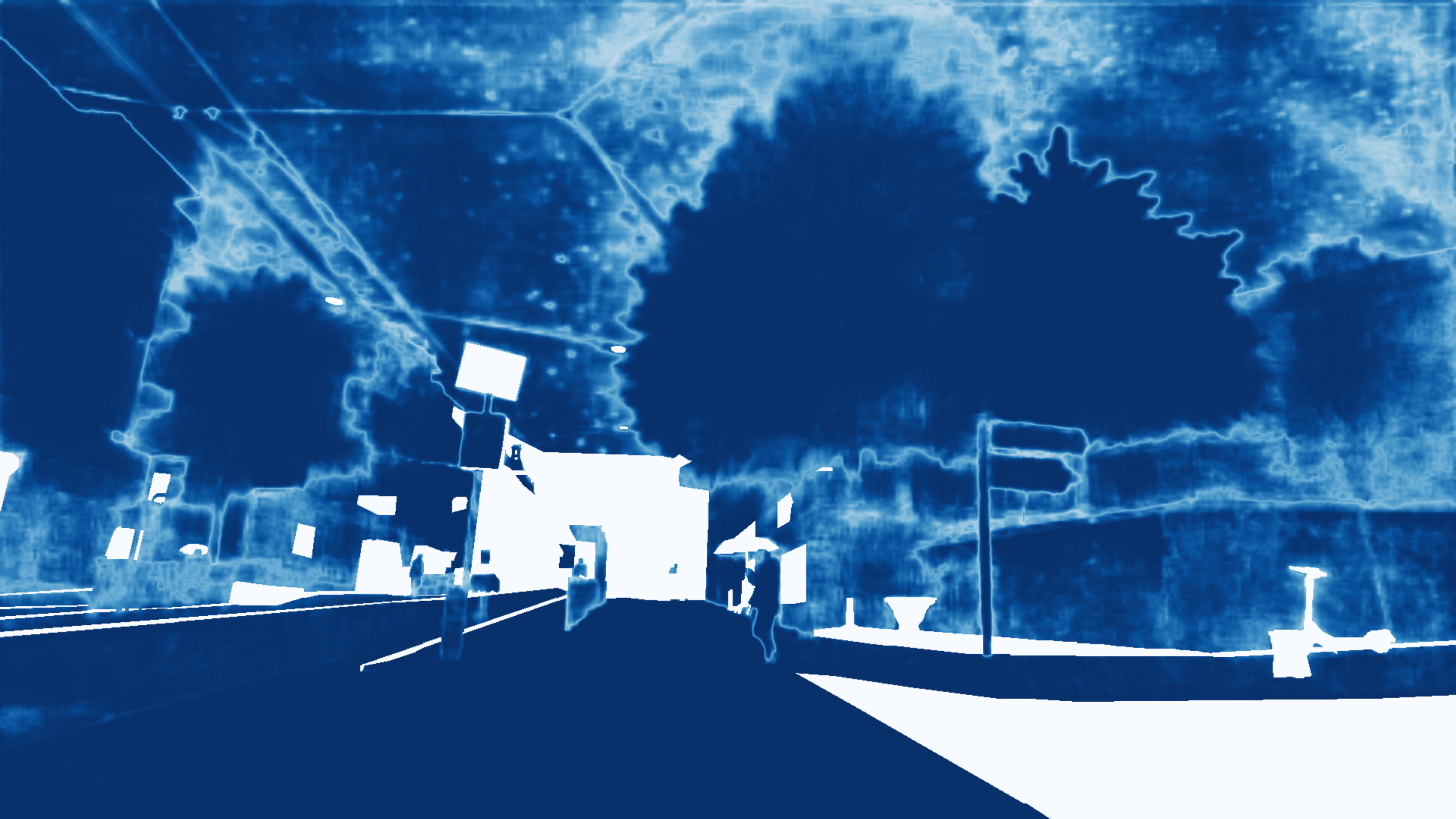}
  \label{fig:acdc2_1}
  \end{subfigure}
  \begin{subfigure}{0.33\linewidth}
    \centering
    \includegraphics[width=\linewidth]{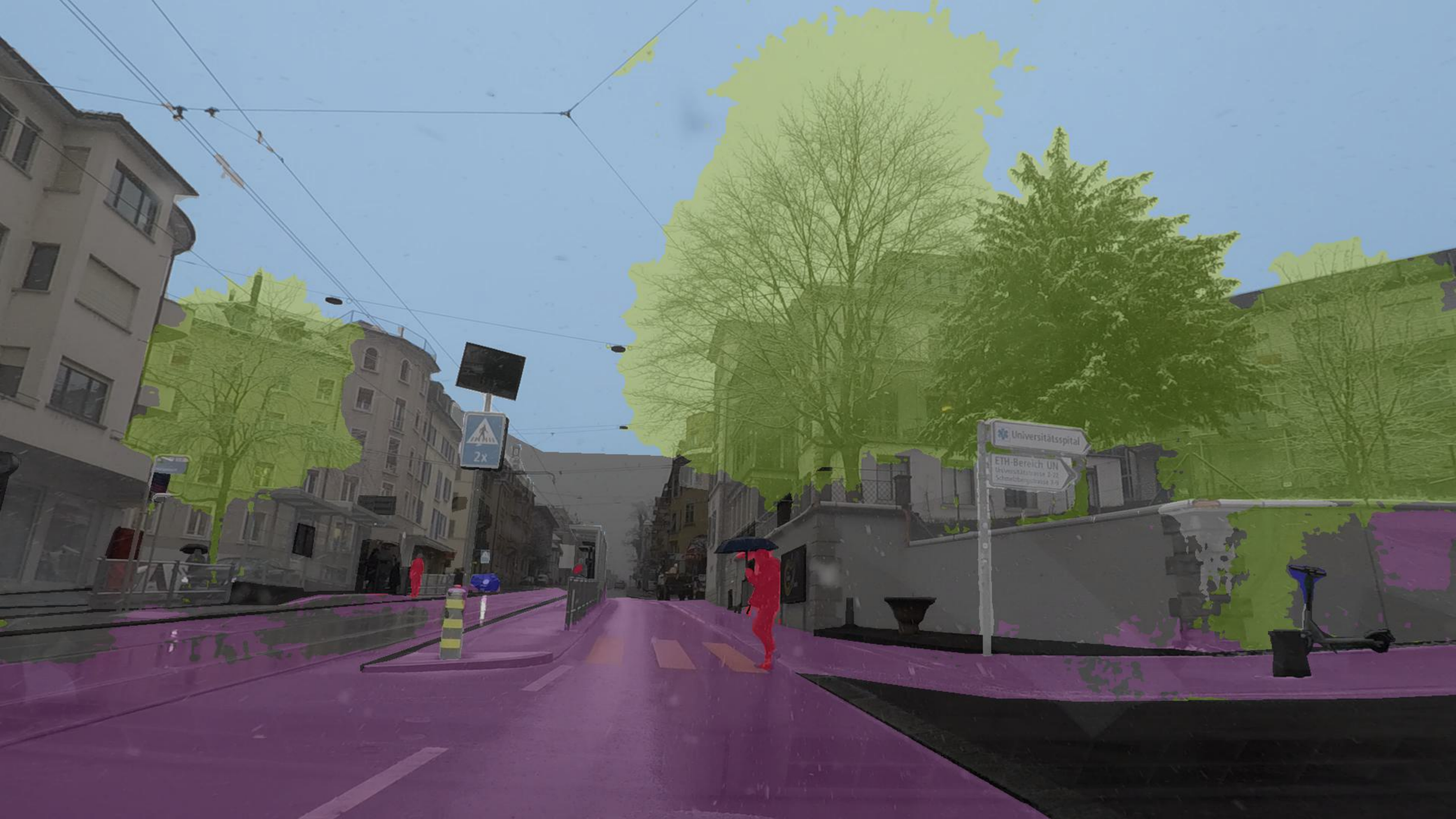}\\[1mm]
    \includegraphics[width=\linewidth]{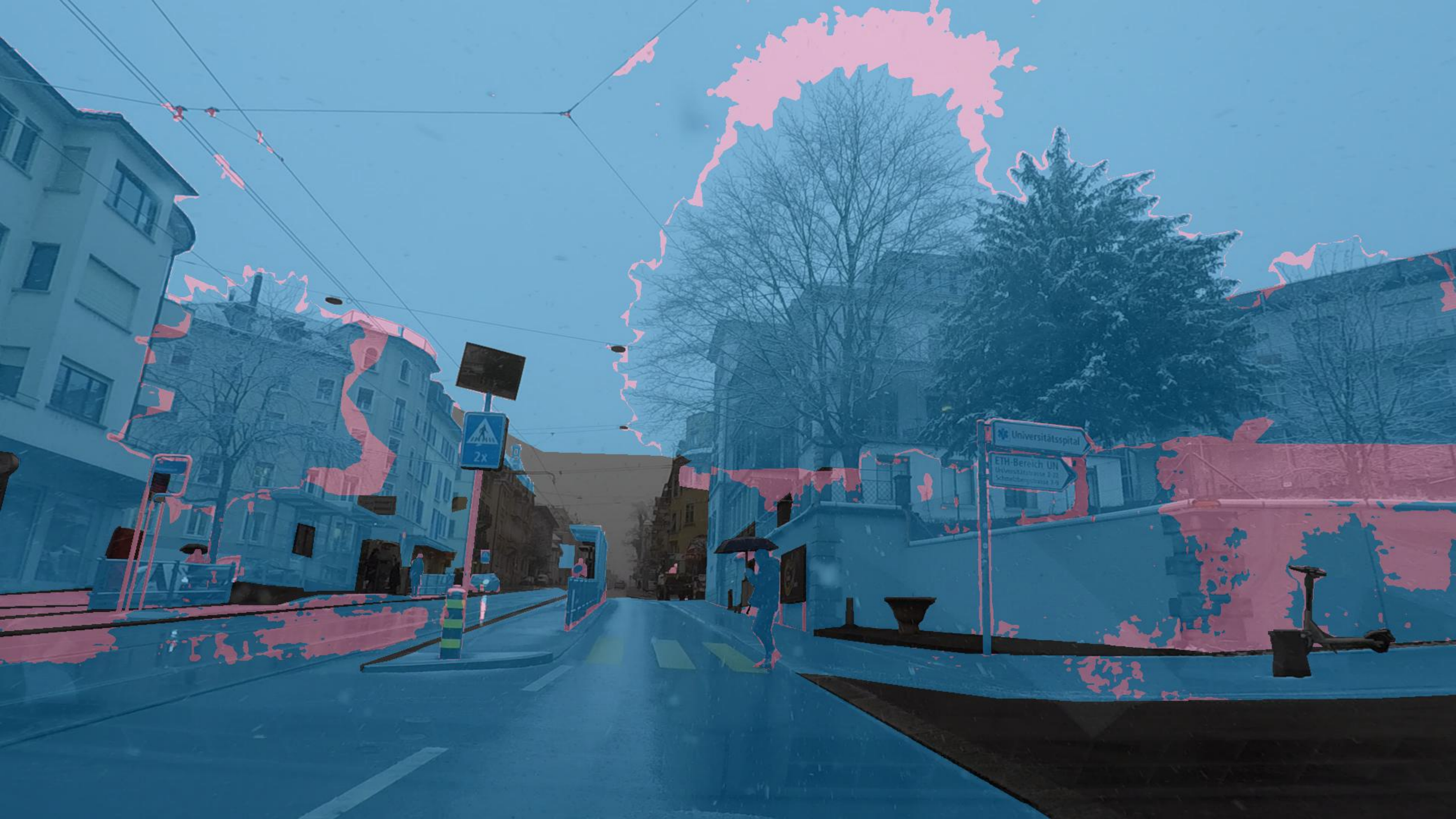}\\[1mm]
    \includegraphics[width=\linewidth]{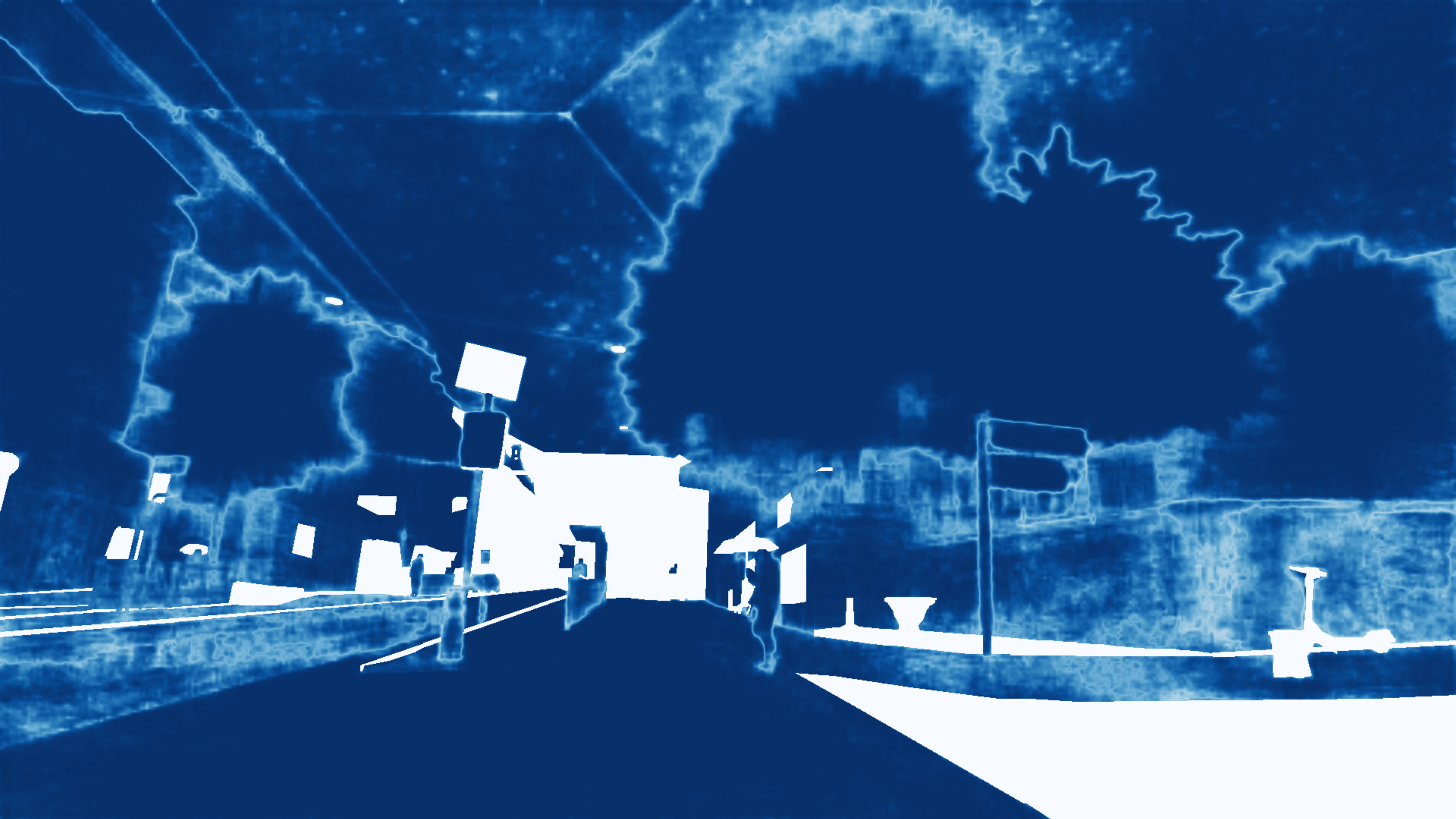}
    \label{fig:acdc2_2}
  \end{subfigure}
  \begin{subfigure}{0.33\linewidth}
    \centering
    \includegraphics[width=\linewidth]{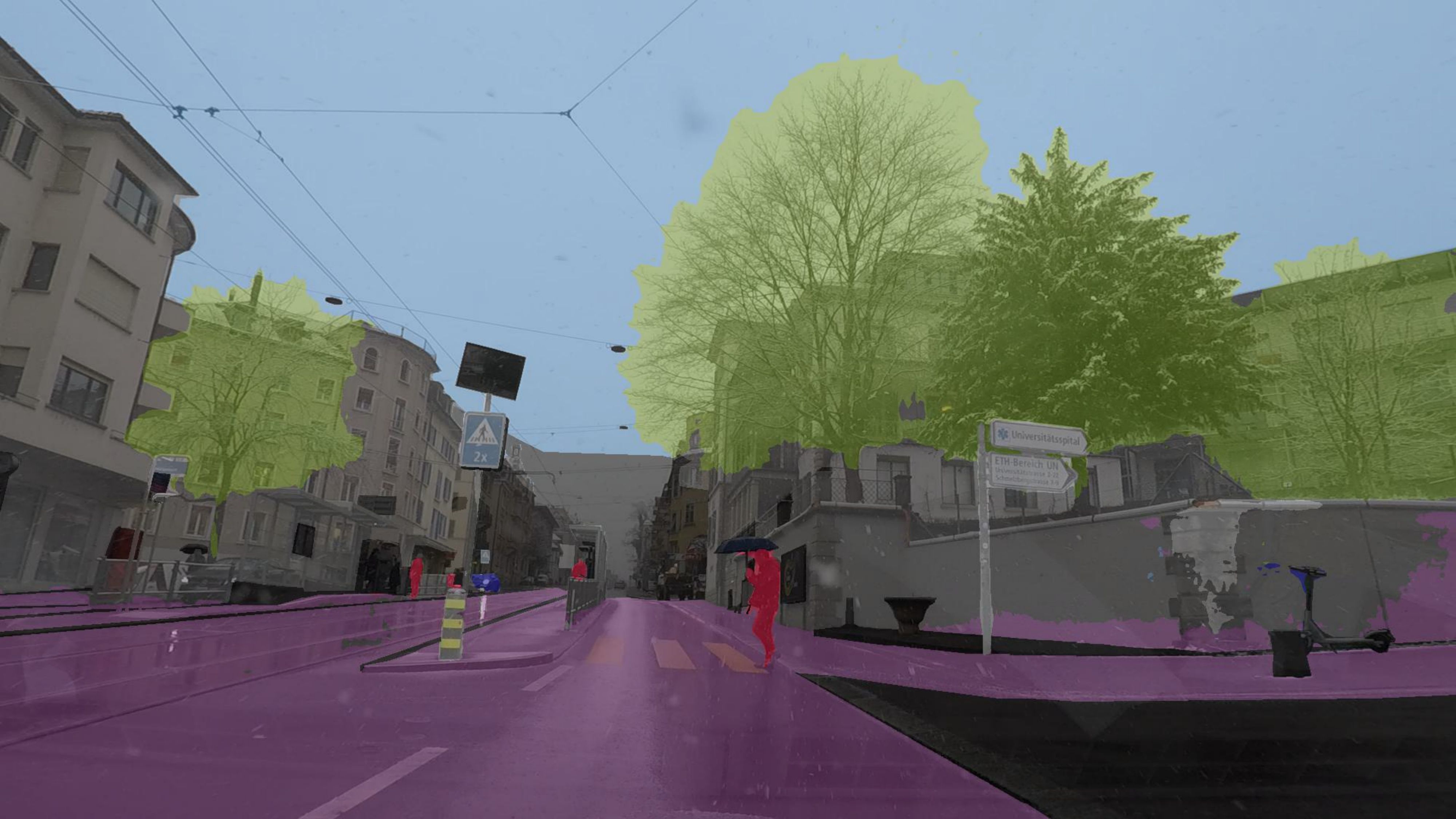}\\[1mm]
    \includegraphics[width=\linewidth]{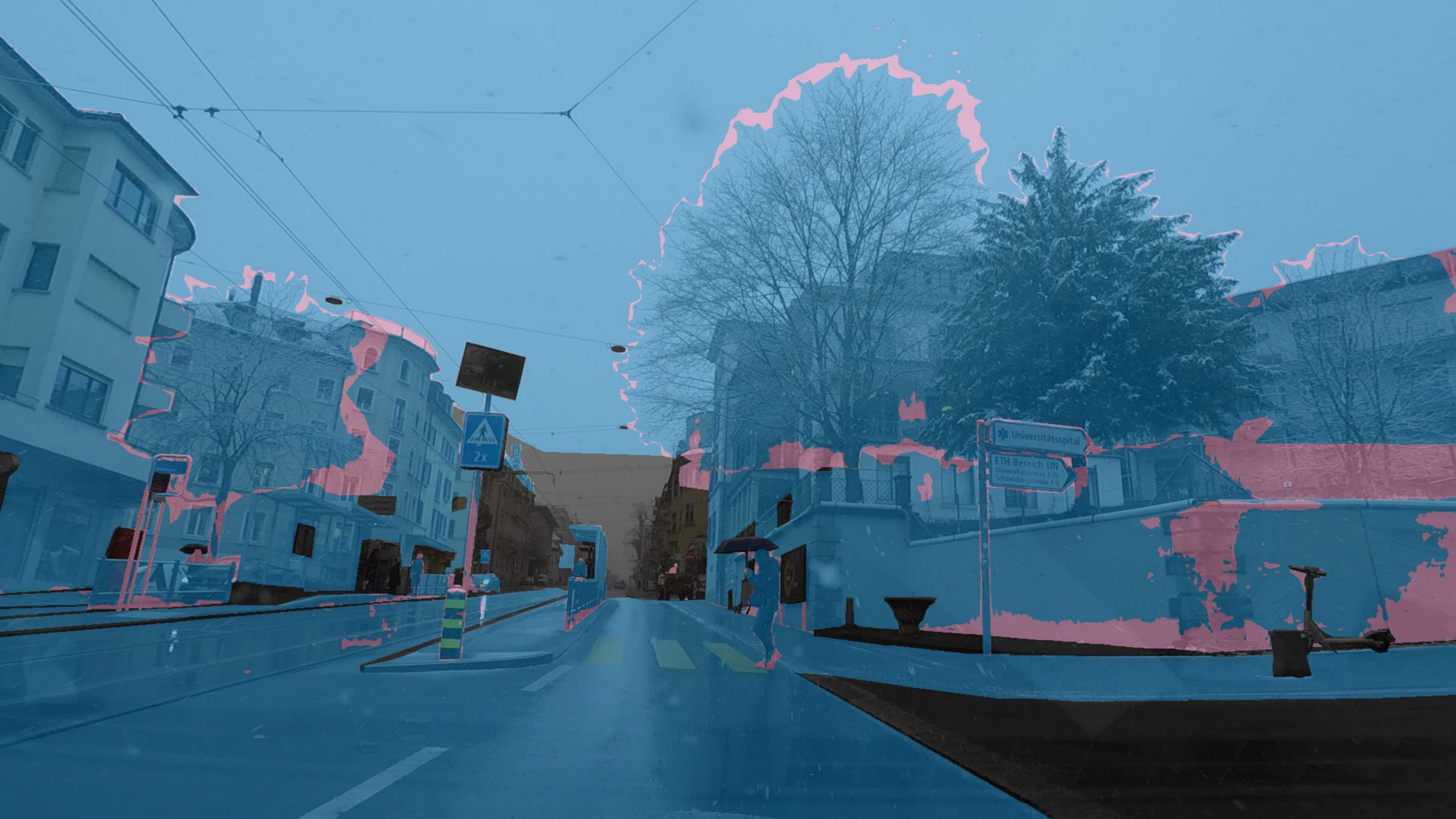}\\[1mm]
    \includegraphics[width=\linewidth]{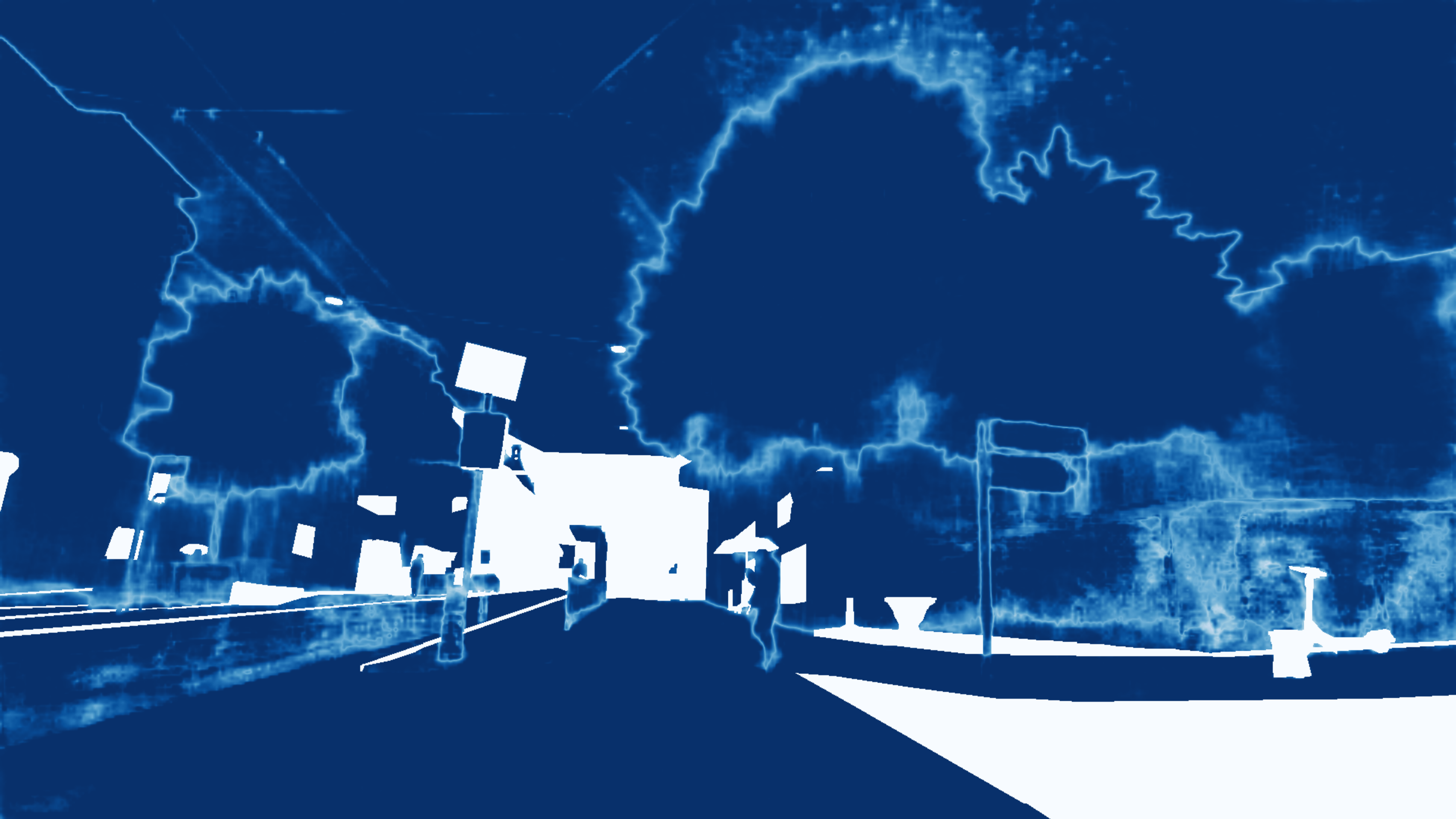}
    \label{fig:acdc2_3}
  \end{subfigure}
  \caption{\textbf{Two qualitative examples from ACDC.} See \cref{sec:qualitative} for analysis.}
  \label{fig:qauliacdc}
  \vspace{-0.7em}
\end{figure*}

\begin{figure*}[t]
\centering
  \begin{subfigure}{0.33\linewidth}
    \centering
    HSSN\\\vspace{0.3em}
    \includegraphics[width=\linewidth]{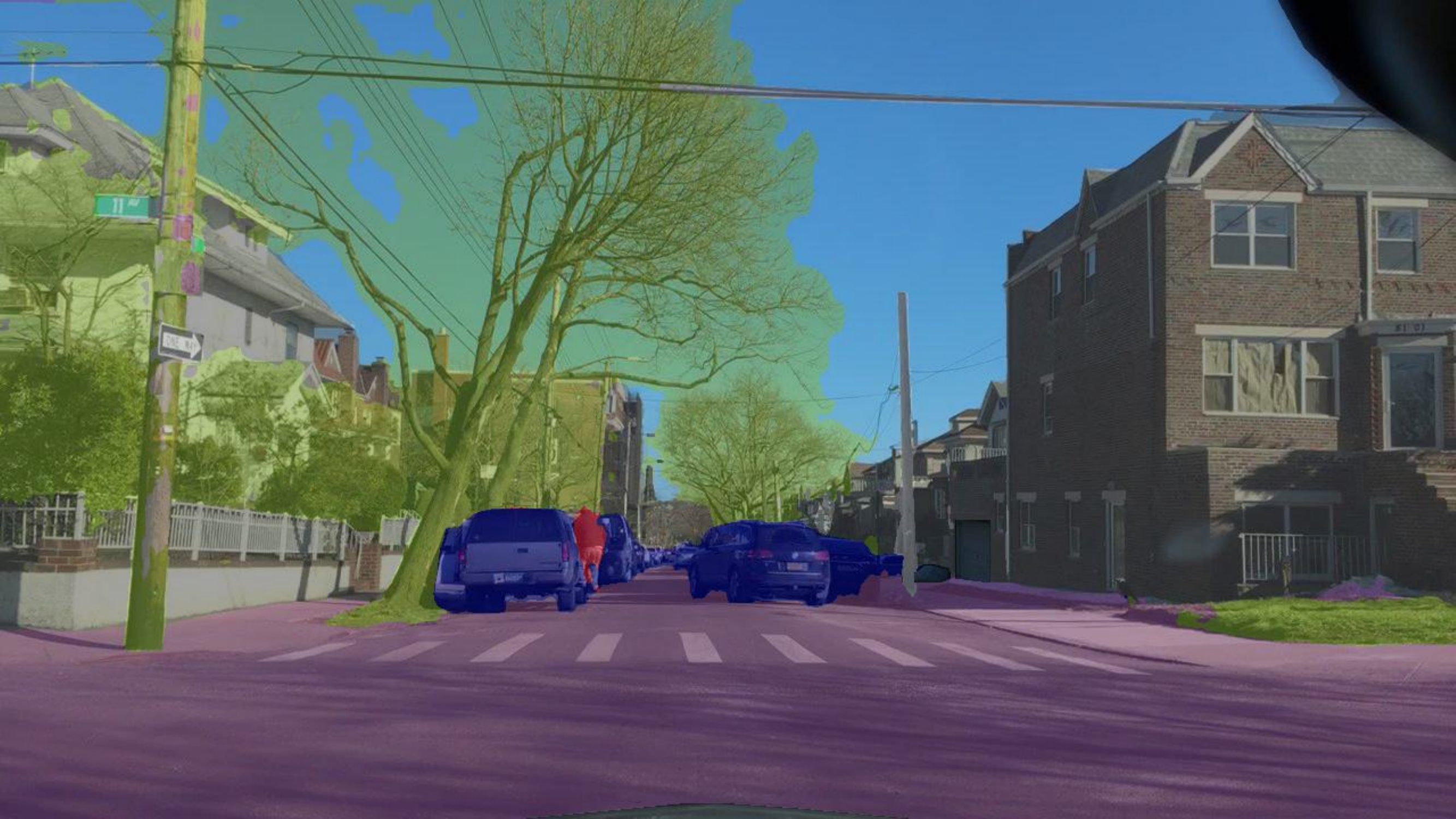}\\[1mm]
    \includegraphics[width=\linewidth]{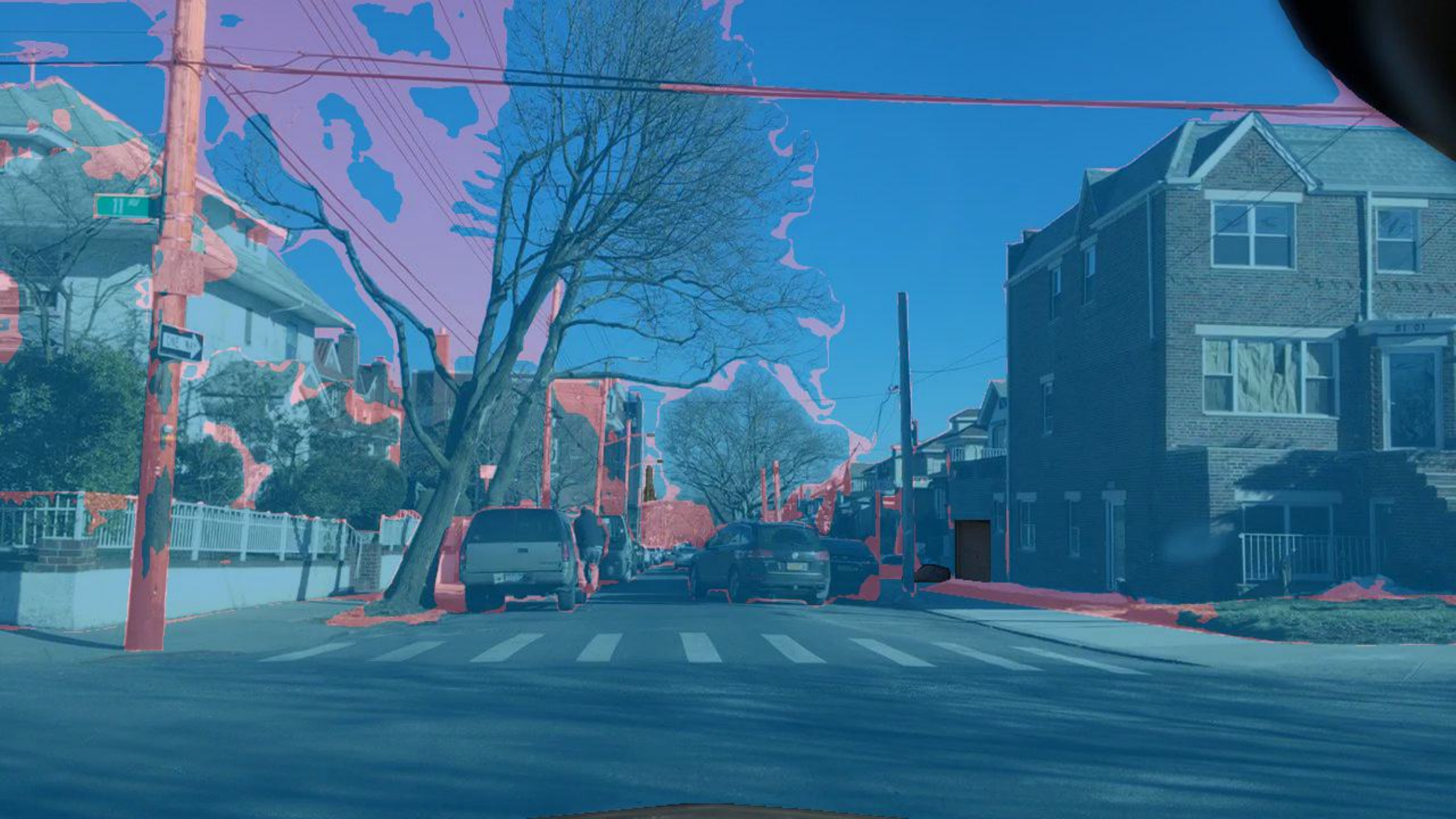}\\[1mm]
    \includegraphics[width=\linewidth]{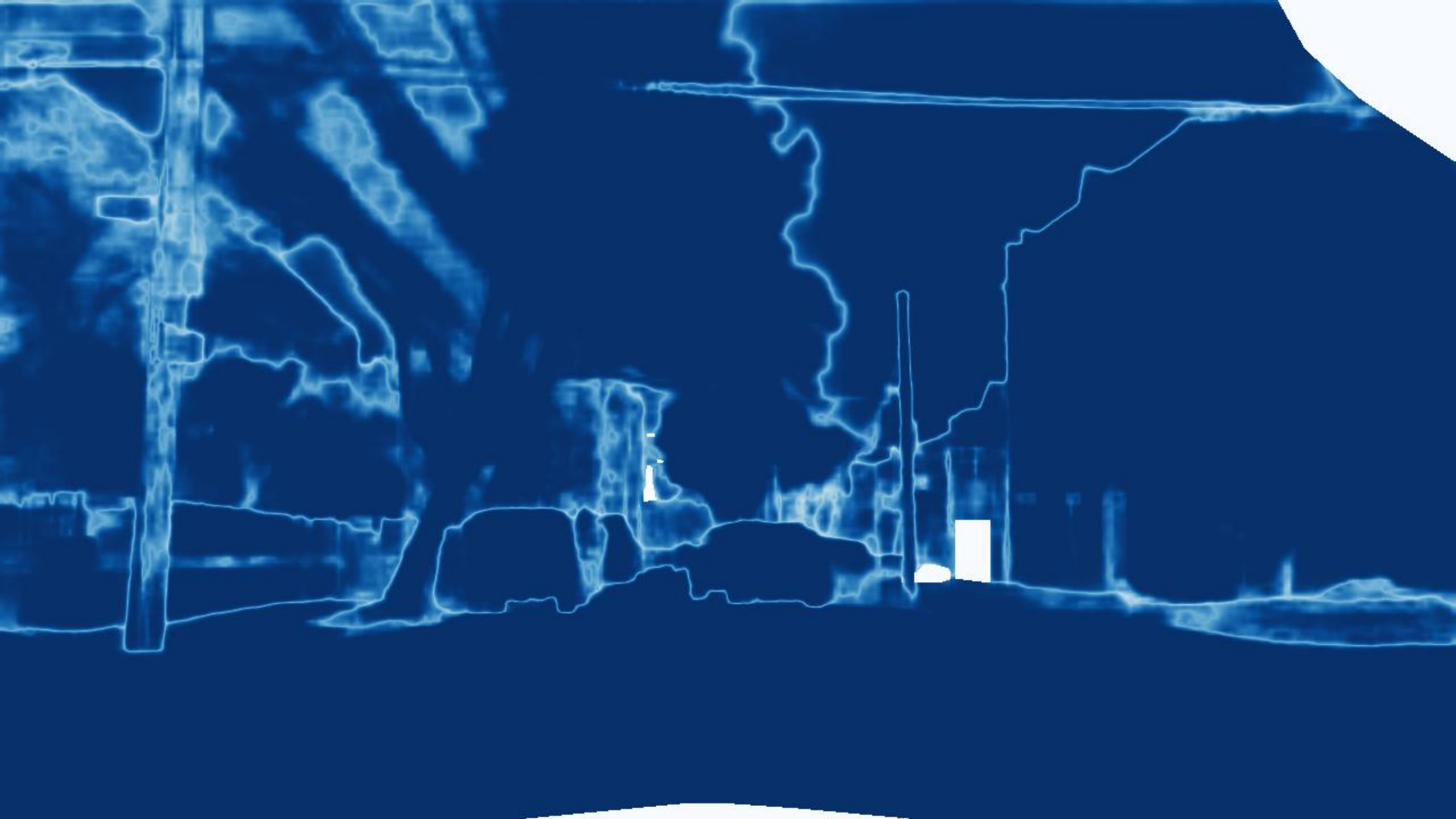}
  \label{fig:bdd1_1}
  \end{subfigure}
  \begin{subfigure}{0.33\linewidth}
    \centering
    \eucname\\\vspace{0.3em}
    \includegraphics[width=\linewidth]{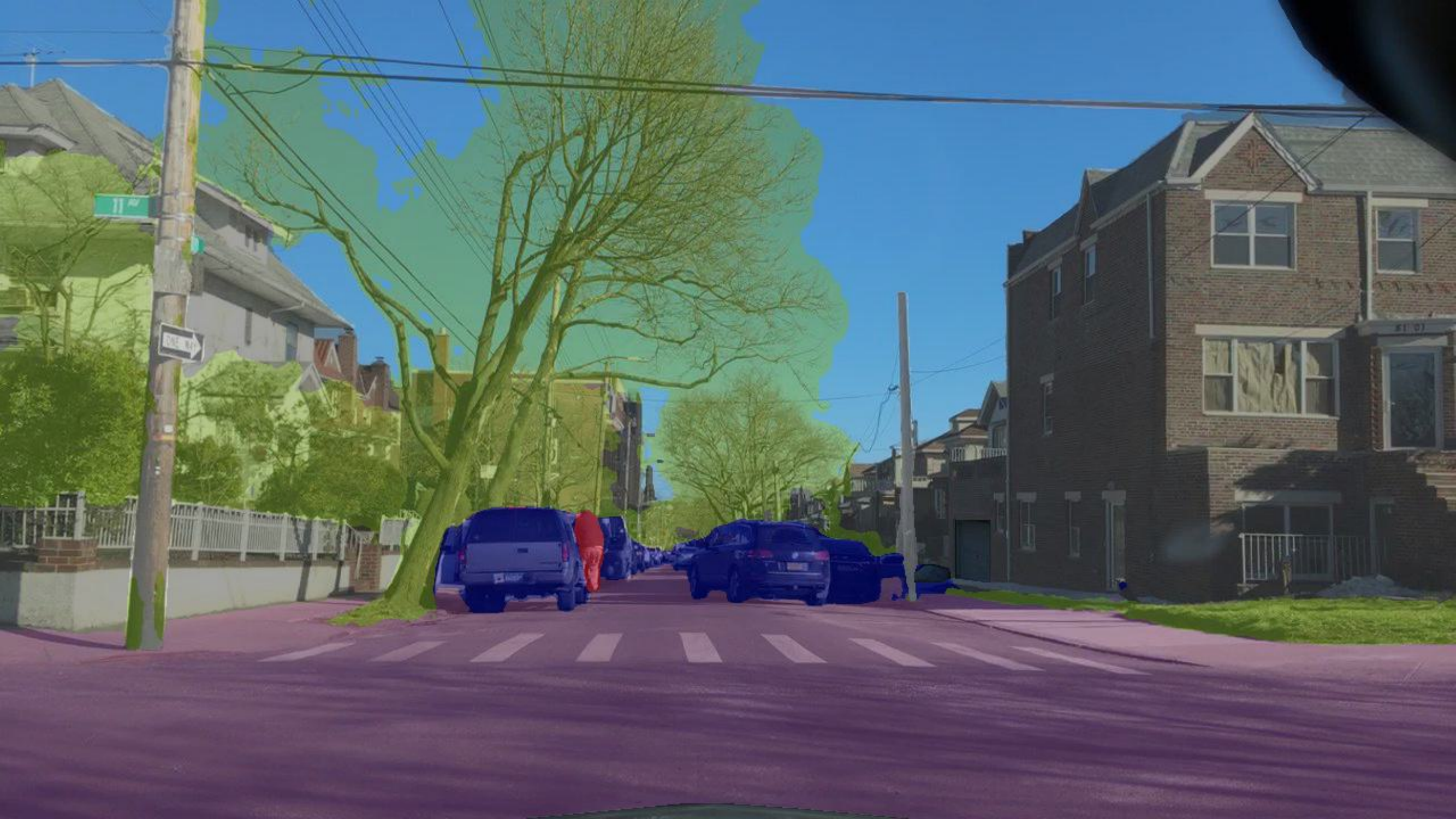}\\[1mm]
    \includegraphics[width=\linewidth]{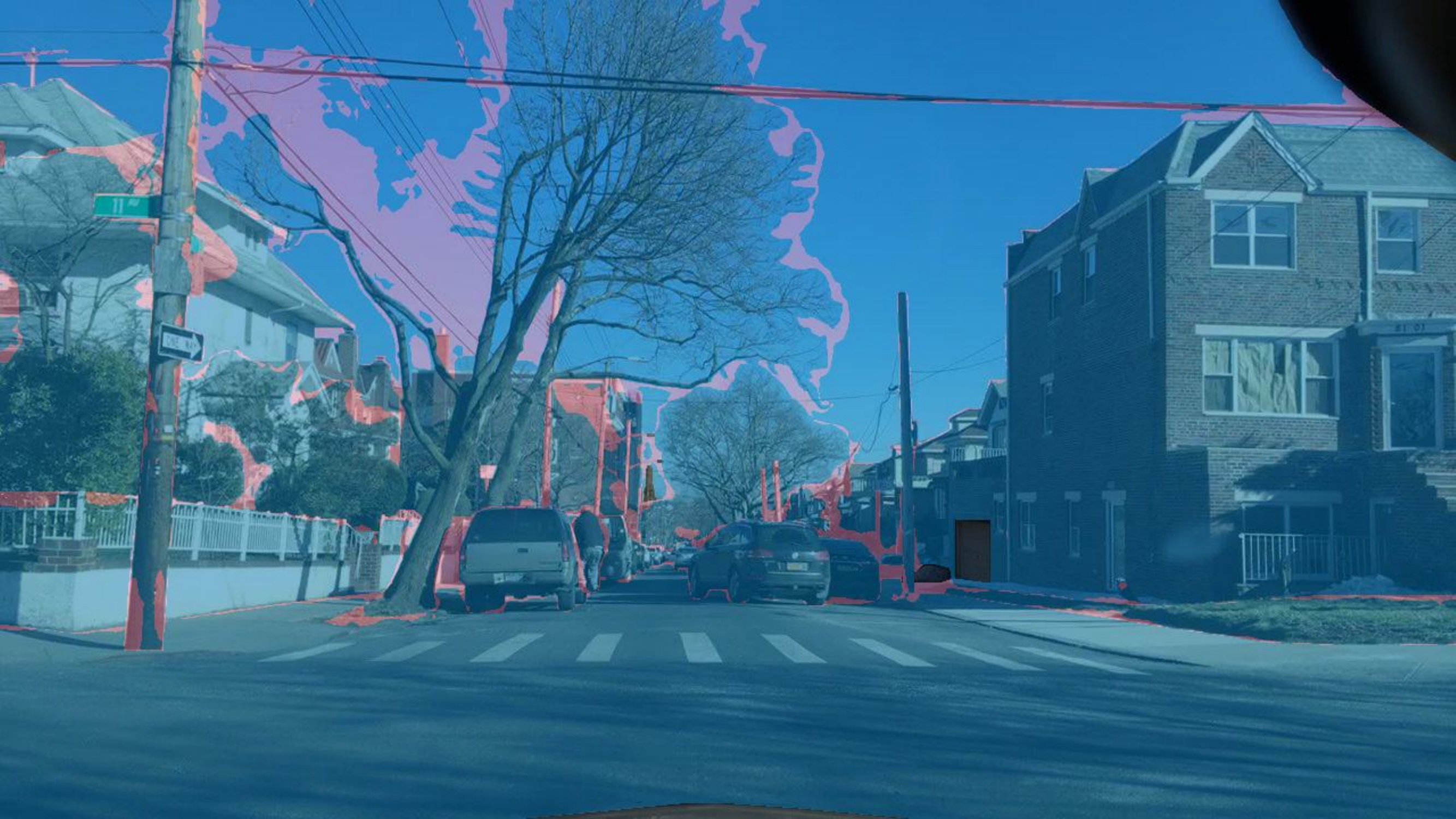}\\[1mm]
    \includegraphics[width=\linewidth]{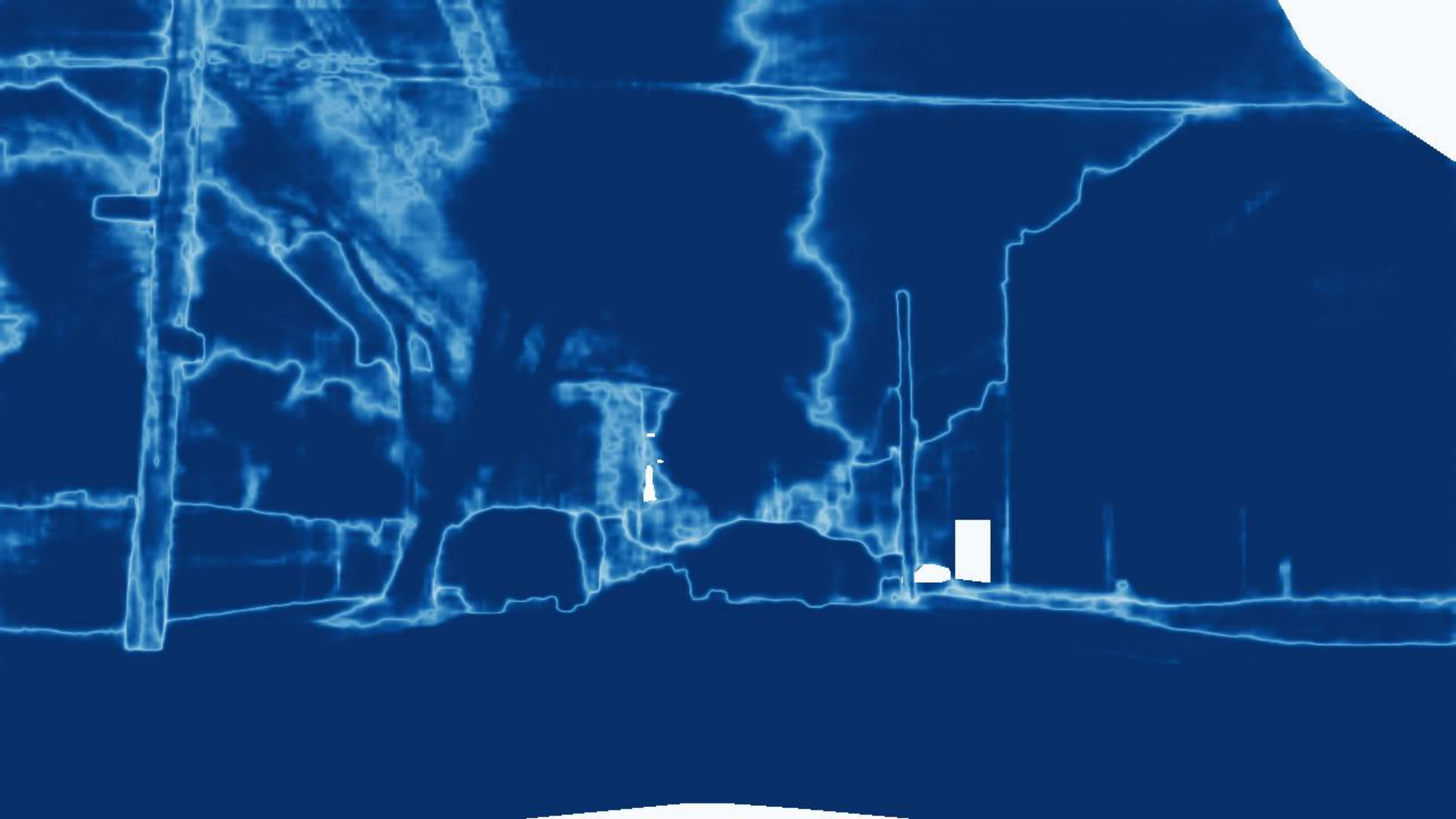}
    \label{fig:bdd1_2}
  \end{subfigure}
  \begin{subfigure}{0.33\linewidth}
    \centering
    \hypname\\\vspace{0.2em}
    \includegraphics[width=\linewidth]{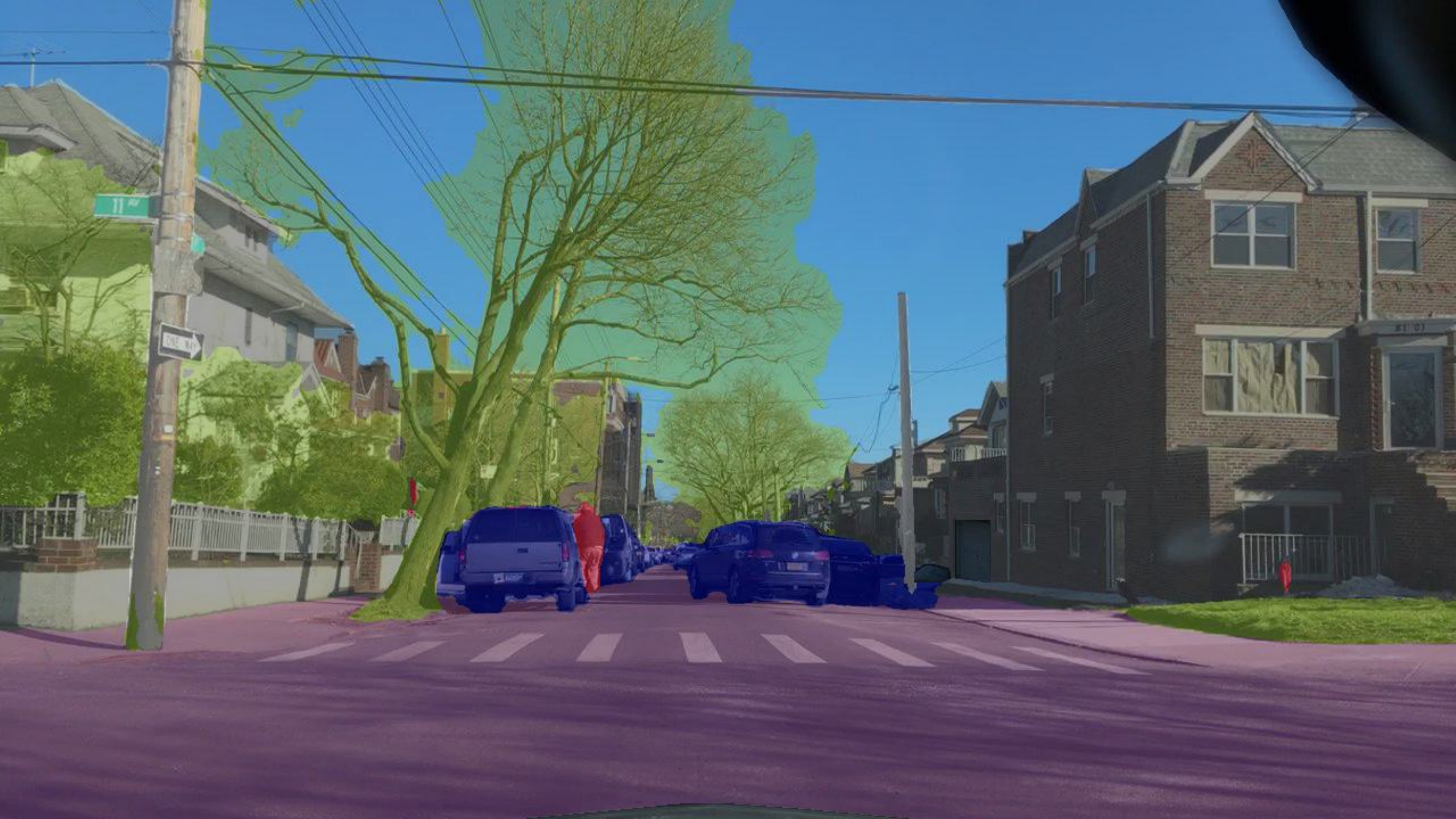}\\[1mm]
    \includegraphics[width=\linewidth]{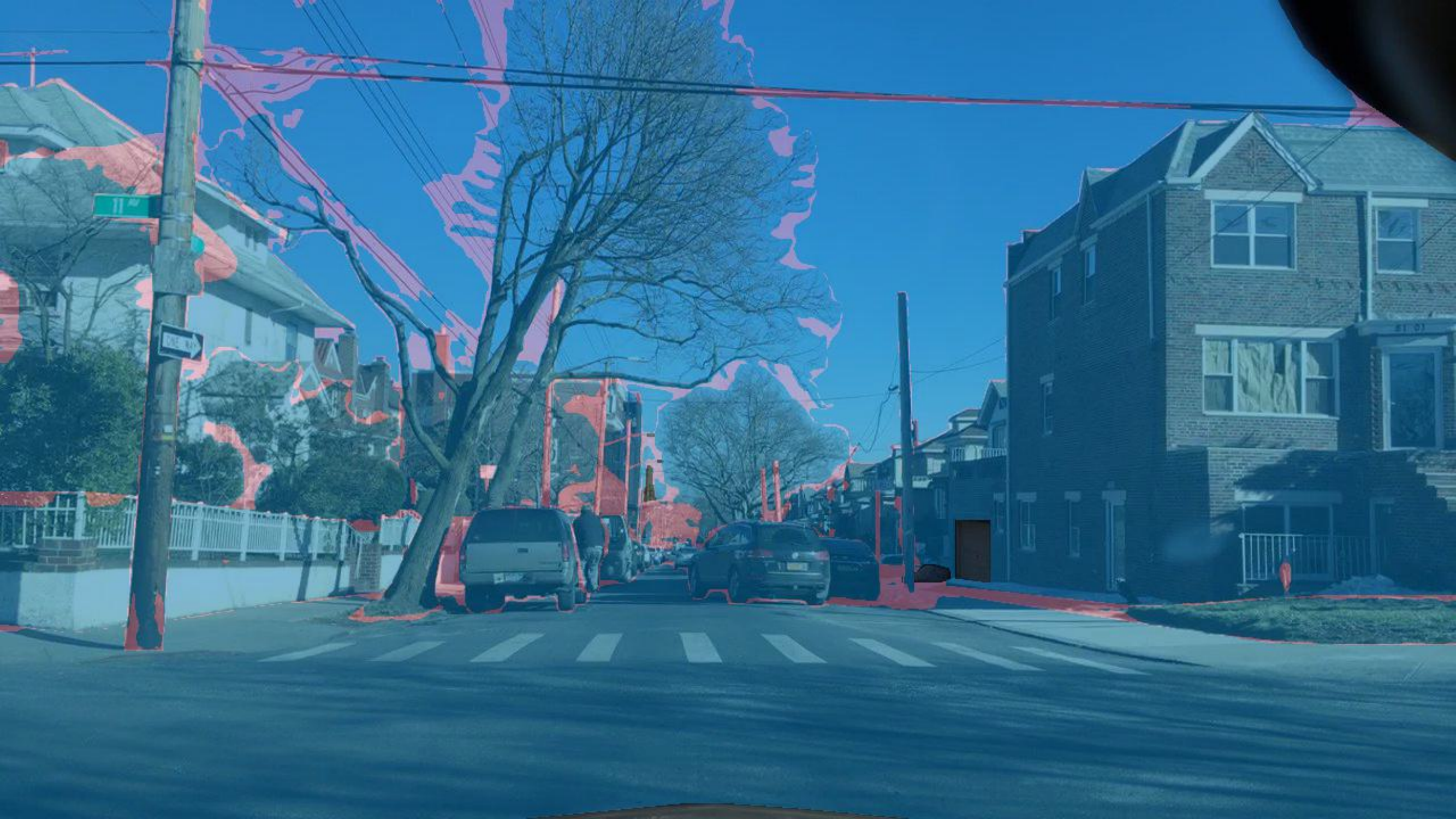}\\[1mm]
    \includegraphics[width=\linewidth]{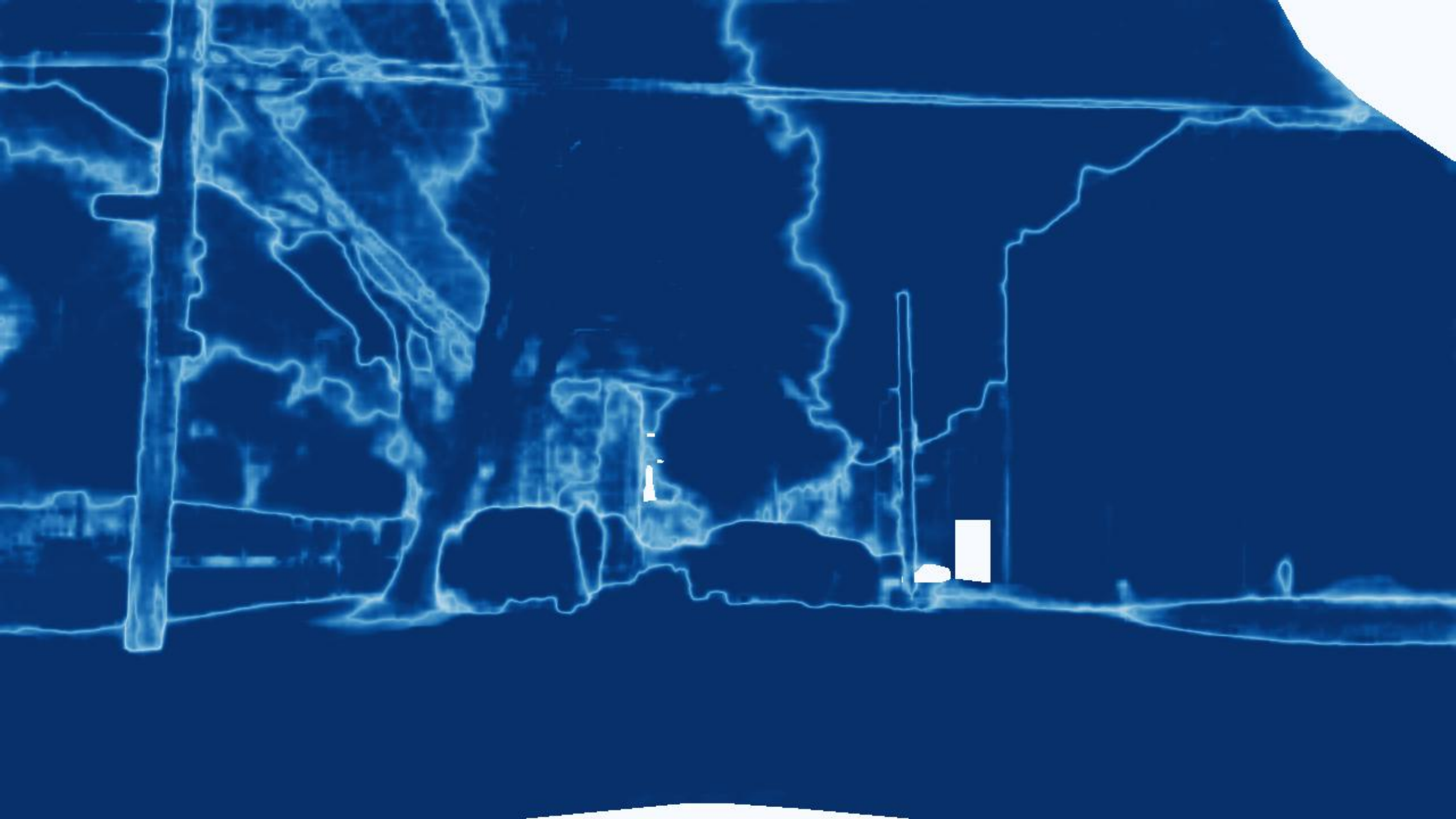}
    \label{fig:bdd1_3}
  \end{subfigure}
    \begin{subfigure}{\linewidth}
        \centering
        \fboxsep 2pt
        \begin{minipage}{0.7\linewidth}
            \colorbox{flat}{\strut \color{white}{flat}}
            \colorbox{construction}{\strut \color{white}{construction}}
            \colorbox{object}{\strut  object}
            \colorbox{nature}{\strut \color{white}{nature}}
            \colorbox{sky}{\strut \color{white}{sky}}
            \colorbox{human}{\strut human}
            \colorbox{vehicle}{\strut \color{white}{vehicle}}\hspace{2em}
            \colorbox{ignore}{\strut \color{white}{ignore}}\hspace{2em}
            \colorbox{true}{\strut \color{white}{true}}
            \colorbox{false}{\strut false}\hspace{2em}%
        \end{minipage}%
        \begin{minipage}{0.3\linewidth}
        \begin{tikzpicture}
        \node [rectangle, left color=left!10!white, right color=left, anchor=north, minimum width=\linewidth, minimum height=0.5cm] (box) at (current page.north){0 \hspace{12em}  \color{white}{1}};
        \end{tikzpicture}
        \end{minipage}
    \end{subfigure}\\
    \vspace{1em}
  \begin{subfigure}{0.33\linewidth}
    \centering
    \includegraphics[width=\linewidth]{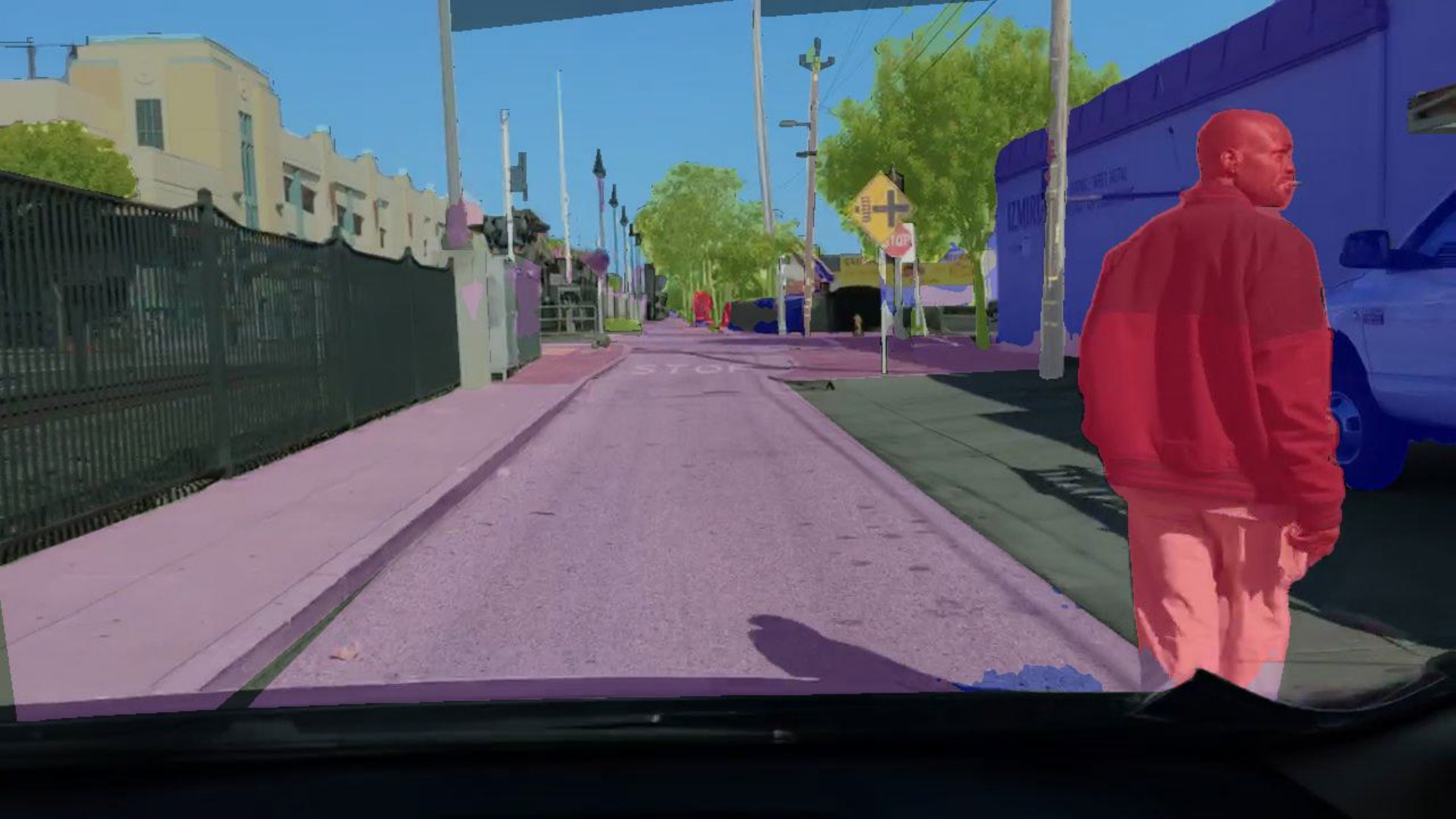}\\[1mm]
    \includegraphics[width=\linewidth]{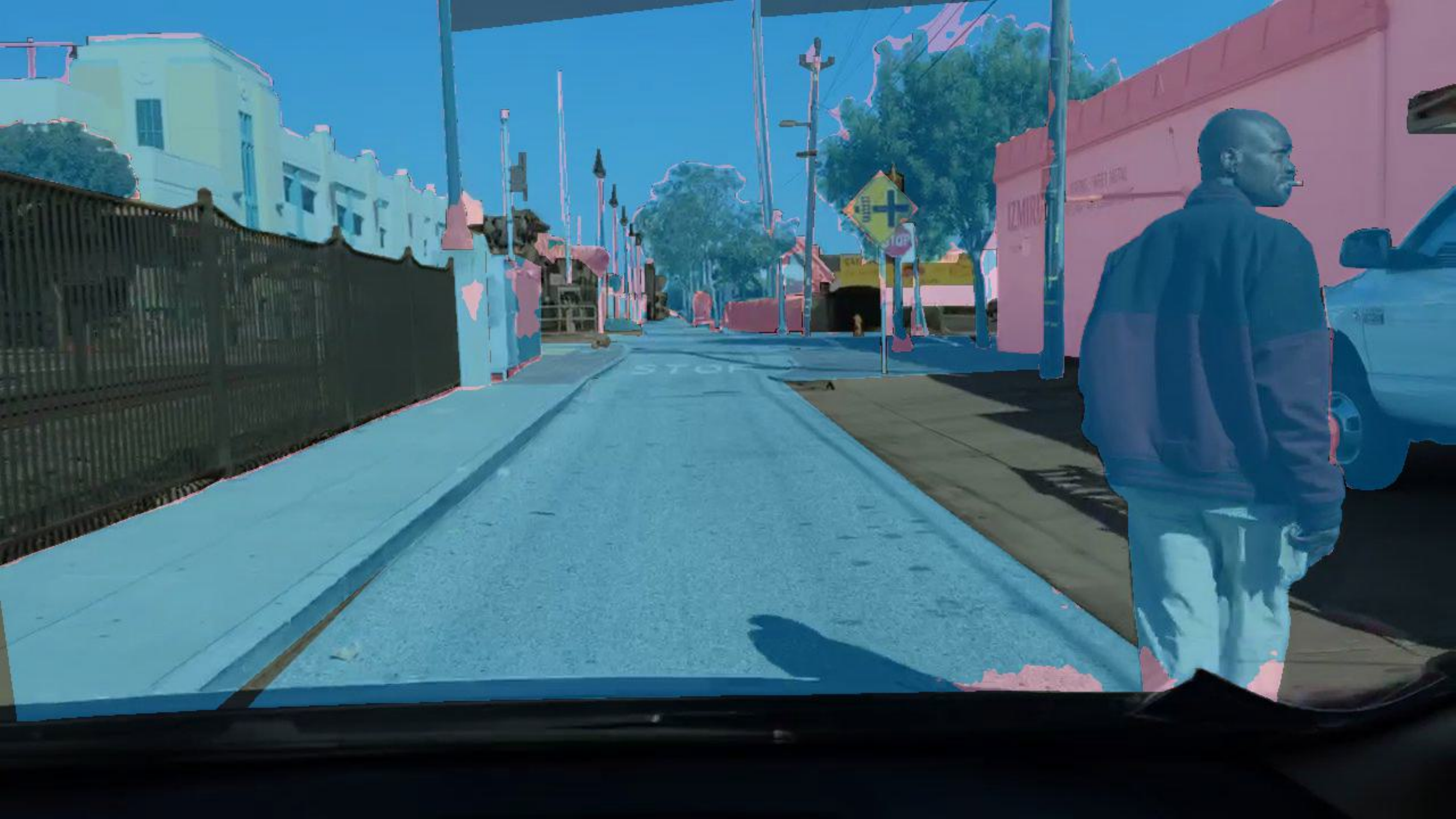}\\[1mm]
    \includegraphics[width=\linewidth]{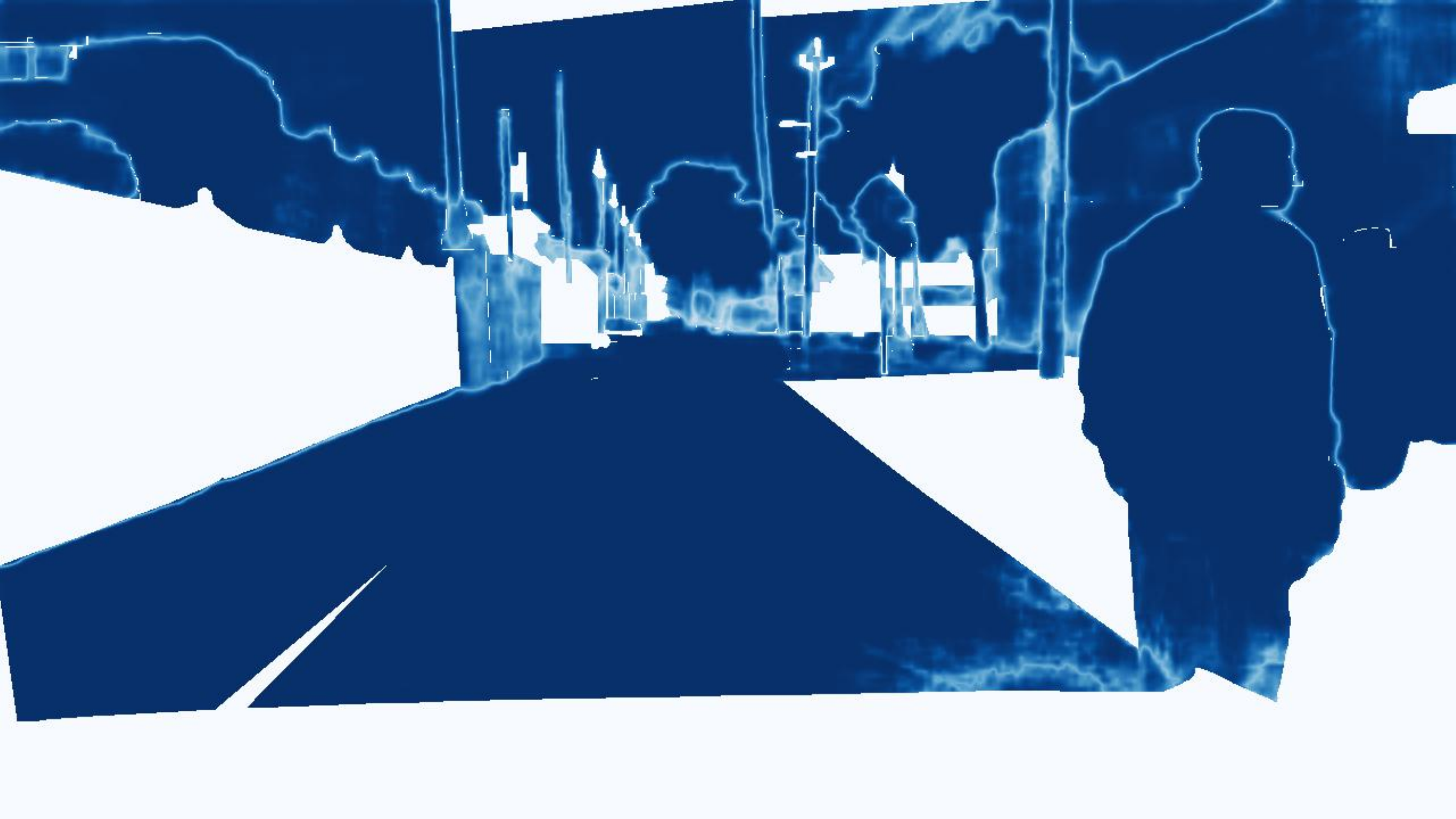}
  \label{fig:bdd2_1}
  \end{subfigure}
  \begin{subfigure}{0.33\linewidth}
    \centering
    \includegraphics[width=\linewidth]{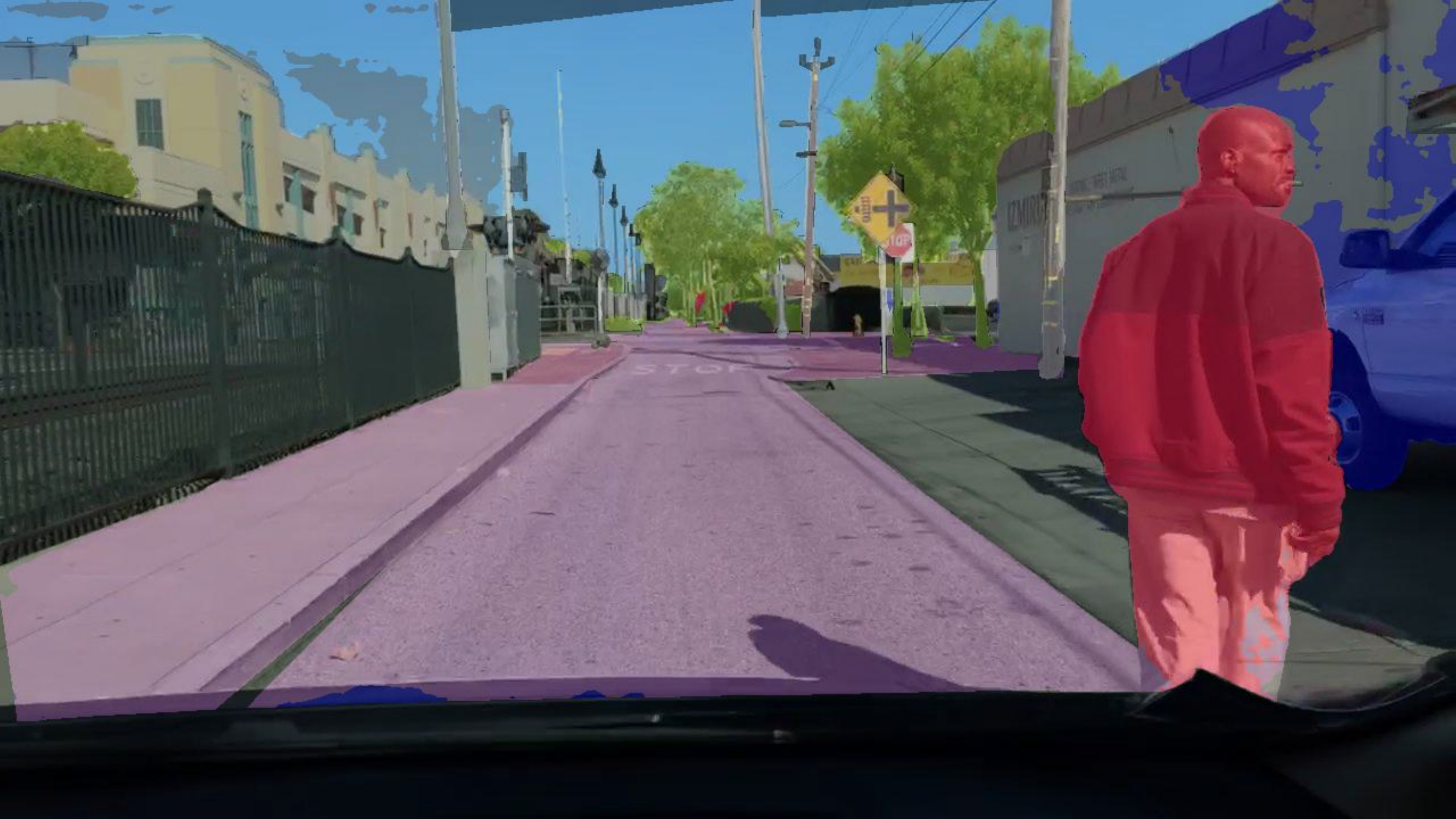}\\[1mm]
    \includegraphics[width=\linewidth]{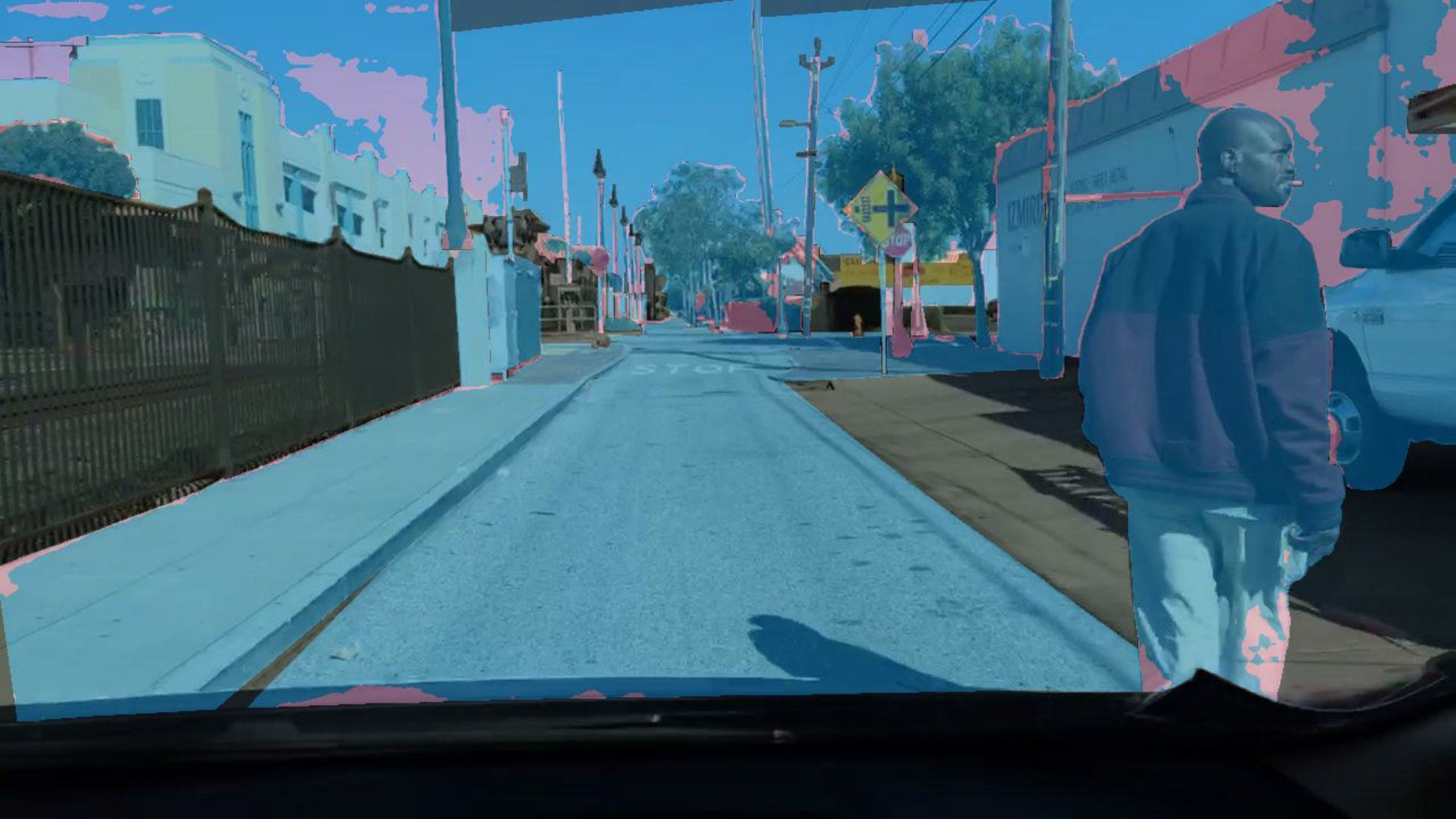}\\[1mm]
    \includegraphics[width=\linewidth]{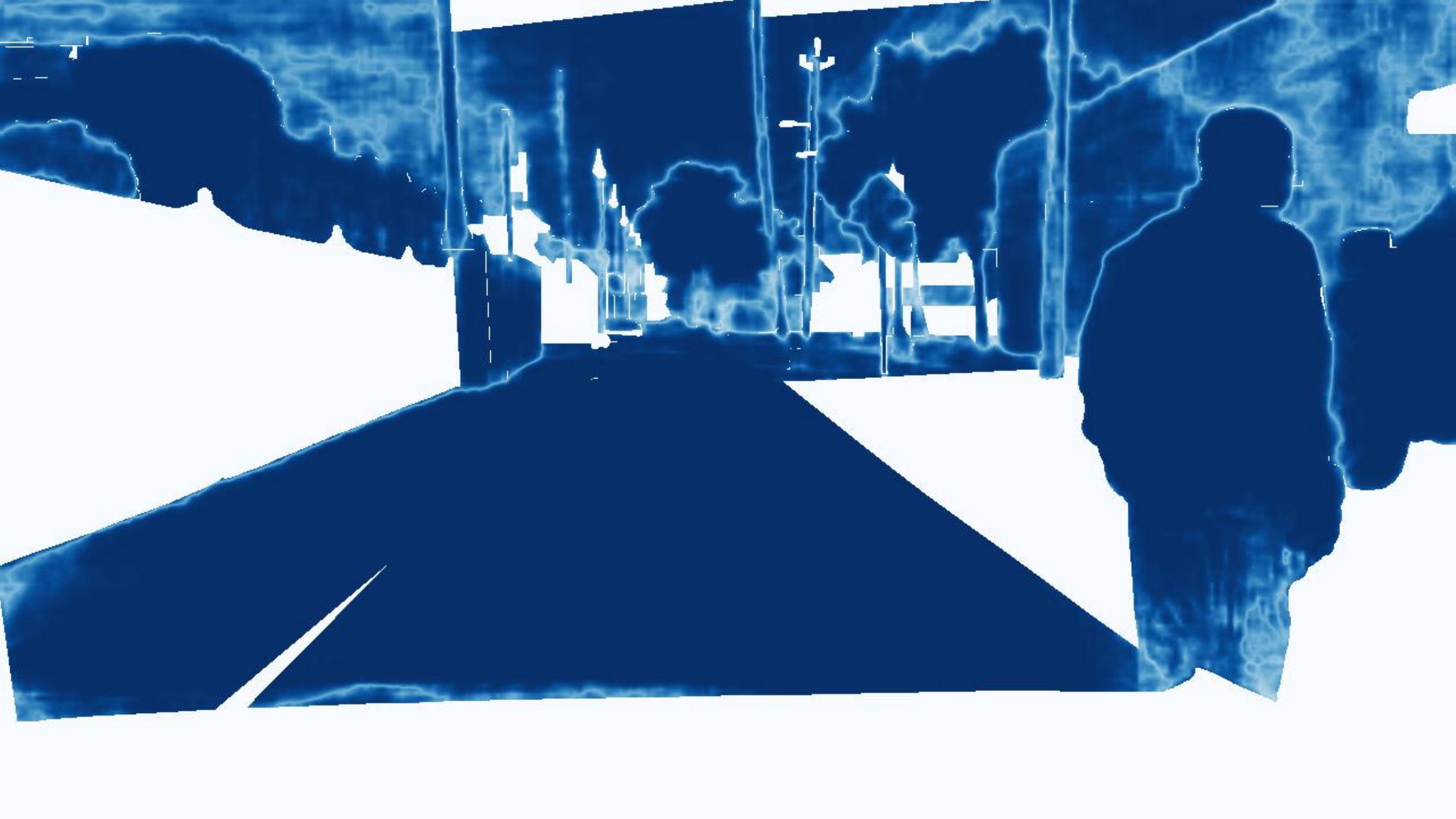}
    \label{fig:bdd2_2}
  \end{subfigure}
  \begin{subfigure}{0.33\linewidth}
    \centering
    \includegraphics[width=\linewidth]{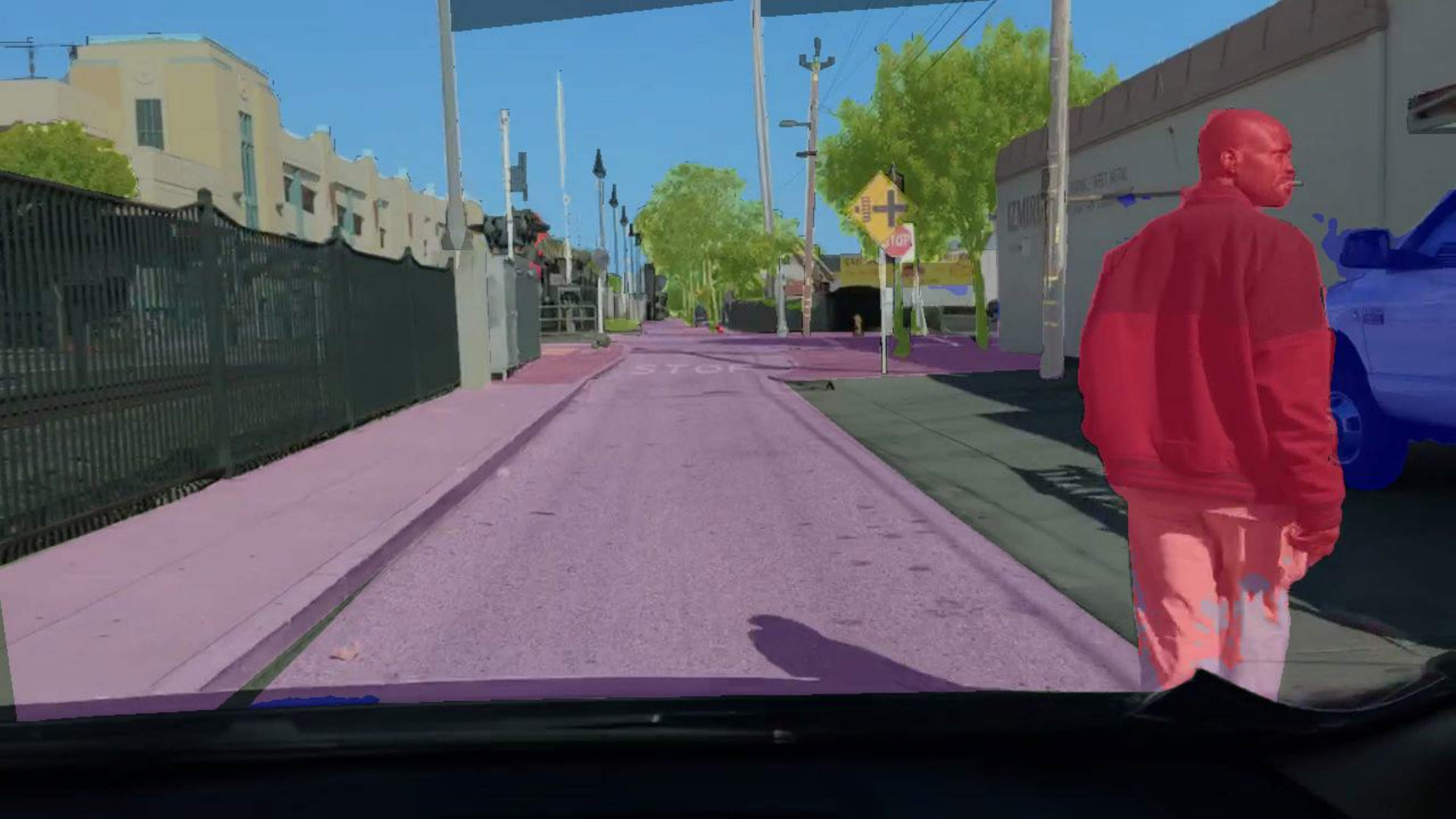}\\[1mm]
    \includegraphics[width=\linewidth]{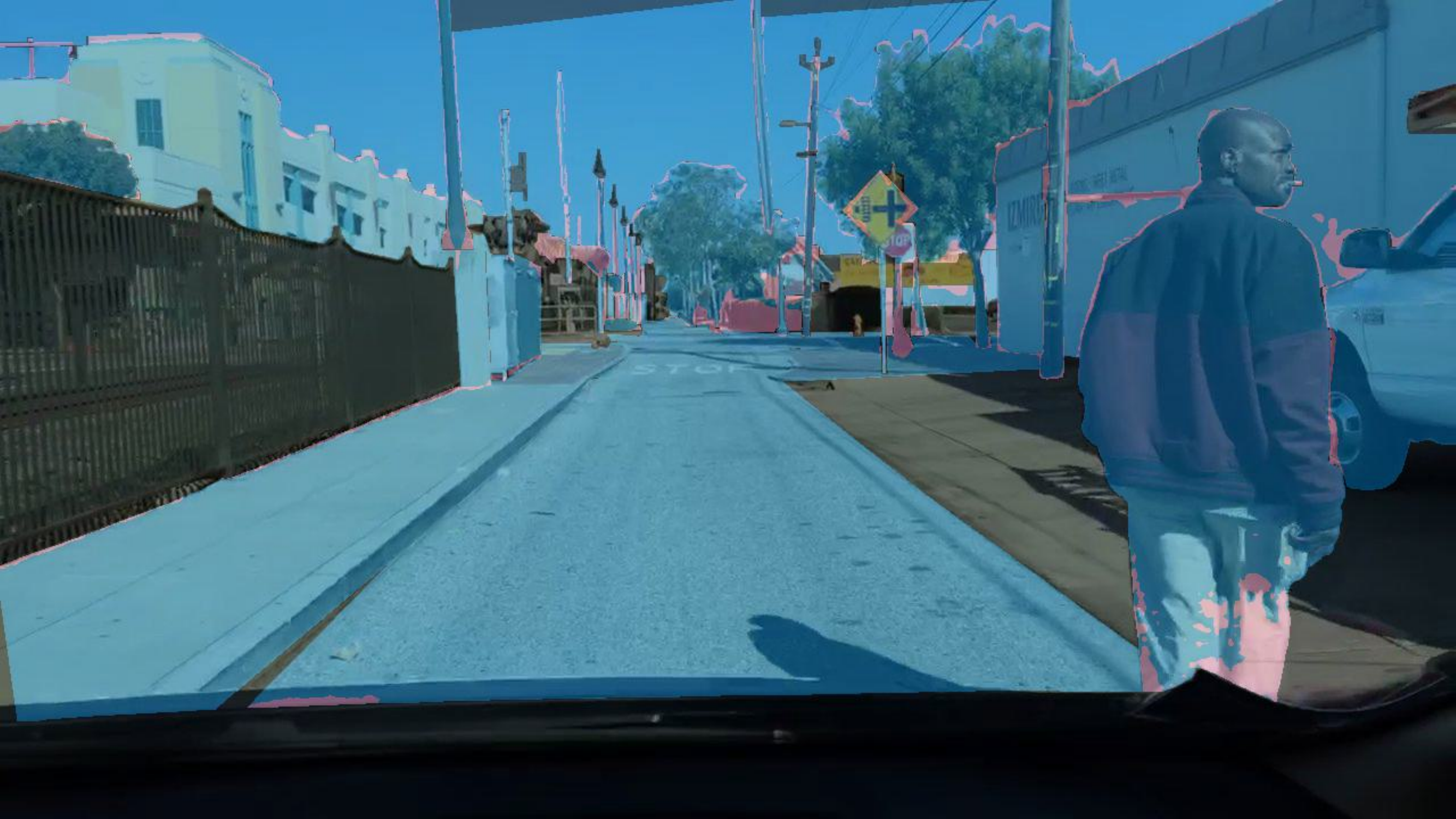}\\[1mm]
    \includegraphics[width=\linewidth]{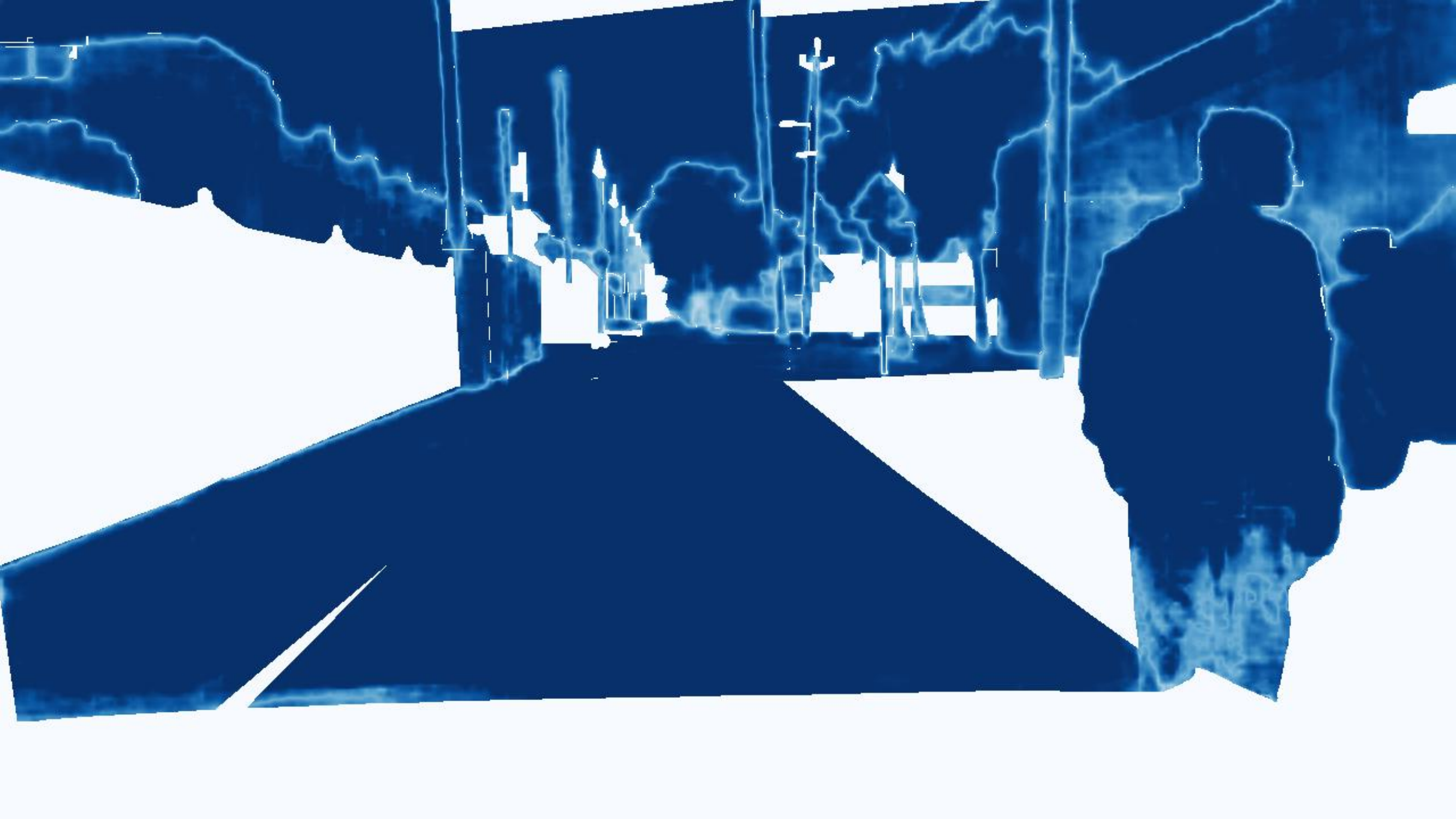}
    \label{fig:bdd2_3}
  \end{subfigure}
  \caption{\textbf{Two qualitative examples from BDD.} See \cref{sec:qualitative} for analysis.}
  \label{fig:qaulbdd2}
  \vspace{-0.7em}
\end{figure*}

\begin{figure*}[t]
\centering
    \begin{subfigure}{0.33\linewidth}
    \centering
    HSSN\\\vspace{0.3em}
    \includegraphics[width=\linewidth]{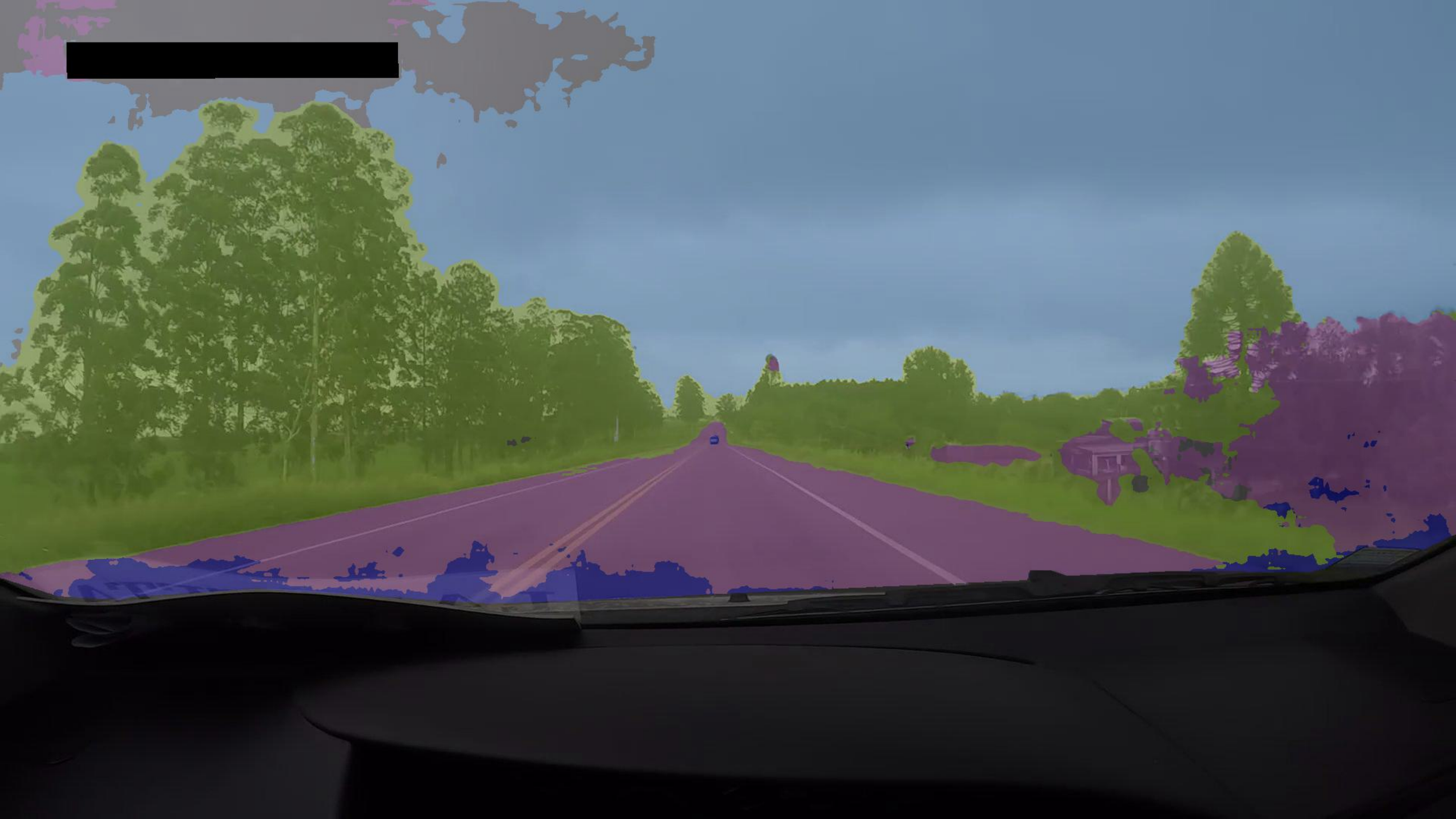}\\[1mm]
    \includegraphics[width=\linewidth]{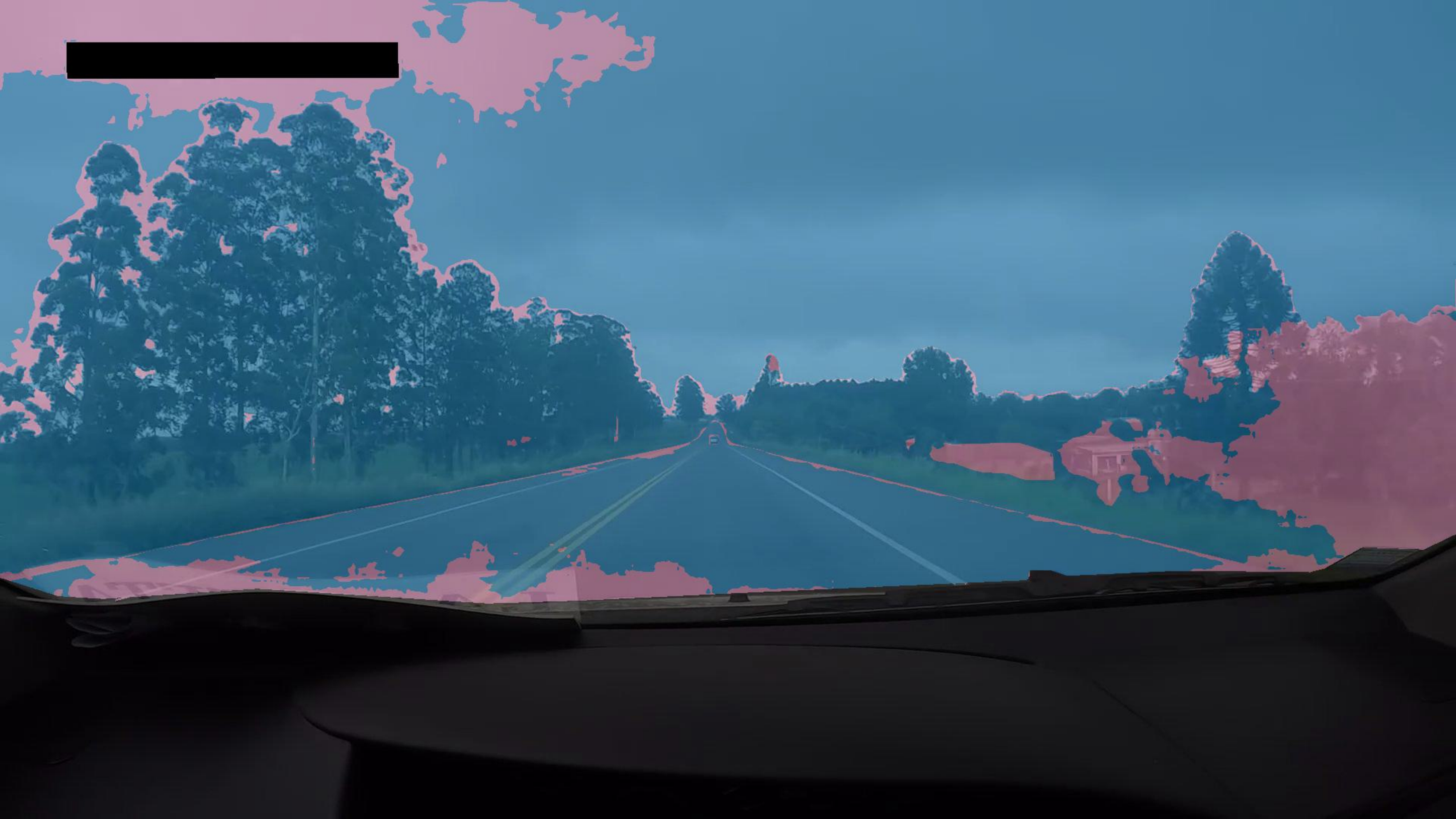}\\[1mm]
    \includegraphics[width=\linewidth]{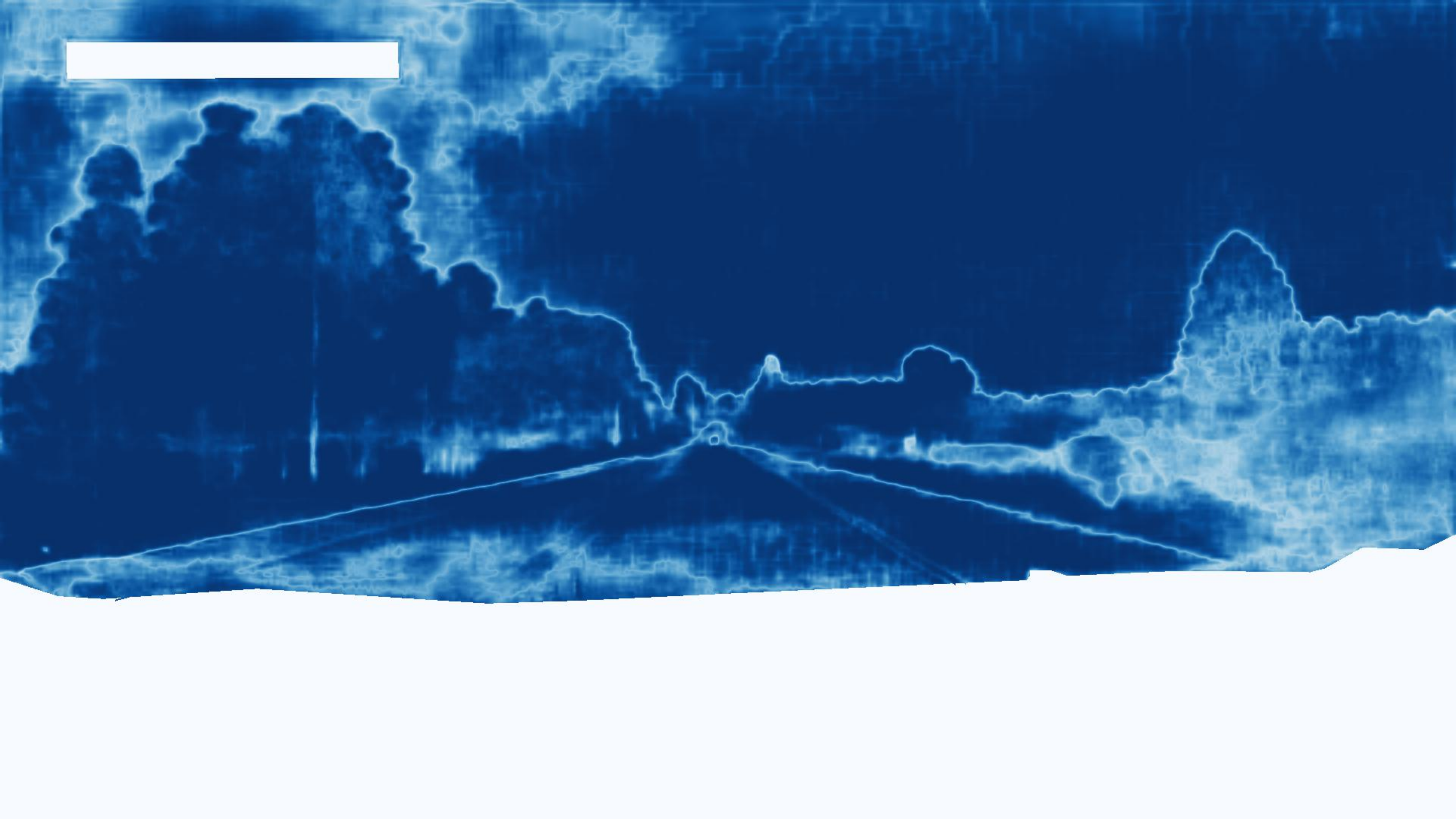}
    \label{fig:wd1_1}
  \end{subfigure}
  \begin{subfigure}{0.33\linewidth}
    \centering
    \eucname\\\vspace{0.3em}
    \includegraphics[width=\linewidth]{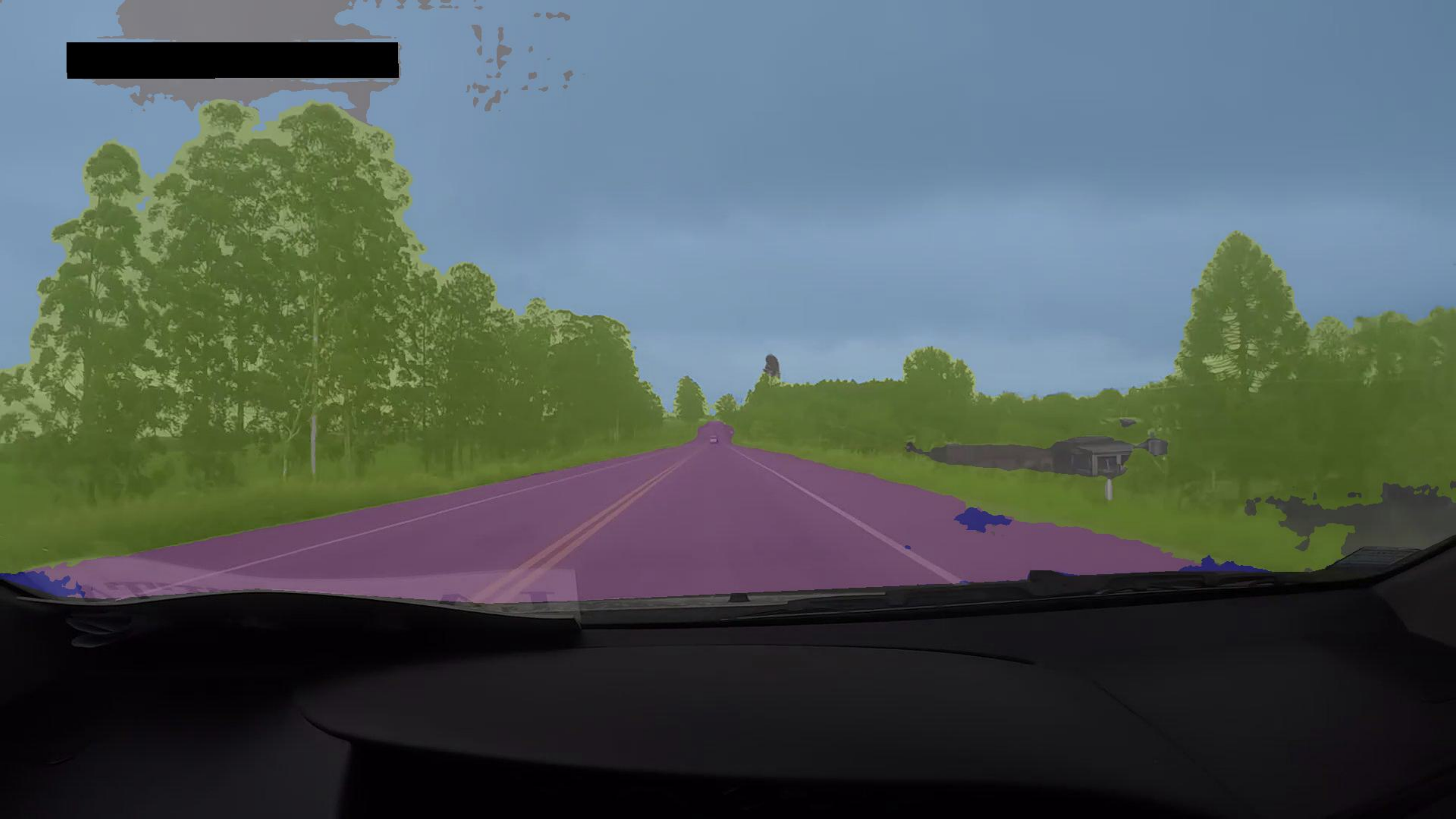}\\[1mm]
    \includegraphics[width=\linewidth]{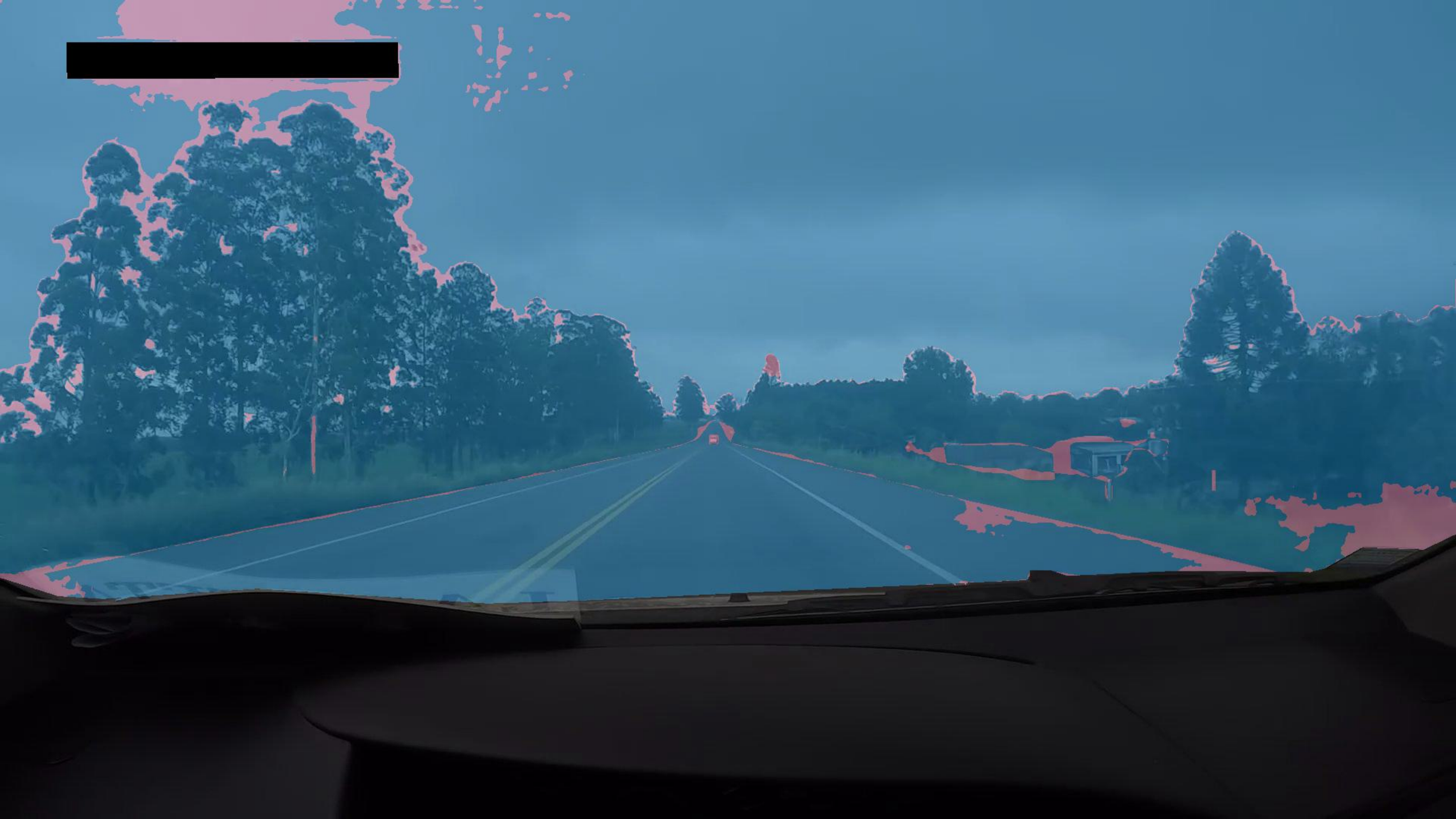}\\[1mm]
    \includegraphics[width=\linewidth]{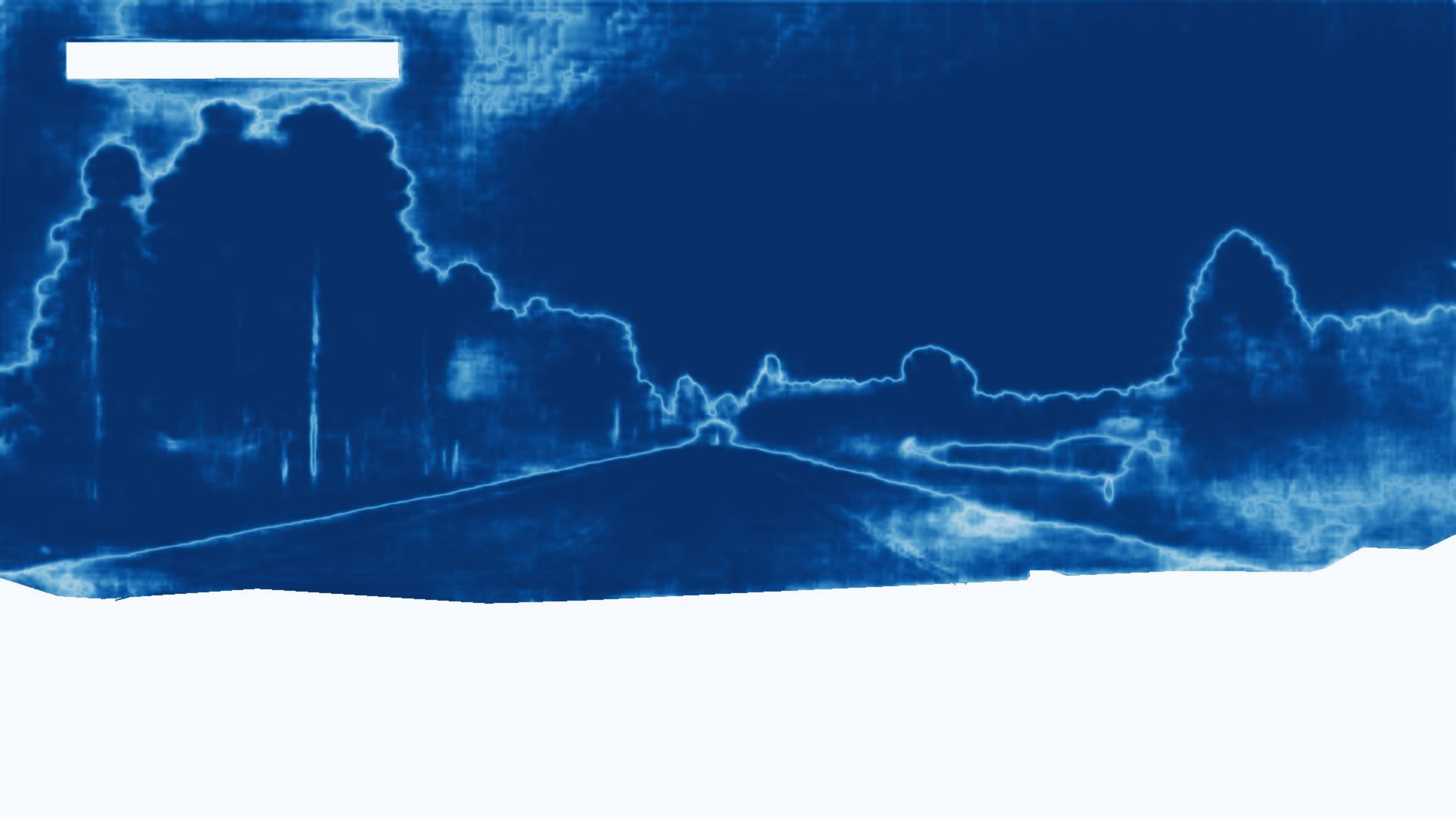}
    \label{fig:wd1_2}
  \end{subfigure}
  \begin{subfigure}{0.33\linewidth}
    \centering
    \hypname\\\vspace{0.2em}
    \includegraphics[width=\linewidth]{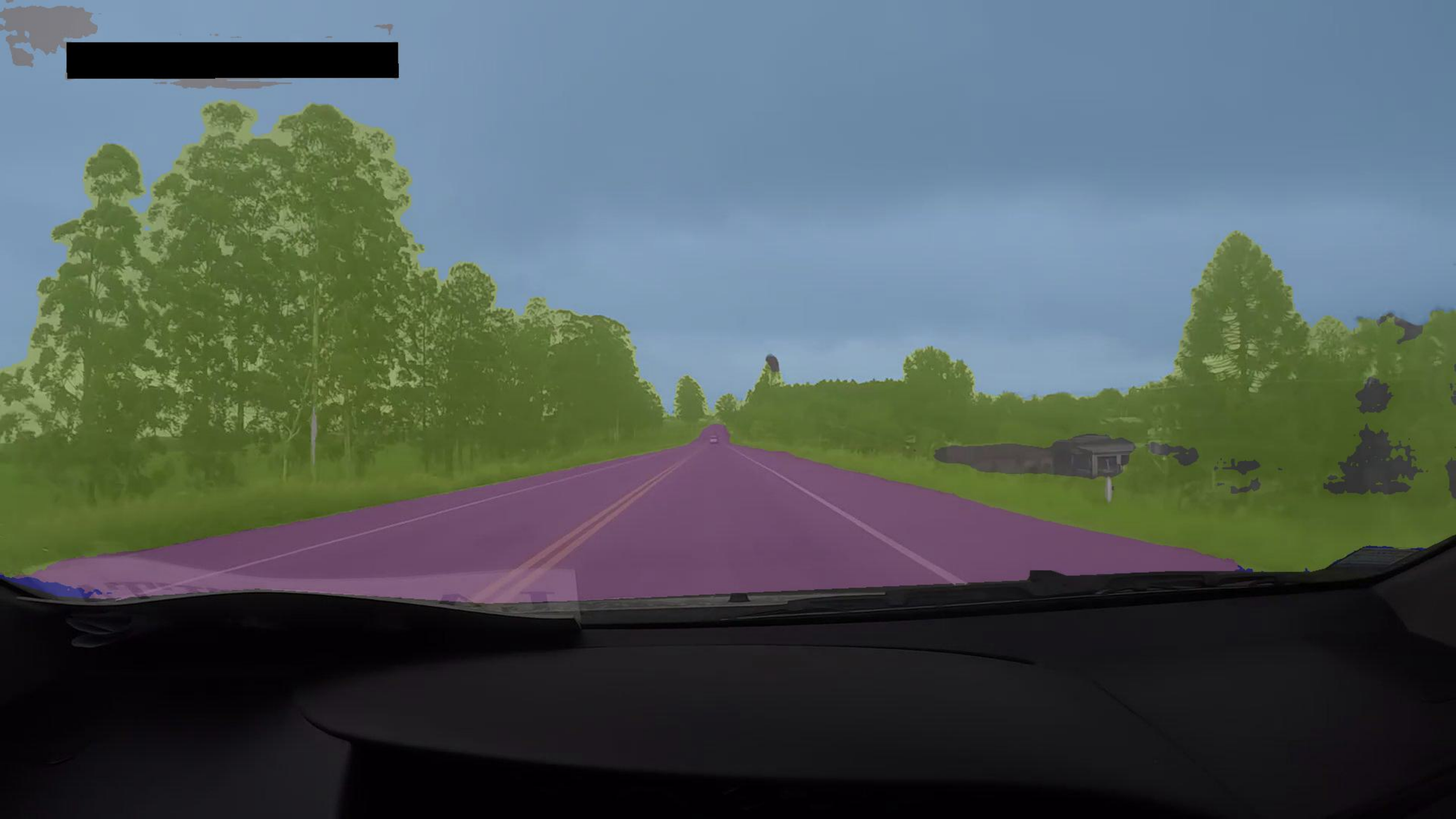}\\[1mm]
    \includegraphics[width=\linewidth]{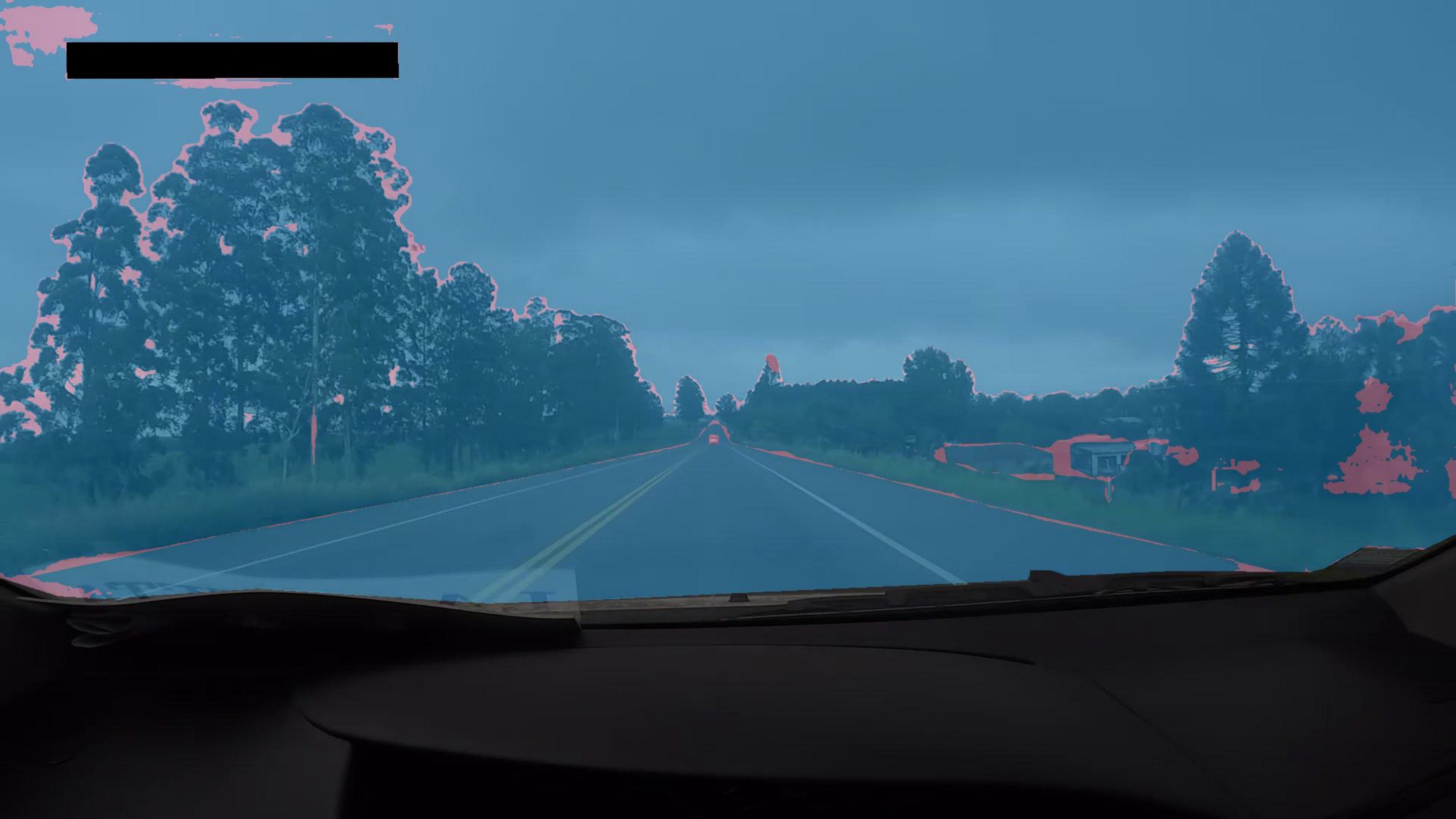}\\[1mm]
    \includegraphics[width=\linewidth]{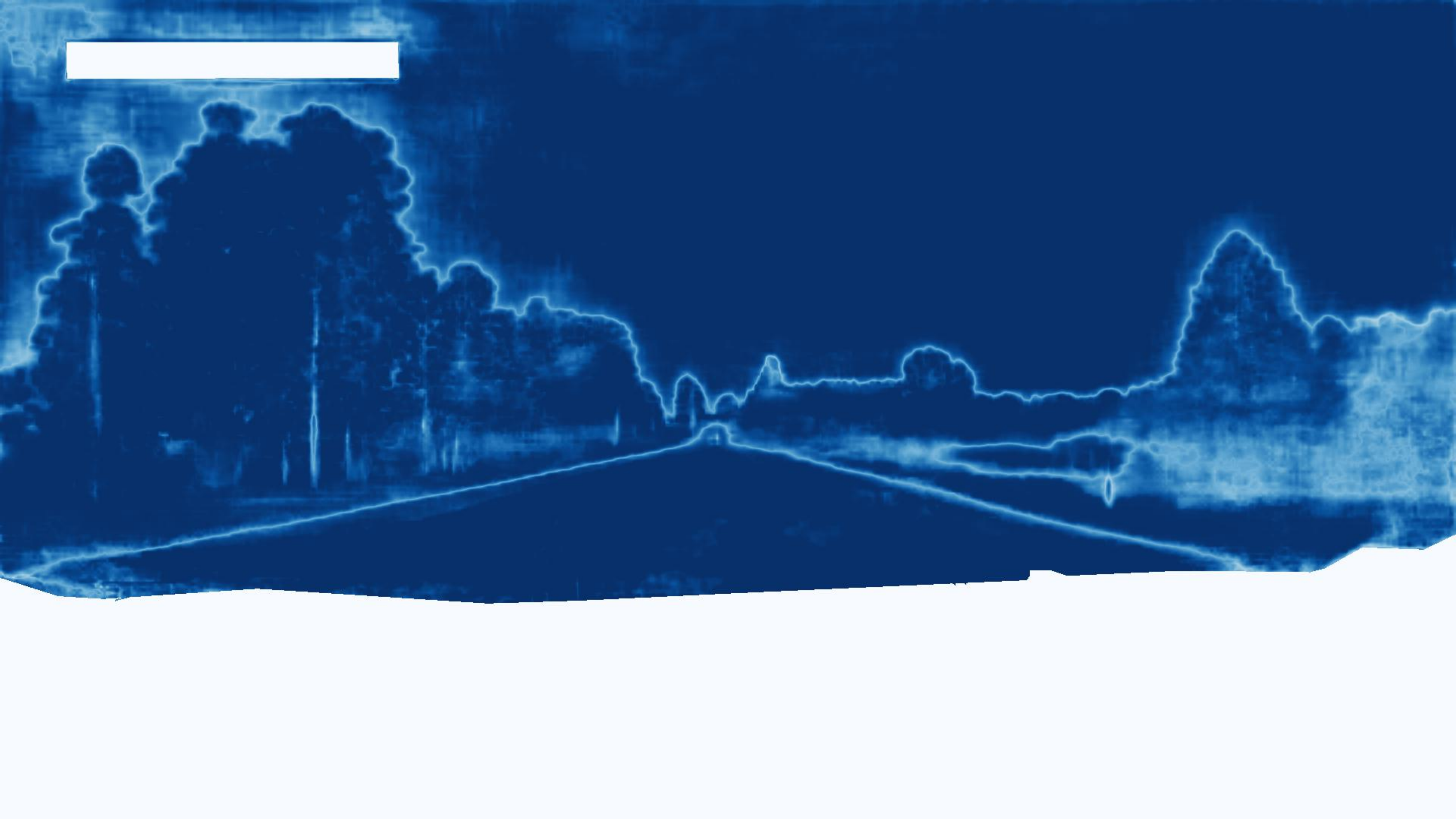}
  \label{fig:wd1_3}
  \end{subfigure}
    \begin{subfigure}{\linewidth}
        \centering
        \fboxsep 2pt
        \begin{minipage}{0.7\linewidth}
            \colorbox{flat}{\strut \color{white}{flat}}
            \colorbox{construction}{\strut \color{white}{construction}}
            \colorbox{object}{\strut  object}
            \colorbox{nature}{\strut \color{white}{nature}}
            \colorbox{sky}{\strut \color{white}{sky}}
            \colorbox{human}{\strut human}
            \colorbox{vehicle}{\strut \color{white}{vehicle}}\hspace{2em}
            \colorbox{ignore}{\strut \color{white}{ignore}}\hspace{2em}
            \colorbox{true}{\strut \color{white}{true}}
            \colorbox{false}{\strut false}\hspace{2em}%
        \end{minipage}%
        \begin{minipage}{0.3\linewidth}
        \begin{tikzpicture}
        \node [rectangle, left color=left!10!white, right color=left, anchor=north, minimum width=\linewidth, minimum height=0.5cm] (box) at (current page.north){0 \hspace{12em}  \color{white}{1}};
        \end{tikzpicture}
        \end{minipage}
    \end{subfigure}\\
    \vspace{1em}
    \begin{subfigure}{0.33\linewidth}
    \centering
    \includegraphics[width=\linewidth]{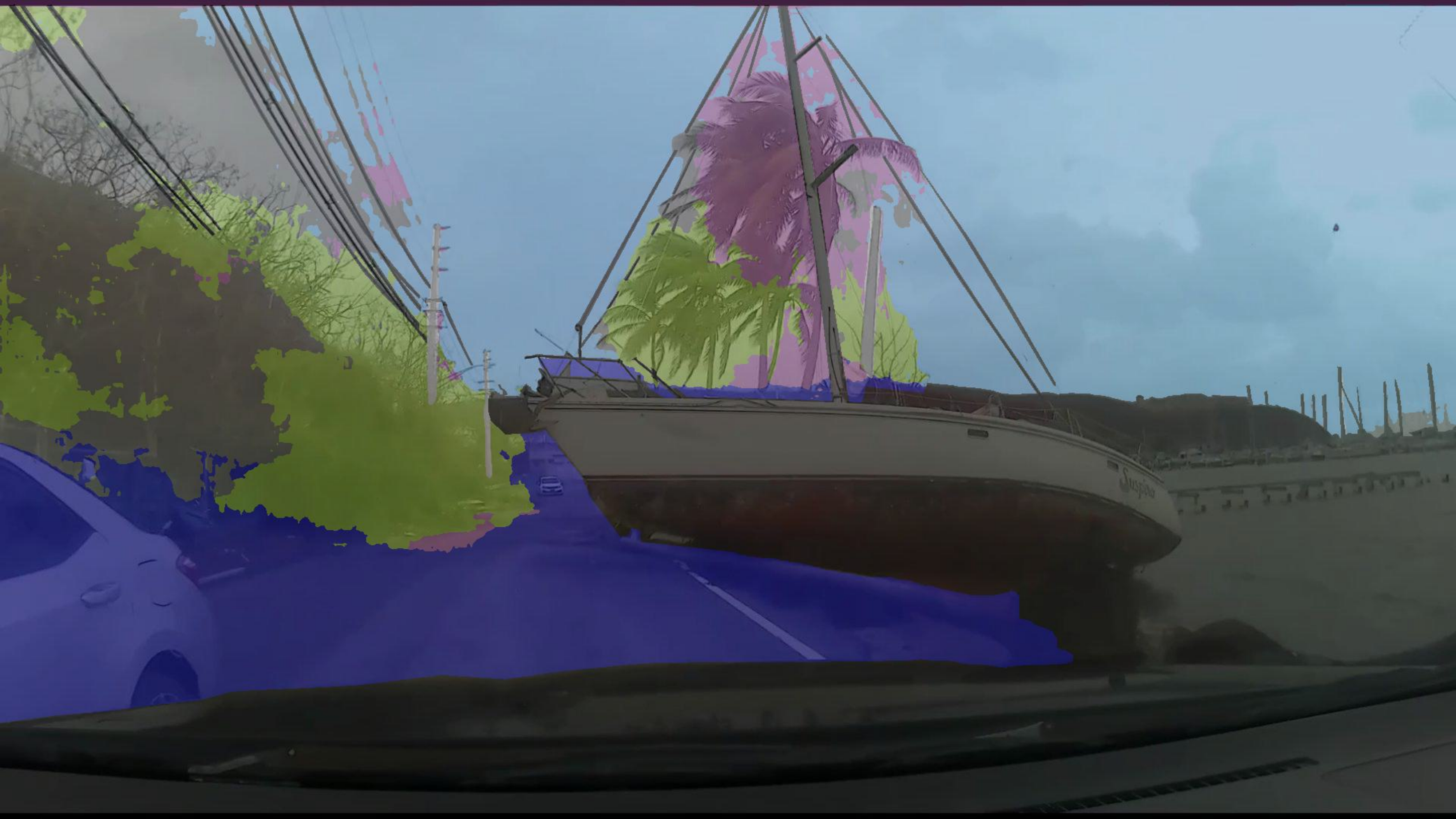}\\[1mm]
    \includegraphics[width=\linewidth]{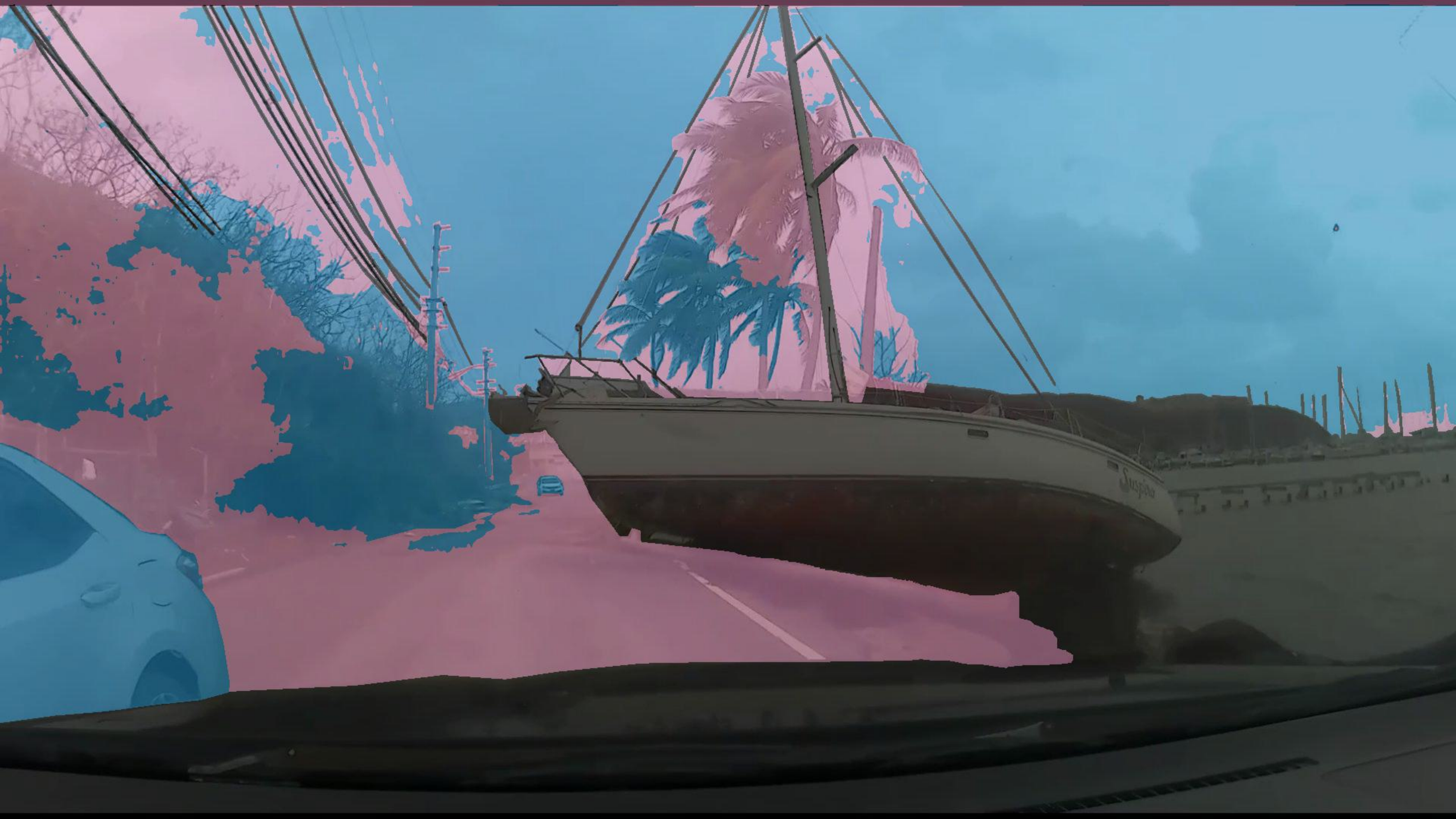}\\[1mm]
    \includegraphics[width=\linewidth]{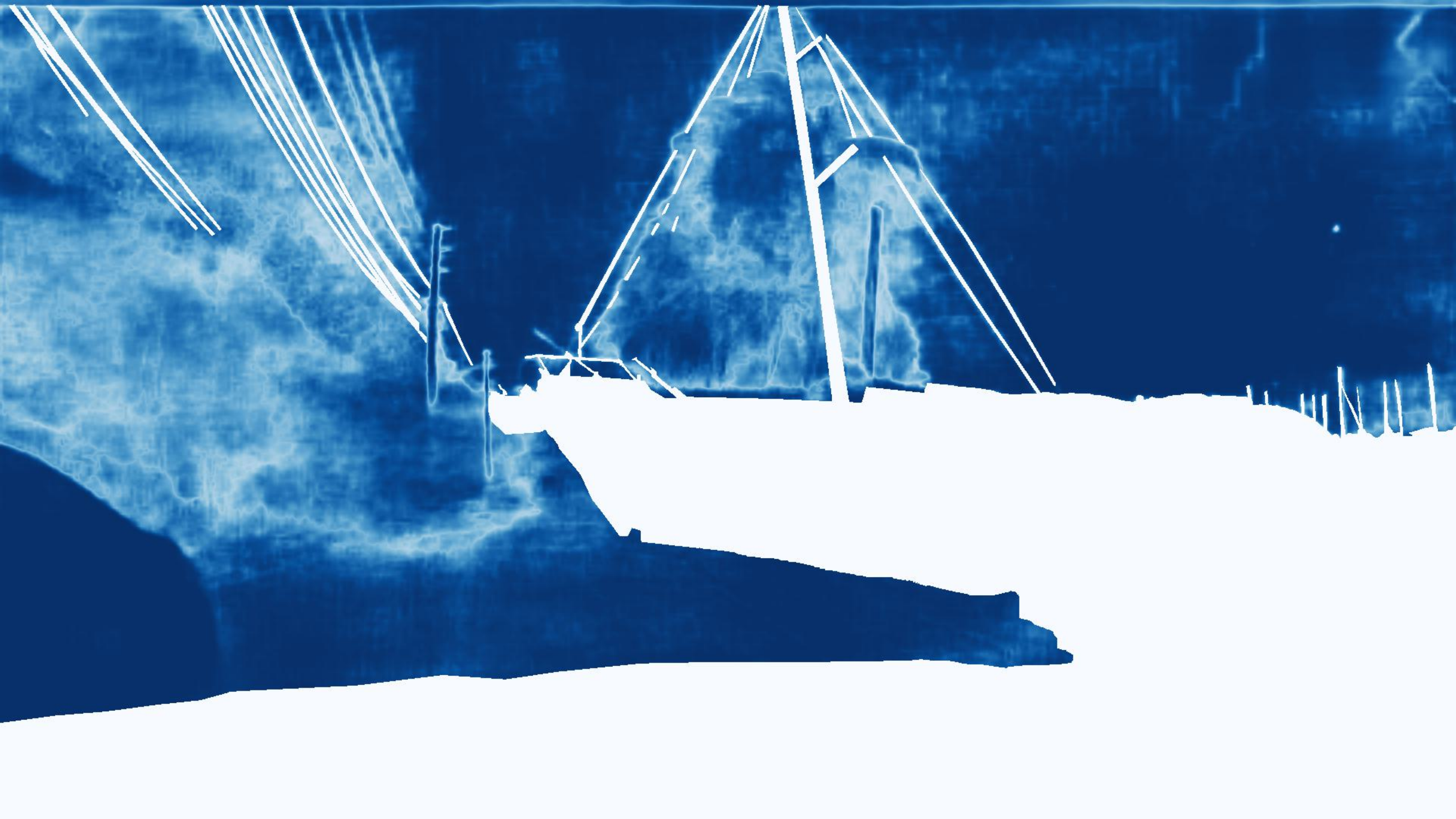}
    \label{fig:wd2_1}
  \end{subfigure}
  \begin{subfigure}{0.33\linewidth}
    \centering
    \includegraphics[width=\linewidth]{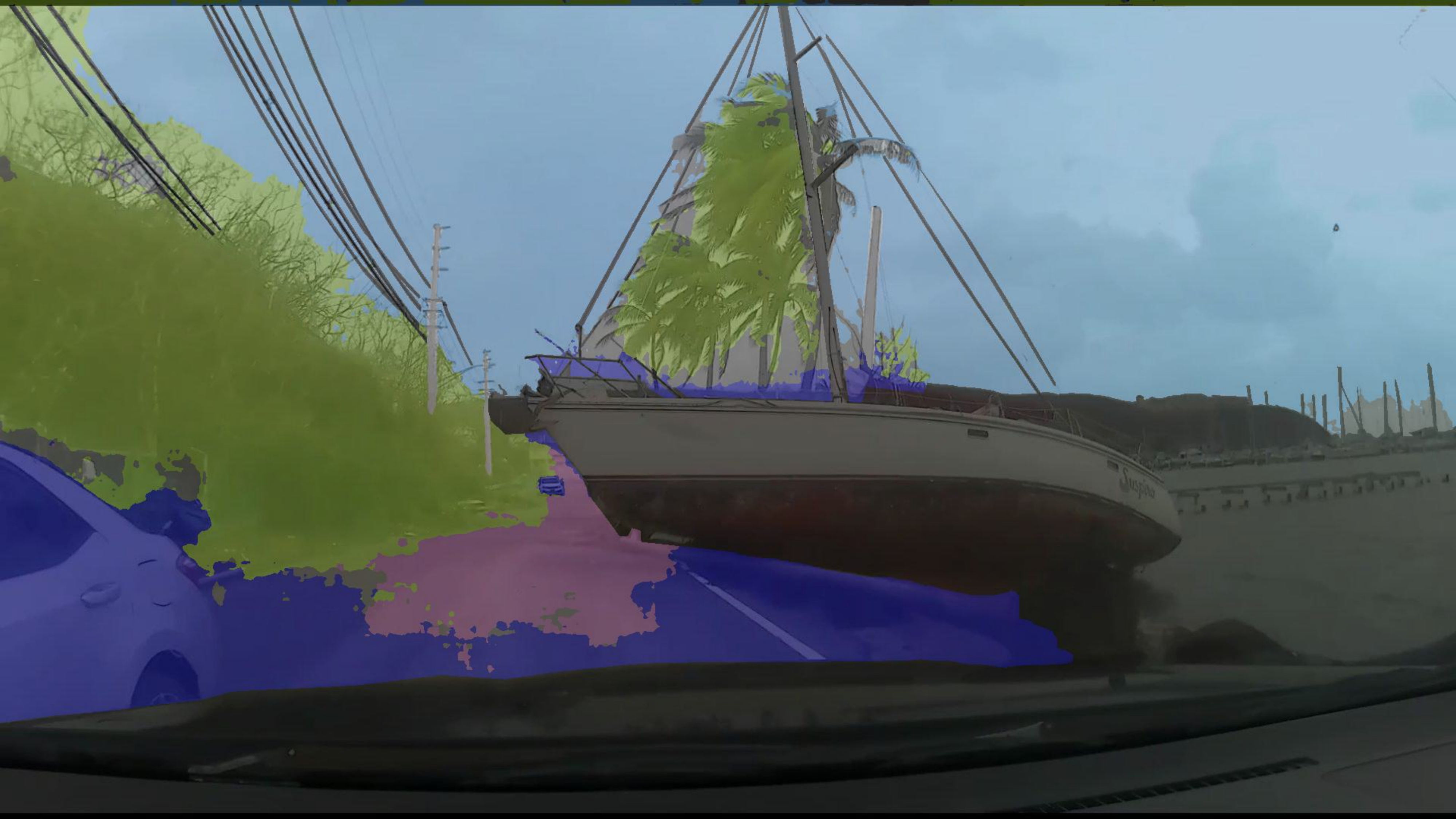}\\[1mm]
    \includegraphics[width=\linewidth]{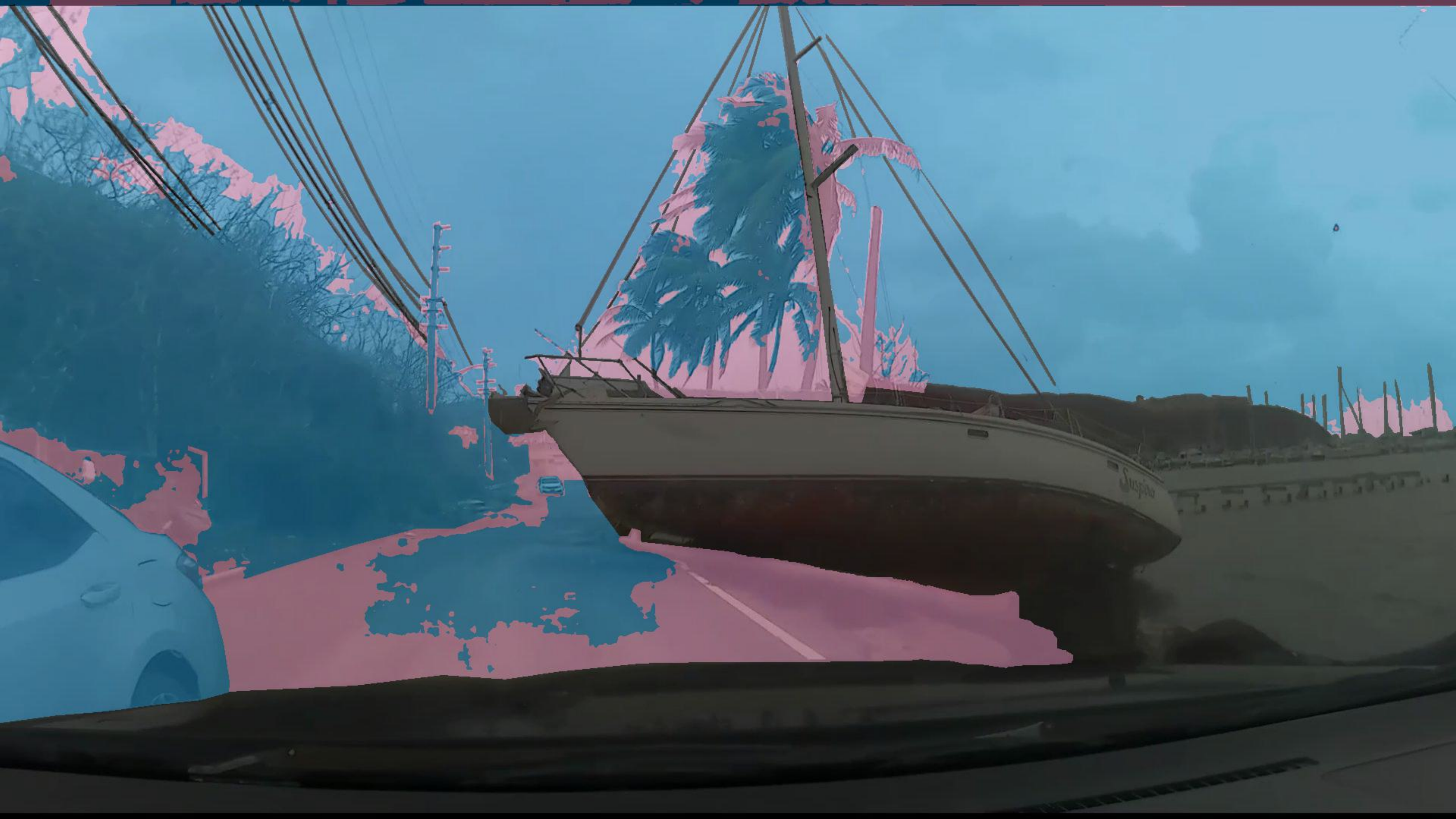}\\[1mm]
    \includegraphics[width=\linewidth]{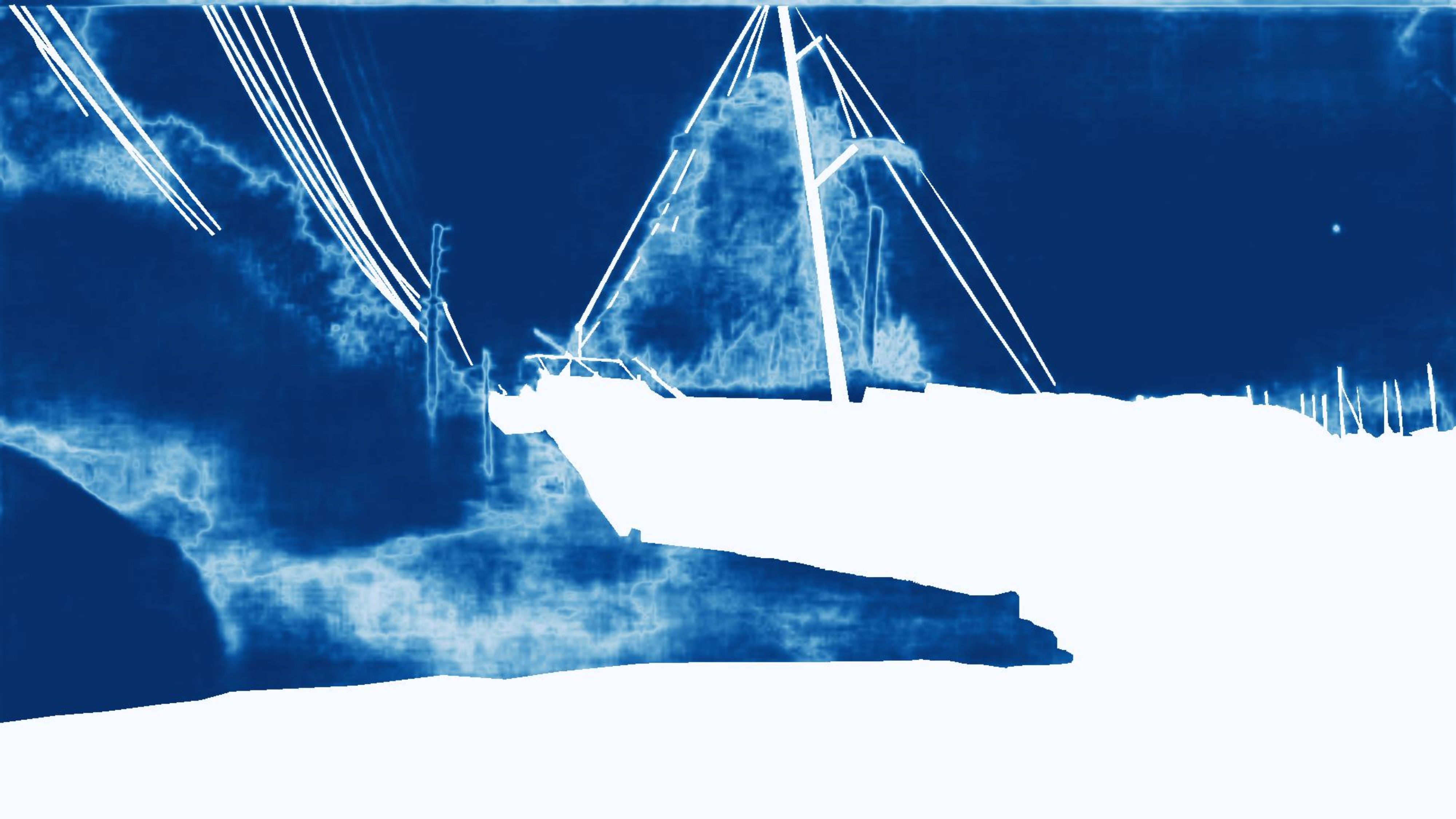}
    \label{fig:wd2_2}
  \end{subfigure}
  \begin{subfigure}{0.33\linewidth}
    \centering
    \includegraphics[width=\linewidth]{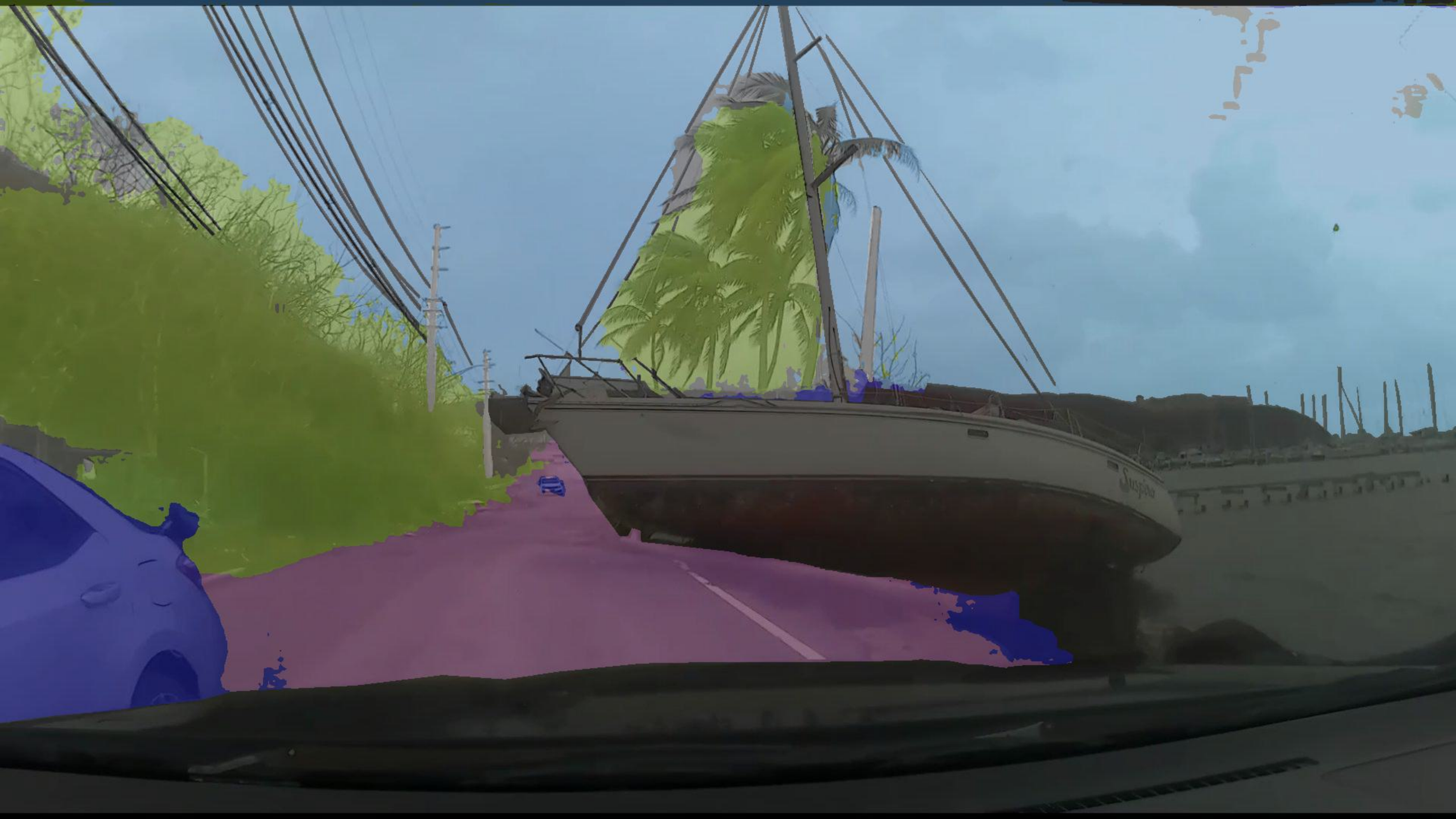}\\[1mm]
    \includegraphics[width=\linewidth]{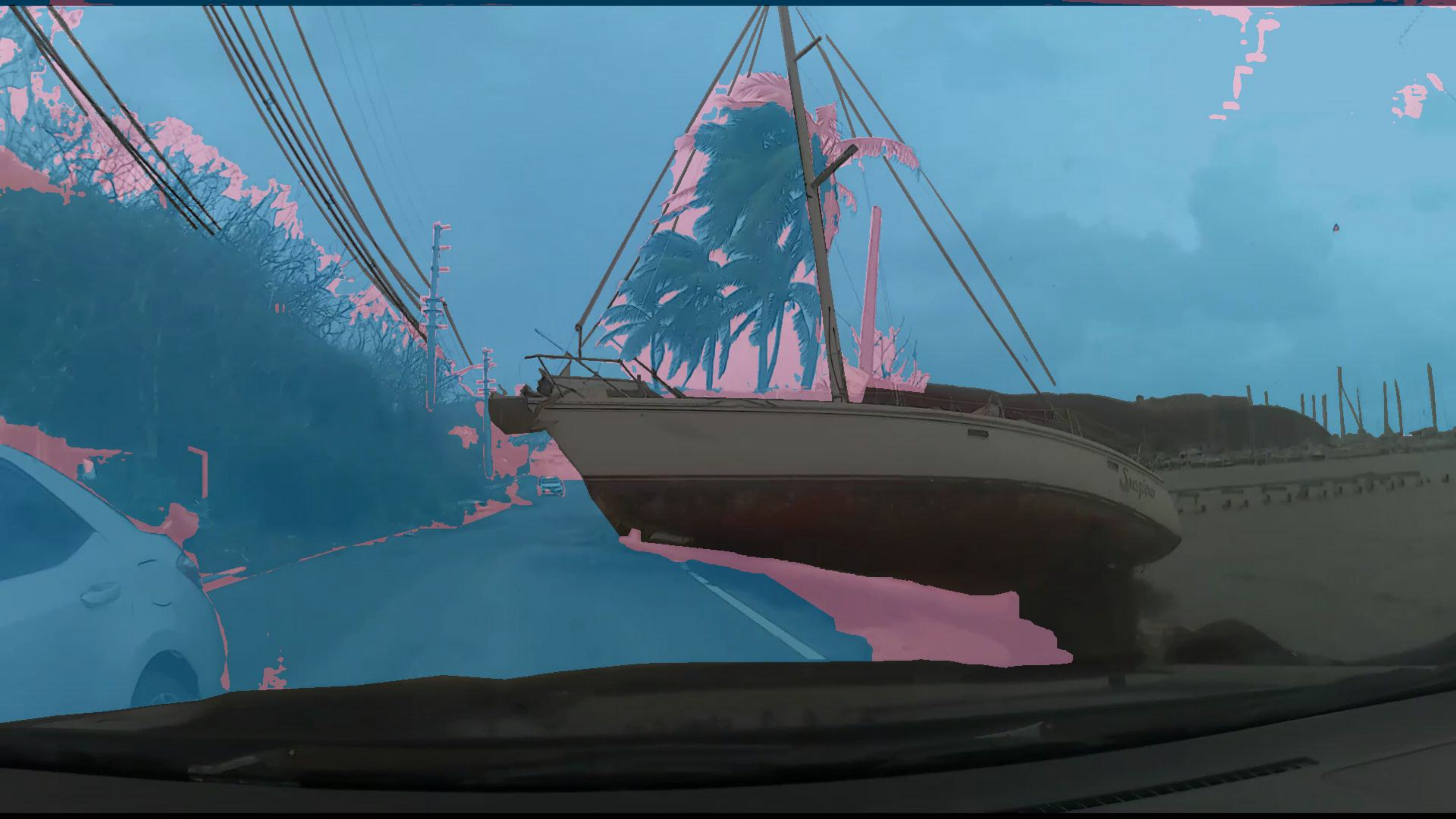}\\[1mm]
    \includegraphics[width=\linewidth]{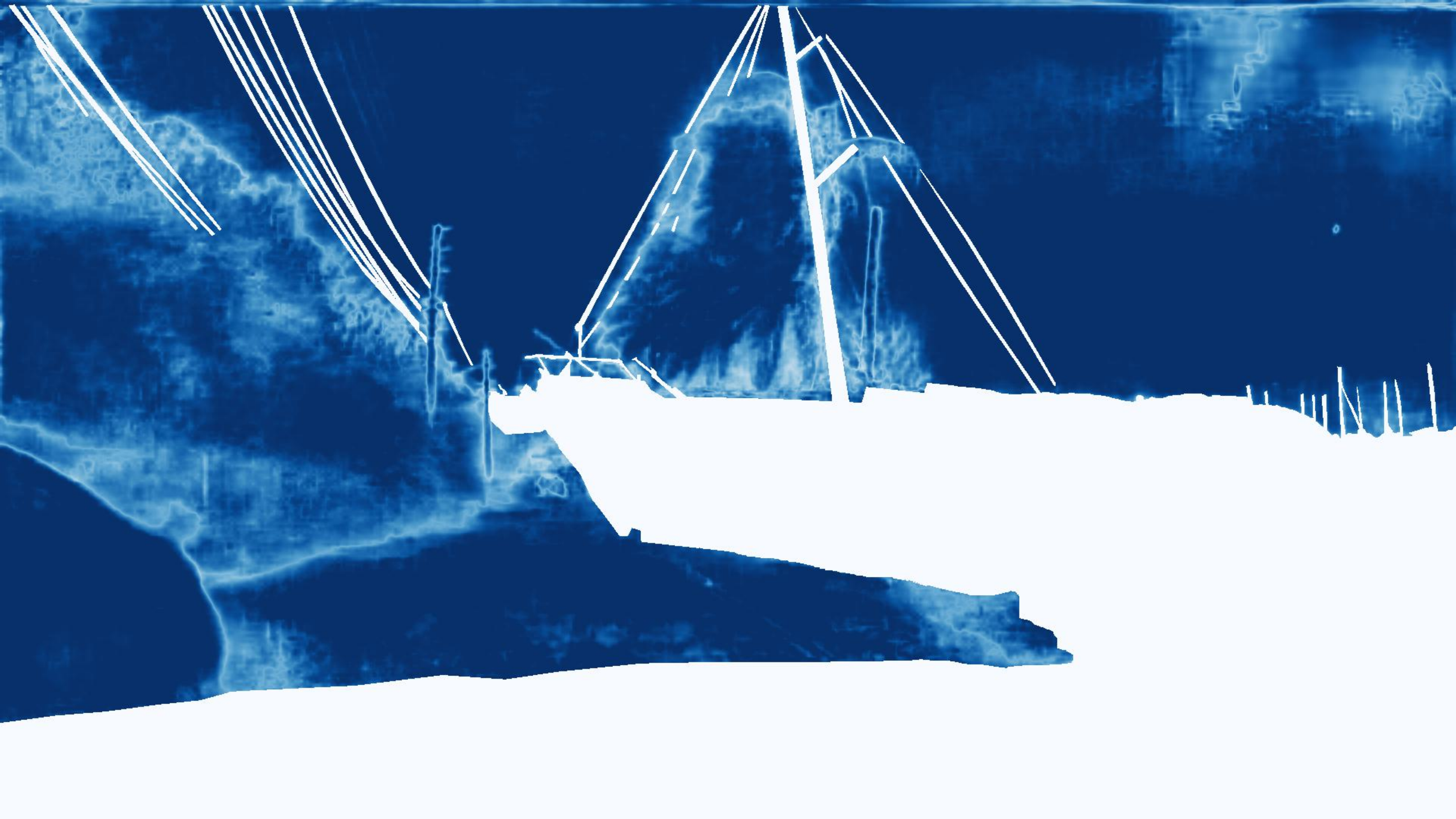}
  \label{fig:wd2_3}
  \end{subfigure}
  \caption{\textbf{Two qualitative examples from Wilddash.} See \cref{sec:qualitative} for analysis.}
  \label{fig:qaulwd2}
  \vspace{-0.7em}
\end{figure*}

\end{document}